\documentclass[11pt,letterpaper]{article}

\usepackage[margin=1in]{geometry}
\usepackage[utf8]{inputenc}
\usepackage[T1]{fontenc}
\usepackage[hyphens]{url}
\usepackage{amsmath,amssymb,amsfonts,amsthm,mathtools,bm}
\usepackage{booktabs}
\usepackage{algorithm}
\usepackage{algorithmic}
\usepackage{hyperref}
\usepackage{url}
\usepackage{booktabs}
\usepackage{tabularx}
\usepackage{graphicx}

\newtheorem{theorem}{Theorem}[section]
\newtheorem{lemma}[theorem]{Lemma}
\newtheorem{proposition}[theorem]{Proposition}
\newtheorem{corollary}[theorem]{Corollary}
\newtheorem{assumption}[theorem]{Assumption}
\newtheorem{definition}[theorem]{Definition}
\theoremstyle{remark}
\newtheorem{remark}[theorem]{Remark}

\newcommand{\R}{\mathbb{R}}
\newcommand{\E}{\mathbb{E}}
\newcommand{\Prob}{\mathbb{P}}
\newcommand{\cS}{\mathcal{S}}
\newcommand{\cA}{\mathcal{A}}
\newcommand{\cF}{\mathcal{F}}
\newcommand{\cG}{\mathcal{G}}
\newcommand{\cL}{\mathcal{L}}

\newcommand{\cC}{\mathcal{C}}

\newcommand{\one}{\mathbf{1}}
\newcommand{\dist}{\operatorname{dist}}
\newcommand{\supp}{\operatorname{supp}}
\newcommand{\KKT}{\mathrm{KKT}}
\newcommand{\trans}{\mathsf{T}}
\newcommand{\eps}{\varepsilon}

\title{Markovian Nonconvex ADMM for Reinforcement Learning:\\
Bellman-Resolvent Stability Beyond Smooth Blocks}
\author{
  Zhaojun Peng \\
  \texttt{Hantsukipzj@gmail.com}
}
\date{}

\begin{document}

\maketitle

\begin{abstract}
We identify and study a structural mechanism for Markovian nonconvex ADMM in reinforcement learning. Using finite discounted MDPs as a canonical proving ground, we show that the discounted Bellman resolvent $(I-\gamma P_\pi)^{-1}$ can provide the multiplier stability that classical nonconvex ADMM analyses often obtain from a designated smooth block. Starting from this mechanism, we establish convergence under controlled Markov sampling and then under stochastic observations using an empirical Bellman surrogate that jointly represents the random residual and its Jacobian. Markov mixing, initialization drift, observation noise, and decaying bias enter as one operator perturbation, avoiding unbiased product and double sampling requirements. When the perturbations are square summable, the true KKT residual converges almost surely to zero. Under a finite conditional fourth moment condition, a companion iterate satisfies
$ \mathbb{E}[\widetilde G_{K+1}] \le A/T+(B/T)\sum_{k<T}m_k^{-1}, $
which becomes $O(T^{-1}+T/N)$ for total Markov sample budget $N$, giving $O(\epsilon^{-1})$ iteration complexity and $O(\epsilon^{-2})$ sample complexity for squared KKT accuracy $\epsilon$. Beyond stationarity, discounted occupancy coverage yields $J^\star-J(\pi)=O(\sqrt G)$ for direct tabular policies, so covered exact KKT points are globally optimal, while a statewise quadratic Bellman-improvement condition sharpens the relation to $O(G)$. Finally, nonlinear policy, projected Bellman, and explicit occupancy formulations exhibit the same chain of operator invertibility, dual representation, and multiplier stability. This supports discounted operator invertibility as a reusable structural principle for primal-dual reinforcement learning.
\end{abstract}

\section{Introduction}
The alternating direction method of multipliers is a natural tool for constrained nonconvex optimization because it separates difficult variables while preserving the interaction between primal and dual updates. Its analysis is delicate because the multiplier step increases the augmented Lagrangian. What the convergence proof ultimately needs is a structural estimate that controls \(\|\Delta\lambda_{k+1}\|\) by quantities dissipated in the primal steps. Existing nonconvex analyses commonly obtain such multiplier stability from a designated smooth block and a suitable linear operator~\cite{hong2016convergence,zeng2022moreau,yuan2025admm}. This observation shifts the question from smoothness itself to the source of dual stability: when regularity is carried by the constraint geometry, can that geometry provide the mechanism for controlling the multiplier directly? We use the finite discounted MDP as a canonical instantiation in which this mechanism can be exposed and verified end to end, from multiplier stability and Markovian perturbations to complexity bounds at a finite time horizon and policy performance.

Discounted reinforcement learning provides such a constraint. For a policy $\pi$, the Bellman equation is $M_\pi V-r_\pi=0$ with $M_\pi=I-\gamma P_\pi$. Since $0<\gamma<1$ and $P_\pi$ is stochastic, $M_\pi$ is invertible and $M_\pi^{-1}=\sum_{t\ge0}\gamma^tP_\pi^t$. The central question is whether this resolvent can replace the classical mechanism based on a smooth block in a nonconvex ADMM proof. The answer is affirmative. The optimality condition for the value update and the dual update give $\lambda^{k+1}=-\widehat M_{k+1}^{-\trans}(c+\eta_V^{-1}\Delta V_{k+1})$. Stability of the inverse resolvent then converts changes in the policy and value variables into a bound on the multiplier increment.

The theory is organized in three layers. The first is a structural layer. Bellman resolvent stability closes Lyapunov arguments for deterministic and controlled Markov settings without assuming that $\phi$ is smooth. The second is a stochastic layer that preserves the structure of the Bellman operator. Here the difficulty is not variance alone: the augmented policy gradient couples a random Bellman Jacobian with a random Bellman residual, so separate unbiasedness does not remove their covariance. We therefore generate the residual, its Jacobians, and the dual update from one empirical Bellman operator, turning Markov dependence, initialization drift, observation noise, and decaying bias into a single operator perturbation. The third is an RL geometry layer. Occupancy measures, advantage functions, and the performance difference identity turn the optimization residual into a global policy guarantee. This layer further characterizes two performance regimes: occupancy coverage yields the $O(\sqrt G)$ guarantee, while a curvature condition on the Bellman improvement at each state sharpens the conversion to $O(G)$.

Our contributions are as follows. First, we identify the discounted Bellman resolvent as a source of multiplier stability specific to RL and prove the corresponding bounds on dual increments and the Lyapunov function. Second, we show that the stochastic extension must preserve the Bellman operator coupling. Under Markov sampling controlled by resets and uniform exploration, one shared empirical Bellman surrogate jointly generates the residual, its Jacobians, and the dual update, converting dependence between the residual and its Jacobian, observation noise, bias, and Markov dependence into a controlled operator perturbation and yielding convergence of the KKT residual to zero almost surely. Third, assuming a finite conditional fourth moment, we introduce a companion iterate aligned with the policy subproblem and prove $\E[\widetilde G_{K+1}]=O(T^{-1}+T/N)$, yielding $O(\eps^{-1})$ outer iterations and $O(\eps^{-2})$ Markov samples for accuracy $\eps$ measured by the squared KKT residual. Fourth, for direct tabular policies, occupancy coverage gives $J^\star-J(\pi)=O(\sqrt G)$ and exact covered KKT points are globally optimal. Statewise quadratic Bellman improvement strengthens this to $O(G)$. Matching constructions characterize the roles of occupancy coverage and curvature of statewise Bellman improvement in the two performance regimes. Fifth, we use formulations based on nonlinear policies, projected Bellman equations, and occupancy measures to test which parts of the mechanism are specific to the tabular Bellman model. These extensions show that the reusable ingredient is the existence of a stable invertible discounted operator yielding an explicit dual representation.

\section{Related Work }

\paragraph{Nonconvex ADMM.}
Classical ADMM is well understood in convex settings~\cite{gabay1976dual,boyd2011distributed,he2012rate}, while nonconvex analyses require additional structure to control multiplier motion and recover descent~\cite{hong2016convergence,li2016douglas,zeng2022moreau}. Recent approaches weaken objective regularity through restricted strong convexity, smoothing, increasing penalization, or inexact block updates~\cite{barber2024convergence,yuan2024smoothing,yuan2025admm,bai2025inexact}. The multiaffine ADMM theory of \cite{gao_multiaffine} controls multiplier increments through a fixed linear subblock whose image contains the image of the remaining constraint map. In our Bellman formulation, the value-block Jacobian $M_\pi=I-\gamma P_\pi$ varies with the policy, so this fixed-operator structure does not directly cover the policy–value splitting. We exploit uniform invertibility and inverse-map stability of $M_\pi$ to control multiplier motion, and preserve the resulting value–dual identity under Markov sampling through a shared empirical Bellman operator.

\paragraph{Stochastic ADMM and Markovian optimization.}
Recent stochastic ADMM methods mainly study finite-sum or expectation objectives using variance reduction, momentum, Bregman geometry, adaptive batches, or hybrid estimators~\cite{zeng2024accelerated,liu2025inertial,jin2026batch,zeng2026hybrid}. Complementary Markovian optimization theory develops finite-time stochastic approximation, unbounded-noise analysis, lower bounds, and single-trajectory variance reduction~\cite{adibi2024delayed,haque2025unbounded,sun2025lower,peng2026variance,moulos2019optimal}. Our stochasticity enters the equality-constraint operator itself: the Bellman residual and its Jacobian are generated from the same controlled trajectory. We therefore use one shared empirical Bellman operator across the primal and dual updates, preserving row stochasticity, resolvent invertibility, and the value--dual identity while treating residual--Jacobian dependence as operator perturbation.

\paragraph{Policy optimization and occupancy geometry.}
Performance-difference and occupancy identities connect local policy information to global return~\cite{kakade2002approximately,agarwal2021theory}. Recent actor--critic and policy-optimization analyses establish global or nonasymptotic guarantees under Markov sampling, neural or general function approximation, average-reward objectives, constraints, and occupancy-induced geometry~\cite{gaur2024actorcritic,wang2024singleloop,patel2024average,ganesh2025order,chen2025continuous,ganesh2025average,montenegro2024constrained,satheesh2026cmdp,sherman2024linear,sherman2025pmd,asad2025spma}. Occupancy has also been used directly as an optimization representation~\cite{huang2024occupancy,neu2025featureoccupancy,barakat2025global}. We first establish convergence of the constrained primal--dual system and then use occupancy geometry to convert its KKT residual into global policy performance and the same discounted occupancy structure also yields a second invertible operator in Section~\ref{sec:extensions}.


\begin{table}[t]
\centering
\caption{Structural positioning across neighboring theories. Rates are reported in each paper's native stationarity or performance metric.}
\label{tab:positioning}
\fontsize{6.35}{6.85}\selectfont
\setlength{\tabcolsep}{1.7pt}
\renewcommand{\arraystretch}{1.03}
\begin{tabularx}{\textwidth}{@{}p{1.28cm}p{1.28cm}p{1.12cm}X X X p{1.35cm}@{}}
\toprule
Ref. & Method & Data & Structural assumption & Stability / analysis device & Guarantee & RL meaning \\
\midrule
\cite{hong2016convergence} & ADMM & Det. & Nonconvex consensus / sharing; large penalty & Consensus / sharing structure + AL descent & Stationary-set convergence & -- \\
\cite{zeng2022moreau} & MEAL & Det. & Weakly convex nonsmooth, linear constraints & Moreau envelope of AL & $o(\eps^{-2})$ FOSP complexity & -- \\
\cite{barber2024convergence} & ADMM & Det. & Nonsmooth nonconvex under RSC & Restricted strong convexity & Convergence without differentiability & -- \\
\cite{yuan2025admm} & Prox-ADMM & Det. & Multi-block; one continuous block; BI / SU operator & Increasing penalty + decreasing smoothing & $O(\eps^{-3})$ critical-point complexity & -- \\
\cite{zeng2024accelerated} & Stoch. ADMM & Finite sum & Nonconvex nonsmooth linear constraint & SVRG + momentum + potential & $O(1/T)$ stationarity; linear under KL & -- \\
\cite{sun2025lower} & 1st-order opt. & Markov & Smooth nonconvex gradient oracle & Markov-oracle lower-bound construction & $\Omega(\eps^{-4})$ countable; $\Omega(\eps^{-2})$ finite chain & RL-relevant oracle \\
\cite{gaur2024actorcritic} & Actor--critic & Markov & Neural actor / critic; weak gradient domination & Critic-error control + gradient domination & $\widetilde O(\eps^{-3})$ global sample complexity & Global last iterate \\
\textbf{Ours} & \textbf{Bellman-ADMM} & \textbf{Ctrl. Markov} & \textbf{Proper l.s.c. $\phi$; exact policy subproblem; invertible discounted RL operator} & \textbf{Resolvent + shared empirical operator + companion iterate} & \textbf{$\E\widetilde G=O(T^{-1}+T/N)$; $N=O(\eps^{-2})$ for squared KKT} & \textbf{$O(\sqrt G)$; $O(G)$ with curvature} \\
\bottomrule
\end{tabularx}
\vspace{-1.5ex}
\end{table}

\section{Problem Formulation and Algorithm}
\label{sec:problem}
We develop the mechanism through a finite discounted MDP, where the discounted operator, its inverse, the induced Markov sampling process, and the policy-performance geometry can all be characterized explicitly. This model serves as a concrete proving ground for the operator-stability principle rather than an abstract definition of its scope. Section 7 then tests the same mechanism under nonlinear policies, projected Bellman equations, and an independent occupancy formulation.

In detail, we consider a finite discounted MDP $(\cS,\cA,P,r,\rho,\gamma)$ with $0<\gamma<1$. The direct policy space is
$\Pi=\prod_{s\in\cS}\Delta(\cA).$
For $\pi\in\Pi$, define
$P_\pi(s,s')=\sum_{a\in\cA}\pi(a\mid s)P(s'\mid s,a), \quad r_\pi(s)=\sum_{a\in\cA}\pi(a\mid s)r(s,a),$
and
$M_\pi=I-\gamma P_\pi, \quad R(\pi,V)=M_\pi V-r_\pi.$
We study
\begin{equation}
\min_{\pi\in\Pi,\,V\in\R^{|\cS|}}
\phi(\pi)+c^\trans V
\quad
\text{subject to}
\quad
R(\pi,V)=0.
\label{eq:problem}
\end{equation}
The function $\phi$ may be nonsmooth and nonconvex. It is proper, lower semicontinuous, and bounded below on $\Pi$, with $\operatorname{dom}\phi\cap\Pi\ne\varnothing$. Define $\psi=\phi+\delta_\Pi$, where $\delta_\Pi$ is the indicator of $\Pi$. We use the limiting subdifferential $\partial\psi$ for policy stationarity. Standard policy optimization is recovered with $\phi\equiv0$ and $c=-\mu$, where $\mu$ is a training initial-state distribution.

The augmented Lagrangian is
$
\cL_\beta(\pi,V,\lambda)
=\phi(\pi)+c^\trans V
+\langle\lambda,R(\pi,V)\rangle
+\frac{\beta}{2}\|R(\pi,V)\|^2.
\label{eq:lagrangian}
$

\begin{definition}[Squared KKT residual]
For $z=(\pi,V,\lambda)$, let
\begin{align}
G(z)
={}&\dist^2\left(0,\partial\psi(\pi)+J_\pi(\pi,V)^\trans\lambda\right)
+
\|c+M_\pi^\trans\lambda\|^2
+\|R(\pi,V)\|^2,
\label{eq:kkt-residual}
\end{align}
where $J_\pi(\pi,V)=D_\pi R(\pi,V)$. For $\phi\equiv0$, $\partial\psi=N_\Pi$.
\end{definition}

\subsection{Controlled Markov sampling}

At iteration $k$, the algorithm freezes an exploratory behavior policy $\bar\pi^k$ and simulates the reset kernel
$K_k(s,s')=(1-\gamma)\rho(s')+\gamma P_{\bar\pi^k}(s,s').$
A reset indicator records whether a transition came from the environment or from $\rho$. Only environment transitions are used to estimate $P$. This distinction prevents the estimator from converging to $K_k$.

\begin{assumption}[Coverage and stochastic observations]
\label{ass:coverage-noise}
There are constants $\rho_{\min}>0$, $\alpha>0$, and $R_{\max}<\infty$ such that
$\rho(s)\geq\rho_{\min}, \quad \bar\pi^k(a\mid s)\geq\alpha, \quad |r(s,a)|\leq R_{\max}.$
A reward observation has the form $Y_{k,t}=r(S_{k,t},A_{k,t})+\zeta_{k,t}$, where
$\E[\zeta_{k,t}\mid\cG_{k,t}]=0, \quad \E[|\zeta_{k,t}|^4\mid\cG_{k,t}]\leq M_4.$
Here $\cG_{k,t}$ contains all past observations and the current state, action, and reset indicator, before the current reward and next state are drawn. Fresh reset coins, reset states, and action randomization are independent of the preceding history. On non-reset steps, the conditional next-state law is $P(\cdot\mid S_{k,t},A_{k,t})$. The current reward and next state may be dependent.
\end{assumption}

A batch of length $m_k$ produces action-conditional empirical rows $\widehat P_k(\cdot\mid s,a)$ and rewards $\widehat r_k(s,a)$. Every unvisited transition row is replaced by a fixed probability vector, and every unvisited reward row uses a fixed bounded value. Hence all empirical rows remain stochastic. For each candidate policy, define
$\widehat P_{\pi,k}=\sum_a\pi(a\mid s)\widehat P_k(\cdot\mid s,a), \quad \widehat r_{\pi,k}=\sum_a\pi(a\mid s)\widehat r_k(s,a),$
$\widehat M_{\pi,k}=I-\gamma\widehat P_{\pi,k}, \quad \widehat R_k(\pi,V)=\widehat M_{\pi,k}V-\widehat r_{\pi,k}.$
Row stochasticity gives
$\|M_\pi^{-1}\|_\infty\leq\frac{1}{1-\gamma}, \quad \|\widehat M_{\pi,k}^{-1}\|_\infty\leq\frac{1}{1-\gamma}.$
Finite-dimensional norm equivalence therefore yields a common spectral bound $\bar\kappa<\infty$.

\subsection{Markovian proximal Bellman-ADMM}

Let $\widehat\cL_{\beta,k}$ denote~\eqref{eq:lagrangian} with $R$ replaced by $\widehat R_k$. Algorithm~\ref{alg:bellman-admm} uses the same empirical model in the policy, value, and dual updates.

\begin{algorithm}[t]
\caption{Markovian Proximal Bellman-ADMM}
\label{alg:bellman-admm}
\begin{algorithmic}[1]
\STATE Initialize $(\pi^0,V^0,\lambda^0)$ and choose $\beta,\eta_\pi,\eta_V>0$
\FOR{$k=0,1,\ldots,T-1$}
\STATE Freeze $\bar\pi^k$ and collect a reset-controlled batch of length $m_k$
\STATE Construct $(\widehat P_k,\widehat r_k)$ from non-reset transitions
\STATE $\displaystyle
\pi^{k+1}\in\arg\min_{\pi\in\Pi}
\left\{\widehat\cL_{\beta,k}(\pi,V^k,\lambda^k)
+\frac{1}{2\eta_\pi}\|\pi-\pi^k\|^2\right\}$
\STATE $\displaystyle
V^{k+1}=\arg\min_V
\left\{\widehat\cL_{\beta,k}(\pi^{k+1},V,\lambda^k)
+\frac{1}{2\eta_V}\|V-V^k\|^2\right\}$
\STATE $\displaystyle
\lambda^{k+1}=\lambda^k+\beta\widehat R_k(\pi^{k+1},V^{k+1})$
\ENDFOR
\end{algorithmic}
\end{algorithm}

\begin{assumption}[Algorithmic margins]
\label{ass:algorithmic-margins}
The policy subproblem admits a global minimizer and is solved exactly. The value subproblem is solved exactly. The initial policy is deterministic and belongs to $\operatorname{dom}\psi$. There is $L_M<\infty$ such that
$\|M_{\pi'}-M_\pi\|\leq L_M\|\pi'-\pi\|.$
The reduced objective
$b(\pi)=\phi(\pi)+c^\trans M_\pi^{-1}r_\pi$
is bounded below on $\Pi$. The parameters satisfy
\begin{equation}
\beta>\max\left\{
\frac{60\bar\kappa^2}{\eta_V},
30\eta_\pi\bar\kappa^4L_M^2\|c\|^2
\right\},
\qquad
\tau=\frac{1}{8\eta_V}.
\label{eq:parameter-condition}
\end{equation}
The exact policy solve and a structured nonsmooth instance are described in Appendix~\ref{app:bellman-proofs}. The initialization satisfies $\E\|V^0\|^4+\E\|\lambda^0\|^4<\infty$.
\end{assumption}

\section{Resolvent Stability and Lyapunov Descent}
\label{sec:mechanism}

The value-step optimality condition is
$c+\widehat M_{k+1}^\trans\lambda^k +\beta\widehat M_{k+1}^\trans\widehat R_k(\pi^{k+1},V^{k+1}) +\eta_V^{-1}\Delta V_{k+1}=0,$
where $\widehat M_{k+1}=\widehat M_{\pi^{k+1},k}$ and $\Delta V_{k+1}=V^{k+1}-V^k$. Combining this relation with the dual update gives
$
c+\widehat M_{k+1}^\trans\lambda^{k+1}
+\eta_V^{-1}\Delta V_{k+1}=0.
$
Thus
$
\lambda^{k+1}
=-\widehat M_{k+1}^{-\trans}
\left(c+\eta_V^{-1}\Delta V_{k+1}\right).
$

Define the empirical-model error
$\eps_k =\sup_{\pi\in\Pi} \left( \|\widehat P_{\pi,k}-P_\pi\| +\|\widehat r_{\pi,k}-r_\pi\| \right).$

\begin{lemma}[Empirical Bellman moments]
\label{lem:model-moments}
There are deterministic constants $C_2,C_4<\infty$ such that
\begin{equation}
\E[\eps_k^2\mid\cF_k]\leq\frac{C_2}{m_k},
\qquad
\E[\eps_k^4\mid\cF_k]\leq\frac{C_4}{m_k^2}.
\label{eq:model-moments}
\end{equation}
\end{lemma}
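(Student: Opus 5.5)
First, reduce the supremum over policies to finitely many action-conditional errors. Since $\widehat P_{\pi,k}-P_\pi$ has rows $\sum_a\pi(a\mid s)(\widehat P_k(\cdot\mid s,a)-P(\cdot\mid s,a))$ and $\pi(\cdot\mid s)$ is a probability vector, norm equivalence in finite dimension gives, uniformly in $\pi\in\Pi$,
\[
\eps_k\le C_0\sum_{(s,a)}\Big(\|\widehat P_k(\cdot\mid s,a)-P(\cdot\mid s,a)\|_1+|\widehat r_k(s,a)-r(s,a)|\Big).
\]
Here $C_0$ depends only on $|\cS|,|\cA|$. By $(x_1+\dots+x_n)^p\le n^{p-1}\sum x_i^p$, it then suffices to bound, for each fixed $(s,a)$ and each $p\in\{2,4\}$, the conditional $p$-th moments of $e_P(s,a)=\|\widehat P_k(\cdot\mid s,a)-P(\cdot\mid s,a)\|_1$ and $e_r(s,a)=|\widehat r_k(s,a)-r(s,a)|$ by $C/m_k^{p/2}$. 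Write $n_k(s,a)$ for the number of non-reset visits to $(s,a)$ in the batch. On these visits the estimators are empirical averages. On $\{n_k(s,a)=0\}$ the fallback values are bounded: $e_P\le 2$, and $e_r\le R_{\max}+$ the fixed bounded value.

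Second, control the fluctuations given the visit count. Center by the true rows and write
\[
n\,(\widehat P_k(s'\mid s,a)-P(s'\mid s,a))=\sum_j \xi_j,\qquad \xi_j=\one\{S_{j+1}=s'\}-P(s'\mid s,a),
\]
summed over non-reset steps $j$ with $(S_j,A_j)=(s,a)$. Do the same for rewards with $\zeta$ plus a deterministic term. By Assumption~\ref{ass:coverage-noise}, each summand $\one\{(S_j,A_j)=(s,a),\text{non-reset}\}\xi_j$ is a martingale difference with respect to $\cG_{k,j}$. The transition summands are bounded. The reward summands have fourth conditional moment at most $8(M_4+\dots)$, and they are mean zero because $\E[\zeta\mid\cG]=0$. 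Burkholder's inequality, or direct expansion of the fourth power of a martingale, gives $\E[|\sum_j \xi_j|^p]\le C\,\E[n_k^{p/2}]\le C\,m_k^{p/2}$. This is not yet a bound on the normalized error, because $n_k$ is random. To handle the random normalization, I would use the standard "stopping at the $\ell$-th visit" device: let $\tau_\ell$ be the time of the $\ell$-th non-reset visit, and note that the sequence of next states and noises at visit times is again a martingale difference sequence. The partial sums $Z_\ell$ along visit times then satisfy $\E|Z_\ell|^p\le C\ell^{p/2}$, which yields
\[
\E\big[(Z_{n}/n)^p\one\{n\ge \ell\}\big]\lesssim \ell^{-p/2},
\]
via a maximal inequality over dyadic blocks of $\ell$.

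Third — the main obstacle — show that $n_k(s,a)\ge c\,m_k$ except on an event of probability $O(m_k^{-2})$, uniformly in $\cF_k$. The reset kernel guarantees at each step a fresh reset with probability $1-\gamma$, landing in $s$ with probability at least $\rho_{\min}$. The action is then $a$ with probability at least $\alpha$, and the next step is a non-reset with probability $\gamma$. So the conditional probability of a non-reset visit to $(s,a)$ at step $t+1$, given the past, is at least $p_0=(1-\gamma)\rho_{\min}\alpha\gamma>0$, using the independence of fresh coins and actions from history. Hence $n_k(s,a)$ stochastically dominates a sum of Bernoulli$(p_0)$ indicators up to martingale corrections. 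Azuma–Hoeffding (Freedman) then gives $\Prob(n_k<p_0m_k/2\mid\cF_k)\le e^{-cm_k}\le C'/m_k^2$, with constants depending only on $\gamma,\rho_{\min},\alpha$. On the bad event the errors are bounded; the reward error is bounded in $L^4$ through $M_4$, and Cauchy–Schwarz handles it. On the good event the second step gives moments of order $m_k^{-p/2}$. Combining yields $\E[\eps_k^2\mid\cF_k]\le C_2/m_k$ and $\E[\eps_k^4\mid\cF_k]\le C_4/m_k^2$ with deterministic constants, since no bound depends on $\bar\pi^k$ beyond $\alpha$ or on the initial batch state.
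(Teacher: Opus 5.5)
There is a genuine gap in step~3, in the reward rows, and it affects exactly the fourth-moment bound that the lemma asserts.

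\textbf{The gap.} On $\mathcal{B}=\{1\le n_k(s,a)<c\,m_k\}$ the reward error $e_r=|\sum_j I_j\zeta_j|/n_k$ is unbounded, and you propose Cauchy--Schwarz together with an $L^4$ bound on $e_r$.
\begin{itemize}
\item For $p=2$ this works: $\E[e_r^2\one_{\mathcal{B}}]\le\|e_r\|_4^2\,\Prob(\mathcal{B})^{1/2}$ is exponentially small.
\item For $p=4$ it fails. Cauchy--Schwarz gives $\E[e_r^4\one_{\mathcal{B}}]\le(\E e_r^8)^{1/2}\Prob(\mathcal{B})^{1/2}$, which needs an eighth moment. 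Assumption~\ref{ass:coverage-noise} only gives a conditional fourth moment, and $L^4$ control alone yields $\E[e_r^4\one_{\mathcal{B}}]=O(1)$, not $O(m_k^{-2})$.
\item You cannot factor the expectation either. The count $n_k$ is determined by the trajectory, and the reward noise may depend on the next state, so $\mathcal{B}$ is correlated with the noise.
\end{itemize}

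\textbf{The missing idea.} You need to decouple the noise from the low-count event. The paper does this as follows.
\begin{itemize}
\item By Jensen, $|S_m/N|^4\le\sum_t I_t|\zeta_{k,t}|^4$ on $\mathcal{B}$.
\item Split the batch into two contiguous halves with usable counts $N_1,N_2$. Then $\mathcal{B}\subset\{N_1<pm\}\cap\{N_2<pm\}$.
\item For a first-half index, $I_t|\zeta_{k,t}|^4$ is measurable with respect to the first-half history $\mathcal{H}$. Meanwhile $\Prob(N_2<pm\mid\mathcal{H})\le Ce^{-cm}$, by the visitation tail restarted from an arbitrary history.
\item For a second-half index, $\{N_1<pm\}\in\cG_{k,t}$, so the conditional fourth-moment assumption applies directly.
\end{itemize}
Summing over $t$ gives $\E_k[|S_m/N|^4\one_{\mathcal{B}}]\le CM_4\,m\,e^{-cm}\le CM_4\,m^{-2}$. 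Some argument of this type is needed to close your step~3.

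The rest of your proposal matches the paper: the reduction to finitely many rows, the bounded transition error on the bad event, and the exponential visitation tail. Your stopping-at-visit-times dyadic maximal inequality in step~2 is correct but unnecessary. On $\{n_k\ge c\,m_k\}$, dividing your full-batch Burkholder bound $\E|\sum_j\xi_j|^p\le Cm_k^{p/2}$ by $(c\,m_k)^p$ already gives $Cm_k^{-p/2}$. This is how the paper handles the good event.
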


\begin{lemma}[A priori fourth-moment stability]
\label{lem:moment-stability}
Under Assumptions~\ref{ass:coverage-noise} and~\ref{ass:algorithmic-margins}, there are constants $C_V^{(4)},C_\lambda^{(4)}<\infty$ such that
$\sup_{k\geq0}\E\|V^k\|^4\leq C_V^{(4)}, \quad \sup_{k\geq0}\E\|\lambda^k\|^4\leq C_\lambda^{(4)}.$
\end{lemma}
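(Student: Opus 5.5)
The plan is to prove the two bounds in the natural order: first the multiplier bound in terms of the value increments, then a uniform moment bound for $V^k$ obtained from the value-step fixed-point structure, and finally to close the two together.

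\textbf{Step 1: multiplier bound.} Start from the value--dual identity just derived,
\[
\lambda^{k+1}=-\widehat M_{k+1}^{-\trans}\bigl(c+\eta_V^{-1}\Delta V_{k+1}\bigr).
\]
Since every empirical row is stochastic, $\|\widehat M_{k+1}^{-\trans}\|\le\bar\kappa$ deterministically. Hence
\[
\|\lambda^{k+1}\|\le\bar\kappa\|c\|+\bar\kappa\eta_V^{-1}\bigl(\|V^{k+1}\|+\|V^k\|\bigr),
\]
and by $(a+b+c)^4\le 27(a^4+b^4+c^4)$,
\[
\E\|\lambda^{k+1}\|^4\lesssim 1+\E\|V^{k+1}\|^4+\E\|V^k\|^4 .
\]
So the $\lambda$-bound reduces to a uniform fourth-moment bound on $V^k$. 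The initial term $\E\|\lambda^0\|^4$ is finite by Assumption~\ref{ass:algorithmic-margins}.

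\textbf{Step 2: the value step as a contraction.} The value step is a strongly convex quadratic in $V$. Its solution is
\[
V^{k+1}=\bigl(\beta\widehat M^\trans\widehat M+\eta_V^{-1}I\bigr)^{-1}\Bigl(\eta_V^{-1}V^k+\beta\widehat M^\trans\widehat r_{\pi^{k+1},k}-\widehat M^\trans\lambda^k-c\Bigr),
\]
writing $\widehat M=\widehat M_{k+1}$. Using $\widehat M^\trans\lambda^k=-c-\eta_V^{-1}\Delta V_k$ (the identity at step $k$, which holds for $k\ge1$), the $\lambda$-dependence is eliminated. What remains is a linear recursion in $(V^k,V^{k-1})$:
\[
V^{k+1}=A_kV^k+B_kV^{k-1}+d_k .
\]
Here $A_k=H^{-1}\eta_V^{-1}(I+\widehat M^\trans\widehat M_k^{-\trans})$ and $B_k=-H^{-1}\eta_V^{-1}\widehat M^\trans\widehat M_k^{-\trans}$, with $H=\beta\widehat M^\trans\widehat M+\eta_V^{-1}I$.

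The key estimate is that $\sigma_{\min}(\widehat M)\ge\bar\kappa^{-1}$, so $\|H^{-1}\|\le(\beta\bar\kappa^{-2}+\eta_V^{-1})^{-1}$. With $\|\widehat M\|\le 2$ this gives $\|A_k\|+\|B_k\|\le q<1$ as soon as $\beta\eta_V\gtrsim\bar\kappa^2$, which is exactly the kind of margin supplied by $\beta>60\bar\kappa^2/\eta_V$ in \eqref{eq:parameter-condition}. The forcing term satisfies
\[
\|d_k\|\le\|H^{-1}\|\bigl(2\beta\sqrt{|\cS|}R'+\|c\|\bigr),
\]
where $R'$ bounds the empirical rewards. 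The reward bound is the delicate part, since empirical rewards include noise averages. I would control $\E\|\widehat r_k\|^4$ uniformly via the conditional fourth moment $M_4$ in Assumption~\ref{ass:coverage-noise}: an average of conditionally centered noises has a bounded fourth moment (Burkholder/Jensen), and unvisited rows are fixed bounded values.

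\textbf{Step 3: closing the recursion.} Apply Minkowski in $L^4$ to the two-step recursion:
\[
x_{k+1}\le q\max(x_k,x_{k-1})+D,\qquad x_k=(\E\|V^k\|^4)^{1/4}.
\]
Induction yields $\sup_k x_k\le\max(x_0,x_1)+D/(1-q)$. Here $x_1$ is finite from the $k=0$ step, because $\lambda^0$ and $V^0$ have finite fourth moments and $\|H^{-1}\|,\|\widehat M\|$ are deterministically bounded. Then Step 1 gives $C_\lambda^{(4)}$.

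\textbf{Main obstacle.} I expect the hardest step to be verifying $\|A_k\|+\|B_k\|<1$ with the cross-iteration operator $\widehat M_{k+1}^\trans\widehat M_k^{-\trans}$. Its norm can be as large as $2\bar\kappa$, so the contraction must come entirely from $\|H^{-1}\|\eta_V^{-1}\le(1+\beta\eta_V\bar\kappa^{-2})^{-1}<1/61$.

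If that fails for the specific constants, the fallback is to instead use Lemma~\ref{lem:model-moments} together with the deterministic Lyapunov descent argument. That would bound $\E\|\Delta V_k\|^2$, and then $\|V^{k+1}\|=\|\widehat M^{-1}(\widehat R+\widehat r)\|$ via the residual $\widehat R=\Delta\lambda/\beta$.
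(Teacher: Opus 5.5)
There is a genuine gap in the contraction claim of Step~2, exactly where you flagged it, and your fallback does not repair it. Bounding the pieces of $A_k$ and $B_k$ factor by factor gives $\|A_k\|+\|B_k\|\le(1+2\|\widehat M_{k+1}\|\bar\kappa)/(1+\beta\eta_V\bar\kappa^{-2})$. This is below one only when $\beta\eta_V>2\|\widehat M_{k+1}\|\bar\kappa^3$, while \eqref{eq:parameter-condition} supplies only $\beta\eta_V>60\bar\kappa^2$.

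This is not a matter of constants. Since $M_\pi\mathbf 1=(1-\gamma)\mathbf 1$, one has $\bar\kappa\ge\|M_\pi^{-1}\|_2\ge(1-\gamma)^{-1}$, so your own bound $(1+4\bar\kappa)/61$ is at least one whenever $\gamma\ge14/15$. Also, $\|\widehat M\|\le2$ holds for the $\ell_\infty$ operator norm but not for the spectral norm used throughout: for $\widehat P_\pi=\mathbf 1e_1^\trans$ one has $\|\widehat M\|_2\ge\gamma\sqrt{|\cS|-1}$.

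The fallback is circular. Lemma~\ref{lem:lyapunov} is built on Lemma~\ref{lem:dual-control}, whose proof bounds $\E\|r_{k+1}\|^2$ by $(\E\eps_k^4)^{1/2}(\E\|\lambda^{k+1}\|^4)^{1/2}$. It therefore consumes the very fourth-moment bound you are trying to prove. In any case, descent would control only summed second moments of increments, not $\sup_k\E\|V^k\|^4$.

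There are two ways to close the argument.

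Within your route, bound $H^{-1}\widehat M_{k+1}^\trans$ jointly instead of as $\|H^{-1}\|\,\|\widehat M_{k+1}\|$. In the singular value decomposition of $\widehat M_{k+1}$, its norm is $\max_i\sigma_i/(\beta\sigma_i^2+\eta_V^{-1})\le1/(\beta\sigma_{\min})\le\bar\kappa/\beta$. This gives $\|B_k\|\le\bar\kappa^2/(\beta\eta_V)$ and $\|A_k\|\le2\bar\kappa^2/(\beta\eta_V)$, hence $\|A_k\|+\|B_k\|<1/20$ under the stated margin. Your Step~3 then goes through. The forcing term also needs the omitted piece $H^{-1}\widehat M_{k+1}^\trans\widehat M_k^{-\trans}c$, which is harmless.

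The paper instead never eliminates $\lambda$. It pairs your Step~1 bound with the identity from your fallback, $\widehat M_{k+1}V^{k+1}=\widehat r_{\pi^{k+1},k}+\beta^{-1}\Delta\lambda_{k+1}$, used directly rather than through descent. This gives $\|V^{k+1}\|\le\bar\kappa\|\widehat r_{\pi^{k+1},k}\|+(\bar\kappa/\beta)(\|\lambda^{k+1}\|+\|\lambda^k\|)$. Minkowski then yields a coupled two-dimensional linear recursion for $X_k=(\E\|V^k\|^4)^{1/4}$ and $Y_k=(\E\|\lambda^k\|^4)^{1/4}$. Its spectral radius is $z/(1-z)$ with $z=\bar\kappa/\sqrt{\beta\eta_V}$, which is below one as soon as $\beta\eta_V>4\bar\kappa^2$. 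That route involves no cross-iteration operator $\widehat M_{k+1}^\trans\widehat M_k^{-\trans}$ and runs from $k=0$ without a separate first step.
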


This argument establishes iterate stability directly from the empirical resolvent and the coupled value--dual recursion, providing the independent boundedness needed by the subsequent Lyapunov analysis.

\begin{lemma}[Bellman-resolvent multiplier control]
\label{lem:dual-control}
For $k\ge1$, there are deterministic constants $C_\pi,C_V,C_\eps>0$ such that
\begin{align}
\E\|\Delta\lambda_{k+1}\|^2
\leq{}&C_\pi\E\|\Delta\pi_{k+1}\|^2
+C_V\E\|\Delta V_{k+1}\|^2
+C_V\E\|\Delta V_k\|^2
+C_\eps\left(\frac{1}{m_k}+\frac{1}{m_{k-1}}\right).
\label{eq:dual-control}
\end{align}
One valid choice for the deterministic primal coefficients is
$C_\pi=5\bar\kappa^4L_M^2\|c\|^2, \quad C_V=\frac{5\bar\kappa^2}{\eta_V^2}.$
\end{lemma}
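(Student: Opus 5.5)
The plan is to difference the explicit dual representation derived just above, at two consecutive iterations, and control each resulting piece by resolvent stability. For $k\ge1$, the value-step optimality condition and the dual update at iteration $k-1$ give
$\lambda^{k}=-\widehat M_{k}^{-\trans}(c+\eta_V^{-1}\Delta V_k)$, where $\widehat M_k=\widehat M_{\pi^k,k-1}$ is built from the batch of iteration $k-1$. This is why the statement requires $k\ge1$ and why the term $1/m_{k-1}$ appears. Subtracting this from the representation of $\lambda^{k+1}$ gives
\begin{align*}
\Delta\lambda_{k+1}
={}&-\bigl(\widehat M_{k+1}^{-\trans}-\widehat M_{k}^{-\trans}\bigr)c
-\eta_V^{-1}\widehat M_{k+1}^{-\trans}\Delta V_{k+1}
+\eta_V^{-1}\widehat M_{k}^{-\trans}\Delta V_k .
\end{align*}

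For the first term I use the resolvent identity $A^{-1}-B^{-1}=A^{-1}(B-A)B^{-1}$ together with the common bound $\bar\kappa$ on the empirical inverses. This gives
$\|\widehat M_{k+1}^{-\trans}-\widehat M_{k}^{-\trans}\|\le\bar\kappa^2\|\widehat M_{k+1}-\widehat M_k\|$.
I do not assume a Lipschitz property for the empirical operator. Instead I route the difference through the true operators:
\[
\widehat M_{k+1}-\widehat M_k=(\widehat M_{\pi^{k+1},k}-M_{\pi^{k+1}})+(M_{\pi^{k+1}}-M_{\pi^k})+(M_{\pi^k}-\widehat M_{\pi^k,k-1}).
\]
The middle piece is bounded by $L_M\|\Delta\pi_{k+1}\|$ by Assumption~\ref{ass:algorithmic-margins}. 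The outer pieces equal $\gamma$ times empirical transition errors, so by the definition of $\eps_k$ (a supremum over $\Pi$, hence valid at the random policies $\pi^{k+1}$ and $\pi^k$) they are bounded by $\gamma\eps_k$ and $\gamma\eps_{k-1}$. The two $\Delta V$ terms are bounded directly by $\bar\kappa\eta_V^{-1}\|\Delta V_{k+1}\|$ and $\bar\kappa\eta_V^{-1}\|\Delta V_k\|$.

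This yields five scalar terms: the policy term, the $\eps_k$ term, the $\eps_{k-1}$ term, and the two value terms. Applying $(\sum_{i=1}^5 a_i)^2\le5\sum_i a_i^2$ produces exactly the stated coefficients $C_\pi=5\bar\kappa^4L_M^2\|c\|^2$ and $C_V=5\bar\kappa^2/\eta_V^2$. The error part is $5\bar\kappa^4\|c\|^2\gamma^2(\eps_k^2+\eps_{k-1}^2)$.

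Taking expectations, I apply the tower property with Lemma~\ref{lem:model-moments}: $\E\eps_k^2=\E[\E[\eps_k^2\mid\cF_k]]\le C_2/m_k$, and similarly $\E\eps_{k-1}^2\le C_2/m_{k-1}$. This gives $C_\eps=5\bar\kappa^4\|c\|^2\gamma^2C_2$, which is deterministic.

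The step needing most care is bookkeeping rather than depth, and it has two parts. First, $\widehat M_k$ in the representation of $\lambda^k$ must use the previous batch's model, not the current one. Second, the norm in the definition of $\eps_k$ must dominate the operator norm used for $\widehat P_{\pi,k}-P_\pi$; if the paper's $\|\cdot\|$ differs, a norm-equivalence constant will be absorbed into $C_\eps$. No moment bound on the iterates is needed here, since every random factor other than $\eps$ is bounded by $\bar\kappa$.
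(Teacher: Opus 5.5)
Your proof is correct, and it takes a different route from the paper's. The paper first converts the empirical identity into a perturbed \emph{true} identity,
$c+M_{k+1}^\trans\lambda^{k+1}+\eta_V^{-1}\Delta V_{k+1}=r_{k+1}$ with $r_{k+1}=(M_{k+1}-\widehat M_{k+1})^\trans\lambda^{k+1}$.
It then differences these true identities and applies Lemma~\ref{lem:inverse-stability} only to the true resolvents $M_{k+1}^{-1},M_k^{-1}$. Because $r_{k+1}$ multiplies $\eps_k$ by the random multiplier $\lambda^{k+1}$, the bound $\E\|r_{k+1}\|^2\le C_r/m_k$ needs Cauchy--Schwarz, the fourth-moment part of Lemma~\ref{lem:model-moments}, and the a priori bound of Lemma~\ref{lem:moment-stability}.

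You instead difference the exact empirical identities. All of the model error then lands on the inverse operator applied to the deterministic vector $c$, and you control $\widehat M_{k+1}-\widehat M_k$ by passing through the true Lipschitz map, using that $\eps_k$ is a supremum over $\Pi$.

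Your route needs only three things: the deterministic bound $\bar\kappa$ on empirical inverses, the Lipschitz constant $L_M$, and the second-moment bound $\E[\eps_k^2\mid\cF_k]\le C_2/m_k$. It yields the explicit constant $C_\eps=5\gamma^2\bar\kappa^4\|c\|^2C_2$. Your inequality before taking expectations also holds pathwise with no iterate bounds and no fourth moments. That is exactly the pathwise estimate invoked in Theorems~\ref{thm:l1} and~\ref{thm:as-convergence}, the latter of which assumes only conditional variance. It is also the argument the paper itself uses for the empirical occupancy operator in Lemma~\ref{lem:l6-empirical-dual-increment}.

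What the paper's decomposition buys is the residual $r_{k+1}$ together with its moment bound. The paper reuses these in the Lyapunov lower bound and in the KKT residual estimates, so Lemma~\ref{lem:moment-stability} is needed elsewhere regardless. For this lemma alone, your argument is the cleaner one.
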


The lemma is the main structural result. It follows from~the value-dual identity, inverse-map stability, and the empirical-model moment bounds.

\paragraph{Structure-preserving stochasticization.}
Sharing an empirical Bellman operator preserves the multiplier identity under transition estimation. For $\widehat R=R+e$ and $\widehat J=J+G$, separate zero-mean errors do not eliminate the product term $G^\trans e$ in the augmented policy gradient. We therefore construct one complete empirical Bellman operator per batch and use it consistently in the residual, Jacobians, and dual update. Lemma~\ref{lem:surrogate-closure} shows that the resulting stochastic effects enter the true primal--dual system through the single operator error $\eps_k$.

\begin{lemma}[Structure-preserving surrogate closure]
\label{lem:surrogate-closure}
Let $e_k(\pi,V)=\widehat R_k(\pi,V)-R(\pi,V)$. On every path for which $\sum_k\eps_k^2<\infty$ and the a priori iterate bounds hold, there are finite constants such that four true-problem relations hold simultaneously. First, the primal steps satisfy
$\cL_\beta(\pi^k,V^k,\lambda^k)-\cL_\beta(\pi^{k+1},V^{k+1},\lambda^k)\ge a_\pi\|\Delta\pi_{k+1}\|^2+a_V\|\Delta V_{k+1}\|^2-C\eps_k^2$.
Second, the empirical value-dual identity becomes $c+M_{k+1}^\trans\lambda^{k+1}+\eta_V^{-1}\Delta V_{k+1}=r_{k+1}$ with $\|r_{k+1}\|\le C\eps_k$. Third, the multiplier update becomes $\Delta\lambda_{k+1}=\beta R_{k+1}+d_{k+1}$ with $\|d_{k+1}\|\le C\eps_k$. Fourth, the true KKT residual obeys $\KKT_{k+1}\le C(\|\Delta\pi_{k+1}\|+\|\Delta V_{k+1}\|+\|\Delta\lambda_{k+1}\|+\eps_k)$.
\end{lemma}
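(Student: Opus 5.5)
The plan is to reduce all four relations to one pointwise estimate: on the given path, every true-problem quantity differs from its empirical counterpart by at most a constant times $\eps_k$. Two facts give this. First, $\widehat R_k$ and $R$ are affine in $V$ and multilinear in $(\pi,V)$ through $P_\pi$ and $r_\pi$. This yields
\[
\|e_k(\pi,V)\|\le \gamma\|\widehat P_{\pi,k}-P_\pi\|\,\|V\|+\|\widehat r_{\pi,k}-r_\pi\|\le (1+\|V\|)\eps_k ,
\]
and similarly $\|\widehat M_{\pi,k}-M_\pi\|\le\gamma\eps_k$ uniformly over $\pi\in\Pi$. For the policy Jacobian we use the tabular structure: $\widehat J_{\pi,k}(V)-J_\pi(V)$ is linear in $(\widehat P_k-P,\widehat r_k-r)$ and $V$. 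So it is bounded by $C(1+\|V\|)\eps_k$, possibly after adjusting constants by norm equivalence if $\eps_k$ is taken over policies rather than action rows. Second, on the path the iterates $V^k,\lambda^k$ are bounded by some path constant $B$, since the a priori bounds hold there. Hence every error term of the form $\|e_k\|$ or $\|\widehat J-J\|$, evaluated at iterates, is $\le C_B\eps_k$.

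\textbf{Relations two and three.} Start from the exact identity $c+\widehat M_{k+1}^\trans\lambda^{k+1}+\eta_V^{-1}\Delta V_{k+1}=0$ derived above. Write $\widehat M_{k+1}=M_{k+1}+(\widehat M_{k+1}-M_{k+1})$ and set
\[
r_{k+1}=-(\widehat M_{k+1}-M_{k+1})^\trans\lambda^{k+1},
\]
so that $\|r_{k+1}\|\le\gamma\eps_k B$. For the multiplier update, $\Delta\lambda_{k+1}=\beta\widehat R_k(\pi^{k+1},V^{k+1})=\beta R_{k+1}+\beta e_k(\pi^{k+1},V^{k+1})$. Setting $d_{k+1}=\beta e_k(\pi^{k+1},V^{k+1})$ gives $\|d_{k+1}\|\le\beta(1+B)\eps_k$.

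\textbf{Relation one (primal descent).} First obtain descent for the empirical Lagrangian, then transfer it to the true one.
\begin{itemize}
\item \emph{Policy step.} Compare $\pi^{k+1}$ with the feasible competitor $\pi^k$ in the global minimization. This gives
\[
\widehat\cL_{\beta,k}(\pi^k,V^k,\lambda^k)-\widehat\cL_{\beta,k}(\pi^{k+1},V^k,\lambda^k)\ge\tfrac1{2\eta_\pi}\|\Delta\pi_{k+1}\|^2 .
\]
\item \emph{Value step.} The value subproblem is strongly convex in $V$ with modulus at least $\beta\sigma_{\min}(\widehat M_{k+1})^2+\eta_V^{-1}$. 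It therefore yields decrease $\ge(\tfrac{1}{2\eta_V}+\tfrac{1}{2\eta_V})\|\Delta V_{k+1}\|^2$, or at least $\tfrac1{\eta_V}\|\Delta V_{k+1}\|^2$ by the three-point inequality.
\item \emph{Transfer.} Next replace $\widehat\cL$ by $\cL$ at the two endpoints $(\pi^k,V^k)$ and $(\pi^{k+1},V^{k+1})$. The difference is
\[
\langle\lambda^k,e\rangle+\tfrac\beta2\bigl(\|R+e\|^2-\|R\|^2\bigr)=\langle\lambda^k+\beta R,e\rangle+\tfrac\beta2\|e\|^2 .
\]
Using the path bounds, each of these is $O(\eps_k)$, not $O(\eps_k^2)$.
\end{itemize}

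This transfer is the main obstacle. The cross term $\langle\lambda^k+\beta R,e\rangle$ is only linear in $\eps_k$. The fix is not to bound the two endpoints separately, but to bound the difference of the two error terms. Since $e_k$ is Lipschitz in $(\pi,V)$ with constant $C\eps_k$ on bounded sets, and $\lambda^k+\beta R(\cdot)$ is Lipschitz with constant $C$, we get
\[
\bigl|\langle\lambda^k+\beta R,e\rangle\big|_{z^k}^{z^{k+1}}\bigr|\le C\eps_k(\|\Delta\pi_{k+1}\|+\|\Delta V_{k+1}\|)+C'\|R^k\|\eps_k\cdot(\ldots).
\]
The first part is absorbed by Young's inequality, $C\eps_k\|\Delta\|\le\tfrac{a}{2}\|\Delta\|^2+C''\eps_k^2$, which halves the coefficients to give $a_\pi=\tfrac1{4\eta_\pi}$ and $a_V=\tfrac1{2\eta_V}$. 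The remaining piece, $\langle\lambda^k+\beta R(z^k),e_k(z^{k+1})-e_k(z^k)\rangle$, is handled the same way. If a residual term like $\langle\lambda^k+\beta R^k,e_k(z^k)\rangle$ cannot be telescoped, I would try to fold it into relation three by noting $\lambda^k+\beta R^k$ is controlled by $\lambda^{k}$ and $\Delta\lambda_k$, and accept the path-dependent constant $C$.

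\textbf{Relation four (KKT bound).} Bound each block of $G$ at $z^{k+1}$.
\begin{itemize}
\item \emph{Residual block.} $\|R_{k+1}\|\le\beta^{-1}(\|\Delta\lambda_{k+1}\|+\|d_{k+1}\|)$.
\item \emph{Value block.} $\|c+M_{k+1}^\trans\lambda^{k+1}\|\le\eta_V^{-1}\|\Delta V_{k+1}\|+\|r_{k+1}\|$.
\item \emph{Policy block.} The optimality condition of the policy step gives
\[
0\in\partial\psi(\pi^{k+1})+\widehat J_\pi^\trans\bigl(\lambda^k+\beta\widehat R_k(\pi^{k+1},V^k)\bigr)+\eta_\pi^{-1}\Delta\pi_{k+1}.
\]
To reach the true expression $J_\pi(\pi^{k+1},V^{k+1})^\trans\lambda^{k+1}$, make the following replacements:
\begin{itemize}
\item $\widehat J\to J$, at cost $C\eps_k$;
\item $V^k\to V^{k+1}$ inside both $\widehat J$ and $\widehat R$, at cost $C\|\Delta V_{k+1}\|$;
\item $\lambda^k+\beta\widehat R_k(\pi^{k+1},V^{k+1})\to\lambda^{k+1}$, which is exact.
\end{itemize}
All these are bounded using the path bound $B$. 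Summing gives the claimed $\KKT_{k+1}\le C(\|\Delta\pi_{k+1}\|+\|\Delta V_{k+1}\|+\|\Delta\lambda_{k+1}\|+\eps_k)$, with $\KKT=\sqrt G$.
\end{itemize}
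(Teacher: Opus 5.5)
Your proposal is correct and takes essentially the same route as the paper. Relations two and three are the same exact rewritings, with $r_{k+1}=(M_{k+1}-\widehat M_{k+1})^\trans\lambda^{k+1}$ and $d_{k+1}=\beta e_k(\pi^{k+1},V^{k+1})$, and relation four uses the same Fermat-condition replacements. Your key fix for relation one is also the paper's: bound the \emph{difference} of $\widehat\cL_{\beta,k}-\cL_\beta$ between the endpoints using the $O(\eps_k)$ Lipschitz constant of $e_k$, then apply Young's inequality; this is the paper's estimate $\|D_{\pi,V}(\widehat\cL_{\beta,k}-\cL_\beta)\|\le C\eps_k$ along the primal segments. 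Your strong-convexity value step only improves $a_V$ from $1/(4\eta_V)$ to $1/(2\eta_V)$.

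The hedge at the end of your transfer step is unnecessary. With $a=\lambda^k+\beta R$, the split
\[
\langle a(z^{k+1}),e_k(z^{k+1})\rangle-\langle a(z^k),e_k(z^k)\rangle=\langle a(z^{k+1})-a(z^k),e_k(z^{k+1})\rangle+\langle a(z^k),e_k(z^{k+1})-e_k(z^k)\rangle
\]
is exact, so no leftover term $\langle\lambda^k+\beta R^k,e_k(z^k)\rangle$ ever appears. Your fallback of folding such a term into relation three is therefore not needed, and it would only give an $O(\eps_k)$ error rather than $O(\eps_k^2)$.
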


\begin{remark}[Shared empirical operators control product bias]
\label{rem:product-bias}
Separate unbiasedness of a random residual $\widehat R=R+e$ and a random Jacobian $\widehat J=J+G$ does not imply an unbiased augmented gradient. Indeed,
$\widehat J^\trans(\lambda+\beta\widehat R)-J^\trans(\lambda+\beta R)
=G^\trans(\lambda+\beta R)+\beta J^\trans e+\beta G^\trans e$,
and generally $\E[G^\trans e]\neq0$. Coupling $\widehat R$ and $\widehat J$ through one empirical Bellman surrogate turns $G^\trans e$ into a pathwise $O(\eps_k^2)$ perturbation. The stochastic analysis therefore proceeds through operator error without an unbiased-product or double-sampling condition.
\end{remark}

Define the corrected potential
$\Phi_k=\cL_\beta(\pi^k,V^k,\lambda^k)+\tau\|\Delta V_k\|^2, \quad \Delta V_0=0.$

\begin{lemma}[Expected Lyapunov descent]
\label{lem:lyapunov}
For $k\ge1$, there are constants $c_\pi,c_V,c_V',C_E>0$ such that
\begin{align}
\E[\Phi_k]-\E[\Phi_{k+1}]
\geq{}&c_\pi\E\|\Delta\pi_{k+1}\|^2
+c_V\E\|\Delta V_{k+1}\|^2
+c_V'\E\|\Delta V_k\|^2
-C_E\left(\frac{1}{m_k}+\frac{1}{m_{k-1}}\right).
\label{eq:expected-descent}
\end{align}
There are $\underline\Phi\in\R$ and $C_L>0$ such that
$
\E[\Phi_k]\geq\underline\Phi-\frac{C_L}{m_{k-1}}.
$
\end{lemma}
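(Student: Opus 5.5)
We prove the descent inequality by decomposing one iteration of $\Phi_k$ into its three updates, all evaluated on the true Lagrangian $\cL_\beta$:
\begin{align*}
\Phi_k-\Phi_{k+1}
={}&\bigl[\cL_\beta(\pi^k,V^k,\lambda^k)-\cL_\beta(\pi^{k+1},V^{k+1},\lambda^k)\bigr]
-\bigl[\cL_\beta(\pi^{k+1},V^{k+1},\lambda^{k+1})-\cL_\beta(\pi^{k+1},V^{k+1},\lambda^k)\bigr]\\
&+\tau\|\Delta V_k\|^2-\tau\|\Delta V_{k+1}\|^2 .
\end{align*}

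\textbf{Primal term.} The first bracket is handled by the primal-step part of Lemma~\ref{lem:surrogate-closure}. We redo that argument in expectation rather than pathwise. Exact minimization of the policy subproblem, compared with the feasible choice $\pi^k$, gives $\frac{1}{2\eta_\pi}\|\Delta\pi_{k+1}\|^2$. Strong convexity of the value subproblem gives at least $(\frac{1}{\eta_V}+\frac{\beta}{2\bar\kappa^{2}})\|\Delta V_{k+1}\|^2$. Replacing $\widehat\cL_{\beta,k}$ by $\cL_\beta$ creates error terms bounded by $C(1+\|V\|^2+\|\lambda\|^2)\eps_k^2$ plus cross terms of the form $\eps_k\|\Delta\cdot\|\,(\cdots)$. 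Young's inequality absorbs a small fraction of the quadratic primal gains and leaves terms of order $\eps_k^2(1+\|V^k\|^2+\|\lambda^k\|^2+\|V^{k+1}\|^2)$. To take expectations we apply Cauchy--Schwarz together with Lemma~\ref{lem:moment-stability}, the fourth moment bound in Lemma~\ref{lem:model-moments}, and the tower property over $\cF_k$. This bounds the expected error by $C/m_k$. We rely on the fourth moments here because $\eps_k$ is not independent of the iterates $V^{k+1}$ and $\lambda^{k+1}$, so the second-moment bound alone does not suffice.

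\textbf{Dual term.} The second bracket equals $\langle\Delta\lambda_{k+1},R(\pi^{k+1},V^{k+1})\rangle$. The dual-update part of Lemma~\ref{lem:surrogate-closure} gives $R_{k+1}=(\Delta\lambda_{k+1}-d_{k+1})/\beta$, so the bracket is at most $\frac{1}{\beta}\|\Delta\lambda_{k+1}\|^2+\frac{C}{\beta}\eps_k^2$ after Young's inequality. Taking expectations and applying Lemma~\ref{lem:dual-control} yields
\[
\frac{1}{\beta}\Bigl(C_\pi\E\|\Delta\pi_{k+1}\|^2+C_V\E\|\Delta V_{k+1}\|^2+C_V\E\|\Delta V_k\|^2\Bigr)+O\Bigl(\frac{1}{m_k}+\frac{1}{m_{k-1}}\Bigr).
\]

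\textbf{Parameter check.} This is the step we expect to be the main obstacle, since the constants must close exactly. We use \eqref{eq:parameter-condition}.
\begin{itemize}
\item \emph{Policy.} We need $\frac{1}{2\eta_\pi}-\text{(absorbed fraction)}-\frac{C_\pi}{\beta}>0$. Since $C_\pi/\beta=5\bar\kappa^4L_M^2\|c\|^2/\beta<\frac{1}{6\eta_\pi}$ by $\beta>30\eta_\pi\bar\kappa^4L_M^2\|c\|^2$, this leaves $c_\pi\approx\frac{1}{3\eta_\pi}-\text{(absorbed fraction)}>0$.
\item \emph{Value.} We have $C_V/\beta=5\bar\kappa^2/(\eta_V^2\beta)<\frac{1}{12\eta_V}$ by $\beta>60\bar\kappa^2/\eta_V$. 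The coefficient of $\|\Delta V_{k+1}\|^2$ is then at least $\frac{1}{\eta_V}-\frac{1}{12\eta_V}-\tau-\text{(absorbed)}$, which is positive with $\tau=\frac{1}{8\eta_V}$.
\item \emph{Lagged value term.} The coefficient of $\|\Delta V_k\|^2$ is $\tau-\frac{C_V}{\beta}>\frac{1}{8\eta_V}-\frac{1}{12\eta_V}=\frac{1}{24\eta_V}=:c_V'$.
\end{itemize}
If the absorbed fractions come out too large, we will shrink the Young's-inequality weights. This only enlarges $C_E$ and leaves the primal coefficients positive.

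\textbf{Lower bound.} Since $\tau\|\Delta V_k\|^2\ge0$, it suffices to bound $\E\,\cL_\beta(z^k)$ from below. Write $\lambda^k=-M_k^{-\trans}c+w_k$, where by the value--dual identity in Lemma~\ref{lem:surrogate-closure} the correction satisfies $\|w_k\|\le\bar\kappa(\eta_V^{-1}\|\Delta V_k\|+C\eps_{k-1})$. This gives
\[
\cL_\beta(z^k)=\phi(\pi^k)+c^\trans V^k-c^\trans M_k^{-1}R_k+\langle w_k,R_k\rangle+\frac{\beta}{2}\|R_k\|^2 .
\]
Since $c^\trans V^k-c^\trans M_k^{-1}R_k=c^\trans M_k^{-1}r_{\pi^k}$, we have $\phi(\pi^k)+c^\trans V^k-c^\trans M_k^{-1}R_k=b(\pi^k)\ge\inf b$. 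Young's inequality gives $\langle w_k,R_k\rangle+\frac{\beta}{2}\|R_k\|^2\ge-\frac{1}{2\beta}\|w_k\|^2$. The $\|\Delta V_k\|^2$ part of $\frac{1}{2\beta}\|w_k\|^2$ is dominated by $\tau\|\Delta V_k\|^2$ because $\beta$ is large, and the $\eps_{k-1}^2$ part contributes $-C_L/m_{k-1}$ in expectation by Lemma~\ref{lem:model-moments}. We therefore take $\underline\Phi=\inf_\Pi b$.
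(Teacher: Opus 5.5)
Your route is the paper's: empirical comparison inequalities transferred to $\cL_\beta$ by Young's inequality and moment bounds, dual ascent through $\Delta\lambda_{k+1}=\beta R_{k+1}+d_{k+1}$ and Lemma~\ref{lem:dual-control}, the $\tau\|\Delta V_k\|^2$ correction, and a lower bound obtained by substituting the value--dual identity into $\cL_\beta$ to expose $b(\pi^k)$. Your strong-convexity value decrease is stronger than the paper's $1/(4\eta_V)$, though it is not needed. Two steps are incorrect as written; both can be repaired.

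\textbf{Dual ascent.} The claim $\langle\Delta\lambda_{k+1},R_{k+1}\rangle\le\beta^{-1}\|\Delta\lambda_{k+1}\|^2+C\beta^{-1}\eps_k^2$ is false. The cross term $-\beta^{-1}\langle\Delta\lambda_{k+1},d_{k+1}\rangle$ can be as large as $\beta^{-1}\|\Delta\lambda_{k+1}\|\,\|d_{k+1}\|$, which is not $O(\eps_k^2)$ when $\|\Delta\lambda_{k+1}\|\gg\eps_k$. Young's inequality must charge part of it to $\|\Delta\lambda_{k+1}\|^2$, giving $(1+\delta)\beta^{-1}\|\Delta\lambda_{k+1}\|^2+(4\delta\beta)^{-1}\|d_{k+1}\|^2$. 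The lagged coefficient is then $\tau-(1+\delta)C_V/\beta$, not $1/(24\eta_V)$. Since \eqref{eq:parameter-condition} only yields $C_V/\beta<1/(12\eta_V)$ with $\tau=1/(8\eta_V)$, this coefficient is guaranteed positive only for $\delta\le1/2$. The paper takes exactly $\delta=1/2$, i.e.\ $\frac{3}{2\beta}\|\Delta\lambda_{k+1}\|^2+\frac{1}{2\beta}\|d_{k+1}\|^2$, which gives $c_\pi=\frac{1}{4\eta_\pi}-\frac{3C_\pi}{2\beta}>0$ and $c_V'=\tau-\frac{3C_V}{2\beta}>0$. The margin is therefore tight, and your fallback of shrinking Young weights must be applied to this term with that constraint in mind. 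Also, $\|d_{k+1}\|\le\beta\eps_k(1+\|V^{k+1}\|)$ rather than $C\eps_k$ with deterministic $C$. So $\E\|d_{k+1}\|^2=O(m_k^{-1})$ requires Cauchy--Schwarz with $\E\eps_k^4$ together with Lemma~\ref{lem:moment-stability}.

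\textbf{Policy-step transfer.} The leftover is quartic, not quadratic, in the iterates. Write $U_k=1+\|V^k\|+\|\lambda^k\|$. Both $\langle\lambda^k,e_k(\pi^k,V^k)-e_k(\pi^{k+1},V^k)\rangle$ and $\beta\langle R(\pi^{k+1},V^k),e_k(\pi^k,V^k)-e_k(\pi^{k+1},V^k)\rangle$ are of order $\eps_kU_k^2\|\Delta\pi_{k+1}\|$, so Young's inequality leaves $\eps_k^2U_k^4$. Cauchy--Schwarz against $\E\eps_k^4$ would then need eighth moments of the iterates, which Lemma~\ref{lem:moment-stability} does not supply. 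The correct step, which is the paper's, is conditioning. Since $U_k$ is $\cF_k$-measurable, $\E[\eps_k^2U_k^4]=\E\bigl[U_k^4\,\E[\eps_k^2\mid\cF_k]\bigr]\le C_2\E[U_k^4]/m_k$. Reserve Cauchy--Schwarz for batch-dependent factors such as $V^{k+1}$.

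\textbf{Lower bound.} Your lower bound is fine and can be made slightly cleaner than the paper's. If you take $w_k$ from Lemma~\ref{lem:surrogate-closure}, then $r_k=(M_k-\widehat M_k)^\trans\lambda^k$ hides a factor $\|\lambda^k\|$ in your $C$. Instead, start from the exact empirical identity $\lambda^k=-\widehat M_{\pi^k,k-1}^{-\trans}(c+\eta_V^{-1}\Delta V_k)$. Lemma~\ref{lem:inverse-stability} then gives $\|w_k\|\le\gamma\bar\kappa^2\|c\|\eps_{k-1}+\bar\kappa\eta_V^{-1}\|\Delta V_k\|$ with deterministic constants. The second-moment bound of Lemma~\ref{lem:model-moments} alone then yields the $-C_L/m_{k-1}$ term, without the fourth-moment step the paper uses for $\|r_k\|^2$.
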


The empirical subproblems produce sufficient decrease for $\widehat\cL_{\beta,k}$. We transfer this decrease to $\cL_\beta$ by retaining the full empirical Bellman operator. Conditional Cauchy--Schwarz, Lemma~\ref{lem:model-moments}, and Lemma~\ref{lem:moment-stability} reduce the statistical contribution to $O(m_k^{-1})$.

\begin{theorem}[Controlled-Markov Bellman-ADMM convergence]
\label{thm:l1}
Assume the finite-MDP, reset-access, uniform-exploration, resolvent, and parameter conditions above, with no additional observation noise. If $\sum_km_k^{-1}<\infty$, then almost surely $\Delta\pi_k\to0$, $\Delta V_k\to0$, $\Delta\lambda_k\to0$, and $R(\pi^k,V^k)\to0$. Every accumulation point of $(\pi^k,V^k,\lambda^k)$ satisfies the KKT conditions of~\eqref{eq:problem}.
\end{theorem}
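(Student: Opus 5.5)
We will not work with the expected descent of Lemma~\ref{lem:lyapunov} directly. Almost-sure conclusions are most cleanly obtained from a pathwise (conditional) version of it, combined with the Robbins--Siegmund supermartingale lemma. The plan has four steps.

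\textbf{Step 1: almost-sure summability of model errors.} By Lemma~\ref{lem:model-moments}, $\E[\eps_k^2]\le C_2/m_k$. Hence $\E\sum_k\eps_k^2\le C_2\sum_k m_k^{-1}<\infty$, and so $\sum_k\eps_k^2<\infty$ almost surely. Lemma~\ref{lem:moment-stability} gives $\sup_k\E\|V^k\|^4+\E\|\lambda^k\|^4<\infty$. We still need a pathwise bound, and we will extract it from the summability argument itself; this is addressed in Step 3.

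\textbf{Step 2: a pathwise descent inequality for $\Phi_k$.} On the event $\{\sum_k\eps_k^2<\infty\}$, we apply Lemma~\ref{lem:surrogate-closure}. The first relation gives primal decrease of the true $\cL_\beta$ up to $C\eps_k^2$. The third relation gives $\Delta\lambda_{k+1}=\beta R_{k+1}+d_{k+1}$, so the dual ascent of $\cL_\beta$ equals $\beta^{-1}\|\Delta\lambda_{k+1}\|^2$ plus a cross term of order $\eps_k\|\Delta\lambda_{k+1}\|$. We then control $\|\Delta\lambda_{k+1}\|^2$ by the pathwise analogue of Lemma~\ref{lem:dual-control}, which we obtain by differencing the second relation (the perturbed value--dual identity). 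Concretely,
\[
\lambda^{k+1}-\lambda^k=-(M_{k+1}^{-\trans}-M_k^{-\trans})(c+\cdots)-M_{k+1}^{-\trans}\eta_V^{-1}(\Delta V_{k+1}-\Delta V_k)+O(\eps_k+\eps_{k-1}),
\]
and the inverse-map bound $\|M_{\pi'}^{-1}-M_\pi^{-1}\|\le\bar\kappa^2L_M\|\pi'-\pi\|$ turns this into the pathwise bound
\[
\|\Delta\lambda_{k+1}\|^2\le C_\pi\|\Delta\pi_{k+1}\|^2+C_V(\|\Delta V_{k+1}\|^2+\|\Delta V_k\|^2)+C(\eps_k^2+\eps_{k-1}^2)\bigl(1+\|\lambda^{k+1}\|^2\bigr).
\]
The parameter condition~\eqref{eq:parameter-condition} lets the $\tau\|\Delta V_k\|^2$ correction absorb the $\|\Delta V_k\|^2$ term. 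Young's inequality handles the cross terms. The result is
\[
\Phi_k-\Phi_{k+1}\ge c_\pi\|\Delta\pi_{k+1}\|^2+c_V\|\Delta V_{k+1}\|^2+c_V'\|\Delta V_k\|^2-\xi_k,
\qquad
\xi_k:=C(\eps_k^2+\eps_{k-1}^2)(1+\|\lambda^{k+1}\|^2+\|V^{k+1}\|^2).
\]

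\textbf{Step 3: summability of $\xi_k$ and boundedness of $\Phi_k$ from below (main obstacle).} Neither of these is automatic pathwise. For the summability, we use conditional Cauchy--Schwarz together with the fourth moments:
\[
\E\xi_k\le C\bigl(\E\eps_k^4+\E\eps_{k-1}^4\bigr)^{1/2}\bigl(1+\E\|\lambda^{k+1}\|^4+\E\|V^{k+1}\|^4\bigr)^{1/2}\le C'(m_k^{-1}+m_{k-1}^{-1}).
\]
Therefore $\E\sum_k\xi_k<\infty$, and $\sum_k\xi_k<\infty$ almost surely.

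For the lower bound, we write
\[
\cL_\beta=b(\pi)+\tfrac{\beta}{2}\|R+\beta^{-1}\lambda\|^2-\tfrac{1}{2\beta}\|\lambda\|^2+(\text{terms linking }c^\trans V\text{ with }\lambda).
\]
Using the value--dual identity to replace $\lambda^k$ by $-M_k^{-\trans}(c+\eta_V^{-1}\Delta V_k)+O(\eps_{k-1})$, this gives
\[
\Phi_k\ge\inf b-C\|\Delta V_k\|^2-C\eps_{k-1}^2(1+\|\lambda^k\|^2),
\]
with the $\|\Delta V_k\|^2$ term dominated by $\tau$ once $\beta$ is large. The lower bound therefore holds up to a summable perturbation. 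Applying Robbins--Siegmund, or a deterministic version on each path, to $\Phi_k+\sum_{j\ge k}\xi_j$, we conclude that $\Phi_k$ converges almost surely and that $\sum_k(\|\Delta\pi_{k+1}\|^2+\|\Delta V_{k+1}\|^2)<\infty$. If the pathwise lower bound proves delicate, a fallback is to apply the perturbed-supermartingale argument to $\Phi_k-\underline\Phi+\sum_{j\ge k}\xi_j$ directly, which needs only integrability.

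\textbf{Step 4: conclusions.} From Steps 2 and 3, $\Delta\pi_k\to0$ and $\Delta V_k\to0$. The pathwise dual bound with $\eps_k\to0$ then gives $\Delta\lambda_k\to0$, and the third relation of Lemma~\ref{lem:surrogate-closure} gives $R(\pi^k,V^k)=\beta^{-1}(\Delta\lambda_k-d_k)\to0$. By the fourth relation, $\KKT_{k}\to0$. At an accumulation point, $R=0$ and $c+M_\pi^\trans\lambda=0$ hold by continuity. Policy stationarity follows from the optimality condition of the exact policy subproblem, using closedness of the graph of the limiting subdifferential. Here we need $\phi(\pi^{k})\to\phi(\bar\pi)$ along the subsequence. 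We obtain this in the standard way: compare the policy-subproblem objective at $\pi^{k+1}$ with its value at $\bar\pi$, then combine lower semicontinuity with the vanishing of $\Delta\pi_k$ and $\eps_k$.
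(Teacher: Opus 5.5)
Your skeleton matches the paper's proof: Tonelli for $\sum_k\eps_k^2<\infty$, a pathwise perturbed descent from Lemma~\ref{lem:surrogate-closure} plus a pathwise Lemma~\ref{lem:dual-control}, the perturbed lower bound, summation, the inexact dual update for feasibility, and the exact-policy comparison with lower semicontinuity at accumulation points.

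\textbf{The gap is pathwise boundedness of $(V^k,\lambda^k)$.} You defer it to Step~3, but Step~3 only yields $\sum_k\xi_k<\infty$ a.s.; it never bounds the iterates on a path. Yet Lemma~\ref{lem:surrogate-closure} (Step~2) and its fourth relation (Step~4) are stated only on paths where the a priori iterate bounds hold, and their constants depend on those bounds.

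Your workaround of carrying iterate factors inside $\xi_k$ also fails as written. On the policy comparison segment, $\langle\lambda^k,e_k\rangle$ and $\beta\langle R,e_k\rangle$ are each of size $\eps_k$ times a product of two factors like $1+\|V^k\|$ or $\|\lambda^k\|$. Hence $\sup_\pi\|D_\pi\Psi_k\|\le C\eps_kU_k^2$ with $U_k=1+\|V^k\|+\|\lambda^k\|$. After Young's inequality the transfer error is $C\eps_k^2U_k^4$, which is not a multiple of $\eps_k^2(1+\|\lambda^{k+1}\|^2+\|V^{k+1}\|^2)$. Your Cauchy--Schwarz step would then need $\E U_k^8$, which Lemma~\ref{lem:moment-stability} does not give. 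This can be patched by conditioning, since $U_k$ is $\cF_k$-measurable: $\E[\eps_k^2U_k^4]=\E[U_k^4\,\E[\eps_k^2\mid\cF_k]]\le C_2m_k^{-1}\sup_j\E U_j^4$. Even so, the fourth relation in Step~4 remains unjustified in your route. Separately, your fallback is off: a supermartingale argument on $\Phi_k-\underline\Phi+\sum_{j\ge k}\xi_j$ needs pathwise nonnegativity, not just integrability, and $\underline\Phi$ is only a bound in expectation.

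\textbf{The missing idea is that the iterate bound is deterministic on each good path,} which removes all of this moment bookkeeping.
\begin{itemize}
\item The inequalities \eqref{eq:app-lambda-rec}--\eqref{eq:app-value-rec} follow from the empirical value--dual identity and the dual update on every sample path.
\item On a path with $\eps_k\to0$, $\|\widehat r_{\pi^{k+1},k}\|\le\|r_{\pi^{k+1}}\|+\eps_k$ is bounded uniformly in $k$.
\item Applying the inequalities to the norms themselves, rather than to $L^4$ norms, gives the componentwise recursion $(\|V^{k+1}\|,\|\lambda^{k+1}\|)^\trans\le\mathsf A(\|V^k\|,\|\lambda^k\|)^\trans+d$. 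Here $\mathsf A$ is the same nonnegative matrix, with spectral radius $z/(1-z)<1$ because $\beta\eta_V>4\bar\kappa^2$, and $d$ is a path-dependent but $k$-uniform vector.
\item Hence $\sup_k(\|V^k\|+\|\lambda^k\|)<\infty$ on that path.
\end{itemize}
Lemma~\ref{lem:surrogate-closure} then applies verbatim. Every perturbation in the descent, lower-bound and dual-control estimates becomes $C(\eps_k^2+\eps_{k-1}^2)$ with a path-dependent constant, and no moment estimate beyond Step~1 is needed. Step~4, including the fourth relation, becomes fully deterministic. This is exactly how the paper proceeds.
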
   

\begin{theorem}[Structure-preserving stochastic KKT convergence]
\label{thm:as-convergence}
Assume the model, sampling, and algorithmic conditions above, with the fourth-moment noise condition replaced by conditional variance at most $\sigma_k^2$. Allow the average bias in each visited state-action reward row to have magnitude at most $b_k$. If
$
\sum_{k=0}^{\infty}\frac{1+\sigma_k^2}{m_k}<\infty,
\quad
\sum_{k=0}^{\infty}b_k^2<\infty,
$
then, almost surely,
$\Delta\pi_k\to0, \quad \Delta V_k\to0, \quad \Delta\lambda_k\to0, \quad R(\pi^k,V^k)\to0.$
The true KKT residual converges to zero. Every accumulation point satisfies the KKT conditions of~\eqref{eq:problem}.
\end{theorem}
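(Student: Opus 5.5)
The plan is to reduce everything to a pathwise version of the argument behind Theorem~\ref{thm:l1}, with Lemma~\ref{lem:surrogate-closure} as the bridge. The first step is to show that, under the weakened noise hypotheses, $\sum_k\eps_k^2<\infty$ almost surely.

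\textbf{Step 1: summability of the operator error.} Split $\eps_k$ into three parts:
\begin{itemize}
\item a transition part, controlled exactly as in Lemma~\ref{lem:model-moments}, since noise does not affect the non-reset transitions;
\item a reward-noise part, whose conditional second moment is at most $C\sigma_k^2/m_k$ by the martingale-difference structure in $\cG_{k,t}$ and the coverage margins $\rho_{\min}$ and $\alpha$;
\item a bias part, bounded by $C b_k$.
\end{itemize}
Together these give $\E[\eps_k^2\mid\cF_k]\le C(1+\sigma_k^2)/m_k + Cb_k^2$. Summing and applying monotone convergence yields $\E\sum_k\eps_k^2<\infty$, hence $\sum_k\eps_k^2<\infty$ almost surely.

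\textbf{Step 2: pathwise boundedness.} Next I need the a priori iterate bounds required by Lemma~\ref{lem:surrogate-closure}. These no longer follow from fourth moments. I would derive them from the value--dual identity $\lambda^{k+1}=-\widehat M_{k+1}^{-\trans}(c+\eta_V^{-1}\Delta V_{k+1})$, using $\|\widehat M^{-1}\|\le\bar\kappa$, together with a pathwise potential bound from Step 3 below. The lower bound on $\Phi_k$ comes from rewriting $\cL_\beta$ via $\lambda^k$ and the reduced objective $b$, as in Lemma~\ref{lem:lyapunov}. The perturbation it introduces is of size $C\eps_{k-1}^2$, and that sequence is summable.

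\textbf{Step 3: pathwise Lyapunov descent.} Combine the first and third relations of Lemma~\ref{lem:surrogate-closure}:
\begin{itemize}
\item from the third, $\cL_\beta(\cdot,\lambda^{k+1})-\cL_\beta(\cdot,\lambda^k)=\beta^{-1}\langle\Delta\lambda_{k+1},\Delta\lambda_{k+1}-d_{k+1}\rangle$;
\item from the second relation at indices $k$ and $k+1$, the $L_M$-Lipschitz bound, and stability of the inverse, a pathwise analogue of Lemma~\ref{lem:dual-control}:
\[
\|\Delta\lambda_{k+1}\|^2\le C_\pi\|\Delta\pi_{k+1}\|^2+C_V\|\Delta V_{k+1}\|^2+C_V\|\Delta V_k\|^2+C(\eps_k^2+\eps_{k-1}^2).
\]
\end{itemize}
The parameter condition~\eqref{eq:parameter-condition} then gives
\[
\Phi_k-\Phi_{k+1}\ge c_\pi\|\Delta\pi_{k+1}\|^2+c_V\|\Delta V_{k+1}\|^2+c_V'\|\Delta V_k\|^2-C(\eps_k^2+\eps_{k-1}^2).
\]
The main obstacle is circularity: the constants in Lemma~\ref{lem:surrogate-closure} depend on iterate bounds, while those bounds come from the descent. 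I would break this by induction on a path-dependent radius. The $\lambda$-dependence of the error terms is linear, for example $\|\lambda\|\eps_k$ and $\|G^\trans e\|$. Applying Young's inequality to these terms keeps the constants uniform, so $\Phi_k+C\sum_{j\ge k-1}\eps_j^2$ is nonincreasing and bounded below. Telescoping gives summability of $\|\Delta\pi_k\|^2$ and $\|\Delta V_k\|^2$, hence of $\|\Delta\lambda_k\|^2$, so all three increments tend to zero.

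\textbf{Step 4: conclusions.} From the third relation, $\|R_{k+1}\|\le\beta^{-1}(\|\Delta\lambda_{k+1}\|+C\eps_k)\to0$. The fourth relation then gives $\KKT_{k+1}\to0$. For accumulation points, take a subsequence converging to $(\pi^\ast,V^\ast,\lambda^\ast)$; one exists by boundedness and compactness of $\Pi$. The value and residual conditions pass to the limit by continuity of $M_\pi$ and $r_\pi$. The policy condition uses closedness of the limiting subdifferential. Here I also need $\phi(\pi^{k+1})\to\phi(\pi^\ast)$, which I would obtain from the exact global minimization in the policy step: compare with the candidate $\pi^\ast$ and use lower semicontinuity, the standard ADMM argument. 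This yields $0\in\partial\psi(\pi^\ast)+J_\pi^\trans\lambda^\ast$.
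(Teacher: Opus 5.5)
Your Steps 1 and 4 agree with the paper: Lemma~\ref{lem:general-model-error} plus Tonelli gives $\sum_k\eps_k^2<\infty$, and the accumulation-point argument uses the exact policy comparison and function-value convergence. Step 3 is the paper's pathwise descent, but only once the iterates are known to be bounded. The gap is in how you obtain that boundedness.

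You propose to get the bounds from the potential and to break the circularity with Young's inequality, on the grounds that the error terms are linear in $\lambda$. They are not linear in the iterates. Since $e_k(\pi,V)=(\widehat M_{\pi,k}-M_\pi)V-(\widehat r_{\pi,k}-r_\pi)$, the empirical-to-true transfer error in the policy step is of order $\eps_k(1+\|V^k\|)(1+\|V^k\|+\|\lambda^k\|)\,\|\Delta\pi_{k+1}\|$. After Young this leaves $\eps_k^2(1+\|V^k\|+\|\lambda^k\|)^4$. The potential controls only $\|R_k\|^2$ and $\|\Delta V_k\|^2$, hence $\|V^k\|^2$ and $\|\lambda^k\|^2$ linearly. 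So this error is of order $\eps_k^2(1+\Phi_k-\underline\Phi)^2$, and you obtain a Riccati-type inequality
\[
\Phi_{k+1}\le\Phi_k+C\eps_k^2(1+\Phi_k-\underline\Phi)^2 .
\]
With only $\sum_k\eps_k^2<\infty$, such an inequality does not imply $\sup_k\Phi_k<\infty$. An ``induction on a path-dependent radius'' does not help, because the radius may grow with $k$. Using $\|\Delta\pi_{k+1}\|\le\operatorname{diam}\Pi$ instead of Young makes the recursion linear, but with coefficient $\eps_k$, which is not summable under the hypotheses. The value-step error has no analogous escape. Your lower-bound correction $O(\eps_{k-1}^2)$ and the bound $\|r_{k+1}\|\le C\eps_k$ also presuppose bounded $\lambda^k$.

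The missing idea is that boundedness follows from the algebra alone, before any descent. You use only half of it, the value--dual identity. The dual update supplies the other half: $\widehat M_{k+1}V^{k+1}=\widehat r_{\pi^{k+1},k}+\beta^{-1}\Delta\lambda_{k+1}$. Together they give
\[
\|\lambda^{k+1}\|\le\bar\kappa\|c\|+\tfrac{\bar\kappa}{\eta_V}\bigl(\|V^{k+1}\|+\|V^k\|\bigr),\qquad
\|V^{k+1}\|\le\bar\kappa\|\widehat r_{\pi^{k+1},k}\|+\tfrac{\bar\kappa}{\beta}\bigl(\|\lambda^{k+1}\|+\|\lambda^k\|\bigr).
\]
Eliminating the new iterates yields a linear recursion in $(\|V^k\|,\|\lambda^k\|)$ with spectral radius $z/(1-z)$, where $z=\bar\kappa/\sqrt{\beta\eta_V}$. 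The parameter condition \eqref{eq:parameter-condition} gives $z<1/2$, so the radius is below one. Because $\eps_k\to0$, the empirical rewards are pathwise bounded. Hence $V^k$ and $\lambda^k$ are bounded on every path with $\sum_k\eps_k^2<\infty$, independently of the Lyapunov function.

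This pathwise version of Lemma~\ref{lem:moment-stability} is exactly what the paper uses. With it, the constants in Lemma~\ref{lem:surrogate-closure} are uniform along the path, the circularity never arises, and your Steps 3 and 4 go through as written.
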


\section{KKT Convergence and Complexity}
\label{sec:complexity}

The policy subproblem is solved at $(\pi^{k+1},V^k)$. We evaluate a companion iterate that preserves this first-order alignment. Define
$
\widetilde\lambda^{k+1}
=\lambda^k+\beta\widehat R_k(\pi^{k+1},V^k),
\quad
\widetilde z^{k+1}=(\pi^{k+1},V^k,\widetilde\lambda^{k+1}),
$
and $\widetilde G_{k+1}=G(\widetilde z^{k+1})$.

\begin{lemma}[Companion residual]
\label{lem:companion}
For $k\ge1$, there are deterministic constants $C_G,C_G'>0$ such that
\begin{align}
\E[\widetilde G_{k+1}]
\leq{}&C_G\E\left[
\|\Delta\pi_{k+1}\|^2
+\|\Delta V_{k+1}\|^2
+\|\Delta V_k\|^2
\right]
+C_G'\left(\frac{1}{m_k}+\frac{1}{m_{k-1}}\right).
\label{eq:companion-bound}
\end{align}
\end{lemma}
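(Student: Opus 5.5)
We bound each of the three terms in $G(\widetilde z^{k+1})$ separately. For each, we write the exact empirical optimality condition at the relevant iterate, transfer it to the true problem using the operator error $\eps_k$, and then take expectations using Lemma~\ref{lem:model-moments} and Lemma~\ref{lem:moment-stability}.

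\emph{Policy stationarity.} The exact policy subproblem at $(\pi^{k+1},V^k,\lambda^k)$ gives
\[
0\in\partial\psi(\pi^{k+1})+\widehat J_{\pi,k}(\pi^{k+1},V^k)^\trans\bigl(\lambda^k+\beta\widehat R_k(\pi^{k+1},V^k)\bigr)+\eta_\pi^{-1}\Delta\pi_{k+1}.
\]
We use the fact that $\widehat R_k$ is affine in $\pi$ for fixed $V$, so the augmented term is smooth in $\pi$ and the subdifferential sum rule applies. By definition, $\lambda^k+\beta\widehat R_k(\pi^{k+1},V^k)=\widetilde\lambda^{k+1}$. Hence
\[
\dist\bigl(0,\partial\psi(\pi^{k+1})+J_\pi(\pi^{k+1},V^k)^\trans\widetilde\lambda^{k+1}\bigr)\le \eta_\pi^{-1}\|\Delta\pi_{k+1}\|+\|\widehat J_{\pi,k}-J_\pi\|\,\|\widetilde\lambda^{k+1}\|.
\]
The Jacobian difference is linear in the row errors, so it is bounded by $C\eps_k(1+\|V^k\|)$. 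Squaring and applying Cauchy--Schwarz with the fourth moments of $\eps_k$, $V^k$, and $\widetilde\lambda^{k+1}$ then gives $C\E\|\Delta\pi_{k+1}\|^2+C/m_k$. The fourth moment of $\widetilde\lambda^{k+1}$ is controlled through $\lambda^k$, $V^k$, and bounded rewards.

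\emph{Primal feasibility.} We write
\[
\widehat R_k(\pi^{k+1},V^k)=\widehat R_k(\pi^{k+1},V^{k+1})-\widehat M_{k+1}\Delta V_{k+1}=\beta^{-1}\Delta\lambda_{k+1}-\widehat M_{k+1}\Delta V_{k+1}.
\]
Then $\|R(\pi^{k+1},V^k)\|\le \beta^{-1}\|\Delta\lambda_{k+1}\|+C\|\Delta V_{k+1}\|+C\eps_k(1+\|V^k\|)$. Lemma~\ref{lem:dual-control} converts $\E\|\Delta\lambda_{k+1}\|^2$ into the primal increments plus $C(1/m_k+1/m_{k-1})$. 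This step is the source of the $\|\Delta V_k\|^2$ term and of the $m_{k-1}$ term.

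\emph{Dual stationarity.} We use the value--dual identity $c+\widehat M_{k+1}^\trans\lambda^{k+1}=-\eta_V^{-1}\Delta V_{k+1}$. This gives
\[
c+M_{\pi^{k+1}}^\trans\widetilde\lambda^{k+1}=-\eta_V^{-1}\Delta V_{k+1}+(M_{\pi^{k+1}}-\widehat M_{k+1})^\trans\lambda^{k+1}+M_{\pi^{k+1}}^\trans(\widetilde\lambda^{k+1}-\lambda^{k+1}).
\]
The last factor is $\widetilde\lambda^{k+1}-\lambda^{k+1}=-\beta\widehat M_{k+1}\Delta V_{k+1}$. The middle term is bounded by $\gamma\eps_k\|\lambda^{k+1}\|$. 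Squaring and taking expectations with Cauchy--Schwarz and the fourth-moment bounds on $\lambda^{k+1}$ yields $C\E\|\Delta V_{k+1}\|^2+C/m_k$.

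Summing the three bounds proves \eqref{eq:companion-bound}. The main obstacle is the cross terms in which $\eps_k$ multiplies unbounded iterates such as $\lambda^{k+1}$ and $\widetilde\lambda^{k+1}$, which themselves depend on the batch $k$. We cannot condition these away. Instead we apply $\E[\eps_k^2X^2]\le(\E\eps_k^4)^{1/2}(\E X^4)^{1/2}$ with the unconditional form of Lemma~\ref{lem:model-moments}, which gives $\E\eps_k^4\le C_4/m_k^2$, so the product is $O(1/m_k)$. This requires verifying that $\sup_k\E\|\widetilde\lambda^{k+1}\|^4<\infty$, which follows from Lemma~\ref{lem:moment-stability} together with $\|\widehat M\|\le 1+\gamma$ and the boundedness of $\widehat r_k$ after fourth-moment noise control.
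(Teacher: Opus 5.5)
Your decomposition of $\widetilde G_{k+1}$ is the same as the paper's. It uses the policy Fermat condition with bracket $\widetilde\lambda^{k+1}$. The value residual uses the identity $\widetilde\lambda^{k+1}-\lambda^{k+1}=-\beta\widehat M_{k+1}\Delta V_{k+1}$. Feasibility uses $\widehat R_k(\pi^{k+1},V^k)=\beta^{-1}\Delta\lambda_{k+1}-\widehat M_{k+1}\Delta V_{k+1}$ together with Lemma~\ref{lem:dual-control}. For the value term you perturb against $\lambda^{k+1}$, so the error is the $r_{k+1}$ already bounded in the proof of Lemma~\ref{lem:dual-control}. The paper instead writes $c+\widehat M_{k+1}^\trans\widetilde\lambda^{k+1}=-(\eta_V^{-1}I+\beta\widehat M_{k+1}^\trans\widehat M_{k+1})\Delta V_{k+1}$ and perturbs against $\widetilde\lambda^{k+1}$. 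Both work. Your value and feasibility estimates are fine, because there $\eps_k$ multiplies a single iterate, and one Cauchy--Schwarz step with $\E\eps_k^4\le C_4/m_k^2$ suffices.

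The gap is in the policy term, and your closing paragraph points the wrong way. There the error is the triple product $\eps_k(1+\|V^k\|)\|\widetilde\lambda^{k+1}\|$, so you need
\[
\E\bigl[\eps_k^2(1+\|V^k\|)^2\|\widetilde\lambda^{k+1}\|^2\bigr]\le \frac{C}{m_k}.
\]
Your recipe is $\E[\eps_k^2X^2]\le(\E\eps_k^4)^{1/2}(\E X^4)^{1/2}$ with $X=(1+\|V^k\|)\|\widetilde\lambda^{k+1}\|$. This requires bounding $\E[(1+\|V^k\|)^4\|\widetilde\lambda^{k+1}\|^4]$, which is eighth-moment information. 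No H\"older splitting of three factors with only fourth moments each can close, since the reciprocal exponents $\tfrac12+\tfrac12+\tfrac12$ exceed $1$. Lemma~\ref{lem:moment-stability} supplies only fourth moments, and the reward noise has only a fourth conditional moment, so eighth moments are unavailable.

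The missing step is to condition on $\cF_k$ for the factor that is $\cF_k$-measurable. Split the expectation as $(\E[\eps_k^4(1+\|V^k\|)^4])^{1/2}(\E\|\widetilde\lambda^{k+1}\|^4)^{1/2}$. Then evaluate the first factor as $\E\bigl[(1+\|V^k\|)^4\,\E[\eps_k^4\mid\cF_k]\bigr]\le C_4m_k^{-2}\E(1+\|V^k\|)^4$, which uses the conditional form of Lemma~\ref{lem:model-moments}.

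The paper does this systematically. With $U_k=1+\|V^k\|+\|\lambda^k\|$ and row errors $p_k,u_k$, it bounds $\|\widetilde\lambda^{k+1}\|\le C(U_k+u_k)$ and the Jacobian error by $C(p_k\|V^k\|+u_k)$. This gives the bound $C(p_kU_k^2+u_kU_k+u_k^2)$. Taking $\E[\cdot\mid\cF_k]$ then leaves only $\E U_k^4$ and conditional moments of $p_k,u_k$.

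So your remark ``we cannot condition these away'' is only half right. The batch-dependent multipliers cannot be conditioned away, but the $\cF_k$-measurable factors must be. A minor point: $\|\widehat M\|\le1+\gamma$ holds in the $\ell_\infty$ operator norm. In the spectral norm you get $1+\gamma\sqrt{|\cS|}$, which is harmless.
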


\begin{theorem}[Finite-time squared-KKT complexity]
\label{thm:finite-time}
Under Assumptions~\ref{ass:coverage-noise} and~\ref{ass:algorithmic-margins}, let $K$ be uniform on $\{0,\ldots,T-1\}$ and independent of the algorithmic randomness. There are constants $A,B>0$ that do not depend on $T$ or the batch schedule such that
$
\E[\widetilde G_{K+1}]
\leq
\frac{A}{T}
+\frac{B}{T}\sum_{k=0}^{T-1}\frac{1}{m_k}.
$
If $N=\sum_{k=0}^{T-1}m_k$ and $m_k=N/T$, then
$
\E[\widetilde G_{K+1}]
\leq \frac{A}{T}+\frac{BT}{N}.
$
Consequently,
$T(\eps)=O(\eps^{-1}), \quad N(\eps)=O(\eps^{-2})$
are sufficient for $\E[\widetilde G_{K+1}]\leq\eps$.
\end{theorem}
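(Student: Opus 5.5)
The proof combines the companion residual bound of Lemma~\ref{lem:companion} with a telescoped version of the expected Lyapunov descent of Lemma~\ref{lem:lyapunov}, and then averages over the uniform index $K$. Set $c_{\min}=\min\{c_\pi,c_V,c_V'\}>0$. For $k\ge1$, Lemma~\ref{lem:lyapunov} gives
\[
c_{\min}\,\E\bigl[\|\Delta\pi_{k+1}\|^2+\|\Delta V_{k+1}\|^2+\|\Delta V_k\|^2\bigr]
\le \E[\Phi_k]-\E[\Phi_{k+1}]+C_E\Bigl(\frac{1}{m_k}+\frac{1}{m_{k-1}}\Bigr).
\]
Substituting this into~\eqref{eq:companion-bound} yields, for every $k\ge1$,
\[
\E[\widetilde G_{k+1}]\le \frac{C_G}{c_{\min}}\bigl(\E[\Phi_k]-\E[\Phi_{k+1}]\bigr)+\Bigl(\frac{C_GC_E}{c_{\min}}+C_G'\Bigr)\Bigl(\frac{1}{m_k}+\frac{1}{m_{k-1}}\Bigr).
\]

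Next we sum over $k=1,\dots,T-1$ and telescope the potential differences to $\E[\Phi_1]-\E[\Phi_T]$. The lower bound in Lemma~\ref{lem:lyapunov} gives $-\E[\Phi_T]\le-\underline\Phi+C_L/m_{T-1}$, and the resulting $1/m_{T-1}$ term is absorbed into the batch sum. The shifted sums also fold back into it, since $\sum_{k=1}^{T-1}(m_k^{-1}+m_{k-1}^{-1})\le 2\sum_{k=0}^{T-1}m_k^{-1}$.

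Two boundary quantities must be bounded uniformly in $T$ and in the schedule: $\E[\Phi_1]$ and the $k=0$ term $\E[\widetilde G_1]$. Both depend only on the initialization and on one batch. Since $\pi^0$ is deterministic, $\Pi$ is compact, and Assumption~\ref{ass:algorithmic-margins} gives finite fourth moments of $(V^0,\lambda^0)$, both quantities can be controlled using Lemma~\ref{lem:moment-stability} and Lemma~\ref{lem:model-moments}. Concretely, $\widetilde\lambda^1$ and $V^1$ have bounded second moments, the value/dual expressions in $G$ are quadratic in bounded-moment quantities, and the policy component of $G$ is controlled through the optimality condition of the policy step. We also need a finite upper bound on $\E[\Phi_1]$. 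Here we expect a subtlety: $\phi$ is only lower semicontinuous, so $\phi(\pi^1)$ must be controlled. This follows from the exactness of the policy subproblem. Comparing the value of that subproblem at $\pi^1$ with its value at $\pi^0\in\operatorname{dom}\psi$ gives $\phi(\pi^1)\le\phi(\pi^0)+{}$quadratic terms whose expectations are finite. These terms can carry factors like $1/m_0\le1$, so we bound them by constants, using $m_k\ge1$.

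Dividing by $T$ then gives $\E[\widetilde G_{K+1}]=\frac1T\sum_{k=0}^{T-1}\E[\widetilde G_{k+1}]\le A/T+(B/T)\sum_k m_k^{-1}$. Here $A$ collects $\frac{C_G}{c_{\min}}(\sup\E[\Phi_1]-\underline\Phi)+\E[\widetilde G_1]$, together with the $C_L$ term bounded by $C_L$, and $B=2(C_GC_E/c_{\min}+C_G')$. With $m_k=N/T$ the sum equals $T^2/N$, which gives $A/T+BT/N$. Choosing $T=\lceil 2A/\eps\rceil$ and $N=\lceil 2BT^2/\eps\rceil=O(\eps^{-2})$ makes each term at most $\eps/2$.

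The main obstacle is the uniform boundedness of the boundary terms, namely the upper bound on $\E[\Phi_1]$ through the nonsmooth $\phi$ and the bound on $\E[\widetilde G_1]$, since Lemmas~\ref{lem:companion} and~\ref{lem:lyapunov} are stated only for $k\ge1$. Everything else is telescoping bookkeeping.
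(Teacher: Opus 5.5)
Your argument is essentially the paper's proof. You telescope the expected Lyapunov descent from $k=1$, absorb $\E[\Phi_T]$ through the lower bound of Lemma~\ref{lem:lyapunov}, and treat $\E[\Phi_1]$ and $\E[\widetilde G_1]$ as schedule-uniform initialization constants, using the policy comparison with $\pi^0\in\operatorname{dom}\psi$ and the companion Fermat condition. You then apply Lemma~\ref{lem:companion} and average over $K$. Substituting the descent into the companion bound before summing, rather than after, is only a reordering.

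There is one slip in the final complexity step. With equal batches the statistical term is $(B/T)\cdot T^2/N=BT/N$, so you only need $N\ge 2BT/\eps$. Your choice $N=\lceil 2BT^2/\eps\rceil$ with $T\asymp\eps^{-1}$ is of order $\eps^{-3}$, not $\eps^{-2}$. It is sufficient but does not give the claimed rate. Take instead $m_k=m=\lceil 2B/\eps\rceil$ and $N=mT$. This keeps $N/T$ an integer and gives $N=O(\eps^{-2})$, as the theorem states.
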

The theorem measures a squared residual. For the unsquared residual $\widetilde R=\sqrt{\widetilde G}$, Jensen's inequality gives $T=O(\eps^{-2})$ and $N=O(\eps^{-4})$.
\begin{corollary}[Iteration and Markov sample complexity]
\label{cor:complexity}
If $(1/T)\sum_{k<T}m_k^{-1}=O(T^{-1})$, then $\E\widetilde G_{K+1}=O(T^{-1})$. For fixed $T$ and total budget $N$, equal batches minimize $\sum_km_k^{-1}$ and give $\E\widetilde G_{K+1}\le A/T+BT/N$. Thus $T=O(\eps^{-1})$ and $N=O(\eps^{-2})$ suffice for squared residual $\eps$.
\end{corollary}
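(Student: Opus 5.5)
The corollary is a direct consequence of Theorem~\ref{thm:finite-time}, so the plan has three parts: specialize the bound $\E[\widetilde G_{K+1}]\le A/T+(B/T)\sum_{k<T}m_k^{-1}$ to each regime, justify the equal-batch claim, and invert the resulting rate.

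\emph{First claim.} Assume $(1/T)\sum_{k<T}m_k^{-1}\le C/T$ for some $C$ independent of $T$. Substituting into the theorem gives $\E\widetilde G_{K+1}\le (A+BC)/T=O(T^{-1})$. One example is a growing schedule $m_k\gtrsim (k+1)^{1+\delta}$, or any schedule with summable $\sum_k m_k^{-1}$. The only point to check is that $A$ and $B$ do not depend on $T$ or on the batch schedule, and the theorem states exactly this.

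\emph{Equal batches.} Fix $T$ and the budget $N=\sum_{k<T}m_k$. By Cauchy--Schwarz (equivalently the AM--HM inequality),
\begin{equation*}
T^2=\Bigl(\sum_{k<T}\sqrt{m_k}\cdot m_k^{-1/2}\Bigr)^2\le \Bigl(\sum_{k<T}m_k\Bigr)\Bigl(\sum_{k<T}m_k^{-1}\Bigr),
\end{equation*}
so $\sum_k m_k^{-1}\ge T^2/N$, with equality exactly when all $m_k=N/T$. Hence equal batches minimize the schedule-dependent term, and substitution gives $\E\widetilde G_{K+1}\le A/T+BT/N$. If integrality matters, take $m_k=\lceil N/T\rceil$. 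This changes only constants, and it needs the budget convention to allow $N$ to be a multiple of $T$ up to a factor of $2$.

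\emph{Complexity.} Take $T=\lceil 2A/\eps\rceil$, so that $A/T\le\eps/2$. Then take $N\ge 2BT/\eps$, so that $BT/N\le\eps/2$. This gives $N=O(AB\eps^{-2})$ while also $m_k=N/T=O(\eps^{-1})\ge1$, which keeps the batches admissible. Both constants are independent of $\eps$, so $T(\eps)=O(\eps^{-1})$ and $N(\eps)=O(\eps^{-2})$.

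The only mildly delicate step is ensuring the batch schedule is legal: $m_k$ must be a positive integer, and Lemma~\ref{lem:model-moments} is used at each $m_k$. I would handle this by the rounding remark above; the rest is arithmetic.
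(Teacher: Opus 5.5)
Your proposal is correct and follows the paper's argument. You substitute into the bound of Theorem~\ref{thm:finite-time}, then use $\sum_{k<T}m_k^{-1}\ge T^2/N$ with equality for equal batches; the paper derives this from convexity of $x\mapsto 1/x$, which is the same inequality as your Cauchy--Schwarz/AM--HM step. Finally you enforce $A/T\le\eps/2$ and $BT/N\le\eps/2$, exactly as the paper does, and your remarks on integrality and $m_k\ge 1$ are harmless refinements the paper omits.
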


\section{From KKT Residual to Policy Performance}
\label{sec:performance}

We now set $\phi\equiv0$ and $c=-\mu$. Let
$V^\pi=M_\pi^{-1}r_\pi, \quad J_\nu(\pi)=\nu^\trans V^\pi, \quad F_\mu(\pi)=-J_\mu(\pi).$
Define the policy-stationarity residual, following the occupancy/gradient-domination viewpoint in~\cite{agarwal2021theory,huang2024occupancy,sherman2025pmd,barakat2025global},
$R_{\mathrm{pol}}(\pi) =\dist\left(0,\nabla F_\mu(\pi)+N_\Pi(\pi)\right).$

\paragraph{Residual-transfer chain.}
The policy-performance result rests on five explicit intermediate statements. Lemma~\ref{lem:adjoint-representation} gives the Bellman adjoint representation $\nabla F_\mu(\pi)=J_\pi(\pi,V^\pi)^\trans\lambda_\mu^\pi$ with $\lambda_\mu^\pi=M_\pi^{-\trans}\mu$. Lemma~\ref{lem:tabular-jacobian} gives the tabular Jacobian $\partial R_s(\pi,V)/\partial\pi(a\mid s)=-Q_V(s,a)$ and therefore $[J_\pi(\pi,V)^\trans\lambda]_{s,a}=-\lambda_sQ_V(s,a)$. Lemma~\ref{lem:value-error} gives $\|V-V^\pi\|\le\bar\kappa e_R$ from Bellman feasibility. Lemma~\ref{lem:adjoint-error} gives $\|\lambda-\lambda_\mu^\pi\|\le\bar\kappa e_V$ from value-stationarity error. Lemma~\ref{lem:adjoint-gradient-error} combines them into $\|J_\pi(\pi,V)^\trans\lambda-\nabla F_\mu(\pi)\|\le C_Ve_V+C_Re_R+C_{VR}e_Ve_R$.

\begin{lemma}[KKT residual controls policy stationarity]
\label{lem:kkt-policy}
Let $G=G(\pi,V,\lambda)$. If $G\leq1$, then
$R_{\mathrm{pol}}(\pi)\leq C_{\mathrm{stat}}\sqrt{G},$
where $C_{\mathrm{stat}}<\infty$ depends only on the discounted resolvent and bounded model quantities.
\end{lemma}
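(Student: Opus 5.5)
The plan is to use the residual-transfer chain stated just before the lemma. Fix $z=(\pi,V,\lambda)$ with $G=G(z)\le1$, and set $e_P=\dist(0,N_\Pi(\pi)+J_\pi(\pi,V)^\trans\lambda)$, $e_V=\|c+M_\pi^\trans\lambda\|=\|M_\pi^\trans\lambda-\mu\|$, and $e_R=\|R(\pi,V)\|$. Since $\phi\equiv0$ we have $\partial\psi=N_\Pi$, and by the definition of $G$ in~\eqref{eq:kkt-residual} each of $e_P,e_V,e_R$ is at most $\sqrt G\le1$. Because $N_\Pi(\pi)$ is a closed convex cone, the distance function $w\mapsto\dist(0,w+N_\Pi(\pi))$ is $1$-Lipschitz in the translation vector $w$. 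Applying this with $w=\nabla F_\mu(\pi)$ versus $w=J_\pi(\pi,V)^\trans\lambda$ gives
\[
R_{\mathrm{pol}}(\pi)\le e_P+\|J_\pi(\pi,V)^\trans\lambda-\nabla F_\mu(\pi)\|.
\]
It then suffices to bound the gradient mismatch by $O(\sqrt G)$.

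For the mismatch I invoke Lemma~\ref{lem:adjoint-gradient-error}, which gives $\|J_\pi(\pi,V)^\trans\lambda-\nabla F_\mu(\pi)\|\le C_Ve_V+C_Re_R+C_{VR}e_Ve_R$. If that lemma needs to be unpacked, the route is as follows. By Lemma~\ref{lem:adjoint-representation}, $\nabla F_\mu(\pi)=J_\pi(\pi,V^\pi)^\trans\lambda^\pi_\mu$. By Lemma~\ref{lem:tabular-jacobian}, the $(s,a)$ entry of the difference is $-\lambda_sQ_V(s,a)+\lambda^\pi_{\mu,s}Q_{V^\pi}(s,a)$. Write this as $-(\lambda_s-\lambda^\pi_{\mu,s})Q_V(s,a)-\lambda^\pi_{\mu,s}(Q_V-Q_{V^\pi})(s,a)$. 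Here $Q_V-Q_{V^\pi}=\gamma P(V-V^\pi)$ is bounded by $\gamma\|V-V^\pi\|$ up to a dimension constant. Also $\|Q_V\|\le R_{\max}+\gamma(\|V^\pi\|+\|V-V^\pi\|)$ and $\|\lambda^\pi_\mu\|\le\bar\kappa\|\mu\|$. Lemmas~\ref{lem:value-error} and~\ref{lem:adjoint-error} then give $\|V-V^\pi\|\le\bar\kappa e_R$ and $\|\lambda-\lambda^\pi_\mu\|\le\bar\kappa e_V$, and $\|V^\pi\|\le\bar\kappa\sqrt{|\cS|}R_{\max}$ follows from the resolvent bound. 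All constants therefore depend only on $\bar\kappa,\gamma,R_{\max},\|\mu\|,|\cS|,|\cA|$.

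The one point needing care is the bilinear term $C_{VR}e_Ve_R$, and this is exactly where the hypothesis $G\le1$ enters. Since $e_V,e_R\le\sqrt G\le1$, we get $e_Ve_R\le G\le\sqrt G$. Collecting terms gives $R_{\mathrm{pol}}(\pi)\le(1+C_V+C_R+C_{VR})\sqrt G$, so we may take $C_{\mathrm{stat}}=1+C_V+C_R+C_{VR}$. The only substantive obstacle would be making sure the constants are uniform, since $V$ and $\lambda$ are arbitrary. This is handled by expressing everything relative to $V^\pi$ and $\lambda^\pi_\mu$, which are uniformly bounded through the resolvent, plus errors controlled by $e_R$ and $e_V$. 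Those errors are at most $1$, so no a priori bound on $(V,\lambda)$ is required.
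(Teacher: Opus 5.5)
Your proposal is correct and follows the paper's proof step for step: the 1-Lipschitz translation property of the distance to $N_\Pi(\pi)$, Lemma~\ref{lem:adjoint-gradient-error} for the gradient mismatch, and $G\le1$ to absorb the bilinear term $e_Ve_R$. The differences are cosmetic. The paper uses $e_Ve_R\le(e_V^2+e_R^2)/2\le G/2$ and so gets the slightly smaller constant $1+C_V+C_R+C_{VR}/2$, and your optional unpacking merges two of the paper's three add-and-subtract terms into one.
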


Let
$d_\mu^\pi=(1-\gamma)M_\pi^{-\trans}\mu$
be the normalized discounted state occupancy. For an optimal policy $\pi^\star$ under evaluation distribution $\nu$, define
$\cC_2(\pi^\star,\pi,\nu,\mu) =\left\| \frac{d_\nu^{\pi^\star}}{d_\mu^\pi} \right\|_2,$
with $0/0=0$ and value $+\infty$ when the numerator is positive and the denominator is zero.

\begin{theorem}[Occupancy-mismatch performance bound]
\label{thm:performance}
If
$\supp d_\nu^{\pi^\star}\subseteq\supp d_\mu^\pi,$
then
$
J_\nu(\pi^\star)-J_\nu(\pi)
\leq
\sqrt{2}\,\cC_2(\pi^\star,\pi,\nu,\mu)
R_{\mathrm{pol}}(\pi).
$
If $G\leq1$, then
$
J_\nu(\pi^\star)-J_\nu(\pi)
\leq
\sqrt{2}\,\cC_2 C_{\mathrm{stat}}\sqrt{G}.
$
Every covered exact KKT point is globally optimal under $\nu$.
\end{theorem}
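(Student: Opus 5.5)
I will follow the standard gradient-domination route of \cite{agarwal2021theory}, adapted to the direct tabular parametrization and to the minimization objective $F_\mu=-J_\mu$. The argument has four steps.

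\textbf{Step 1 (performance difference).} The performance difference lemma gives
\[
J_\nu(\pi^\star)-J_\nu(\pi)=\frac{1}{1-\gamma}\sum_s d_\nu^{\pi^\star}(s)\sum_a \pi^\star(a\mid s)A^\pi(s,a).
\]
Bounding the inner sum by its maximizing action gives
\[
J_\nu(\pi^\star)-J_\nu(\pi)\le\frac{1}{1-\gamma}\sum_s d_\nu^{\pi^\star}(s)\max_{\bar\pi_s\in\Delta(\cA)}\langle \bar\pi_s-\pi(\cdot\mid s),Q^\pi(s,\cdot)\rangle,
\]
where I used $\sum_a\pi(a\mid s)A^\pi(s,a)=0$. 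Each statewise term is nonnegative.

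\textbf{Step 2 (policy gradient and coverage).} Lemma~\ref{lem:adjoint-representation} and Lemma~\ref{lem:tabular-jacobian} give the tabular form
\[
\nabla F_\mu(\pi)_{s,a}=-\lambda^\pi_{\mu,s}Q^\pi(s,a),\qquad \lambda^\pi_\mu=M_\pi^{-\trans}\mu=d_\mu^\pi/(1-\gamma).
\]
Write $g_s=\max_{\bar\pi_s}\langle\bar\pi_s-\pi_s,Q^\pi(s,\cdot)\rangle$. Then
\[
\frac{d_\mu^\pi(s)}{1-\gamma}\,g_s=\max_{\bar\pi_s}\langle\pi_s-\bar\pi_s,\nabla_sF_\mu(\pi)\rangle.
\]
The support hypothesis lets me divide by $d_\mu^\pi(s)$ on $\supp d_\nu^{\pi^\star}$; states outside that support contribute zero. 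This yields
\[
J_\nu(\pi^\star)-J_\nu(\pi)\le\sum_s\frac{d_\nu^{\pi^\star}(s)}{d_\mu^\pi(s)}\,h_s,\qquad h_s=\max_{\bar\pi_s}\langle\pi_s-\bar\pi_s,\nabla_sF_\mu(\pi)\rangle\ge0.
\]

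\textbf{Step 3 (from $h_s$ to $R_{\mathrm{pol}}$).} Let $u$ be the minimal-norm element of $\nabla F_\mu(\pi)+N_\Pi(\pi)$, so that $\|u\|=R_{\mathrm{pol}}(\pi)$. Since $\Pi$ is a product, $N_\Pi(\pi)$ is the product of the statewise normal cones, and I can write $\nabla_sF_\mu=u_s-n_s$ with $n_s\in N_{\Delta}(\pi_s)$. Then for every $\bar\pi_s\in\Delta(\cA)$,
\[
\langle\pi_s-\bar\pi_s,\nabla_sF_\mu\rangle=\langle\pi_s-\bar\pi_s,u_s\rangle+\langle\bar\pi_s-\pi_s,n_s\rangle\le\|\pi_s-\bar\pi_s\|\,\|u_s\|\le\sqrt2\,\|u_s\|,
\]
because the normal-cone term is nonpositive and two simplex points are at Euclidean distance at most $\sqrt2$. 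This is the source of the constant $\sqrt2$.

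\textbf{Step 4 (Cauchy--Schwarz).} Cauchy--Schwarz over states gives
\[
J_\nu(\pi^\star)-J_\nu(\pi)\le\sqrt2\sum_s\frac{d_\nu^{\pi^\star}(s)}{d_\mu^\pi(s)}\|u_s\|\le\sqrt2\,\cC_2\,\|u\|,
\]
which is the first claim. The second claim follows by inserting Lemma~\ref{lem:kkt-policy} when $G\le1$. For the last claim, an exact KKT point has $G=0$, so $R_{\mathrm{pol}}=0$ under coverage, and the gap is $\le0$. Optimality of $\pi^\star$ gives the reverse inequality, so $\pi$ is globally optimal under $\nu$. When coverage holds, the coefficient $\cC_2$ is finite.

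The main obstacle I expect is bookkeeping in Step 2. I must check that the gradient representation from Lemma~\ref{lem:adjoint-representation} carries exactly the factor $(1-\gamma)^{-1}d_\mu^\pi(s)$ that cancels the $(1-\gamma)^{-1}$ in the performance difference lemma. I must also verify that the sign convention of $F_\mu=-J_\mu$ makes $h_s$ the correct nonnegative quantity. The rest is routine.
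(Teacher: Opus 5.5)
Your proposal is correct and follows essentially the same route as the paper: the performance-difference identity, then a statewise bound $\max_a A^\pi(s,a)\le\sqrt2(1-\gamma)r_s(\pi)/d_\mu^\pi(s)$ on covered states (your Step~3 is an inline proof of the paper's simplex normal-cone Lemma~\ref{lem:simplex-geometry}), then Cauchy--Schwarz against $\cC_2$, and finally Lemma~\ref{lem:kkt-policy} for the $\sqrt{G}$ bound and the exact-KKT claim. Your bookkeeping is also right: the $(1-\gamma)$ factors cancel through $\lambda_\mu^\pi=d_\mu^\pi/(1-\gamma)$, and the sign convention $F_\mu=-J_\mu$ makes $h_s$ the correct nonnegative quantity.
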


\begin{corollary}[Full-support global optimality]
\label{cor:full-support-global}
If the training distribution satisfies $\mu(s)\ge\mu_{\min}>0$, then $J_\mu^\star-J_\mu(\pi)\le\sqrt{2}R_{\rm pol}(\pi)/((1-\gamma)\mu_{\min})$. Hence every exact policy-stationary point is globally optimal.
\end{corollary}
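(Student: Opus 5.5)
The corollary follows from Theorem~\ref{thm:performance} with $\nu=\mu$, once we bound the coverage constant $\cC_2$ using full support of $\mu$. The steps are to verify the support condition, bound $\cC_2$ by $1/((1-\gamma)\mu_{\min})$, and substitute.

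\emph{Lower bound on the occupancy.} Since $M_\pi^{-\trans}=\sum_{t\ge0}\gamma^t(P_\pi^\trans)^t$ has nonnegative terms and its $t=0$ term is the identity,
$d_\mu^\pi=(1-\gamma)\sum_{t\ge0}\gamma^t(P_\pi^\trans)^t\mu\ge(1-\gamma)\mu$
entrywise. Hence $d_\mu^\pi(s)\ge(1-\gamma)\mu_{\min}>0$ for every $s$. This gives $\supp d_\mu^{\pi}=\cS$, so the support condition $\supp d_\mu^{\pi^\star}\subseteq\supp d_\mu^\pi$ holds trivially.

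\emph{Bound on $\cC_2$.} The occupancy $d_\mu^{\pi^\star}$ is a probability vector, since $\one^\trans M_{\pi^\star}^{-\trans}\mu=(1-\gamma)^{-1}$. Therefore
\[
\cC_2^2=\sum_s\frac{d_\mu^{\pi^\star}(s)^2}{d_\mu^\pi(s)^2}\le\frac{\sum_s d_\mu^{\pi^\star}(s)^2}{(1-\gamma)^2\mu_{\min}^2}\le\frac{\left(\sum_s d_\mu^{\pi^\star}(s)\right)^2}{(1-\gamma)^2\mu_{\min}^2}=\frac{1}{(1-\gamma)^2\mu_{\min}^2},
\]
where the middle step uses $\|x\|_2\le\|x\|_1$.

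\emph{Conclusion.} Substituting this bound into the first inequality of Theorem~\ref{thm:performance} with $\nu=\mu$ yields
\[
J_\mu^\star-J_\mu(\pi)\le\frac{\sqrt2\,R_{\rm pol}(\pi)}{(1-\gamma)\mu_{\min}}.
\]
An exact policy-stationary point has $R_{\rm pol}(\pi)=0$, so the gap is at most zero. Since $J_\mu^\star$ is the optimal value, the gap is also nonnegative, hence zero, and the point is globally optimal. The only step needing care is bounding the ratio by $\ell_2$ rather than $\ell_\infty$; the inequality $\|x\|_2\le\|x\|_1=1$ handles it without introducing a dimension factor.
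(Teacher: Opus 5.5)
Your proof is correct and is essentially the paper's argument: both rest on the same occupancy lower bound from the $t=0$ Neumann-series term ($\lambda^\pi_{\mu,s}\ge\mu(s)\ge\mu_{\min}$ in the paper, $d^\pi_\mu\ge(1-\gamma)\mu$ in yours), combined with the statewise simplex bound and the performance-difference identity. The paper substitutes this bound directly into the performance-difference sum, pairing the probability vector $d_\mu^{\pi^\star}$ against $\max_s r_s(\pi)\le R_{\rm pol}(\pi)$, whereas you route it through $\cC_2$ in Theorem~\ref{thm:performance} and use $\|d_\mu^{\pi^\star}\|_2\le\|d_\mu^{\pi^\star}\|_1=1$, which gives the identical constant.
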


\begin{corollary}[KKT global optimality and finite-time policy bound]
\label{cor:kkt-global-finite}
Every exact KKT point satisfying the occupancy-cover condition in Theorem~\ref{thm:performance} is globally optimal under $\nu$. If along the random output iterate $K$ the mismatch coefficient is bounded by $\bar C_{\rm occ}$ and $\E G_K\le A/T+(B/T)\sum_{k<T}m_k^{-1}$, then
$\E[J_\nu^\star-J_\nu(\pi_K)]\le C\sqrt{A/T+(B/T)\sum_{k<T}m_k^{-1}}$.
\end{corollary}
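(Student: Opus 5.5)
The corollary has two parts, and I would prove them in sequence, both as direct consequences of Theorem~\ref{thm:performance} and Lemma~\ref{lem:kkt-policy}.

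\textbf{Exact KKT points.} Suppose $z=(\pi,V,\lambda)$ is an exact KKT point, so $G(z)=0$. Then $G\le1$ holds trivially, and Lemma~\ref{lem:kkt-policy} gives $R_{\mathrm{pol}}(\pi)\le C_{\mathrm{stat}}\cdot 0=0$. If in addition $\supp d_\nu^{\pi^\star}\subseteq\supp d_\mu^\pi$, then $\cC_2$ is finite. The first inequality of Theorem~\ref{thm:performance} then yields $J_\nu(\pi^\star)-J_\nu(\pi)\le0$. Since $\pi^\star$ is optimal, the reverse inequality also holds, so $\pi$ is globally optimal under $\nu$.

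\textbf{Finite-time bound.} Here I would work pathwise on the output iterate $\pi_K$. I would split into two events. On $\{G_K\le1\}$, Theorem~\ref{thm:performance} gives
\[
J_\nu^\star-J_\nu(\pi_K)\le\sqrt2\,\bar C_{\rm occ}C_{\mathrm{stat}}\sqrt{G_K}.
\]
On $\{G_K>1\}$, I would avoid Lemma~\ref{lem:kkt-policy} altogether. The performance gap is always bounded by the range of returns, $J_\nu^\star-J_\nu(\pi_K)\le 2R_{\max}/(1-\gamma)$, and on this event $1<\sqrt{G_K}$. So the gap is at most $\frac{2R_{\max}}{1-\gamma}\sqrt{G_K}$ there as well. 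Setting
\[
C=\max\Bigl\{\sqrt2\,\bar C_{\rm occ}C_{\mathrm{stat}},\ \frac{2R_{\max}}{1-\gamma}\Bigr\},
\]
we get $J_\nu^\star-J_\nu(\pi_K)\le C\sqrt{G_K}$ almost surely.

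Taking expectations and applying Jensen's inequality to the concave square root gives $\E\sqrt{G_K}\le\sqrt{\E G_K}$. Inserting the assumed bound $\E G_K\le A/T+(B/T)\sum_{k<T}m_k^{-1}$ completes the proof.

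\textbf{Main obstacle.} The delicate point is the $G\le1$ restriction in Lemma~\ref{lem:kkt-policy}. The expectation bound alone does not keep $G_K\le1$ pathwise, so the case split with the bounded-return fallback is necessary. A second point to check is the hypothesis on the mismatch coefficient. It must be read as holding almost surely along $\pi_K$, which also encodes the support condition whenever $\bar C_{\rm occ}<\infty$. I would also note that if $G_K$ is the companion residual $\widetilde G_{K+1}$, the policy being evaluated is its policy component $\pi^{K+1}$. The argument is identical in that case.
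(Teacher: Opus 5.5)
Your proposal is correct and matches the paper's proof essentially step for step. The exact-KKT claim is the zero-residual case of Theorem~\ref{thm:performance}, and the finite-time bound uses the same pathwise split on $\{G_K\le1\}$ versus $\{G_K>1\}$, the bounded-return fallback $2R_{\max}/(1-\gamma)$ on the second event, the maximum of the two constants, and Jensen. Your remark that the evaluated policy is the policy component of the companion output is also how the paper closes the argument.
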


\begin{proposition}[Coverage cannot be removed]
\label{prop:coverage-necessary}
There exists a three-state discounted MDP and a training distribution concentrated at the initial state for which a policy is first-order stationary but strictly globally suboptimal. In the construction, the current policy never visits a downstream state. Changing only the upstream action or only the downstream action is non-improving, while changing both raises return from $1$ to $9$. Hence occupancy coverage is a first-order identifiability condition.
\end{proposition}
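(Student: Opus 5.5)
The plan is an explicit construction: a three-state MDP with an upstream state $u$, a downstream state $d$, and an absorbing zero-reward state $z$. Checking the claimed properties then reduces to the tabular gradient formula implicit in Lemmas~\ref{lem:adjoint-representation} and~\ref{lem:tabular-jacobian}:
\[
\partial J_\mu(\pi)/\partial\pi(a\mid s)=\lambda^\pi_{\mu,s}\,Q^\pi(s,a),
\qquad
\lambda^\pi_\mu=M_\pi^{-\trans}\mu=(1-\gamma)^{-1}d_\mu^\pi .
\]

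\textbf{The construction.}
\begin{itemize}
\item At $u$ there are two actions. The \emph{safe} action pays reward $1$ and moves to $z$. The \emph{go} action pays $0$ and moves to $d$.
\item At $d$ there are two actions. The \emph{bad} action pays $0$ and moves to $z$. The \emph{good} action pays $9/\gamma$ and moves to $z$.
\item At $z$ the process self-loops with reward $0$.
\end{itemize}
Take $\mu=\nu=\delta_u$, so the training distribution is concentrated at the initial state. The returns are then: the current policy $\pi$ (safe at $u$, bad at $d$) gets $J(\pi)=1$; the policy using go and good gets $\gamma\cdot 9/\gamma=9$; go with bad gets $0$; safe with good gets $1$. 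So changing only the upstream action lowers the return to $0$, changing only the downstream action leaves it at $1$, and changing both raises it to $9$. This matches the statement, and the rewards are bounded, consistent with Assumption~\ref{ass:coverage-noise}.

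\textbf{First-order stationarity of $\pi$.} I will verify $0\in\nabla F_\mu(\pi)+N_\Pi(\pi)$ statewise.
\begin{itemize}
\item \emph{Occupancy.} Since $\pi$ moves $u\to z$, $d_\mu^\pi(d)=0$. Hence the $d$-block of $\nabla J_\mu(\pi)$ vanishes, and $0$ lies in the normal cone there.
\item \emph{The $u$-block.} Here $Q^\pi(u,\text{safe})=1$ and $Q^\pi(u,\text{go})=\gamma V^\pi(d)=0$, because $\pi$ plays bad at $d$. So $\nabla J$ restricted to $u$ equals $\lambda^\pi_{\mu,u}(1,0)$ with $\lambda^\pi_{\mu,u}>0$.
\item \emph{The $z$-block.} The Q-values at $z$ are constant, so this block lies in the normal cone trivially.
\item \emph{Normal cone at $u$.} The cone of $\Delta(\cA)$ at the vertex $e_{\text{safe}}$ is $\{g: g_{\text{safe}}\ge g_j\ \forall j\}$. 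Therefore $-\nabla F_\mu=\nabla J_\mu$ lies in it.
\end{itemize}
This gives $R_{\mathrm{pol}}(\pi)=0$. Lifting to the KKT residual, $(\pi,V^\pi,\lambda^\pi_\mu)$ has $G=0$ by Lemma~\ref{lem:adjoint-representation}.

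\textbf{Why this shows coverage is the obstruction.} The optimal policy $\pi^\star$ (go, good) has $d_\nu^{\pi^\star}(d)>0$ while $d_\mu^\pi(d)=0$. Thus $\supp d_\nu^{\pi^\star}\not\subseteq\supp d_\mu^\pi$ and $\cC_2=+\infty$, which is exactly the hypothesis that Theorem~\ref{thm:performance} requires and which fails here. Because the gradient at $d$ is zero whatever $\pi(\cdot\mid d)$ is, no first-order information at $\pi$ can detect the improvement. This is the sense in which coverage is a first-order identifiability condition.

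\textbf{Main obstacle.} The delicate step is the choice of $Q^\pi(u,\text{go})$ under the \emph{current} downstream action. It must be strictly below $Q^\pi(u,\text{safe})$; I set it to $0$ versus $1$. This makes the vertex a genuine normal-cone stationary point rather than a degenerate tie. The only remaining care is verifying the normal-cone sign convention for minimizing $F=-J$.
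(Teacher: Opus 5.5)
Your proposal is correct and is essentially the paper's own construction: the paper uses the same upstream/downstream/terminal chain with $\mu=\delta_{s_0}$ and reward $10$ at $\gamma=0.9$ (your $9/\gamma$ is that example written for general $\gamma$), and it verifies stationarity exactly as you do, via $d_\mu^\pi(s_1)=0$ and $Q^\pi(s_0,a)=1>0=Q^\pi(s_0,b)$. Your explicit normal-cone check at the vertex, the terminal-state block, and the lift to $G=0$ at $(\pi,V^\pi,\lambda_\mu^\pi)$ spell out details that the paper leaves implicit.
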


\begin{proposition}[Separation between performance gap and squared policy stationarity]
\label{prop:pl-fails}
There is a three-stage chain and policies $\pi_\eps$ such that $J^\star-J(\pi_\eps)=\gamma^2\eps^3$ while $R_{\mathrm{pol}}(\pi_\eps)^2=(3/2)\gamma^4\eps^4$. Therefore no finite constant $C$ can satisfy $J^\star-J(\pi)\le C R_{\mathrm{pol}}(\pi)^2$ for all direct tabular policies.
\end{proposition}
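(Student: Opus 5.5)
We build the chain explicitly and compute both quantities in closed form. Take states $s_1,s_2,s_3$ and an absorbing state $s_0$ with zero reward; the absorbing state does not count as a stage. Each $s_t$ has two actions, $g$ (``good'') and $b$ (``bad''). Action $g$ moves any $s_t$ to $s_0$ with reward $0$. Action $b$ moves $s_1\to s_2$ and $s_2\to s_3$ with reward $0$. At $s_3$, action $b$ yields reward $-1$ and then moves to $s_0$. Let $\mu=\nu=\delta_{s_1}$, and let $\pi_\eps(b\mid s_t)=\eps$, $\pi_\eps(g\mid s_t)=1-\eps$ for $t=1,2,3$, with $\eps\in(0,1)$. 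The policy always playing $g$ has return $0$. Every return is nonpositive, so this policy is optimal and $J^\star=0$.

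\emph{Step 1 (performance gap).} Solve $V^{\pi_\eps}=M_{\pi_\eps}^{-1}r_{\pi_\eps}$ backward along the chain:
\begin{equation*}
V(s_3)=-\eps,\qquad V(s_2)=\eps\gamma V(s_3)=-\gamma\eps^2,\qquad V(s_1)=-\gamma^2\eps^3 .
\end{equation*}
Hence $J^\star-J(\pi_\eps)=\gamma^2\eps^3$. The cubic order comes from the loss being incurred only when the bad action is chosen at all three stages.

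\emph{Step 2 (gradient).} Use the adjoint representation of Lemma~\ref{lem:adjoint-representation} together with Lemma~\ref{lem:tabular-jacobian}. They give $[\nabla F_\mu(\pi)]_{s,a}=-\lambda^\pi_\mu(s)Q^\pi(s,a)$, where $\lambda^\pi_\mu=M_\pi^{-\trans}\mu$ is the unnormalized discounted visitation. Here
\begin{equation*}
\lambda(s_1)=1,\qquad \lambda(s_2)=\gamma\eps,\qquad \lambda(s_3)=\gamma^2\eps^2 .
\end{equation*}
The $Q$-values are $Q(s_t,g)=0$ for every stage, together with
\begin{equation*}
Q(s_1,b)=\gamma V(s_2)=-\gamma^2\eps^2,\qquad Q(s_2,b)=\gamma V(s_3)=-\gamma\eps,\qquad Q(s_3,b)=-1 .
\end{equation*}
Therefore at every $s_t$ the gradient block over $(g,b)$ equals $(0,\gamma^2\eps^2)$. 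The three stages contribute identically.

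\emph{Step 3 (stationarity residual).} The policy $\pi_\eps$ lies in the relative interior of each simplex $\Delta(\{g,b\})$. The normal cone $N_\Pi(\pi_\eps)$ is therefore the product of lines $\{\theta(1,1)\}$, one per state. The distance from $(0,x)$ to that line is attained at $\theta=x/2$, which gives squared distance $x^2/2$. With $x=\gamma^2\eps^2$, summing over the three stages (the absorbing state contributes nothing) yields $R_{\mathrm{pol}}(\pi_\eps)^2=\tfrac32\gamma^4\eps^4$. The ratio is
\begin{equation*}
\frac{J^\star-J(\pi_\eps)}{R_{\mathrm{pol}}(\pi_\eps)^2}=\frac{2}{3\gamma^2\eps}\to\infty\quad\text{as }\eps\downarrow0,
\end{equation*}
so no finite $C$ can satisfy $J^\star-J(\pi)\le C\,R_{\mathrm{pol}}(\pi)^2$.

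The main obstacle is the design in the first step, not the computation. We need a gap that is cubic while every stage's gradient is only quadratic in $\eps$. The sequential ``all bad'' structure achieves this because visitation weights $\gamma^{t-1}\eps^{t-1}$ and $Q$-values $-\gamma^{3-t}\eps^{3-t}$ balance to the same product $\gamma^2\eps^2$ at each stage. Once that balance is in place, the remaining step that needs care is the normal-cone projection, which produces the factor $1/2$ per state.
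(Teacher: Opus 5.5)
Your proposal is correct and follows essentially the same route as the paper. You use the same three-stage stop/continue chain with a single $-1$ reward at the last stage, the same balance of occupancies $1,\gamma\eps,\gamma^2\eps^2$ against action-value gaps $\gamma^2\eps^2,\gamma\eps,1$ giving a weighted gap $\gamma^2\eps^2$ at every stage, and the same interior two-action normal-cone projection yielding $\gamma^2\eps^2/\sqrt2$ per state.
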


For state $s$, define
$\bar r_s(\pi) =\dist\left(0,-Q^\pi(s,\cdot)+N_{\Delta(\cA)}(\pi_s)\right), \quad \delta_s^\pi=TV^\pi(s)-V^\pi(s).$

\begin{assumption}[Quadratic Bellman improvement]
\label{ass:quadratic-improvement}
There is $\mu_B>0$ such that
$\delta_s^\pi\leq\frac{1}{2\mu_B}\bar r_s(\pi)^2$
for every relevant state and policy.
\end{assumption}

\begin{theorem}[Quadratic performance conversion]
\label{thm:quadratic-performance}
Suppose Assumption~\ref{ass:quadratic-improvement} holds and
$\cC_{\mathrm{PL}} =\left\| \frac{d_\nu^{\pi^\star}}{(d_\mu^\pi)^2} \right\|_\infty <\infty.$
Then
$
J_\nu(\pi^\star)-J_\nu(\pi)
\leq
\frac{1-\gamma}{2\mu_B}\cC_{\mathrm{PL}}
R_{\mathrm{pol}}(\pi)^2.
\label{eq:quadratic-performance-bound}
$
If $G\leq1$, the right-hand side is at most
$\frac{1-\gamma}{2\mu_B}\cC_{\mathrm{PL}}C_{\mathrm{stat}}^2G.$
\end{theorem}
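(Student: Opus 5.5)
Our plan is to combine the performance difference lemma with the statewise quadratic Bellman-improvement inequality, and then to relate the statewise stationarity residuals $\bar r_s(\pi)$ to the global residual $R_{\mathrm{pol}}(\pi)$ through the occupancy weights.

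\emph{Step 1 (gap as a sum of Bellman improvements).} Write the performance difference identity in the form
\[
J_\nu(\pi^\star)-J_\nu(\pi)=\frac{1}{1-\gamma}\sum_s d_\nu^{\pi^\star}(s)\,\langle \pi^\star_s-\pi_s,Q^\pi(s,\cdot)\rangle .
\]
For each $s$ the inner term is at most $\max_a Q^\pi(s,a)-V^\pi(s)=TV^\pi(s)-V^\pi(s)=\delta_s^\pi$. Hence the gap is at most $(1-\gamma)^{-1}\sum_s d_\nu^{\pi^\star}(s)\delta_s^\pi$. Applying Assumption~\ref{ass:quadratic-improvement} gives the bound $(2\mu_B(1-\gamma))^{-1}\sum_s d_\nu^{\pi^\star}(s)\bar r_s(\pi)^2$.

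\emph{Step 2 (decomposition of $R_{\mathrm{pol}}$).} By Lemma~\ref{lem:adjoint-representation} and Lemma~\ref{lem:tabular-jacobian}, with $V=V^\pi$ and $\lambda=\lambda_\mu^\pi=M_\pi^{-\trans}\mu=d_\mu^\pi/(1-\gamma)$, we have
\[
\nabla F_\mu(\pi)_{s,a}=-\frac{d_\mu^\pi(s)}{1-\gamma}\,Q^\pi(s,a).
\]
Since $\Pi$ is a product of simplices, $N_\Pi(\pi)=\prod_s N_{\Delta(\cA)}(\pi_s)$, so the squared distance splits over states. Each normal cone is a cone, so rescaling by the positive factor $d_\mu^\pi(s)/(1-\gamma)$ scales the distance by the same factor. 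This yields the exact identity
\[
R_{\mathrm{pol}}(\pi)^2=\sum_s\frac{d_\mu^\pi(s)^2}{(1-\gamma)^2}\,\bar r_s(\pi)^2 .
\]
States with $d_\mu^\pi(s)=0$ are handled by $\cC_{\mathrm{PL}}<\infty$: this forces $d_\nu^{\pi^\star}(s)=0$ there, so those states drop out of the Step 1 sum.

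\emph{Step 3 (combine).} Bound $d_\nu^{\pi^\star}(s)\le \cC_{\mathrm{PL}}\,d_\mu^\pi(s)^2$ in the Step 1 sum. This gives
\[
\sum_s d_\nu^{\pi^\star}(s)\,\bar r_s(\pi)^2\le \cC_{\mathrm{PL}}\sum_s d_\mu^\pi(s)^2\,\bar r_s(\pi)^2=\cC_{\mathrm{PL}}(1-\gamma)^2R_{\mathrm{pol}}(\pi)^2 .
\]
Multiplying by $(2\mu_B(1-\gamma))^{-1}$ produces exactly $\frac{1-\gamma}{2\mu_B}\cC_{\mathrm{PL}}R_{\mathrm{pol}}(\pi)^2$. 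The final claim follows by squaring Lemma~\ref{lem:kkt-policy} when $G\le1$.

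\emph{Main obstacle.} The delicate point is bookkeeping the normalizations: the $1/(1-\gamma)$ from the performance difference identity, the scaling of $\lambda_\mu^\pi$ relative to $d_\mu^\pi$, and the precise relation between the paper's $\bar r_s$ and the Euclidean distance restricted to state $s$. We would check these against the stated constant, and use the positive homogeneity of the normal cone carefully, since the rescaling argument requires $d_\mu^\pi(s)>0$.
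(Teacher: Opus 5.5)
Your proposal is correct and follows the paper's proof: you bound the gap by $(1-\gamma)^{-1}\sum_s d_\nu^{\pi^\star}(s)\delta_s^\pi$ via performance difference, use the cone rescaling $r_s(\pi)=\frac{d_\mu^\pi(s)}{1-\gamma}\bar r_s(\pi)$, apply Assumption~\ref{ass:quadratic-improvement}, absorb the weights into $\cC_{\mathrm{PL}}$, and finally square Lemma~\ref{lem:kkt-policy}. The only difference is order and care: you apply the assumption before rescaling and explicitly handle states with $d_\mu^\pi(s)=0$, whereas the paper divides by $d_\mu^\pi(s)$ without comment, so your bookkeeping is slightly cleaner.
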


A uniform statewise action gap provides a concrete sufficient condition. If every suboptimal action satisfies $\max_bQ^\pi(s,b)-Q^\pi(s,a)\ge\Delta>0$ on the policy set of interest, then Assumption~\ref{ass:quadratic-improvement} holds with $\mu_B=\Delta/4$. Appendix~\ref{app:performance} proves this implication.

\begin{theorem}[RL-specific stronger conclusion]
\label{thm:l5-summary}
Suppose a random companion iterate satisfies the finite-time squared-KKT bound in Theorem~\ref{thm:finite-time}. Under a uniform occupancy-mismatch bound, its expected policy gap is $O(\sqrt{\E G})$. Hence $G\to0$ implies convergence to globally optimal policy performance and every covered exact KKT point is globally optimal. If the statewise quadratic Bellman-improvement condition and the stronger occupancy ratio bound hold, the expected policy gap is $O(\E G)$.
\end{theorem}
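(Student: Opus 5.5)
The plan is to assemble the statement from results already proved, so that no new estimates are needed. The argument has three parts: a pathwise transfer from squared KKT residual to policy gap, an expectation step that handles the region $G>1$, and the exact-KKT consequences.

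\emph{Step 1 (pathwise square-root conversion).} Fix a realization of the companion iterate $\widetilde z^{K+1}=(\pi^{K+1},V^K,\widetilde\lambda^{K+1})$ and write $\widetilde G=\widetilde G_{K+1}$.

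On the event $\{\widetilde G\le 1\}$, combine Lemma~\ref{lem:kkt-policy} with Theorem~\ref{thm:performance}, using the uniform bound $\cC_2\le\bar C_{\rm occ}$. This gives
\[
J_\nu^\star-J_\nu(\pi^{K+1})\le\sqrt2\,\bar C_{\rm occ}C_{\rm stat}\sqrt{\widetilde G}.
\]
On $\{\widetilde G>1\}$ there is a trivial bound. Rewards are bounded by $R_{\max}$ and $\nu$ is a distribution, so
\[
0\le J_\nu^\star-J_\nu(\pi)\le \frac{2R_{\max}}{1-\gamma}\le\frac{2R_{\max}}{1-\gamma}\sqrt{\widetilde G}.
\]
Hence, with $C=\max\{\sqrt2\bar C_{\rm occ}C_{\rm stat},\,2R_{\max}/(1-\gamma)\}$, the pathwise bound $J_\nu^\star-J_\nu(\pi^{K+1})\le C\sqrt{\widetilde G}$ holds everywhere.

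\emph{Step 2 (expectation and the $O(\sqrt{\E G})$ claim).} Take expectations in Step~1 and apply Jensen's inequality to the concave map $x\mapsto\sqrt x$. This gives $\E[J_\nu^\star-J_\nu(\pi^{K+1})]\le C\sqrt{\E\widetilde G}$. Inserting Theorem~\ref{thm:finite-time} yields the $O(\sqrt{A/T+(B/T)\sum_k m_k^{-1}})$ rate, as in Corollary~\ref{cor:kkt-global-finite}.

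If $\E G\to0$, the expected gap goes to zero. Since the gap is nonnegative, it also goes to zero in probability. This is the meaning of convergence to globally optimal policy performance.

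\emph{Step 3 (exact KKT points and the quadratic regime).} For an exact KKT point, $G=0$, so the pathwise bound of Step~1 gives a zero gap; this repeats the last sentence of Theorem~\ref{thm:performance}. For the sharpened claim, replace Theorem~\ref{thm:performance} by Theorem~\ref{thm:quadratic-performance}. On $\{G\le1\}$ the gap is at most $\frac{1-\gamma}{2\mu_B}\cC_{\rm PL}C_{\rm stat}^2 G$. On $\{G>1\}$ the bounded-gap argument gives at most $\frac{2R_{\max}}{1-\gamma}G$. So the gap is at most $C'G$ pathwise, and linearity of expectation, with no Jensen step needed, gives $O(\E G)$.

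\emph{Main obstacle.} The delicate point is uniformity. The constants $C_{\rm stat}$, $\bar C_{\rm occ}$ and $\cC_{\rm PL}$ must hold along the random output iterate, and the companion point pairs $V^K$ with $\pi^{K+1}$. I will check that Lemma~\ref{lem:kkt-policy} needs only $G(z)\le1$ at an arbitrary triple $z$; if so, applying it at $\widetilde z^{K+1}$ is legitimate. The occupancy ratios are then taken at $\pi^{K+1}$, as the statement's hypotheses permit.

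The other potential gap is the case $G>1$, which the cited lemmas do not cover. My first attempt is the crude boundedness of returns above, which requires only $|r|\le R_{\max}$.
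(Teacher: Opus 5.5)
Your proposal is correct and follows the paper's proof essentially step for step. The paper also truncates at $G\le 1$, using Lemma~\ref{lem:kkt-policy} with Theorem~\ref{thm:performance} or Theorem~\ref{thm:quadratic-performance}, and on $G>1$ bounds the gap by $D_{\max}=2R_{\max}/(1-\gamma)$ times $\sqrt{G}$ or $G$. It then applies Jensen in the square-root case and linearity of expectation in the quadratic case, and reads off exact-KKT optimality as the zero-residual case.

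One small difference: the paper reads ``$G\to0$ implies convergence'' pointwise rather than in expectation or probability, and your Step~1 pathwise bound $J_\nu^\star-J_\nu(\pi)\le C\sqrt{G}$ already gives that pointwise reading directly.
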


\section{Extensions to Other RL Formulations}
\label{sec:extensions}
The preceding theory raises a structural question: is multiplier stability specific to the tabular Bellman resolvent, or does it follow from a broader property of discounted RL operators? We show that the same mechanism persists across nonlinear policy parameterizations, projected Bellman equations, and explicit occupancy formulations whenever the induced operator is uniformly invertible and stable. In each case, the proof follows the same chain:
\emph{operator invertibility $\rightarrow$ explicit dual representation $\rightarrow$ multiplier control $\rightarrow$ Lyapunov descent}.
The stochastic occupancy formulation further shows that this principle survives empirical operator perturbations when the same structure-preserving operator is shared across primal and dual updates.

\subsection{Smooth Nonlinear Policies}
The operator-stability mechanism does not depend on direct tabular policy
coordinates. Let $\pi_\theta$ be smooth, define
$R(\theta,V)=M_\theta V-r_\theta$, and suppose
$\|M_\theta-M_{\theta'}\|\le L_M\|\theta-\theta'\|$
and $b_A(\theta)=\phi(\theta)+c^\trans M_\theta^{-1}r_\theta$
is lower bounded.\textbf{~\ref{lem:l6-euclidean-resolvent}} gives $\|M_\theta^{-1}\|_2\le\sqrt{|\cS|}/(1-\gamma)$. \textbf{~\ref{lem:l6-nonlinear-inverse}} gives $\|M_\theta^{-1}-M_{\theta'}^{-1}\|\le\kappa_M^2L_M\|\theta-\theta'\|$. \textbf{~\ref{lem:l6-exact-policy-descent}} gives exact parameter-step descent $\cL_\beta(\theta^k,V^k,\lambda^k)-\cL_\beta(\theta^{k+1},V^k,\lambda^k)\ge(2\eta_\theta)^{-1}\|\Delta\theta_{k+1}\|^2$. \textbf{~\ref{lem:l6-value-dual}} gives $\lambda^{k+1}=-M_{k+1}^{-\trans}(c+\eta_V^{-1}\Delta V_{k+1})$. \textbf{~\ref{lem:l6-dual-increment}} gives $\|\Delta\lambda_{k+1}\|^2\le A_A\|\Delta\theta_{k+1}\|^2+B_A\|\Delta V_{k+1}\|^2+B_A\|\Delta V_k\|^2$. \textbf{~\ref{lem:l6-lyap-lower}} gives $\Phi_k^A\ge b_A^{\inf}+(\beta/4)\|R_k\|^2+(\tau-\kappa_M^2/(\beta\eta_V^2))\|\Delta V_k\|^2$. \textbf{~\ref{thm:l6-a1}} shows that the corresponding positive coefficient conditions imply $\Delta\theta_k,\Delta V_k,\Delta\lambda_k,R_k\to0$ and KKT convergence.

For a computable proximal-gradient parameter update, define the smooth augmented term $h_k(\theta)$. \textbf{~\ref{lem:l6-pg-descent}} gives $\cL_\beta(\theta^k,V^k,\lambda^k)-\cL_\beta(\theta^{k+1},V^k,\lambda^k)\ge(1/(2\eta_{\theta,k})-L_{\theta,k}/2)\|\Delta\theta_{k+1}\|^2$. \textbf{~\ref{thm:l6-a2}} shows that the margin $1/(2\eta_{\theta,k})-L_{\theta,k}/2-A_A/\beta\ge\alpha_\theta>0$ yields Lyapunov descent and $\min_{k\le T}G_k^A=O(T^{-1})$. The step sizes also satisfy $\eta_{\theta,k}\ge\underline\eta>0$, with uniformly bounded $L_{\theta,k}$, $D_\theta M_\theta$, and $D_\theta r_\theta$. A uniform smoothness bound and the restarted backtracking rule in Appendix~\ref{app:nonlinear-policy} ensure these conditions.

\subsection{Projected Bellman Equations}
The same mechanism also survives value-function approximation. What is required for primal--dual stability is an invertible projected Bellman operator, rather than an exact state-value representation. Let $V_w=\Xi w$, fix positive diagonal $D$, and write $A_\theta=\Xi^\trans DM_\theta\Xi$, $b_\theta=\Xi^\trans Dr_\theta$. Assume $A_\theta$ is uniformly invertible and Lipschitz, with the transformed regularity and step-size conditions of Theorem~\ref{thm:l6-projected}. \textbf{~\ref{lem:l6-projected-dual}} gives the projected value-dual identity $\nu^{k+1}=-A_{k+1}^{-\trans}(c_w+\eta_w^{-1}\Delta w_{k+1})$. Theorem\textbf{~\ref{thm:l6-projected}} gives projected-KKT convergence and an $O(T^{-1})$ minimum squared residual. If the projected Bellman map contracts with modulus $q<1$, \textbf{~\ref{lem:l6-projected-value}} gives $\|\bar V^\theta-V^\theta\|\le\|(I-\Pi_D)V^\theta\|/(1-q)$. Proposition\textbf{~\ref{prop:l6-projected-counterexample}} shows that an $O(\eps)$ value error can coexist with an $O(\sqrt\eps)$ parameter-derivative error, so value approximation alone cannot control the policy gradient. \textbf{~\ref{lem:l6-adjoint-gradient}} instead gives $\|D_\theta R(\theta,V^\theta)^\trans\lambda^\theta-D_\theta R(\theta,\bar V^\theta)^\trans\widehat\lambda^\theta\|\le C_V\|V^\theta-\bar V^\theta\|+C_\lambda\|\lambda^\theta-\widehat\lambda^\theta\|$, while \textbf{~\ref{lem:l6-dual-projection}} gives $\|\lambda^\theta-\widehat\lambda^\theta\|\le C_{\rm dual}\inf_{z\in\operatorname{range}(D\Xi)}\|\lambda^\theta-z\|$.

\subsection{Explicit Occupancy Measures}
More importantly, the stability principle is not tied to the Bellman value formulation itself. Discounted occupancy flow provides a second RL operator with the same algebraic role. Let $x=(q,d)$, $b=((1-\gamma)\rho,0)$, and $K_\theta=\left[\begin{smallmatrix}-\gamma P^\trans&I\\I&-\Pi_\theta\end{smallmatrix}\right]$. \textbf{~\ref{lem:l6-occ-invertible}} proves $K_\theta$ invertible and $\|K_\theta^{-1}\|_{1,\oplus}\le2/(1-\gamma)$. Proposition\textbf{~\ref{prop:l6-occ-equivalence}} shows that $K_\theta x=b$ has the unique nonnegative solution $x^\theta=(q^\theta,d^\theta)$. Theorem\textbf{~\ref{thm:l6-occ-kkt-equivalence}} proves KKT equivalence between the constrained occupancy problem and the reduced policy objective. \textbf{~\ref{lem:l6-occ-dual}} gives the occupancy-dual identity $y^{k+1}=-K_{k+1}^{-\trans}(c_x+\eta_x^{-1}\Delta x_{k+1})$. \textbf{~\ref{thm:l6-occ-convergence}} gives deterministic occupancy-KKT convergence under the regularity, positive step-size lower bound, and strict margins in Theorem~\ref{thm:l6-occ-convergence}. Lemma\textbf{~\ref{lem:l6-occ-error}} gives the error bound $\|x-x^\theta\|\le\kappa_K\|K_\theta x-b\|$.

\subsection{Stochastic occupancy operator}
The stochastic occupancy formulation provides a second test of the structure-preserving principle from Section~4. The occupancy gradient contains $\widehat K^\trans\widehat K$, and sharing the empirical operator preserves the value--dual identity. We therefore construct one empirical transition matrix per iteration and share the induced $\widehat K_k$ across the policy, occupancy, and dual steps. \textbf{~\ref{lem:l6-regenerative}} proves a regenerative estimator with $\E_k\|\widehat P_k-P\|_F^2\le C_P/m_k$ and $\Pr_k(\|\widehat P_k-P\|_F\ge t)\le2|\cS|^2|\cA|e^{-c_Pm_kt^2}$. \textbf{~\ref{lem:l6-empirical-occ-invertible}} shows every empirical occupancy operator is automatically invertible with a deterministic uniform inverse bound, and $\|\widehat K_k^{-1}-K_\theta^{-1}\|\le\bar\kappa_K\kappa_K\delta_k$. \textbf{~\ref{lem:l6-empirical-dual}} gives $y^{k+1}=-\widehat K_k(\theta^{k+1})^{-\trans}(c_x+\eta_x^{-1}\Delta x_{k+1})$. \textbf{~\ref{lem:l6-empirical-dual-increment}} bounds $\Delta y_{k+1}$ by $\Delta\theta_{k+1}$, $\Delta x_{k+1}$, $\Delta x_k$, $\delta_k$, and $\delta_{k-1}$.

\textbf{~\ref{lem:l6-stochastic-lower}} gives the perturbed true Lyapunov lower bound $\Phi_k\ge B_C^{\inf}+(\beta/4)\|F_k\|^2+q_x\|\Delta x_k\|^2-C_0\delta_{k-1}^2$. Defining $\Psi_k=\Phi_k-B_C^{\inf}+1+C_0\delta_{k-1}^2$, \textbf{~\ref{lem:l6-policy-gradient-perturbation}} proves that the direct policy-gradient perturbation vanishes by block-row orthogonality, and hence bounds the policy-gradient perturbation by $|\langle\widehat g_{\theta,k}-g_{\theta,k},\Delta\theta_{k+1}\rangle|\le\epsilon\|\Delta\theta_{k+1}\|^2+C_\epsilon\delta_k^2\Psi_k$. Under the uniform bounds and parameter conditions in Appendix~\ref{app:occupancy-stochastic}, \textbf{~\ref{lem:l6-stochastic-descent}} gives, for $k\ge1$, the one-step recursion $\E_k\Psi_{k+1}\le(1+a/m_k)\Psi_k-c_\theta\E_k\|\Delta\theta_{k+1}\|^2-c_x\E_k\|\Delta x_{k+1}\|^2-c_x'\|\Delta x_k\|^2+b/m_k$. \textbf{~\ref{lem:l6-stochastic-kkt}} controls the true squared KKT residual by the increment energy plus $m_k^{-1}\Psi_k+m_k^{-1}+\delta_{k-1}^2$.

\textbf{~\ref{thm:l6-stochastic-finite}} Let $A_T=\sum_{k<T}m_k^{-1}$. Then $\sup_{1\le k\le T}\E\Psi_k\le e^{C_1A_T}(C_{\rm init}+C_2A_T)$ and, for an independent uniform output $K\in\{1,\ldots,T\}$, $\E G_K^{\rm true}\le C_3e^{C_1A_T}(1+A_T)(T^{-1}+A_T/T)$. Corollary\textbf{~\ref{cor:l6-equal-batch}} gives $\E G_K^{\rm true}=O(T^{-1}+T/N)$ for equal batches when $N\gtrsim T^2$. \textbf{~\ref{thm:l6-stochastic-as}} If $\sum_km_k^{-1}<\infty$, then $\Psi_k$ converges, $\sum_k(\|\Delta\theta_{k+1}\|^2+\|\Delta x_{k+1}\|^2+\|\Delta x_k\|^2)<\infty$, $\delta_k\to0$, and $G_k^{\rm true}\to0$ almost surely. Corollary\textbf{~\ref{cor:l6-performance}} gives the corresponding nonlinear-policy performance conversion under an additional parameterization-specific domination condition.
\paragraph{Unified principle and rate hierarchy.}
The Bellman resolvent $M_\pi^{-1}$, the projected Bellman operator
$A_\theta^{-1}$, and the occupancy operator $K_\theta^{-1}$ are three
realizations of the same stability mechanism. In each case, uniform
invertibility converts primal optimality into an explicit dual representation,
whose perturbation stability controls multiplier motion. Under stochastic
data, sharing one empirical operator across the coupled updates preserves this
representation pathwise and converts Markov dependence, observation noise,
and bias into operator perturbations. For the Bellman formulation, this yields
the squared-KKT rate
$
\mathbb{E}[\widetilde G_{K+1}]
=
O\!\left(
T^{-1}
+
T^{-1}\sum_{k<T}m_k^{-1}
\right),
$
while the policy-performance conversion gives
$J^\star-J(\pi)=O(\sqrt G)$ under occupancy coverage and $O(G)$ under the
stronger statewise quadratic improvement condition. The stochastic occupancy
formulation exhibits the same operator-stability mechanism with its
corresponding sampling rate. Taken together, these results identify
discounted-operator invertibility as a reusable structural source of
primal--dual stability in reinforcement learning.

\section{Numerical Mechanism Validation}
\label{sec:experiments}

We validate the structural predictions on eight independently generated finite discounted MDPs, with five trajectory seeds per instance. Figure~\ref{fig:main_mechanism} summarizes shared-operator stability and finite-time scaling. Additional three-way ablations in Appendix~\ref{app:operator_consistency} show that, at 320 steps per empirical model, sharing reduces the trailing true squared KKT residual by a factor of $8.27$ relative to separate operators, while using one third of the sampling budget. Sharing between the value and dual updates preserves the value--dual identity even with a separate policy model, identifying the consistency underlying multiplier control. Under $m_k=T$, hence $N=T^2$, the mean companion squared KKT residual exhibits slope $-0.87$, consistent with the $O(T^{-1})$ dependence in Theorem~\ref{thm:finite-time}. Further statistical controls and sensitivity results appear in Appendix~\ref{app:experiments}.

\begin{figure}[t]
    \centering

    \begin{minipage}[t]{0.32\linewidth}
        \centering
        \includegraphics[width=\linewidth]
        {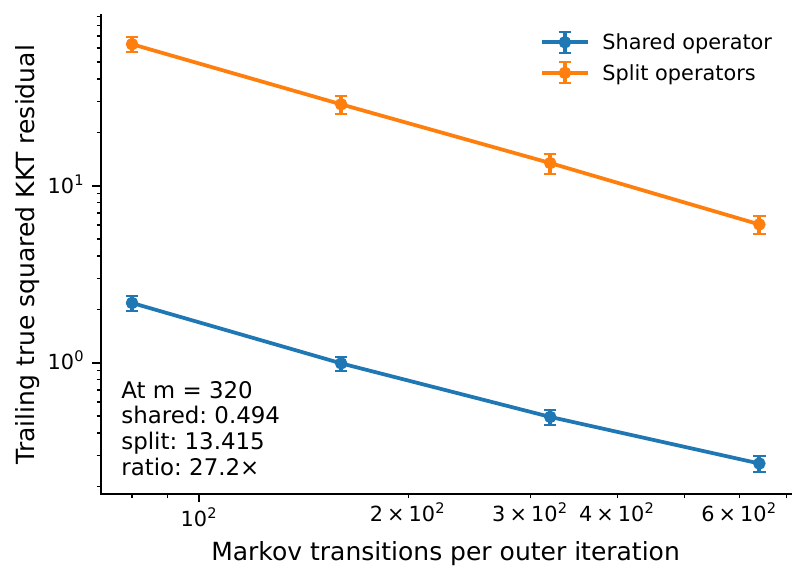}
        \vspace{-2mm}

        {\small\textbf{(a)} KKT residual}
    \end{minipage}
    \hfill
    \begin{minipage}[t]{0.32\linewidth}
        \centering
        \includegraphics[width=\linewidth]
        {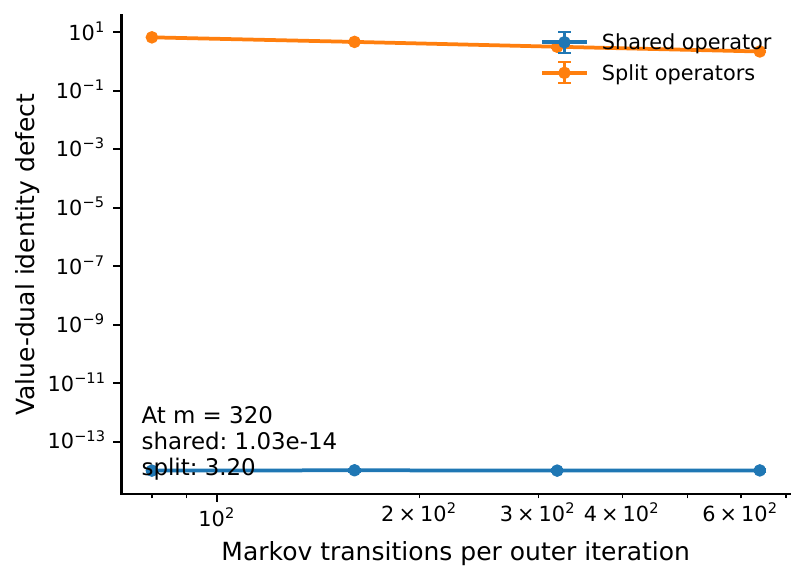}
        \vspace{-2mm}

        {\small\textbf{(b)} Value--dual identity}
    \end{minipage}
    \hfill
    \begin{minipage}[t]{0.32\linewidth}
        \centering
        \includegraphics[width=\linewidth]
        {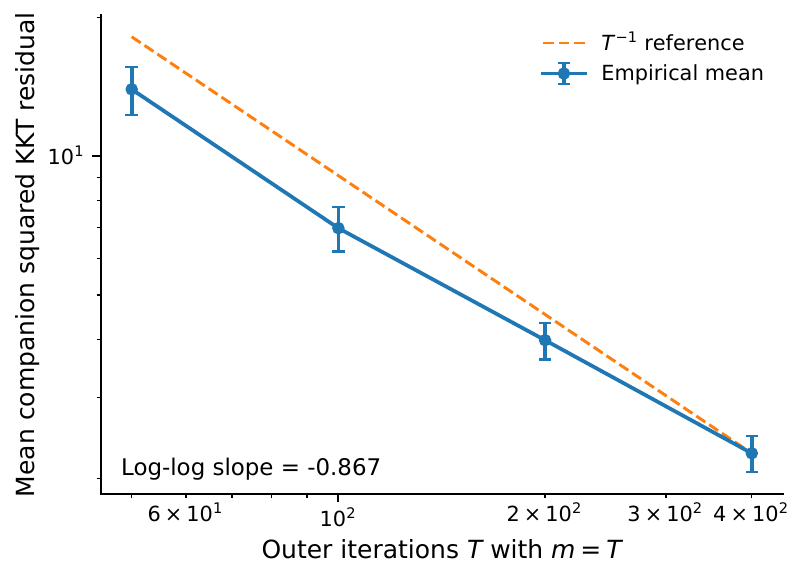}
        \vspace{-2mm}

        {\small\textbf{(c)} Finite-time scaling}
    \end{minipage}

    \caption{
    \textbf{Numerical validation of the operator-stability mechanism.}
    \textbf{(a)} A shared empirical Bellman operator yields substantially smaller true squared KKT residuals than an equal-budget split-operator construction.
    \textbf{(b)} Sharing the operator preserves the empirical value--dual identity to numerical precision, whereas inconsistent blockwise operators break this identity.
    \textbf{(c)} With $m_k=T$ and $N=T^2$, the companion squared KKT residual exhibits slope $-0.87$, close to the $T^{-1}$ scaling predicted by Theorem~\ref{thm:finite-time}.
    }
    \label{fig:main_mechanism}
\end{figure}

\section{Conclusion}
We established a Markovian nonconvex ADMM theory in which discounted RL operator structure
supplies primal-dual stability. The Bellman resolvent replaces the classical smooth-block mechanism,
the structure-preserving empirical Bellman model lifts the argument to controlled Markov and
stochastic data, the companion iterate yields finite-time KKT complexity, and occupancy geometry
converts stationarity into global policy quality. The nonlinear-policy, projected-Bellman,
deterministic occupancy, and stochastic occupancy results show that this mechanism extends beyond
the tabular Bellman formulation. Taken together, these results identify uniformly invertible
discounted RL operators as a reusable structural source of primal-dual stability.

\bibliography{ref}
\bibliographystyle{plain}

\appendix
\section{Proof Map, Conventions, and a Reusable Resolvent Lemma}
\label{app:map}

The appendix follows the same proof organization used in the reference ADMM analysis: the main text states the results and points to an appendix chain, while the appendix proves the lemmas in the order in which they are consumed by the convergence argument. In particular, the logical flow is
\[
\begin{aligned}
&\text{empirical-model control}
\Longrightarrow \text{iterate stability}
\Longrightarrow \text{dual control}\\
&\Longrightarrow \text{Lyapunov descent}
\Longrightarrow \text{KKT convergence and complexity}
\Longrightarrow \text{policy performance}.
\end{aligned}
\]
The extensions repeat the same architecture after replacing the Bellman operator by a nonlinear-policy Bellman operator, a projected Bellman matrix, or an occupancy operator.

We use $\Delta\pi_{k+1}=\pi^{k+1}-\pi^k$, $\Delta V_{k+1}=V^{k+1}-V^k$, and $\Delta\lambda_{k+1}=\lambda^{k+1}-\lambda^k$. We set $\eps_{-1}=0$ and $m_{-1}^{-1}=0$. The initialization satisfies $\pi^0\in\operatorname{dom}\psi$ and has the moments in Assumption~\ref{ass:algorithmic-margins}. Estimates obtained by subtracting consecutive value-dual identities, and their Lyapunov and residual consequences, are stated for $k\ge1$. The first update contributes a finite initialization constant to finite-time sums. The deterministic extensions use the same convention. All vector norms are Euclidean unless stated otherwise, and matrix norms are induced spectral norms. Constants denoted by $C$ may change from line to line but never depend on $k$, $T$, or the batch schedule.

\paragraph{Dependency map.}
The statistical Lemmas~\ref{lem:visit-tail}--\ref{lem:general-model-error} prove Lemma~\ref{lem:model-moments} and supply the stochastic error used by Theorems~\ref{thm:l1} and~\ref{thm:as-convergence}. Lemma~\ref{lem:inverse-stability} and Lemma~\ref{lem:moment-stability} are used in Lemma~\ref{lem:dual-control}. Lemmas~\ref{lem:surrogate-closure} and~\ref{lem:dual-control} are the two inputs to Lemma~\ref{lem:lyapunov}. Lemmas~\ref{lem:lyapunov} and~\ref{lem:companion} imply Theorem~\ref{thm:finite-time}. The policy-performance chain uses Lemmas~\ref{lem:adjoint-representation}--\ref{lem:simplex-geometry} before Theorems~\ref{thm:performance} and~\ref{thm:quadratic-performance}. Every extension result is used either to prove the next result in its subsection or to justify the operator-stability extension stated in Section~\ref{sec:extensions}.

\begin{lemma}[Inverse-map stability]
\label{lem:inverse-stability}
Let $M$ and $M'$ be invertible with $\|M^{-1}\|\le\kappa$ and $\|M'^{-1}\|\le\kappa$. Then
\[
\|M'^{-1}-M^{-1}\|\le\kappa^2\|M'-M\|.
\]
Consequently, if $\|M_{\pi'}-M_\pi\|\le L_M\|\pi'-\pi\|$, then
\[
\|M_{\pi'}^{-1}-M_\pi^{-1}\|
\le \kappa^2L_M\|\pi'-\pi\|.
\]
\end{lemma}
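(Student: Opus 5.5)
The plan is to use the standard resolvent identity
\[
M'^{-1}-M^{-1}=M'^{-1}(M-M')M^{-1},
\]
which holds for any pair of invertible matrices. To verify it, expand the right-hand side: $M'^{-1}MM^{-1}-M'^{-1}M'M^{-1}=M'^{-1}-M^{-1}$. No further structure is needed, and in particular no assumption that $M$ and $M'$ commute.

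Next, take the induced spectral norm of both sides and apply submultiplicativity:
\[
\|M'^{-1}-M^{-1}\|\le\|M'^{-1}\|\,\|M-M'\|\,\|M^{-1}\|.
\]
Inserting the two hypothesized bounds $\|M^{-1}\|\le\kappa$ and $\|M'^{-1}\|\le\kappa$ gives the first claim $\kappa^2\|M'-M\|$. The only property of the norm used is submultiplicativity, so the same argument works for any induced norm, such as $\|\cdot\|_\infty$, provided $\kappa$ bounds the inverses in that same norm.

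For the consequence, set $M=M_\pi$ and $M'=M_{\pi'}$. Both matrices are invertible with inverse norm at most $\kappa$; in the paper's setting this follows from the row-stochastic bound $\|M_\pi^{-1}\|_\infty\le 1/(1-\gamma)$ together with norm equivalence, giving the common constant $\bar\kappa$. Combining the first claim with the Lipschitz hypothesis $\|M_{\pi'}-M_\pi\|\le L_M\|\pi'-\pi\|$ yields $\kappa^2L_M\|\pi'-\pi\|$.

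There is no genuine obstacle here. The one point needing care is that $\kappa$ and the Lipschitz constant $L_M$ must be measured in the same induced norm as the conclusion. This is why the paper's common spectral bound $\bar\kappa$ is the natural choice of $\kappa$.
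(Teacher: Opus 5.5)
Your proposal is correct and follows essentially the same route as the paper: the resolvent identity $M'^{-1}-M^{-1}=M'^{-1}(M-M')M^{-1}$, submultiplicativity of the operator norm together with the two inverse bounds, and then substitution of the Lipschitz estimate for $M_\pi$. Your remark that $\kappa$ and $L_M$ must be measured in the same induced norm as the conclusion is a reasonable clarification, but it does not change the argument.
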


\begin{proof}
The resolvent identity gives
\[
M'^{-1}-M^{-1}=M'^{-1}(M-M')M^{-1}.
\]
Taking operator norms yields
\[
\|M'^{-1}-M^{-1}\|
\le \|M'^{-1}\|\,\|M'-M\|\,\|M^{-1}\|
\le \kappa^2\|M'-M\|.
\]
The policy-dependent bound follows by substituting the Lipschitz estimate for $M_\pi$.
\end{proof}

\paragraph{Role.}
Lemma~\ref{lem:inverse-stability} is the algebraic step that converts a primal policy displacement into a change of the inverse Bellman operator. It is used in Lemma~\ref{lem:dual-control}, in the nonlinear-policy extension, and in the empirical occupancy inverse perturbation bound.

\section{Controlled-Markov Empirical-Model Bounds}
\label{app:markov}

Fix an outer iteration $k$ and condition on $\cF_k$. During the batch the behavior policy is frozen, so the reset chain is time homogeneous. Let $B_t=0$ denote a reset and $B_t=1$ a genuine environment transition. For a state-action pair $(s,a)$ define
\[
Y_t^{s,a}=\one\{S_t=s,A_t=a,B_t=1\},
\qquad
N_{s,a}^{(k)}=\sum_{t=1}^{m_k}Y_t^{s,a}.
\]
The quantity $N_{s,a}^{(k)}$ is the denominator of the empirical conditional transition and reward estimators.

\begin{lemma}[Uniform effective-visitation lower tail]
\label{lem:visit-tail}
Under Assumption~\ref{ass:coverage-noise}, there are constants $p_0,c_0,C_0>0$, independent of $k$ and $m_k$, such that
\[
\Prob\!\left(
N_{s,a}^{(k)}<p_0m_k\mid\cF_k
\right)
\le C_0e^{-c_0m_k}
\]
for every state-action pair $(s,a)$.
\end{lemma}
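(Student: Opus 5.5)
The proof will lower bound the expected number of non-reset visits to $(s,a)$ by a positive fraction of $m_k$, and then show concentration for the additive functional $\sum_t Y_t^{s,a}$ of the reset chain. The chain has a uniform regeneration structure, so concentration will come from a Markov-chain Hoeffding/Chernoff bound with constants free of $k$.

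\emph{Step 1 (uniform per-step probability).} Condition on $\cF_k$ and on the history up to time $t-1$. We lower bound $\Prob(S_t=s,A_t=a,B_t=1\mid\text{past})$. Reaching $s$ at time $t$ can go through a reset at time $t-1$: with probability $1-\gamma$ the chain resets to a state drawn from $\rho$, landing at $s$ with probability at least $\rho_{\min}$. The convention for the reset coin at step $t$ must be fixed carefully. If $B_t$ indexes whether the transition leaving $S_t$ is genuine, then at time $t$ the chain is at $s$ with probability at least $(1-\gamma)\rho_{\min}$, independently of the past, because reset coins and reset states are fresh. The action $a$ is then chosen with probability at least $\alpha$. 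The transition is non-reset with probability $\gamma$. Hence
\[
\Prob(Y_t^{s,a}=1\mid \text{history up to }t-1)\ge p:=\gamma(1-\gamma)\rho_{\min}\alpha>0
\]
for $t\ge2$. This bound is uniform in $k$, in $\bar\pi^k$ and in the history, and it depends only on Assumption~\ref{ass:coverage-noise}. If the indexing instead puts the reset decision before $S_t$, the same bound holds with a lag of one step.

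\emph{Step 2 (martingale concentration).} Set $D_t=Y_t^{s,a}-\E[Y_t^{s,a}\mid\text{history}_{t-1}]$. This is a bounded martingale difference sequence with $|D_t|\le1$. By Step~1,
\[
N_{s,a}^{(k)}\ge \sum_{t=2}^{m_k}\bigl(p+D_t\bigr)\ge p(m_k-1)+\sum_t D_t .
\]
Choose $p_0=p/4$. The event $N<p_0m_k$ then forces $\sum_t D_t\le -p m_k/2$ once $m_k\ge2$. Azuma--Hoeffding bounds the probability of this by $\exp(-p^2m_k/8)$. For the finitely many small $m_k$ (for example $m_k<2$) we enlarge $C_0$ so that the bound is trivial, since $C_0e^{-c_0m_k}\ge1$ there. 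This gives $c_0=p^2/8$. A union over the finitely many pairs is not even needed, because the statement is per pair.

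The main obstacle is Step~1. It requires matching the paper's exact timing convention for resets, actions, and the indicator $B_t$, and checking that the freshness assumptions in Assumption~\ref{ass:coverage-noise} really give a history-independent lower bound, including at the first step of the batch, whose state may be inherited from $\cF_k$. I would first handle this by discarding the first step, or the first two, which costs only a constant in $p(m_k-1)$. Once a conditional Bernoulli minorization holds, Azuma does the rest, and no mixing-time argument is needed.
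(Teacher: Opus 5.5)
\textbf{There is a genuine gap where Steps~1 and~2 meet.}

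Your minorization in Step~1 goes through a reset on the transition leaving $S_{t-1}$, i.e.\ through the event $\{B_{t-1}=0\}$. It is therefore valid only conditionally on a $\sigma$-field that does not yet contain the coin $B_{t-1}$. Azuma--Hoeffding, however, needs the $D_t$ to be adapted to the filtration you condition on. Since $Y_{t-1}^{s,a}=\one\{S_{t-1}=s,A_{t-1}=a,B_{t-1}=1\}$ depends on exactly that coin, the two requirements are incompatible.

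\emph{Conditioning on the full past.} Then the bound is false. On $\{Y_{t-1}^{s,a}=1\}\subseteq\{B_{t-1}=1\}$ the reset route is excluded. In an MDP where $s$ is reachable only through resets ($P(s\mid s',a')=0$ for all $(s',a')$), one gets $\Prob(Y_t^{s,a}=1\mid\text{past})=0$ on that event.

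\emph{Conditioning on the history that stops before $B_{t-1}$.} Then your inequality $N_{s,a}^{(k)}\ge p(m_k-1)+\sum_tD_t$ is correct. But $D_t$ becomes measurable only one step later. Consecutive terms are in general correlated through the shared coin $B_t$, which is the success coin in $Y_t^{s,a}$ and the regeneration coin for $Y_{t+1}^{s,a}$. So $(D_t)$ is not a martingale difference sequence, and Azuma cannot be applied as written.

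Your remark about a one-step lag concerns the labeling convention and does not resolve this overlap. The inherited first state, which you flag as the main obstacle, is already handled by starting at $t=2$. The real obstacle is the double use of the same reset coin.

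\textbf{The missing idea is to separate the regeneration coin from the success coin.} The paper does this with disjoint two-step blocks. Set $Q_j=\one\{B_{2j-1}=0,\,S_{2j}=s,\,A_{2j}=a,\,B_{2j}=1\}\le Y_{2j}^{s,a}$, and let $\mathcal H_{j-1}$ contain everything through the end of block $j-1$. Then:
\begin{itemize}
\item $Q_j$ is $\mathcal H_j$-measurable.
\item Freshness of $B_{2j-1}$, the reset state, the action randomization, and $B_{2j}$ gives $\E[Q_j\mid\mathcal H_{j-1}]\ge q_\star:=\gamma(1-\gamma)\rho_{\min}\alpha$.
\item Azuma over $\lfloor m_k/2\rfloor$ blocks yields the claim with $p_0=q_\star/6$.
\end{itemize}

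Alternatively, you can keep your $D_t$, defined with the history stopping before $B_{t-1}$. They are martingale differences at lag two: $\E[D_t\mid\mathcal H_{t-1}]=0$ while $D_t$ is measurable at time $t+1$. Hence the even-indexed and odd-indexed partial sums are each genuine martingales. Applying Azuma to each half and taking a union bound repairs Step~2 at the cost of constants.

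A minor arithmetic point: with $p_0=p/4$, the implication $N_{s,a}^{(k)}<p_0m_k\Rightarrow\sum_tD_t\le-pm_k/2$ requires $m_k\ge4$ rather than $m_k\ge2$. This is again absorbed into $C_0$.
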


\begin{proof}
Split the batch into $J=\lfloor m_k/2\rfloor$ nonoverlapping two-step blocks and let
\[
Q_j=\one\{B_{2j-1}=0,\ S_{2j}=s,\ A_{2j}=a,\ B_{2j}=1\}.
\]
If $Q_j=1$, then the second step of the block contributes one usable $(s,a)$ transition. Hence $N_{s,a}^{(k)}\ge\sum_{j=1}^JQ_j$. Given the history before block $j$, the reset draws a fresh state from $\rho$, the frozen behavior policy chooses action $a$, and the next reset coin equals one. Therefore
\[
\E[Q_j\mid\mathcal H_{j-1}]
=(1-\gamma)\rho(s)\bar\pi^k(a\mid s)\gamma
\ge q_\star:=\gamma(1-\gamma)\rho_{\min}\alpha>0.
\]
Set $D_j=Q_j-\E[Q_j\mid\mathcal H_{j-1}]$. Then $(D_j)$ is a martingale-difference sequence and $|D_j|\le1$. If $\sum_jQ_j<q_\star J/2$, then $\sum_jD_j\le-q_\star J/2$. Azuma--Hoeffding gives
\[
\Prob\!\left(
\sum_{j=1}^JQ_j<\frac{q_\star J}{2}
\,\middle|\,\cF_k
\right)
\le \exp\!\left(-\frac{q_\star^2J}{8}\right).
\]
Since $J\ge m_k/3$ for all sufficiently large $m_k$, the desired inequality follows with $p_0=q_\star/6$. The finitely many smaller batch sizes are absorbed by increasing $C_0$.
\end{proof}

\paragraph{Role.}
Lemma~\ref{lem:visit-tail} prevents the random denominators of all state-action conditional estimators from becoming too small. It is used in Lemma~\ref{lem:inverse-count} and the proof of Lemma~\ref{lem:model-moments}.

\begin{lemma}[Inverse moments of random visit counts]
\label{lem:inverse-count}
For every fixed $q\ge1$ there exists $C_q<\infty$ such that
\[
\E\!\left[
(N_{s,a}^{(k)})^{-q}\one\{N_{s,a}^{(k)}>0\}
\mid\cF_k
\right]
\le \frac{C_q}{m_k^q}.
\]
\end{lemma}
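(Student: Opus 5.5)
The plan is to split the expectation on the event $\{N_{s,a}^{(k)}\ge p_0m_k\}$ and its complement, and to use Lemma~\ref{lem:visit-tail} to make the complement negligible. Throughout, write $N=N_{s,a}^{(k)}$ and condition on $\cF_k$.

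\emph{Typical event.} On $\{N\ge p_0m_k\}$ we have $N>0$, so
\[
N^{-q}\one\{N>0\}\le (p_0m_k)^{-q}.
\]
Taking conditional expectations bounds this part by $p_0^{-q}m_k^{-q}$.

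\emph{Atypical event.} On $\{0<N<p_0m_k\}$ the count is a positive integer, so $N\ge1$ and $N^{-q}\le1$. This part is therefore bounded by
\[
\Prob(N<p_0m_k\mid\cF_k)\le C_0e^{-c_0m_k},
\]
using Lemma~\ref{lem:visit-tail}.

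\emph{Absorbing the exponential term.} Since $\sup_{m\ge1}m^qe^{-c_0m}=(q/(c_0e))^q<\infty$, we have $C_0e^{-c_0m_k}\le C_0(q/(c_0e))^q\,m_k^{-q}$. Combining the two parts gives the claim with
\[
C_q=p_0^{-q}+C_0\bigl(q/(c_0e)\bigr)^q.
\]
This constant is deterministic and independent of $k$, of $m_k$, and of $(s,a)$, because $p_0$, $c_0$ and $C_0$ from Lemma~\ref{lem:visit-tail} have that property.

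The only point needing care is the role of the indicator $\one\{N>0\}$. Without it, $N^{-q}$ is undefined on $\{N=0\}$, and in the paper's construction such rows are replaced by fixed defaults rather than divided by zero. Keeping the indicator makes the atypical contribution uniformly bounded by one, which is what allows the exponential tail to dominate the polynomial $m_k^{-q}$. Nothing else poses an obstacle. The uniformity in $k$ comes directly from the conditional, $k$-uniform form of Lemma~\ref{lem:visit-tail}, and small batch sizes are already covered because the supremum bound holds for every $m\ge1$.
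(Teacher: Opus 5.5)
Your proof is correct and matches the paper's argument: you split on $\{N\ge p_0m_k\}$ versus $\{1\le N<p_0m_k\}$, bound the first part by $(p_0m_k)^{-q}$, bound the second by the exponential tail from Lemma~\ref{lem:visit-tail}, and absorb $e^{-c_0m_k}$ into $m_k^{-q}$. Your explicit bound $(q/(c_0e))^q$ on $\sup_{m\ge1}m^qe^{-c_0m}$ just makes concrete the constant $C_q'$ that the paper leaves unnamed.
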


\begin{proof}
Write $N=N_{s,a}^{(k)}$ and split the expectation over $\{N\ge p_0m_k\}$ and $\{1\le N<p_0m_k\}$. On the first event, $N^{-q}\le(p_0m_k)^{-q}$. On the second event, $N^{-q}\le1$ and Lemma~\ref{lem:visit-tail} gives probability at most $C_0e^{-c_0m_k}$. Therefore
\[
\E[N^{-q}\one\{N>0\}\mid\cF_k]
\le (p_0m_k)^{-q}+C_0e^{-c_0m_k}.
\]
For fixed $q$, $e^{-c_0m}\le C'_qm^{-q}$ for all integers $m\ge1$. Combining the two terms proves the claim.
\end{proof}

\paragraph{Role.}
Lemma~\ref{lem:inverse-count} is used to convert martingale moment bounds with a random sample count into deterministic orders $m_k^{-1}$ and $m_k^{-2}$.

\begin{proof}[Proof of Lemma~\ref{lem:model-moments}]
We prove the transition and reward parts separately.

\textbf{Part (a): transition rows.}
For fixed $(s,a)$ define
\[
Z_t^{s,a}=Y_t^{s,a}\big(e_{S_{t+1}}-P(\cdot\mid s,a)\big).
\]
Because only non-reset transitions are used, $Y_t^{s,a}$ is measurable before $S_{t+1}$ is generated and, on $Y_t^{s,a}=1$, the next state is drawn from the true environment row. Hence $\E[Z_t^{s,a}\mid\mathcal H_t]=0$. The increments are bounded. The standard Burkholder--Rosenthal fourth-moment inequality for finite-dimensional martingales yields
\[
\E\!\left[
\left\|\sum_{t=1}^{m_k}Z_t^{s,a}\right\|^4
\,\middle|\,\cF_k
\right]
\le C_Zm_k^2.
\]
On the good-count event $\mathcal E_{s,a}^{(k)}=\{N_{s,a}^{(k)}\ge p_0m_k\}$,
\[
\widehat P_k(\cdot\mid s,a)-P(\cdot\mid s,a)
=\frac{1}{N_{s,a}^{(k)}}\sum_{t=1}^{m_k}Z_t^{s,a},
\]
so
\[
\E\!\left[
\|\widehat P_k(\cdot\mid s,a)-P(\cdot\mid s,a)\|^4
\one_{\mathcal E_{s,a}^{(k)}}
\,\middle|\,\cF_k
\right]
\le \frac{C}{m_k^2}.
\]
On the bad-count event two probability rows are uniformly bounded in norm, while Lemma~\ref{lem:visit-tail} gives an exponentially small probability. Thus the bad-event contribution is also $O(m_k^{-2})$. Consequently
\[
\E\!\left[
\|\widehat P_k(\cdot\mid s,a)-P(\cdot\mid s,a)\|^4
\mid\cF_k
\right]
\le \frac{C_P}{m_k^2}.
\]

\textbf{Part (b): reward rows.}
Write $m=m_k$, $I_t=Y_t^{s,a}$, $N=\sum_t I_t$, and
$S_m=\sum_{t=1}^m I_t\zeta_{k,t}$. The variables $I_t$ are predictable for the pre-observation filtration $\cG_{k,t}$. Thus $I_t\zeta_{k,t}$ is a martingale difference. Work throughout conditionally on $\cF_k$. The martingale fourth-moment inequality gives
\[
\E_k|S_m|^4\le C M_4 m^2.
\]
Let $p_\star$ be the visitation constant from Lemma~\ref{lem:visit-tail} and set $p=p_\star/4$. On $\{N\ge pm\}$,
\[
\E_k\left[|S_m/N|^4\one\{N\ge pm\}\right]
\le (pm)^{-4}\E_k|S_m|^4\le C M_4m^{-2}.
\]
To control small counts without conditioning on the full trajectory, split the batch into two contiguous halves of lengths $\lfloor m/2\rfloor$ and $\lceil m/2\rceil$. Let $N_1,N_2$ be their usable visit counts and let $\mathcal H$ contain all observations in the first half. For $m\ge4$, each half has length at least $m/3$, so Lemma~\ref{lem:visit-tail}, applied from any starting history, gives
\[
\Prob_k(N_1<pm)\le Ce^{-cm},\qquad
\Prob(N_2<pm\mid\mathcal H)\le Ce^{-cm}.
\]
On $\mathcal B=\{1\le N<pm\}$, Jensen's inequality gives
\[
|S_m/N|^4\le N^{-1}\sum_t I_t|\zeta_{k,t}|^4
\le\sum_t I_t|\zeta_{k,t}|^4.
\]
For a first-half index $t$, $I_t|\zeta_{k,t}|^4$ is $\mathcal H$-measurable and $\mathcal B\subset\{N_2<pm\}$. Hence
\[
\E_k[I_t|\zeta_{k,t}|^4\one_{\mathcal B}]
\le Ce^{-cm}\E_k[I_t|\zeta_{k,t}|^4]\le CM_4e^{-cm}.
\]
For a second-half index $t$, the event $\{N_1<pm\}$ belongs to $\cG_{k,t}$. Since $\mathcal B\subset\{N_1<pm\}$, the conditional moment assumption gives the same bound. Summing over $t$ yields
\[
\E_k[|S_m/N|^4\one_{\mathcal B}]
\le CM_4m e^{-cm}\le C M_4m^{-2}.
\]
For $N=0$, the bounded fallback reward contributes $Ce^{-cm}$. The finitely many $m<4$ are absorbed in the constant. Therefore
$\E_k|\widehat r_k(s,a)-r(s,a)|^4\le C/m_k^2$.
This argument allows the reward and next state at a given transition to be dependent.

\textbf{Part (c): policy-uniform model error.}
Because the state-action space is finite and $P_\pi,r_\pi$ are convex combinations of primitive rows,
\[
\eps_k
\le C_\Theta\left(
\max_{s,a}\|\widehat P_k(\cdot\mid s,a)-P(\cdot\mid s,a)\|
+
\max_{s,a}|\widehat r_k(s,a)-r(s,a)|
\right).
\]
Finite-dimensional norm equivalence and the bounds above give
$\E[\eps_k^4\mid\cF_k]\le C_4/m_k^2$. Conditional Jensen then yields
$\E[\eps_k^2\mid\cF_k]\le (\E[\eps_k^4\mid\cF_k])^{1/2}\le C_2/m_k$.
\end{proof}

\paragraph{Role.}
Lemma~\ref{lem:model-moments} supplies the deterministic $O(m_k^{-1})$ error terms in the expected Lyapunov and finite-time complexity arguments. Its fourth-moment part is exactly what permits Cauchy--Schwarz with the random iterates.

\begin{lemma}[General stochastic empirical-model error]
\label{lem:general-model-error}
Suppose an additional observation perturbation is decomposed as $\xi_{k,t}=b_{k,t}+\zeta_{k,t}$ with $\E[\zeta_{k,t}\mid\cG_{k,t}]=0$, $\E[\|\zeta_{k,t}\|^2\mid\cG_{k,t}]\le\sigma_k^2$, and the average bias in every visited state-action reward row bounded in magnitude by $b_k$. Then there are deterministic constants $C_0,C_\sigma,C_b$ such that
\[
\E[\eps_k^2\mid\cF_k]
\le \frac{C_0+C_\sigma\sigma_k^2}{m_k}+C_bb_k^2.
\]
Consequently,
\[
\sum_k\frac{1+\sigma_k^2}{m_k}<\infty,
\qquad
\sum_kb_k^2<\infty
\quad\Longrightarrow\quad
\sum_k\eps_k^2<\infty
\ \text{almost surely}.
\]
\end{lemma}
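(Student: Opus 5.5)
The proof reuses the proof of Lemma~\ref{lem:model-moments}, replacing fourth moments by second moments and adding one deterministic bias term. As there, Part (c) gives
\[
\eps_k\le C_\Theta\Big(\max_{s,a}\|\widehat P_k(\cdot\mid s,a)-P(\cdot\mid s,a)\|+\max_{s,a}|\widehat r_k(s,a)-r(s,a)|\Big),
\]
so it suffices to bound the conditional second moment of each primitive row error and sum over the finitely many pairs $(s,a)$.

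\textbf{Transition rows.} These are unaffected by the observation perturbation. By Part (a) of Lemma~\ref{lem:model-moments} and conditional Jensen, each transition row has conditional second moment at most $C/m_k$. This term goes into $C_0$.

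\textbf{Reward rows.} Fix $(s,a)$ and write $N=N_{s,a}^{(k)}$. On $\{N\ge1\}$,
\[
\widehat r_k(s,a)-r(s,a)=\frac1N\sum_t I_t b_{k,t}+\frac1N\sum_t I_t\zeta_{k,t}.
\]
The first term is the average bias of the visited row, so its magnitude is at most $b_k$ and its square at most $b_k^2$. For the second term, $I_t$ is $\cG_{k,t}$-predictable, so $S_m=\sum_tI_t\zeta_{k,t}$ is a martingale with orthogonal increments. Hence
\[
\E_k|S_m|^2=\E_k\Big[\sum_t I_t\,\E[\zeta_{k,t}^2\mid\cG_{k,t}]\Big]\le\sigma_k^2\,\E_k N.
\]
The delicate step is dividing by the random count $N$ without a fourth moment. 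I would handle it in two cases.
\begin{itemize}
\item On $\{N\ge pm_k\}$, bound $|S_m/N|^2\le (pm_k)^{-2}|S_m|^2$, giving at most $\sigma_k^2 m_k/(pm_k)^2=C\sigma_k^2/m_k$, using $N\le m_k$.
\item On $\mathcal B=\{1\le N<pm_k\}$, use Jensen: $|S_m/N|^2\le N^{-1}\sum_tI_t\zeta_{k,t}^2\le\sum_t I_t\zeta_{k,t}^2$. Then repeat the two-half splitting from Part (b) of Lemma~\ref{lem:model-moments}. For a first-half index, $\mathcal B\subset\{N_2<pm\}$ and the second half has exponentially small conditional probability of too few visits. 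For a second-half index, $\{N_1<pm\}\in\cG_{k,t}$, so the conditional variance bound applies directly. Each term is then $\le C\sigma_k^2e^{-cm_k}$. Summing over $t$ gives $C\sigma_k^2m_ke^{-cm_k}\le C\sigma_k^2/m_k$.
\end{itemize}
On $\{N=0\}$ the bounded fallback reward contributes $Ce^{-cm_k}\le C/m_k$, which goes into $C_0$. Combining the three pieces with $(x+y)^2\le 2x^2+2y^2$ gives
\[
\E[\eps_k^2\mid\cF_k]\le\frac{C_0+C_\sigma\sigma_k^2}{m_k}+C_bb_k^2.
\]

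\textbf{Almost sure summability.} Take full expectations and sum over $k$. The hypotheses give $\sum_k\E\eps_k^2<\infty$, so by monotone convergence $\E\sum_k\eps_k^2<\infty$, and therefore $\sum_k\eps_k^2<\infty$ almost surely. The main obstacle is the small-count bound on $\mathcal B$, since only second moments are available; the two-half splitting handles it, because it never requires conditioning $\zeta_{k,t}$ on future visitation beyond $\cG_{k,t}$.
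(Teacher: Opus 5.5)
Your proposal is correct and follows essentially the same route as the paper's proof. The transition part comes from Part (a) of Lemma~\ref{lem:model-moments} via Jensen. The centered reward noise is handled by martingale orthogonality on $\{N\ge pm_k\}$ and by the two-half splitting with exponent two on $\{1\le N<pm_k\}$; the zero-count fallback and the squared row-average bias are then added, and Tonelli gives almost-sure summability.
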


\begin{proof}
The transition part contributes $C/m_k$. For each reward row, use the predictable martingale sum $S_m=\sum_{t=1}^m I_t\zeta_{k,t}$ from Part (b) above. Orthogonality of martingale differences gives
$\E_k|S_m|^2\le m\sigma_k^2$. On $N\ge pm$ the squared average therefore contributes at most $C\sigma_k^2/m$. On $1\le N<pm$, repeat the two-half argument with exponent two. The first-half terms are controlled by the conditional lower-tail bound for the second half. The second-half terms are controlled by $\E[|\zeta_{k,t}|^2\mid\cG_{k,t}]\le\sigma_k^2$ and the first-half lower-tail event. Their total is at most $C\sigma_k^2m e^{-cm}\le C\sigma_k^2/m$. The bounded zero-count fallback contributes $C/m$. Adding the squared row-average bias and summing over the finite row set proves the bound.

Taking expectations and summing under the stated schedule gives $\sum_k\E\eps_k^2<\infty$. Since the summands are nonnegative, Tonelli's theorem implies $\sum_k\eps_k^2<\infty$ almost surely.
\end{proof}

\paragraph{Role.}
Lemma~\ref{lem:general-model-error} is the statistical input to Theorem~\ref{thm:as-convergence}. The summability condition controls both observation noise and row-average bias.

\section{Bellman-Resolvent Stability, Lyapunov Descent, and KKT Complexity}
\label{app:bellman-proofs}

\subsection{A priori iterate stability}

\begin{proof}[Proof of Lemma~\ref{lem:moment-stability}]
The empirical value first-order condition and the dual update give the exact identity
\begin{equation}
\lambda^{k+1}
=-\widehat M_{k+1}^{-\trans}
\left(c+\eta_V^{-1}\Delta V_{k+1}\right).
\label{eq:app-dual-rep}
\end{equation}
Hence
\begin{equation}
\|\lambda^{k+1}\|
\le \bar\kappa\|c\|
+\frac{\bar\kappa}{\eta_V}
\left(\|V^{k+1}\|+\|V^k\|\right).
\label{eq:app-lambda-rec}
\end{equation}
The dual update also gives
\[
\widehat M_{k+1}V^{k+1}
=\widehat r_{\pi^{k+1},k}+\beta^{-1}\Delta\lambda_{k+1},
\]
so
\begin{equation}
\|V^{k+1}\|
\le \bar\kappa\|\widehat r_{\pi^{k+1},k}\|
+\frac{\bar\kappa}{\beta}
\left(\|\lambda^{k+1}\|+\|\lambda^k\|\right).
\label{eq:app-value-rec}
\end{equation}
Lemma~\ref{lem:model-moments}, bounded true rewards, and the conditional fourth-moment assumption imply
$R_4:=\sup_k(\E\|\widehat r_{\pi^{k+1},k}\|^4)^{1/4}<\infty$.
Set
\[
X_k=(\E\|V^k\|^4)^{1/4},
\quad
Y_k=(\E\|\lambda^k\|^4)^{1/4},
\quad
a=\frac{\bar\kappa}{\eta_V},
\quad
b=\frac{\bar\kappa}{\beta}.
\]
Minkowski's inequality applied to \eqref{eq:app-lambda-rec}--\eqref{eq:app-value-rec} gives
\[
Y_{k+1}\le \bar\kappa\|c\|+a(X_{k+1}+X_k),
\qquad
X_{k+1}\le \bar\kappa R_4+b(Y_{k+1}+Y_k).
\]
Eliminating the next-iterate variables yields
\[
\begin{bmatrix}X_{k+1}\\Y_{k+1}\end{bmatrix}
\le
\mathsf A
\begin{bmatrix}X_k\\Y_k\end{bmatrix}
+d,
\qquad
\mathsf A=
\frac{1}{1-ab}
\begin{bmatrix}ab&b\\a&ab\end{bmatrix},
\]
where $d$ is a fixed nonnegative vector. Let $z=\sqrt{ab}=\bar\kappa/\sqrt{\beta\eta_V}$. The largest eigenvalue of $\mathsf A$ is
\[
\rho(\mathsf A)=\frac{ab+\sqrt{ab}}{1-ab}=\frac{z}{1-z}.
\]
Assumption~\ref{ass:algorithmic-margins} implies $\beta\eta_V>4\bar\kappa^2$, hence $z<1/2$ and $\rho(\mathsf A)<1$. Iterating the recursion gives
\[
\begin{bmatrix}X_k\\Y_k\end{bmatrix}
\le \mathsf A^k
\begin{bmatrix}X_0\\Y_0\end{bmatrix}
+\sum_{j=0}^{k-1}\mathsf A^jd.
\]
The matrix series converges. Thus $\sup_kX_k<\infty$ and $\sup_kY_k<\infty$, which is the claimed fourth-moment stability.
\end{proof}

\paragraph{Role.}
This lemma is proved before any Lyapunov decrease. It removes the potential circularity that would arise if bounded multipliers were assumed inside the empirical-to-true perturbation estimate.

\subsection{Multiplier control}

\begin{proof}[Proof of Lemma~\ref{lem:dual-control}]
Let $M_{k+1}=M_{\pi^{k+1}}$. Replacing $\widehat M_{k+1}$ in the value-dual identity by the true matrix gives
\begin{equation}
c+M_{k+1}^\trans\lambda^{k+1}
+\eta_V^{-1}\Delta V_{k+1}=r_{k+1},
\qquad
r_{k+1}=(M_{k+1}-\widehat M_{k+1})^\trans\lambda^{k+1}.
\label{eq:app-perturbed-value-dual}
\end{equation}
Since $\|M_{k+1}-\widehat M_{k+1}\|\le C\eps_k$, conditional Cauchy--Schwarz, Lemma~\ref{lem:model-moments}, and Lemma~\ref{lem:moment-stability} give
\begin{align}
\E\|r_{k+1}\|^2
&\le C\E[\eps_k^2\|\lambda^{k+1}\|^2]\\
&\le C(\E\eps_k^4)^{1/2}(\E\|\lambda^{k+1}\|^4)^{1/2}
\le \frac{C_r}{m_k}.
\label{eq:app-r-second}
\end{align}
Solve \eqref{eq:app-perturbed-value-dual} for $\lambda^{k+1}$ and subtract the corresponding identity at iteration $k$:
\begin{align}
\Delta\lambda_{k+1}
={}&-(M_{k+1}^{-\trans}-M_k^{-\trans})c
-\eta_V^{-1}M_{k+1}^{-\trans}\Delta V_{k+1}
+\eta_V^{-1}M_k^{-\trans}\Delta V_k
\nonumber\\
&+M_{k+1}^{-\trans}r_{k+1}
-M_k^{-\trans}r_k.
\label{eq:app-five-terms}
\end{align}
Lemma~\ref{lem:inverse-stability} gives
$\|(M_{k+1}^{-\trans}-M_k^{-\trans})c\|
\le \bar\kappa^2L_M\|c\|\|\Delta\pi_{k+1}\|$.
Using $\|\sum_{i=1}^5x_i\|^2\le5\sum_{i=1}^5\|x_i\|^2$ in \eqref{eq:app-five-terms},
\begin{align}
\|\Delta\lambda_{k+1}\|^2
\le{}&5\bar\kappa^4L_M^2\|c\|^2\|\Delta\pi_{k+1}\|^2
+\frac{5\bar\kappa^2}{\eta_V^2}
\left(\|\Delta V_{k+1}\|^2+\|\Delta V_k\|^2\right)
\nonumber\\
&+5\bar\kappa^2(\|r_{k+1}\|^2+\|r_k\|^2).
\end{align}
Taking expectations and using \eqref{eq:app-r-second} proves \eqref{eq:dual-control} with the stated $C_\pi$ and $C_V$ and a suitable $C_\eps$.
\end{proof}

\paragraph{Role.}
Lemma~\ref{lem:dual-control} is the replacement for the classical smooth-block multiplier estimate. It is the term that allows the positive primal decrease to dominate the dual ascent.

\subsection{Structure-preserving surrogate closure}

\begin{proof}[Proof of Lemma~\ref{lem:surrogate-closure}]
Fix a sample path for which $\sum_k\eps_k^2<\infty$ and the a priori iterate bounds hold. The empirical and true residuals satisfy
\[
e_k(\pi,V)=\widehat R_k(\pi,V)-R(\pi,V)
=(\widehat M_{\pi,k}-M_\pi)V-(\widehat r_{\pi,k}-r_\pi).
\]
On the bounded iterate set there is $C_e<\infty$ such that
\begin{equation}
\|e_k(\pi,V)\|+\|D_{\pi,V}e_k(\pi,V)\|
\le C_e\eps_k.
\label{eq:app-path-operator}
\end{equation}
We verify the four relations in the statement.

\textbf{Part (a): true primal decrease.}
Let $\Psi_k=\widehat\cL_{\beta,k}-\cL_\beta$. Expanding,
\[
\Psi_k=\langle\lambda,e_k\rangle
+\beta\langle R,e_k\rangle
+\frac\beta2\|e_k\|^2.
\]
Equation \eqref{eq:app-path-operator} and bounded iterates imply
$\|D_{\pi,V}\Psi_k\|\le C\eps_k$ along the two primal line segments. Optimality of the empirical policy subproblem gives
\[
\widehat\cL_{\beta,k}(\pi^k,V^k,\lambda^k)
-\widehat\cL_{\beta,k}(\pi^{k+1},V^k,\lambda^k)
\ge \frac{1}{2\eta_\pi}\|\Delta\pi_{k+1}\|^2.
\]
The perturbation difference is at most $C\eps_k\|\Delta\pi_{k+1}\|$, which by Young's inequality is bounded by $(4\eta_\pi)^{-1}\|\Delta\pi_{k+1}\|^2+C\eps_k^2$. Thus the true policy step decreases by at least $(4\eta_\pi)^{-1}\|\Delta\pi_{k+1}\|^2-C\eps_k^2$. The same argument for the value step gives $(4\eta_V)^{-1}\|\Delta V_{k+1}\|^2-C\eps_k^2$. Adding the two proves the first relation with $a_\pi=1/(4\eta_\pi)$ and $a_V=1/(4\eta_V)$.

\textbf{Part (b): perturbed value-dual identity.}
The exact empirical identity is the value-dual identity. Hence
\[
c+M_{k+1}^\trans\lambda^{k+1}+\eta_V^{-1}\Delta V_{k+1}
=(M_{k+1}-\widehat M_{k+1})^\trans\lambda^{k+1}=:r_{k+1}.
\]
The pathwise iterate bound and \eqref{eq:app-path-operator} give $\|r_{k+1}\|\le C\eps_k$.

\textbf{Part (c): inexact true-residual multiplier update.}
Since $\Delta\lambda_{k+1}=\beta\widehat R_k(\pi^{k+1},V^{k+1})$,
\[
\Delta\lambda_{k+1}=\beta R_{k+1}+d_{k+1},
\qquad
d_{k+1}=\beta e_k(\pi^{k+1},V^{k+1}),
\]
and $\|d_{k+1}\|\le C\eps_k$ by \eqref{eq:app-path-operator}.

\textbf{Part (d): true KKT residual.}
The empirical policy Fermat condition is
\begin{align*}
0\in{}&\partial\psi(\pi^{k+1})
+\widehat J_k(\pi^{k+1},V^k)^\trans
\big[\lambda^k+\beta\widehat R_k(\pi^{k+1},V^k)\big]
+\eta_\pi^{-1}\Delta\pi_{k+1}
.
\end{align*}
The bracket differs from $\lambda^{k+1}$ by
$\beta\widehat M_{k+1}(V^k-V^{k+1})$, and the true and empirical Jacobians differ by $O(\eps_k)$. The tabular Bellman Jacobian is affine in $V$, so replacing $V^k$ by $V^{k+1}$ costs $O(\|\Delta V_{k+1}\|)$. Hence the true policy residual is at most
$C(\|\Delta\pi_{k+1}\|+\|\Delta V_{k+1}\|+\eps_k)$.
The value residual follows from Part (b), and feasibility follows from Part (c). Combining the three components gives the stated true KKT bound.
\end{proof}

\paragraph{Role.}
This lemma is the interface between stochastic RL data and deterministic primal-dual analysis. It packages all sampling effects into one scalar operator error $\eps_k$.

\subsection{Expected Lyapunov descent and lower bound}

\begin{proof}[Proof of Lemma~\ref{lem:lyapunov}]
We prove the descent and lower bound separately.

\textbf{Part (a): expected primal decrease.}
The empirical policy and value subproblems give the same comparison inequalities as above. To justify the moment transfer, define
\[
U_k=1+\|V^k\|+\|\lambda^k\|,\quad
p_k=\max_{s,a}\|\widehat P_k(\cdot\mid s,a)-P(\cdot\mid s,a)\|,\quad
u_k=\max_{s,a}|\widehat r_k(s,a)-r(s,a)|.
\]
Then $U_k$ is $\cF_k$-measurable, $p_k$ is uniformly bounded, and Lemma~\ref{lem:model-moments} gives conditional second and fourth moments of orders $m_k^{-1}$ and $m_k^{-2}$ for both $p_k$ and $u_k$. Expanding $\Psi_k=\widehat\cL_{\beta,k}-\cL_\beta$ on the policy comparison segment, where $V=V^k$ and $\lambda=\lambda^k$, gives
\[
\sup_{\pi\in\Pi}\|D_\pi\Psi_k(\pi,V^k,\lambda^k)\|
\le C\left(p_kU_k^2+u_kU_k+u_k^2\right).
\]
Indeed, the Jacobian error is at most $C(p_k\|V^k\|+u_k)$ and the residual error has the same bound. Boundedness of $p_k$ absorbs its square. Conditioning before the batch therefore yields
\[
\E_k\sup_{\pi\in\Pi}\|D_\pi\Psi_k\|^2
\le \frac{C(1+U_k^4)}{m_k}.
\]
For the value segment, the explicit value solve gives
\[
V^{k+1}=(\beta\widehat M_{k+1}^\trans\widehat M_{k+1}+\eta_V^{-1}I)^{-1}
\left(\beta\widehat M_{k+1}^\trans\widehat r_{\pi^{k+1},k}
-\widehat M_{k+1}^\trans\lambda^k-c+\eta_V^{-1}V^k\right),
\]
so $\|V^{k+1}\|\le C(U_k+u_k)$. On the segment from $V^k$ to $V^{k+1}$,
\[
\|D_V\Psi_k\|\le C\{p_k(U_k+u_k)+u_k\},\qquad
\E_k\sup_{V\in[V^k,V^{k+1}]}\|D_V\Psi_k\|^2
\le C(1+U_k^2)/m_k.
\]
Taking expectations and applying Lemma~\ref{lem:moment-stability} controls both comparison-segment errors by $C/m_k$.
Young's inequality therefore yields
\begin{align}
\E[\cL_\beta(\pi^k,V^k,\lambda^k)-\cL_\beta(\pi^{k+1},V^k,\lambda^k)]
&\ge \frac{1}{4\eta_\pi}\E\|\Delta\pi_{k+1}\|^2-\frac{C}{m_k},
\label{eq:app-policy-dec}\\
\E[\cL_\beta(\pi^{k+1},V^k,\lambda^k)-\cL_\beta(\pi^{k+1},V^{k+1},\lambda^k)]
&\ge \frac{1}{4\eta_V}\E\|\Delta V_{k+1}\|^2-\frac{C}{m_k}.
\label{eq:app-value-dec}
\end{align}
Let $a_\pi=1/(4\eta_\pi)$ and $a_V=1/(4\eta_V)$.

\textbf{Part (b): dual ascent.}
By Lemma~\ref{lem:surrogate-closure}, $\Delta\lambda_{k+1}=\beta R_{k+1}+d_{k+1}$. Therefore
\begin{align}
&\cL_\beta(\pi^{k+1},V^{k+1},\lambda^{k+1})
-\cL_\beta(\pi^{k+1},V^{k+1},\lambda^k)\\
&\quad=\langle\Delta\lambda_{k+1},R_{k+1}\rangle
=\frac1\beta\|\Delta\lambda_{k+1}\|^2
-\frac1\beta\langle\Delta\lambda_{k+1},d_{k+1}\rangle\\
&\quad\le \frac{3}{2\beta}\|\Delta\lambda_{k+1}\|^2
+\frac{1}{2\beta}\|d_{k+1}\|^2.
\label{eq:app-dual-ascent}
\end{align}
The second moment of $d_{k+1}$ is $O(m_k^{-1})$. Substituting Lemma~\ref{lem:dual-control} into \eqref{eq:app-dual-ascent} and combining with \eqref{eq:app-policy-dec}--\eqref{eq:app-value-dec} gives
\begin{align*}
\E[\cL_\beta^k]-\E[\cL_\beta^{k+1}]
\ge{}&\left(a_\pi-\frac{3C_\pi}{2\beta}\right)\E\|\Delta\pi_{k+1}\|^2
+\left(a_V-D_V\right)\E\|\Delta V_{k+1}\|^2
-D_V\E\|\Delta V_k\|^2\\
&-C_E\left(m_k^{-1}+m_{k-1}^{-1}\right),
\end{align*}
where $D_V=3C_V/(2\beta)$. Adding $\tau\|\Delta V_k\|^2$ to the potential gives coefficients
\[
c_\pi=a_\pi-\frac{3C_\pi}{2\beta},
\quad
c_V=a_V-D_V-\tau,
\quad
c_V'=\tau-D_V.
\]
Assumption~\ref{ass:algorithmic-margins} makes all three positive. This proves \eqref{eq:expected-descent}.

\textbf{Part (c): lower bound.}
From the true perturbed value-dual identity,
$\lambda^k=-M_k^{-\trans}(c+\eta_V^{-1}\Delta V_k-r_k)$, while $V^k=M_k^{-1}(r_{\pi^k}+R_k)$. Substituting both into $\cL_\beta^k$ gives
\begin{align*}
\cL_\beta^k
={}&b(\pi^k)
-\eta_V^{-1}\langle\Delta V_k,M_k^{-1}R_k\rangle
+\langle r_k,M_k^{-1}R_k\rangle
+\frac\beta2\|R_k\|^2.
\end{align*}
Two applications of Young's inequality yield
\[
\Phi_k
\ge b_{\min}
+\frac\beta4\|R_k\|^2
+\left(\tau-\frac{2\bar\kappa^2}{\beta\eta_V^2}\right)\|\Delta V_k\|^2
-\frac{2\bar\kappa^2}{\beta}\|r_k\|^2.
\]
The parameter condition makes the coefficient of $\|\Delta V_k\|^2$ positive. Taking expectations and using \eqref{eq:app-r-second} gives
$\E\Phi_k\ge\underline\Phi-C_Lm_{k-1}^{-1}$.
\end{proof}

\paragraph{Role.}
Lemma~\ref{lem:lyapunov} is the summability engine for both the asymptotic and finite-time results.

\subsection{Controlled-Markov convergence}

\begin{proof}[Proof of Theorem~\ref{thm:l1}]
With no additional observation noise, Lemma~\ref{lem:model-moments} gives $\E\eps_k^2\le C/m_k$. Hence $\sum_km_k^{-1}<\infty$ implies $\sum_k\E\eps_k^2<\infty$, and Tonelli gives $\sum_k\eps_k^2<\infty$ almost surely.

Fix a path in this probability-one event. The empirical rewards are pathwise bounded because the true rewards are bounded and $\eps_k\to0$. The same two-dimensional recursion used in Lemma~\ref{lem:moment-stability}, now pathwise, gives bounded $V^k$ and $\lambda^k$. Lemma~\ref{lem:surrogate-closure} and the pathwise version of the Lyapunov calculation then yield
\[
\Phi_k-\Phi_{k+1}
\ge c_\pi\|\Delta\pi_{k+1}\|^2
+c_V\|\Delta V_{k+1}\|^2
+c_V'\|\Delta V_k\|^2
-C(\eps_k^2+\eps_{k-1}^2).
\]
The lower-bound calculation in Lemma~\ref{lem:lyapunov} remains valid pathwise up to a summable $O(\eps_{k-1}^2)$ term. Summing over $k$ shows
\[
\sum_k\|\Delta\pi_{k+1}\|^2<\infty,
\qquad
\sum_k\|\Delta V_{k+1}\|^2<\infty.
\]
The pathwise version of Lemma~\ref{lem:dual-control} and $\sum_k\eps_k^2<\infty$ give $\sum_k\|\Delta\lambda_{k+1}\|^2<\infty$. Thus all increments converge to zero. Lemma~\ref{lem:surrogate-closure}, Part (c), gives
$R_{k+1}=\beta^{-1}(\Delta\lambda_{k+1}-d_{k+1})\to0$.
Part (d) then gives KKT-residual convergence.

The policy space is compact and the value-dual iterates are bounded, so accumulation points exist. Fix a convergent subsequence $z^{k_j}\to z^\star$. Vanishing increments imply $z^{k_j-1}\to z^\star$. The descent inequality bounds $\Phi_k$ above, so $\phi(\pi^k)$ is bounded above on this path. Lower semicontinuity gives $\phi(\pi^\star)<\infty$.

Write $H_j(\pi)$ for the smooth part of $\widehat\cL_{\beta,k_j-1}(\pi,V^{k_j-1},\lambda^{k_j-1})$. Model-error convergence and bounded iterates imply that $H_j$ converges uniformly on $\Pi$ to a continuous function. Exact policy minimization, compared with $\pi^\star$, gives
\[
\phi(\pi^{k_j})+H_j(\pi^{k_j})
+\frac{\|\pi^{k_j}-\pi^{k_j-1}\|^2}{2\eta_\pi}
\le \phi(\pi^\star)+H_j(\pi^\star)
+\frac{\|\pi^\star-\pi^{k_j-1}\|^2}{2\eta_\pi}.
\]
Taking the upper limit and using lower semicontinuity proves
$\psi(\pi^{k_j})\to\psi(\pi^\star)$. The policy Fermat condition supplies
$w_j\in\partial\psi(\pi^{k_j})$ with
$w_j\to-J_\pi(\pi^\star,V^\star)^\trans\lambda^\star$.
The limiting subdifferential is closed along this function-value-convergent sequence. Hence
$0\in\partial\psi(\pi^\star)+J_\pi(\pi^\star,V^\star)^\trans\lambda^\star$.
The other two KKT conditions follow by continuity. This proves the accumulation-point claim without a subdifferential sum rule.
\end{proof}

\paragraph{Role.}
Theorem~\ref{thm:l1} is the first complete Markovian nonconvex ADMM convergence theorem in the paper. It validates that Bellman-resolvent stability survives controlled Markov sampling.

\subsection{Companion residual}

\begin{proof}[Proof of Lemma~\ref{lem:companion}]
The policy-subproblem Fermat condition is
\begin{align}
0\in{}&\partial\psi(\pi^{k+1})
+\widehat J_k(\pi^{k+1},V^k)^\trans
\left[\lambda^k+\beta\widehat R_k(\pi^{k+1},V^k)\right]
+\eta_\pi^{-1}\Delta\pi_{k+1}
.
\label{eq:app-policy-fermat}
\end{align}
The bracket is $\widetilde\lambda^{k+1}$. Hence the true policy residual at $\widetilde z^{k+1}$ is bounded by
\[
e_{\pi,k+1}
\le \eta_\pi^{-1}\|\Delta\pi_{k+1}\|
+\|J_\pi(\pi^{k+1},V^k)-\widehat J_k(\pi^{k+1},V^k)\|
\|\widetilde\lambda^{k+1}\|.
\]
Use the conditional variables $U_k,p_k,u_k$ from the preceding descent proof. The empirical companion multiplier satisfies $\|\widetilde\lambda^{k+1}\|\le C(U_k+u_k)$. Consequently
\[
\|(J_\pi-\widehat J_k)^\trans\widetilde\lambda^{k+1}\|
\le C(p_kU_k^2+u_kU_k+u_k^2).
\]
Conditioning on $\cF_k$ and then using the fourth-moment stability yields
\begin{equation}
\E e_{\pi,k+1}^2
\le C\E\|\Delta\pi_{k+1}\|^2+\frac{C}{m_k}.
\label{eq:app-comp-policy}
\end{equation}
The companion and final multipliers satisfy
\[
\lambda^{k+1}-\widetilde\lambda^{k+1}
=\beta\widehat M_{k+1}\Delta V_{k+1}.
\]
Substitution into the empirical value-dual identity gives
\[
c+\widehat M_{k+1}^\trans\widetilde\lambda^{k+1}
=-\left(\eta_V^{-1}I+\beta\widehat M_{k+1}^\trans\widehat M_{k+1}\right)\Delta V_{k+1}.
\]
Replacing $\widehat M_{k+1}$ by $M_{k+1}$ and applying the fourth-moment bounds yields
\begin{equation}
\E\|c+M_{k+1}^\trans\widetilde\lambda^{k+1}\|^2
\le C\E\|\Delta V_{k+1}\|^2+\frac{C}{m_k}.
\label{eq:app-comp-value}
\end{equation}
Finally,
\[
\widehat R_k(\pi^{k+1},V^k)
=\beta^{-1}\Delta\lambda_{k+1}-\widehat M_{k+1}\Delta V_{k+1}.
\]
Thus
\[
\|R(\pi^{k+1},V^k)\|
\le \beta^{-1}\|\Delta\lambda_{k+1}\|
+\|\widehat M_{k+1}\|\|\Delta V_{k+1}\|
+\|e_k(\pi^{k+1},V^k)\|.
\]
Squaring, taking expectations, and applying Lemma~\ref{lem:dual-control} controls feasibility by the three increment energies plus $m_k^{-1}+m_{k-1}^{-1}$. Adding this to \eqref{eq:app-comp-policy} and \eqref{eq:app-comp-value} proves \eqref{eq:companion-bound}.
\end{proof}

\paragraph{Role.}
The companion iterate removes endpoint products between random multipliers and the latest value increment. This is the technical step that makes a deterministic-constant fourth-moment complexity bound possible.

\subsection{Finite-time complexity}

\begin{proof}[Proof of Theorem~\ref{thm:finite-time}]
Sum \eqref{eq:expected-descent} from $k=1$ to $T-1$ and let $c_\star=\min\{c_\pi,c_V,c_V'\}$. The first update has uniformly bounded fourth moments by the explicit value solve and the empirical-model bounds. Its policy comparison with the deterministic $\pi^0\in\operatorname{dom}\psi$ also gives $\sup_{m_0}\E\Phi_1<\infty$, and the companion Fermat condition gives $\sup_{m_0}\E\widetilde G_1<\infty$. Absorb these first-step quantities into the initial constant. The lower bound in Lemma~\ref{lem:lyapunov} absorbs the terminal potential and gives constants $A_0,A_1$ such that
\[
\sum_{k=0}^{T-1}\E\!
\left[
\|\Delta\pi_{k+1}\|^2
+\|\Delta V_{k+1}\|^2
+\|\Delta V_k\|^2
\right]
\le A_0+A_1\sum_{k=0}^{T-1}m_k^{-1}.
\]
Apply Lemma~\ref{lem:companion}, sum over $k$, and divide by $T$:
\[
\frac1T\sum_{k=0}^{T-1}\E\widetilde G_{k+1}
\le \frac{A}{T}+\frac{B}{T}\sum_{k=0}^{T-1}m_k^{-1}.
\]
Since $K$ is uniform and independent, the left side equals $\E\widetilde G_{K+1}$, proving Theorem~\ref{thm:finite-time}.

For a fixed total budget $N=\sum_{k<T}m_k$, convexity of $x\mapsto1/x$ gives
\[
\sum_{k=0}^{T-1}m_k^{-1}\ge \frac{T^2}{N},
\]
with equality for equal batches $m_k=N/T$. Substitution gives the equal-batch bound in Theorem~\ref{thm:finite-time}. Taking $T\asymp\sqrt N$ yields $\E\widetilde G_{K+1}=O(N^{-1/2})$. To make the squared residual at most $\eps$, it is sufficient to choose $T=O(\eps^{-1})$ and $N=O(\eps^{-2})$.
\end{proof}

\paragraph{Role.}
Theorem~\ref{thm:finite-time} is the finite-time optimization guarantee that the policy-performance analysis later converts into a return-gap rate.

\begin{proof}[Proof of Corollary~\ref{cor:complexity}]
If $(1/T)\sum_{k<T}m_k^{-1}=O(T^{-1})$, Theorem~\ref{thm:finite-time} gives $\E\widetilde G_{K+1}=O(T^{-1})$. For fixed $N$ the equal-batch conclusion follows from Jensen's inequality as in the theorem proof. Setting $A/T\le\eps/2$ and $BT/N\le\eps/2$ gives $T=O(\eps^{-1})$ and $N=O(\eps^{-2})$.
\end{proof}

\paragraph{Role.}
The corollary translates the theorem into the conventional iteration and sample-complexity language used in the abstract and introduction.

\subsection{General stochastic observations}

\begin{proof}[Proof of Theorem~\ref{thm:as-convergence}]
Lemma~\ref{lem:general-model-error} and the summability conditions in Theorem~\ref{thm:as-convergence} imply $\sum_k\eps_k^2<\infty$ almost surely. Fix a path in this event. Since $\eps_k\to0$ and the true reward is bounded, the empirical rewards are pathwise bounded. The pathwise two-dimensional recursion in the proof of Lemma~\ref{lem:moment-stability} therefore gives bounded $V^k$ and $\lambda^k$.

Lemma~\ref{lem:surrogate-closure} and the pathwise Lyapunov calculation give
\[
\Phi_k-\Phi_{k+1}
\ge c_\pi\|\Delta\pi_{k+1}\|^2
+c_V\|\Delta V_{k+1}\|^2
+c_V'\|\Delta V_k\|^2
-C(\eps_k^2+\eps_{k-1}^2).
\]
The lower bound differs from a fixed constant by at most $C\eps_{k-1}^2$. Summation yields square summability of the policy and value increments. The pathwise five-term dual estimate then gives square summability of $\Delta\lambda_{k+1}$. Hence all increments vanish. The inexact dual update gives feasibility, and the true KKT bound in Lemma~\ref{lem:surrogate-closure} gives stationarity. Accumulation-point KKT optimality follows from the exact policy comparison and function-value convergence argument in the proof of Theorem~\ref{thm:l1}.
\end{proof}

\paragraph{Role.}
Theorem~\ref{thm:as-convergence} extends the controlled-Markov result to centered observation noise and decaying bias without imposing pathwise bounded noise.

\subsection{Exact policy subproblems}
For fixed $(V^k,\lambda^k)$, the empirical Bellman residual is affine in $\pi$. Its squared norm plus the proximal term is a strongly convex quadratic. When $\phi\equiv0$, the policy step is therefore a simplex-constrained quadratic program, separable across states. More generally, consider the nonsmooth, nonconvex regularizer
$\phi(\pi)=a\sum_s\|\pi_s\|_0$ with $a>0$.
For each state, enumerate its nonempty action supports, minimize the quadratic on each corresponding simplex face, and select the candidate with smallest original objective. The support of a global minimizer occurs in this finite enumeration, which proves exactness. The enumeration uses at most $2^{|\cA|}-1$ face problems per state. This example supplies an exact nonsmooth policy oracle for small action spaces. The stated iteration and sample bounds count outer iterations and transitions. They do not count internal policy-subproblem operations.

\section{From Bellman KKT Residuals to Global Policy Performance}
\label{app:performance}

Throughout this section $\phi\equiv0$ and $c=-\mu$. Let $V^\pi=M_\pi^{-1}r_\pi$, $F_\mu(\pi)=-\mu^\trans V^\pi$, and $\lambda_\mu^\pi=M_\pi^{-\trans}\mu$. For a point $(\pi,V,\lambda)$ write
\[
e_\pi=\dist(0,J_\pi(\pi,V)^\trans\lambda+N_\Pi(\pi)),
\quad
e_V=\|M_\pi^\trans\lambda-\mu\|,
\quad
e_R=\|M_\pi V-r_\pi\|,
\]
so $G=e_\pi^2+e_V^2+e_R^2$.

\begin{lemma}[Bellman adjoint representation of the true policy gradient]
\label{lem:adjoint-representation}
The reduced policy objective satisfies
\[
\nabla F_\mu(\pi)=J_\pi(\pi,V^\pi)^\trans\lambda_\mu^\pi.
\]
\end{lemma}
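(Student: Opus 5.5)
The plan is to differentiate the reduced objective by viewing $V^\pi$ as an implicitly defined function of $\pi$ through the Bellman constraint, and then apply the usual adjoint (Lagrangian) identity. Extend $\pi\mapsto M_\pi$ and $\pi\mapsto r_\pi$ to an open neighborhood of $\Pi$ in $\R^{|\cS||\cA|}$ by the same linear formulas $P_\pi(s,s')=\sum_a\pi(a\mid s)P(s'\mid s,a)$ and $r_\pi(s)=\sum_a\pi(a\mid s)r(s,a)$. Both maps are affine, so $R(\pi,V)=M_\pi V-r_\pi$ is smooth and $D_VR(\pi,V)=M_\pi$. I will fix the neighborhood so that $\gamma\|P_\pi\|_\infty<1$ remains true, which holds for perturbations that are small in the $\infty$-norm. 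Then $M_\pi$ is invertible on this neighborhood, $V^\pi=M_\pi^{-1}r_\pi$ is smooth there, and $\nabla F_\mu$ is well defined as the Euclidean gradient of that extension. This is the sense in which the normal cone $N_\Pi$ is used in $R_{\mathrm{pol}}$.

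The core step is to differentiate the identity $R(\pi,V^\pi)=0$ with respect to $\pi$. The chain rule gives
\[
J_\pi(\pi,V^\pi)+M_\pi D_\pi V^\pi=0,
\qquad\text{so}\qquad
D_\pi V^\pi=-M_\pi^{-1}J_\pi(\pi,V^\pi).
\]
Alternatively, one can differentiate $M_\pi^{-1}$ directly using the resolvent identity of Lemma~\ref{lem:inverse-stability}, $D(M^{-1})[H]=-M^{-1}HM^{-1}$. This yields the same formula, because $J_\pi(\pi,V)[H]=D_\pi M_\pi[H]\,V-D_\pi r_\pi[H]$.

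Next I will write $F_\mu(\pi)=-\mu^\trans V^\pi$ and compute
\[
\nabla F_\mu(\pi)=-(D_\pi V^\pi)^\trans\mu
=J_\pi(\pi,V^\pi)^\trans M_\pi^{-\trans}\mu
=J_\pi(\pi,V^\pi)^\trans\lambda_\mu^\pi .
\]
This is exactly the claim. As a sanity check, $\lambda_\mu^\pi$ solves the value-stationarity equation $c+M_\pi^\trans\lambda=0$ with $c=-\mu$. This agrees with the dual representation of Section~\ref{sec:mechanism} in the case $\Delta V=0$.

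I expect the only delicate point to be the meaning of the gradient on the simplex product, and specifically the fact that $M_\pi$ stays invertible off $\Pi$. I plan to handle this with the affine extension and a Neumann-series argument on a small $\infty$-norm neighborhood. I also note that any other smooth extension changes the gradient only by elements of $N_\Pi$'s lineality space, namely the per-state constant directions. That change is harmless inside $\dist(0,\nabla F_\mu+N_\Pi)$. Everything else in the argument is a direct application of the chain rule.
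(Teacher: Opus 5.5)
Your proposal is correct and follows essentially the same route as the paper: differentiate the Bellman identity $R(\pi,V^\pi)=0$ to get $D_\pi V^\pi=-M_\pi^{-1}J_\pi(\pi,V^\pi)$, then transpose against $\mu$ to identify $\lambda_\mu^\pi=M_\pi^{-\trans}\mu$ as the adjoint. You also add something the paper leaves implicit when it speaks of ``feasible directions'': an affine extension of $\pi\mapsto M_\pi$ on which $M_\pi$ stays invertible, together with the observation that a different extension changes the gradient only by per-state constants, which $N_\Pi$ absorbs.
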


\begin{proof}
Differentiate the Bellman equation $R(\pi,V^\pi)=0$ in a feasible policy direction $d\pi$:
\[
J_\pi(\pi,V^\pi)d\pi+M_\pi dV^\pi=0,
\qquad
dV^\pi=-M_\pi^{-1}J_\pi(\pi,V^\pi)d\pi.
\]
Since $F_\mu(\pi)=-\mu^\trans V^\pi$,
\[
dF_\mu
=\mu^\trans M_\pi^{-1}J_\pi(\pi,V^\pi)d\pi
=(\lambda_\mu^\pi)^\trans J_\pi(\pi,V^\pi)d\pi.
\]
The coefficient of $d\pi$ is the claimed gradient.
\end{proof}

\paragraph{Role.}
This identity is the first bridge from the constrained Bellman KKT system to stationarity of the reduced policy objective.

\begin{lemma}[Tabular Bellman policy Jacobian]
\label{lem:tabular-jacobian}
For direct tabular policies and $Q_V(s,a)=r(s,a)+\gamma P(\cdot\mid s,a)^\trans V$,
\[
\frac{\partial R_s(\pi,V)}{\partial\pi(a\mid s)}=-Q_V(s,a),
\qquad
[J_\pi(\pi,V)^\trans\lambda]_{s,a}=-\lambda_sQ_V(s,a).
\]
In particular $[\nabla F_\mu(\pi)]_{s,a}=-\lambda_{\mu,s}^\pi Q^\pi(s,a)$.
\end{lemma}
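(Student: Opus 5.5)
The proof is a direct computation from the definitions $R(\pi,V)=M_\pi V-r_\pi$, $M_\pi=I-\gamma P_\pi$, together with the adjoint representation in Lemma~\ref{lem:adjoint-representation}. Nothing beyond coordinatewise differentiation is needed, because both $P_\pi$ and $r_\pi$ are linear in the direct policy coordinates.

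\textbf{Componentwise form of the residual.} I will write the $s$-th component of the residual as
\[
R_s(\pi,V)=V_s-\gamma\sum_{a'}\pi(a'\mid s)\sum_{s'}P(s'\mid s,a')V_{s'}-\sum_{a'}\pi(a'\mid s)r(s,a').
\]
In this form $R_s$ depends on the policy only through the row $\pi_s=\pi(\cdot\mid s)$, and it is affine in that row.

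\textbf{Differentiation.} Differentiating with respect to $\pi(a\mid \bar s)$ gives zero when $\bar s\neq s$. When $\bar s=s$ it gives
\[
-\gamma P(\cdot\mid s,a)^\trans V-r(s,a)=-Q_V(s,a).
\]
Hence $J_\pi(\pi,V)$ is block-sparse: its row $s$ is supported only on the coordinates $(s,\cdot)$. The transpose action then follows immediately:
\[
[J_\pi^\trans\lambda]_{s,a}=\sum_{s''}\lambda_{s''}\,\partial R_{s''}/\partial\pi(a\mid s)=-\lambda_sQ_V(s,a).
\]

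\textbf{Gradient formula.} For the ``in particular'' claim, I will apply Lemma~\ref{lem:adjoint-representation} with $V=V^\pi$ and $\lambda=\lambda_\mu^\pi$. At $V=V^\pi$ we have $Q_{V^\pi}(s,a)=r(s,a)+\gamma P(\cdot\mid s,a)^\trans V^\pi=Q^\pi(s,a)$, which yields $[\nabla F_\mu(\pi)]_{s,a}=-\lambda^\pi_{\mu,s}Q^\pi(s,a)$.

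\textbf{Main point to handle carefully.} The one subtlety is how $\nabla F_\mu$ is interpreted. The function $F_\mu$ is defined on $\Pi$, but $V^\pi=M_\pi^{-1}r_\pi$ extends smoothly to an open neighborhood of $\Pi$ in $\R^{|\cS||\cA|}$. On that neighborhood $P_\pi$ is still defined by the same linear formula, and $\|\gamma P_\pi\|_\infty<1$ continues to hold near $\Pi$, so $M_\pi$ remains invertible. I will take the gradient to be the Euclidean gradient of this extension. This is the convention under which Lemma~\ref{lem:adjoint-representation} and $R_{\mathrm{pol}}$ are stated, and with it all the partial derivatives above are legitimate.
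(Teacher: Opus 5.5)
Your proposal is correct and follows the paper's proof: write $R_s(\pi,V)=V_s-\sum_a\pi(a\mid s)\,Q_V(s,a)$, differentiate coordinatewise, contract with $\lambda$ using that row $s$ of $J_\pi$ involves only the coordinates $(s,\cdot)$, and then apply Lemma~\ref{lem:adjoint-representation} at $V=V^\pi$. Your remark that $\nabla F_\mu$ is the Euclidean gradient of the natural extension to a neighborhood of $\Pi$, where $\gamma\|P_\pi\|_\infty<1$ still holds, is a valid clarification that the paper leaves implicit.
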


\begin{proof}
For each state,
\[
R_s(\pi,V)=V_s-\sum_a\pi(a\mid s)
\left[r(s,a)+\gamma P(\cdot\mid s,a)^\trans V\right].
\]
Differentiating with respect to $\pi(a\mid s)$ gives the first identity. Multiplication by $\lambda$ gives the second. The last statement follows from Lemma~\ref{lem:adjoint-representation} with $V=V^\pi$.
\end{proof}

\paragraph{Role.}
The explicit Jacobian isolates the only two discrepancies between the ADMM stationarity vector and the true policy gradient: $V-V^\pi$ and $\lambda-\lambda_\mu^\pi$.

\begin{lemma}[Bellman feasibility controls the value error]
\label{lem:value-error}
For every $(\pi,V)$,
\[
\|V-V^\pi\|\le \bar\kappa e_R.
\]
\end{lemma}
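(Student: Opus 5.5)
The plan is to use the exact Bellman identity for $V^\pi$ and then invert the resolvent. By definition $V^\pi=M_\pi^{-1}r_\pi$, so $M_\pi V^\pi-r_\pi=0$. Subtracting this from $R(\pi,V)=M_\pi V-r_\pi$ gives the linear identity
\[
M_\pi(V-V^\pi)=R(\pi,V).
\]
This holds for any $V$ because $R$ is affine in $V$ with linear part $M_\pi$.

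Next I apply $M_\pi^{-1}$ to both sides, which exists since $0<\gamma<1$ and $P_\pi$ is row stochastic. This yields $V-V^\pi=M_\pi^{-1}R(\pi,V)$. Taking Euclidean norms gives
\[
\|V-V^\pi\|\le\|M_\pi^{-1}\|\,\|R(\pi,V)\|=\|M_\pi^{-1}\|\,e_R.
\]

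The only point that needs care is bounding the spectral norm $\|M_\pi^{-1}\|$ by $\bar\kappa$ uniformly in $\pi\in\Pi$. I use the bound recorded in Section~\ref{sec:problem}. The Neumann series $M_\pi^{-1}=\sum_{t\ge0}\gamma^tP_\pi^t$ gives $\|M_\pi^{-1}\|_\infty\le 1/(1-\gamma)$, since each $P_\pi^t$ is row stochastic. Finite-dimensional norm equivalence then converts this to a spectral bound; for example, $\|A\|_2\le\sqrt{|\cS|}\,\|A\|_\infty$ gives $\|M_\pi^{-1}\|_2\le\sqrt{|\cS|}/(1-\gamma)$. The constant $\bar\kappa$ was defined precisely as this common spectral bound over all true and empirical resolvents, so it is independent of $\pi$.

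Substituting this bound completes the argument. No step is a genuine obstacle; the lemma is a direct consequence of uniform resolvent invertibility.
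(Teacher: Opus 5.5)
Your proof is correct and follows the paper's argument: subtracting the Bellman equation for $V^\pi$ gives $M_\pi(V-V^\pi)=R(\pi,V)$, and applying $M_\pi^{-1}$ with the uniform resolvent bound $\bar\kappa$ finishes the proof. Your explicit estimate $\|M_\pi^{-1}\|_2\le\sqrt{|\cS|}/(1-\gamma)$ is the same one the paper records in Lemma~\ref{lem:l6-euclidean-resolvent}.
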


\begin{proof}
Since $M_\pi V-r_\pi=R(\pi,V)$ and $M_\pi V^\pi-r_\pi=0$,
\[
M_\pi(V-V^\pi)=R(\pi,V).
\]
Multiplication by $M_\pi^{-1}$ and the resolvent bound give the claim.
\end{proof}

\paragraph{Role.}
This lemma converts primal feasibility error into the value mismatch needed in Lemma~\ref{lem:adjoint-gradient-error}.

\begin{lemma}[Value stationarity controls the adjoint error]
\label{lem:adjoint-error}
For every $(\pi,\lambda)$,
\[
\|\lambda-\lambda_\mu^\pi\|\le \bar\kappa e_V.
\]
\end{lemma}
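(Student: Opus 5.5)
The plan mirrors the proof of Lemma~\ref{lem:value-error}, with the transpose operator $M_\pi^\trans$ in place of $M_\pi$. By definition $\lambda_\mu^\pi=M_\pi^{-\trans}\mu$, so $M_\pi^\trans\lambda_\mu^\pi=\mu$. Subtracting this from $M_\pi^\trans\lambda$ gives the exact linear identity
\[
M_\pi^\trans(\lambda-\lambda_\mu^\pi)=M_\pi^\trans\lambda-\mu .
\]
Since $c=-\mu$ in this section, the right-hand side is precisely the value-stationarity vector $c+M_\pi^\trans\lambda$ appearing in the squared KKT residual~\eqref{eq:kkt-residual}. Its norm is $e_V$.

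Next I invert $M_\pi^\trans$. Because $M_\pi=I-\gamma P_\pi$ is invertible, so is its transpose, with $(M_\pi^\trans)^{-1}=M_\pi^{-\trans}$. This yields
\[
\lambda-\lambda_\mu^\pi=M_\pi^{-\trans}\bigl(M_\pi^\trans\lambda-\mu\bigr),
\qquad
\|\lambda-\lambda_\mu^\pi\|\le\|M_\pi^{-\trans}\|\,e_V .
\]

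The only point needing care is the norm bound on the transposed resolvent. In the spectral norm we have $\|M_\pi^{-\trans}\|_2=\|M_\pi^{-1}\|_2$. The constant $\bar\kappa$ was introduced as a common spectral bound for $M_\pi^{-1}$, obtained from $\|M_\pi^{-1}\|_\infty\le 1/(1-\gamma)$ through norm equivalence. That bound therefore transfers to the transpose verbatim. Note that an $\infty$-norm bound alone would not transfer, since transposition swaps it with the $1$-norm. If one prefers an explicit constant, the $1$-norm of $M_\pi^{-\trans}$ equals the $\infty$-norm of $M_\pi^{-1}$, which is at most $1/(1-\gamma)$. Combining this with the $\infty$-norm bound through $\|A\|_2\le\sqrt{\|A\|_1\|A\|_\infty}$ again gives a bound by $\bar\kappa$.

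This establishes $\|\lambda-\lambda_\mu^\pi\|\le\bar\kappa e_V$ for every $(\pi,\lambda)$. No step is a genuine obstacle; the argument is linear algebra plus the transpose-invariance of the spectral norm.
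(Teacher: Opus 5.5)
Your proof is correct and follows the paper's argument exactly: subtract $M_\pi^\trans\lambda_\mu^\pi=\mu$, apply $M_\pi^{-\trans}$, and use that the spectral bound $\bar\kappa$ on $M_\pi^{-1}$ also bounds its transpose. Only your optional explicit-constant aside is loose, and the main argument does not depend on it: $\|A\|_2\le\sqrt{\|A\|_1\|A\|_\infty}$ also needs the column-sum bound $\|M_\pi^{-1}\|_1\le|\cS|/(1-\gamma)$, which yields $\sqrt{|\cS|}/(1-\gamma)$.
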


\begin{proof}
By definition $M_\pi^\trans\lambda_\mu^\pi=\mu$. Hence
\[
M_\pi^\trans(\lambda-\lambda_\mu^\pi)=M_\pi^\trans\lambda-\mu.
\]
Multiplication by $M_\pi^{-\trans}$ proves the result.
\end{proof}

\paragraph{Role.}
This is the dual counterpart of Lemma~\ref{lem:value-error} and controls the occupancy-weight error in the policy gradient.

\begin{lemma}[Adjoint-gradient perturbation]
\label{lem:adjoint-gradient-error}
Assume $|r(s,a)|\le R_{\max}$ and let $Q_{\max}=R_{\max}/(1-\gamma)$. There are finite constants $C_V,C_R,C_{VR}$ such that
\[
\|J_\pi(\pi,V)^\trans\lambda-\nabla F_\mu(\pi)\|
\le C_Ve_V+C_Re_R+C_{VR}e_Ve_R.
\]
One may choose constants proportional to $\sqrt{|\cA|}Q_{\max}\bar\kappa$, $\gamma\sqrt{|\cA|}\bar\kappa^2\|\mu\|$, and $\gamma\sqrt{|\cA|}\bar\kappa^2$, respectively.
\end{lemma}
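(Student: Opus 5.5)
We start from the explicit tabular Jacobian of Lemma~\ref{lem:tabular-jacobian}. It gives, coordinatewise,
\[
[J_\pi(\pi,V)^\trans\lambda-\nabla F_\mu(\pi)]_{s,a}
=-\lambda_sQ_V(s,a)+\lambda^\pi_{\mu,s}Q^\pi(s,a),
\]
where $Q^\pi=Q_{V^\pi}$ by Lemma~\ref{lem:adjoint-representation}. We add and subtract $\lambda^\pi_{\mu,s}Q_V(s,a)$ and then split $Q_V=Q^\pi+(Q_V-Q^\pi)$. This gives the three-term decomposition
\[
-(\lambda_s-\lambda^\pi_{\mu,s})Q^\pi(s,a)
-\lambda^\pi_{\mu,s}\,(Q_V-Q^\pi)(s,a)
-(\lambda_s-\lambda^\pi_{\mu,s})(Q_V-Q^\pi)(s,a).
\]
Each term is matched to one constant. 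The pure dual error gives $C_V$, the pure value error gives $C_R$, and the cross term gives $C_{VR}$.

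\textbf{Bounding the ingredients.}
\begin{itemize}
\item For the true action values, bounded rewards give $|Q^\pi(s,a)|\le Q_{\max}=R_{\max}/(1-\gamma)$, since $Q^\pi$ is a discounted sum of bounded rewards.
\item For the value perturbation, $(Q_V-Q^\pi)(s,a)=\gamma P(\cdot\mid s,a)^\trans(V-V^\pi)$. Because $P(\cdot\mid s,a)$ is a probability row, $|(Q_V-Q^\pi)(s,a)|\le\gamma\|V-V^\pi\|_\infty\le\gamma\|V-V^\pi\|$. Lemma~\ref{lem:value-error} then gives at most $\gamma\bar\kappa e_R$.
\item For the dual error, Lemma~\ref{lem:adjoint-error} gives $\|\lambda-\lambda^\pi_\mu\|\le\bar\kappa e_V$.
\item For the exact adjoint, $\|\lambda^\pi_\mu\|=\|M_\pi^{-\trans}\mu\|\le\bar\kappa\|\mu\|$, using the common spectral bound $\bar\kappa$ for the transpose as well.
\end{itemize}

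\textbf{Assembling the Euclidean norm.} Each term has the form $x_s\,y_{s,a}$ with $|y_{s,a}|\le Y$ uniformly in $(s,a)$. Summing over the $|\cA|$ actions per state gives $\|(x_sy_{s,a})_{s,a}\|\le\sqrt{|\cA|}\,Y\,\|x\|$. Applying this to each term yields
\[
\sqrt{|\cA|}Q_{\max}\bar\kappa e_V
+\sqrt{|\cA|}\,\bar\kappa\|\mu\|\,\gamma\bar\kappa e_R
+\sqrt{|\cA|}\,\bar\kappa e_V\,\gamma\bar\kappa e_R .
\]
This matches the stated proportionalities: $C_V\propto\sqrt{|\cA|}Q_{\max}\bar\kappa$, $C_R\propto\gamma\sqrt{|\cA|}\bar\kappa^2\|\mu\|$, and $C_{VR}\propto\gamma\sqrt{|\cA|}\bar\kappa^2$. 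The triangle inequality over the three terms finishes the argument.

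\textbf{Main obstacle.} The delicate step is the norm bookkeeping. The factor $\sqrt{|\cA|}$ should come from replicating a state vector across actions. The sup-norm bound on $Q_V-Q^\pi$ should pass to the Euclidean norm via $\|\cdot\|_\infty\le\|\cdot\|_2$, so that no extra $\sqrt{|\cS|}$ appears. One must also check that the transpose resolvent obeys the same bound $\bar\kappa$; it does, since the spectral norm is invariant under transposition.
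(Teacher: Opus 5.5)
Your proposal is correct and follows the paper's proof: it uses the same three-term add-and-subtract decomposition (the paper writes it at the operator level with $J_\pi(\pi,V^\pi)$ and $J_\pi(\pi,V)-J_\pi(\pi,V^\pi)$, you write it coordinatewise), and the same bounds $|Q^\pi|\le Q_{\max}$, $|Q_V-Q^\pi|\le\gamma\|V-V^\pi\|$, $\|\lambda_\mu^\pi\|\le\bar\kappa\|\mu\|$, together with Lemmas~\ref{lem:value-error} and~\ref{lem:adjoint-error}. Your explicit accounting of the $\sqrt{|\cA|}$ factor and of the step $\|\cdot\|_\infty\le\|\cdot\|_2$ fills in details the paper leaves implicit.
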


\begin{proof}
Using Lemma~\ref{lem:adjoint-representation}, add and subtract the two mixed terms:
\begin{align*}
J_\pi(\pi,V)^\trans\lambda-\nabla F_\mu(\pi)
={}&J_\pi(\pi,V^\pi)^\trans(\lambda-\lambda_\mu^\pi)\\
&+[J_\pi(\pi,V)-J_\pi(\pi,V^\pi)]^\trans\lambda_\mu^\pi\\
&+[J_\pi(\pi,V)-J_\pi(\pi,V^\pi)]^\trans(\lambda-\lambda_\mu^\pi).
\end{align*}
For the first term, Lemma~\ref{lem:tabular-jacobian} and $|Q^\pi(s,a)|\le Q_{\max}$ give an operator bound proportional to $\sqrt{|\cA|}Q_{\max}\|\lambda-\lambda_\mu^\pi\|$. Lemma~\ref{lem:adjoint-error} yields the $e_V$ term.

For the second and third terms, the tabular Jacobian satisfies
\[
Q_V(s,a)-Q^\pi(s,a)=\gamma P(\cdot\mid s,a)^\trans(V-V^\pi),
\]
so its change is bounded by $\gamma\sqrt{|\cA|}\|V-V^\pi\|$. Also $\|\lambda_\mu^\pi\|\le\bar\kappa\|\mu\|$. Lemmas~\ref{lem:value-error} and~\ref{lem:adjoint-error} then give the $e_R$ and $e_Ve_R$ terms.
\end{proof}

\paragraph{Role.}
This is the final bridge needed to prove that a small constrained KKT residual implies a small stationarity residual for the true reduced policy objective.

\begin{proof}[Proof of Lemma~\ref{lem:kkt-policy}]
Distance to a translated closed set is 1-Lipschitz. Therefore
\begin{align*}
R_{\rm pol}(\pi)
&=\dist(0,\nabla F_\mu(\pi)+N_\Pi(\pi))\\
&\le \dist(0,J_\pi(\pi,V)^\trans\lambda+N_\Pi(\pi))
+\|J_\pi(\pi,V)^\trans\lambda-\nabla F_\mu(\pi)\|\\
&\le e_\pi+C_Ve_V+C_Re_R+C_{VR}e_Ve_R.
\end{align*}
If $G\le1$, then $e_\pi,e_V,e_R\le\sqrt G$ and $e_Ve_R\le(e_V^2+e_R^2)/2\le G/2\le\sqrt G/2$. Hence
\[
R_{\rm pol}(\pi)
\le \left(1+C_V+C_R+\frac12C_{VR}\right)\sqrt G
=:C_{\rm stat}\sqrt G.
\]
\end{proof}

\paragraph{Role.}
Lemma~\ref{lem:kkt-policy} composes the optimization theory with policy geometry. Every global performance theorem below uses it.

\begin{lemma}[Simplex normal-cone geometry]
\label{lem:simplex-geometry}
Let $p\in\Delta_m$, $q\in\R^m$, and $\lambda>0$. If
\[
r=\dist(0,-\lambda q+N_{\Delta_m}(p)),
\]
then
\[
\max_iq_i-p^\trans q\le \frac{\sqrt2}{\lambda}r.
\]
\end{lemma}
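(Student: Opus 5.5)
The plan is to use the explicit form of the normal cone of the simplex and the fact that $r$ is attained by some element of $-\lambda q+N_{\Delta_m}(p)$. Recall that
\[
N_{\Delta_m}(p)=\{\nu\mathbf 1-w:\ \nu\in\R,\ w\ge0,\ w_i=0 \text{ for } p_i>0\}.
\]
Hence there exist $\nu\in\R$ and $w\ge0$ with $w_ip_i=0$ for all $i$ such that $g:=-\lambda q+\nu\mathbf 1-w$ satisfies $\|g\|=r$. Since $N_{\Delta_m}(p)$ is closed and convex, the distance is attained, so no limiting argument is needed.

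\textbf{Step 1 (coordinates).} Componentwise, $\lambda q_i=\nu-w_i-g_i$. Let $i^\star\in\arg\max_iq_i$. Because $w_{i^\star}\ge0$,
\[
\lambda q_{i^\star}\le \nu-g_{i^\star}.
\]

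\textbf{Step 2 (weighted average).} Use complementarity $p_iw_i=0$ and $\sum_ip_i=1$:
\[
\lambda p^\trans q=\nu-\sum_ip_iw_i-p^\trans g=\nu-p^\trans g.
\]

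\textbf{Step 3 (combine).} Subtracting the identity in Step 2 from the inequality in Step 1 gives
\[
\lambda\bigl(\max_iq_i-p^\trans q\bigr)\le p^\trans g-g_{i^\star}=\langle p-e_{i^\star},g\rangle\le \|p-e_{i^\star}\|\,r.
\]

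\textbf{Step 4 (diameter bound).} For any $p\in\Delta_m$ and vertex $e_j$,
\[
\|p-e_j\|^2=\|p\|^2-2p_j+1\le 1+1=2,
\]
because $\|p\|^2\le\sum_ip_i=1$ and $p_j\ge0$. This gives $\|p-e_{i^\star}\|\le\sqrt2$. Dividing by $\lambda>0$ then yields the claimed bound $\max_iq_i-p^\trans q\le\sqrt2\,r/\lambda$.

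The main point requiring care is Step 2. There the multiplier $\nu$ must cancel exactly; this uses the complementarity structure of the normal cone, since $w$ is supported only off $\supp p$. Everything else is routine.
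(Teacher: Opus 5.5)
Your proof is correct and is essentially the paper's argument. The paper likewise takes $n^\star\in N_{\Delta_m}(p)$ attaining the distance, sets $h=-\lambda q+n^\star$, and bounds $\lambda(\max_i q_i-p^\trans q)\le -\langle h,p^\star-p\rangle\le \|h\|\,\|p^\star-p\|\le\sqrt2\,r$ using the simplex diameter. The only difference is that the paper applies the normal-cone inequality $\langle n^\star,p^\star-p\rangle\le 0$ directly, whereas your Steps 1--3 check this inequality by hand through the explicit representation $n^\star=\nu\mathbf 1-w$ with complementarity $p_iw_i=0$.
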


\begin{proof}
Choose $n^\star\in N_{\Delta_m}(p)$ attaining the distance and set $h=-\lambda q+n^\star$, so $\|h\|=r$. Let $p^\star\in\arg\max_{u\in\Delta_m}u^\trans q$. By the normal-cone inequality, $\langle n^\star,p^\star-p\rangle\le0$. Therefore
\begin{align*}
\lambda(p^{\star\trans}q-p^\trans q)
&=-\langle-\lambda q,p^\star-p\rangle\\
&=-\langle h-n^\star,p^\star-p\rangle\\
&\le -\langle h,p^\star-p\rangle
\le \|h\|\|p^\star-p\|.
\end{align*}
The Euclidean diameter of the probability simplex is $\sqrt2$, giving the claim.
\end{proof}

\paragraph{Role.}
This lemma converts statewise policy stationarity into a bound on the maximum advantage and is the key geometric input to Theorem~\ref{thm:performance}.

\begin{proof}[Proof of Theorem~\ref{thm:performance}]
Let $r_s(\pi)$ be the statewise component of $R_{\rm pol}(\pi)$. By Lemma~\ref{lem:tabular-jacobian}, the statewise gradient equals $-\lambda_{\mu,s}^\pi Q^\pi(s,\cdot)$, where
\[
\lambda_{\mu,s}^\pi=\frac{d_\mu^\pi(s)}{1-\gamma}.
\]
Applying Lemma~\ref{lem:simplex-geometry} statewise gives
\[
\max_aA^\pi(s,a)
\le \sqrt2(1-\gamma)\frac{r_s(\pi)}{d_\mu^\pi(s)}
\]
whenever $d_\mu^\pi(s)>0$. Under the support condition this is sufficient on every state weighted by $d_\nu^{\pi^\star}$. The performance-difference identity yields
\begin{align*}
J_\nu(\pi^\star)-J_\nu(\pi)
&\le \frac1{1-\gamma}\sum_s d_\nu^{\pi^\star}(s)\max_aA^\pi(s,a)\\
&\le \sqrt2\sum_s
\frac{d_\nu^{\pi^\star}(s)}{d_\mu^\pi(s)}r_s(\pi)\\
&\le \sqrt2
\left\|\frac{d_\nu^{\pi^\star}}{d_\mu^\pi}\right\|_2
\left(\sum_sr_s(\pi)^2\right)^{1/2}.
\end{align*}
The last factor is $R_{\rm pol}(\pi)$, proving Theorem~\ref{thm:performance}. Equation the KKT bound in Theorem~\ref{thm:performance} follows from Lemma~\ref{lem:kkt-policy}. If $G=0$, the right side vanishes, proving covered KKT global optimality.
\end{proof}

\paragraph{Role.}
This theorem is the principal RL-specific strengthening of the ADMM stationarity result.

\begin{proof}[Proof of Corollary~\ref{cor:full-support-global}]
If $\mu(s)\ge\mu_{\min}$, then the unnormalized discounted occupancy satisfies
$\lambda_{\mu,s}^\pi=\sum_{t\ge0}\gamma^t\Prob_{\mu,\pi}(S_t=s)\ge\mu(s)\ge\mu_{\min}$. Lemma~\ref{lem:simplex-geometry} therefore gives $\max_aA^\pi(s,a)\le\sqrt2r_s(\pi)/\mu_{\min}$. Substitution into the performance-difference identity under the same evaluation distribution gives
\[
J_\mu^\star-J_\mu(\pi)
\le \frac{\sqrt2}{(1-\gamma)\mu_{\min}}R_{\rm pol}(\pi).
\]
Setting $R_{\rm pol}(\pi)=0$ proves global optimality.
\end{proof}

\paragraph{Role.}
The corollary gives a simple assumption under which the coverage coefficient in Theorem~\ref{thm:performance} is automatically finite.

\begin{proof}[Proof of Corollary~\ref{cor:kkt-global-finite}]
The exact-KKT statement is the zero-residual case of Theorem~\ref{thm:performance}. For the finite-time bound, let $D_{\max}=2R_{\max}/(1-\gamma)$. The return gap is bounded above by $D_{\max}$. On $G_K\le1$, Theorem~\ref{thm:performance} gives the bound $C_0\sqrt{G_K}$ under the uniform occupancy-mismatch assumption. On $G_K>1$, the gap is at most $D_{\max}\sqrt{G_K}$. Hence the pointwise bound holds for every $G_K\ge0$ with $C=\max\{C_0,D_{\max}\}$. Taking expectations and applying Jensen gives
\[
\E[J_\nu^\star-J_\nu(\pi_K)]\le C\sqrt{\E G_K}.
\]
Substitution of the companion-residual bound proves the claim, with the policy component of the companion output.
\end{proof}

\paragraph{Role.}
This corollary directly composes the finite-time KKT bound with the policy-performance conversion.

\begin{proof}[Proof of Proposition~\ref{prop:coverage-necessary}]
Consider states $s_0,s_1,s_T$ and training distribution $\mu=\delta_{s_0}$. At $s_0$, action $a$ terminates with reward $1$, while action $b$ moves to $s_1$ with reward $0$. At $s_1$, action $c$ terminates with reward $0$, while action $d$ terminates with reward $10$. Take $\gamma=0.9$ and the policy choosing $a$ at $s_0$ and $c$ at $s_1$.

Then $V^\pi(s_0)=1$ and $V^\pi(s_1)=0$. At $s_0$, $Q^\pi(s_0,a)=1$ and $Q^\pi(s_0,b)=0$, so the current action is strictly greedy. The policy never visits $s_1$, hence $d_\mu^\pi(s_1)=0$ and the policy gradient has zero weight there. Thus the policy is first-order stationary. However the coordinated policy choosing $b$ at $s_0$ and $d$ at $s_1$ has return $0.9\times10=9>1$. The failure is exactly the missing occupancy support.
\end{proof}

\paragraph{Role.}
The proposition shows that the coverage assumption in Theorem~\ref{thm:performance} is necessary for first-order information to identify all globally relevant states.

\begin{proof}[Proof of Proposition~\ref{prop:pl-fails}]
Consider the three-stage chain $s_0\to s_1\to s_2\to s_T$ with actions stop and continue. All rewards are zero except that continuing at $s_2$ gives reward $-1$. Let $\pi_\eps$ continue with probability $\eps$ at every state. The optimal value is zero, while a loss occurs only if all three continue actions are chosen, so
\[
J^\star-J(\pi_\eps)=\gamma^2\eps^3.
\]
The unnormalized occupancies are $1,\gamma\eps,\gamma^2\eps^2$. The corresponding action-value gaps are $\gamma^2\eps^2,\gamma\eps,1$. For an interior two-action simplex, the statewise normal-cone residual is the weighted action gap divided by $\sqrt2$, so all three state residuals equal $\gamma^2\eps^2/\sqrt2$. Therefore
\[
R_{\rm pol}(\pi_\eps)^2
=3\frac{\gamma^4\eps^4}{2}.
\]
Consequently
\[
\frac{J^\star-J(\pi_\eps)}{R_{\rm pol}(\pi_\eps)^2}
=\frac{2}{3\gamma^2\eps}\to\infty.
\]
No uniform constant can satisfy a quadratic PL-type inequality for all direct tabular policies.
\end{proof}

\paragraph{Role.}
This negative result justifies why the generic policy-performance theorem has an $O(\sqrt G)$ conversion and why Theorem~\ref{thm:quadratic-performance} requires additional curvature.

\begin{proof}[Proof of Theorem~\ref{thm:quadratic-performance}]
Let $\bar r_s(\pi)$ be the unweighted statewise stationarity residual. Because the true statewise policy gradient is scaled by $\lambda_{\mu,s}^\pi=d_\mu^\pi(s)/(1-\gamma)$,
\[
r_s(\pi)=\frac{d_\mu^\pi(s)}{1-\gamma}\bar r_s(\pi),
\qquad
\bar r_s(\pi)=\frac{1-\gamma}{d_\mu^\pi(s)}r_s(\pi).
\]
Assumption~\ref{ass:quadratic-improvement} therefore gives
\[
\delta_s^\pi
\le \frac{(1-\gamma)^2}{2\mu_B[d_\mu^\pi(s)]^2}r_s(\pi)^2.
\]
The Bellman performance bound is
\[
J_\nu(\pi^\star)-J_\nu(\pi)
\le \frac{1}{1-\gamma}
\sum_s d_\nu^{\pi^\star}(s)\delta_s^\pi.
\]
Substituting the previous inequality gives
\begin{align*}
J_\nu(\pi^\star)-J_\nu(\pi)
&\le \frac{1-\gamma}{2\mu_B}
\sum_s
\frac{d_\nu^{\pi^\star}(s)}{[d_\mu^\pi(s)]^2}r_s(\pi)^2\\
&\le \frac{1-\gamma}{2\mu_B}\cC_{\rm PL}
\sum_sr_s(\pi)^2,
\end{align*}
which proves \eqref{eq:quadratic-performance-bound}. Lemma~\ref{lem:kkt-policy} then gives the $O(G)$ form.
\end{proof}

\paragraph{Role.}
The theorem identifies the additional one-step curvature needed for policy performance to inherit the squared-KKT rate without the square-root loss.

\begin{proof}[Proof of Theorem~\ref{thm:l5-summary}]
The global truncation argument in Corollary~\ref{cor:kkt-global-finite} gives
$\E[J^\star-J(\pi_K)]\le C\sqrt{\E G_K}$.
Under the quadratic improvement and stronger occupancy-ratio assumptions, Theorem~\ref{thm:quadratic-performance} gives $J^\star-J(\pi_K)\le C_0G_K$ on $G_K\le1$. On $G_K>1$, bounded returns give $J^\star-J(\pi_K)\le D_{\max}G_K$. Thus $\E[J^\star-J(\pi_K)]\le\max\{C_0,D_{\max}\}\E G_K$. The pointwise bounds also imply policy-performance convergence whenever $G_K\to0$, and global optimality at covered exact KKT points.
\end{proof}

\paragraph{Role.}
This theorem summarizes the second layer of value supplied by RL structure: it interprets the optimization limit point rather than merely proving that the ADMM iterates become stationary.

\paragraph{A sufficient action-gap condition.}
Suppose that on the policy set of interest every suboptimal action has gap
$\max_bQ^\pi(s,b)-Q^\pi(s,a)\ge\Delta>0$.
Fix a state and write $p=\pi_s$, $q=Q^\pi(s,\cdot)$, and
$r=\dist(0,-q+N_\Delta(p))$.
If $p$ is supported on maximizing actions, $\delta_s^\pi=0$. Otherwise choose a supported suboptimal action $a$ and a maximizing action $b$. For every $n\in N_\Delta(p)$, the feasible direction $e_b-e_a$ gives $n_b-n_a\le0$. Thus
\[
\langle-q+n,e_b-e_a\rangle\le-\Delta,
\qquad r\ge\Delta/\sqrt2.
\]
Lemma~\ref{lem:simplex-geometry} gives $\delta_s^\pi\le\sqrt2 r\le2r^2/\Delta$. Therefore Assumption~\ref{ass:quadratic-improvement} holds with $\mu_B=\Delta/4$. For example, when transitions are action independent and rewards have a positive gap between optimal and suboptimal actions at each nontrivial state, the same $\Delta$ works for every policy.

\section{Smooth Nonlinear Policy Parameterization}
\label{app:nonlinear-policy}
\label{app:extensions}

Let $\Theta\subset\R^p$ be closed and convex and let $\pi_\theta$ be a smooth policy on a neighborhood of $\Theta$. Set $\psi_\Theta=\phi+\delta_\Theta$ and assume it is proper, lower semicontinuous, and bounded below. Policy stationarity below uses $\partial\psi_\Theta$. Initialize deterministically with $\theta^0\in\operatorname{dom}\psi_\Theta$ and finite $V^0,\lambda^0$. Define
\[
P_\theta(s'\mid s)=\sum_a\pi_\theta(a\mid s)P(s'\mid s,a),
\quad
r_\theta(s)=\sum_a\pi_\theta(a\mid s)r(s,a),
\quad
M_\theta=I-\gamma P_\theta,
\]
and $R(\theta,V)=M_\theta V-r_\theta$. Consider
\[
\min_{\theta,V}\ \phi(\theta)+c^\trans V
\quad\text{s.t.}\quad R(\theta,V)=0.
\]
Assume $\|M_\theta-M_{\theta'}\|\le L_M\|\theta-\theta'\|$ and $b_A(\theta)=\phi(\theta)+c^\trans M_\theta^{-1}r_\theta$ is bounded below.

\begin{lemma}[Euclidean Bellman-resolvent bound]
\label{lem:l6-euclidean-resolvent}
For every row-stochastic $P_\theta$,
\[
\|(I-\gamma P_\theta)^{-1}\|_2
\le \frac{\sqrt{|\cS|}}{1-\gamma}.
\]
\end{lemma}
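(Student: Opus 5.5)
We prove the bound in the $\infty$-norm first and then transfer it to the Euclidean norm by finite-dimensional norm equivalence.

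\textbf{Step 1: invertibility and the $\infty$-norm bound.} Since $P_\theta$ is row stochastic with nonnegative entries, its induced norm satisfies $\|P_\theta\|_\infty=\max_s\sum_{s'}P_\theta(s'\mid s)=1$. Because $0<\gamma<1$, we have $\|\gamma P_\theta\|_\infty=\gamma<1$. The Neumann series $\sum_{t\ge0}\gamma^tP_\theta^t$ therefore converges absolutely. It equals $(I-\gamma P_\theta)^{-1}$, which gives
\[
\|(I-\gamma P_\theta)^{-1}\|_\infty\le\sum_{t\ge0}\gamma^t\|P_\theta\|_\infty^t=\frac{1}{1-\gamma}.
\]
This is the same row-stochasticity bound already recorded in Section~\ref{sec:problem}.

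\textbf{Step 2: transfer to the spectral norm.} Let $n=|\cS|$. For any $x\in\R^n$ we have $\|x\|_\infty\le\|x\|_2$ and $\|y\|_2\le\sqrt n\,\|y\|_\infty$. Setting $y=M_\theta^{-1}x$ gives
\[
\|M_\theta^{-1}x\|_2\le\sqrt n\,\|M_\theta^{-1}x\|_\infty\le\frac{\sqrt n}{1-\gamma}\|x\|_\infty\le\frac{\sqrt n}{1-\gamma}\|x\|_2 .
\]
Taking the supremum over $\|x\|_2=1$ yields the claim.

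An alternative route uses $\|A\|_2\le\sqrt{\|A\|_1\|A\|_\infty}$. The column-sum bound $\|M_\theta^{-1}\|_1$ can be as large as $n/(1-\gamma)$, because $P_\theta^t$ is only row stochastic. This again gives $\sqrt n/(1-\gamma)$, so either route works. The only point needing care is to apply the norm conversion in the correct direction: the constant $\sqrt n$ arises from passing from $\|\cdot\|_\infty$ to $\|\cdot\|_2$ on the output side. I expect no genuine obstacle. The bound is uniform in $\theta$ because the argument uses only row stochasticity and never the specific policy, so it gives the constant $\kappa_M=\sqrt{|\cS|}/(1-\gamma)$ used in the later nonlinear-policy lemmas.
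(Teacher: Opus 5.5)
Your proof is correct and follows the paper's argument exactly: the Neumann series gives $\|M_\theta^{-1}\|_\infty\le 1/(1-\gamma)$, and the chain $\|M_\theta^{-1}x\|_2\le\sqrt{|\cS|}\,\|M_\theta^{-1}x\|_\infty\le\sqrt{|\cS|}(1-\gamma)^{-1}\|x\|_\infty\le\sqrt{|\cS|}(1-\gamma)^{-1}\|x\|_2$ transfers this to the spectral norm. Your alternative route via $\|A\|_2\le\sqrt{\|A\|_1\|A\|_\infty}$ is also valid: the column-sum bound $\|M_\theta^{-1}\|_1\le|\cS|/(1-\gamma)$ holds because the nonnegative entries have total sum $|\cS|/(1-\gamma)$.
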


\begin{proof}
The Neumann series gives $M_\theta^{-1}=\sum_{t\ge0}\gamma^tP_\theta^t$. Since each $P_\theta^t$ is row stochastic,
\[
\|M_\theta^{-1}\|_\infty\le\sum_{t\ge0}\gamma^t=\frac1{1-\gamma}.
\]
For every vector $x$, $\|M_\theta^{-1}x\|_2\le\sqrt{|\cS|}\|M_\theta^{-1}x\|_\infty\le \sqrt{|\cS|}(1-\gamma)^{-1}\|x\|_\infty\le \sqrt{|\cS|}(1-\gamma)^{-1}\|x\|_2$.
\end{proof}

\paragraph{Role.}
This lemma supplies the uniform inverse constant $\kappa_M$ used in every nonlinear-policy stability bound.

\begin{lemma}[Nonlinear inverse-map stability]
\label{lem:l6-nonlinear-inverse}
If $\kappa_M=\sup_\theta\|M_\theta^{-1}\|<\infty$, then
\[
\|M_\theta^{-1}-M_{\theta'}^{-1}\|
\le \kappa_M^2L_M\|\theta-\theta'\|.
\]
\end{lemma}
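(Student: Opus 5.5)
This is a direct specialization of Lemma~\ref{lem:inverse-stability}. I would start from the resolvent identity for the two invertible matrices $M_\theta$ and $M_{\theta'}$:
\[
M_\theta^{-1}-M_{\theta'}^{-1}=M_\theta^{-1}(M_{\theta'}-M_\theta)M_{\theta'}^{-1}.
\]
This identity can be verified by multiplying out the right-hand side.

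Both matrices are invertible because each $P_\theta$ is row stochastic and $0<\gamma<1$. Lemma~\ref{lem:l6-euclidean-resolvent} makes the uniform bound concrete: $\kappa_M\le\sqrt{|\cS|}/(1-\gamma)$. In any case, the hypothesis $\kappa_M=\sup_\theta\|M_\theta^{-1}\|<\infty$ gives $\|M_\theta^{-1}\|\le\kappa_M$ and $\|M_{\theta'}^{-1}\|\le\kappa_M$ simultaneously.

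Next I take operator norms and use submultiplicativity:
\[
\|M_\theta^{-1}-M_{\theta'}^{-1}\|\le\kappa_M^2\|M_\theta-M_{\theta'}\|.
\]
Substituting the standing Lipschitz assumption $\|M_\theta-M_{\theta'}\|\le L_M\|\theta-\theta'\|$ then yields the claim. Equivalently, I could invoke Lemma~\ref{lem:inverse-stability} directly with $\kappa=\kappa_M$, $M=M_{\theta'}$ and $M'=M_\theta$.

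There is no real obstacle. The only point requiring care is consistency of norms. The Lipschitz constant $L_M$ and $\kappa_M$ must be measured in the same induced (spectral) norm, and Lemma~\ref{lem:l6-euclidean-resolvent} supplies the spectral-norm bound for $\kappa_M$, so no extra dimension-dependent factor appears.
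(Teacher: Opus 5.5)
Your proposal is correct and follows the paper's argument: the paper's proof applies Lemma~\ref{lem:inverse-stability} with $M=M_{\theta'}$ and $M'=M_\theta$ and then substitutes the Lipschitz bound on $M_\theta$. Your explicit resolvent-identity computation is exactly the content of that lemma.
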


\begin{proof}
Apply Lemma~\ref{lem:inverse-stability} with $M=M_{\theta'}$ and $M'=M_\theta$ and then use the assumed Lipschitz bound for $M_\theta$.
\end{proof}

\paragraph{Role.}
This is the nonlinear-policy version of the resolvent sensitivity used to control multiplier increments.

The exact proximal updates are
\[
\theta^{k+1}\in\arg\min_{\theta\in\Theta}
\left\{\cL_\beta(\theta,V^k,\lambda^k)+\frac1{2\eta_\theta}\|\theta-\theta^k\|^2\right\},
\]
\[
V^{k+1}=\arg\min_V
\left\{\cL_\beta(\theta^{k+1},V,\lambda^k)+\frac1{2\eta_V}\|V-V^k\|^2\right\},
\qquad
\lambda^{k+1}=\lambda^k+\beta R(\theta^{k+1},V^{k+1}).
\]

\begin{lemma}[Exact policy-step decrease]
\label{lem:l6-exact-policy-descent}
The exact proximal policy update satisfies
\[
\cL_\beta(\theta^k,V^k,\lambda^k)
-\cL_\beta(\theta^{k+1},V^k,\lambda^k)
\ge \frac1{2\eta_\theta}\|\Delta\theta_{k+1}\|^2.
\]
\end{lemma}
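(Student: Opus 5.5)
The plan is to compare the minimizer with the feasible competitor $\theta=\theta^k$. The update $\theta^{k+1}$ is a global minimizer over $\Theta$ of the proximal objective
\[
\theta\mapsto \cL_\beta(\theta,V^k,\lambda^k)+\frac1{2\eta_\theta}\|\theta-\theta^k\|^2,
\]
and $\theta^k\in\Theta$ is an admissible point. Evaluating the objective there makes the proximal term vanish. Minimality therefore gives
\[
\cL_\beta(\theta^{k+1},V^k,\lambda^k)+\frac1{2\eta_\theta}\|\Delta\theta_{k+1}\|^2\le \cL_\beta(\theta^k,V^k,\lambda^k).
\]
Rearranging yields the claim. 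No smoothness of $\phi$, convexity of $\theta\mapsto\cL_\beta$, or properties of $M_\theta$ are needed, which is why the bound holds for the nonconvex, nonsmooth $\phi$ allowed here.

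The one point requiring care, and the only real obstacle, is that the subtraction must be between finite numbers. I would argue by induction that $\theta^k\in\operatorname{dom}\psi_\Theta$ for all $k$.
\begin{itemize}
\item For $k=0$, this holds by the deterministic initialization $\theta^0\in\operatorname{dom}\psi_\Theta$.
\item If it holds at $k$, then $\cL_\beta(\theta^k,V^k,\lambda^k)<\infty$, because the terms $c^\trans V+\langle\lambda,R\rangle+\frac\beta2\|R\|^2$ are finite for finite $(V^k,\lambda^k)$.
\item The displayed inequality then forces $\cL_\beta(\theta^{k+1},V^k,\lambda^k)<\infty$, so $\theta^{k+1}\in\operatorname{dom}\psi_\Theta$.
\end{itemize}

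The value and dual updates keep $V^{k+1},\lambda^{k+1}$ finite: the value step is a strongly convex quadratic with an explicit solution, and the dual step is an explicit formula. The induction therefore propagates to every $k$.

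Existence of the minimizer is implicit in the update being well defined. If needed, it follows as in Assumption~\ref{ass:algorithmic-margins}: when $\Theta$ is compact it comes from lower semicontinuity, and otherwise from coercivity of the proximal term together with the lower-bounded $\psi_\Theta$.
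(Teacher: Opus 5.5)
Your proof is correct and matches the paper's argument: compare the proximal minimizer with the feasible point $\theta^k$, where the proximal term vanishes, and rearrange. Your induction showing $\theta^k\in\operatorname{dom}\psi_\Theta$, so that the subtraction is between finite numbers, is useful bookkeeping that the paper leaves implicit.
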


\begin{proof}
Use $\theta^k$ as a feasible comparison point in the definition of $\theta^{k+1}$. The proximal term vanishes at $\theta^k$, while at $\theta^{k+1}$ it contributes $(2\eta_\theta)^{-1}\|\Delta\theta_{k+1}\|^2$.
\end{proof}

\paragraph{Role.}
This is the primal decrease that will dominate the policy-dependent part of the dual ascent.

\begin{lemma}[Nonlinear value-dual identity]
\label{lem:l6-value-dual}
For every $k$,
\[
\lambda^{k+1}
=-M_{k+1}^{-\trans}
\left(c+\eta_V^{-1}\Delta V_{k+1}\right),
\qquad M_{k+1}=M_{\theta^{k+1}}.
\]
\end{lemma}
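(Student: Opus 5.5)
The identity follows by combining the first-order optimality condition of the exact value subproblem with the dual update, exactly as in the tabular derivation of Section~\ref{sec:mechanism}. The only difference is that the operator is now the true $M_{k+1}=M_{\theta^{k+1}}$ rather than an empirical surrogate.

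First, the value subproblem
\[
V\mapsto \cL_\beta(\theta^{k+1},V,\lambda^k)+\frac1{2\eta_V}\|V-V^k\|^2
\]
is a strongly convex quadratic in $V$. Its Hessian is $\beta M_{k+1}^\trans M_{k+1}+\eta_V^{-1}I\succ0$, so its unique minimizer $V^{k+1}$ is characterized by the vanishing gradient. Since $R(\theta^{k+1},V)=M_{k+1}V-r_{\theta^{k+1}}$ is affine in $V$ with $D_VR=M_{k+1}$, the stationarity condition reads
\[
c+M_{k+1}^\trans\lambda^k+\beta M_{k+1}^\trans R(\theta^{k+1},V^{k+1})+\eta_V^{-1}\Delta V_{k+1}=0 .
\]

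Second, the dual update gives $\lambda^{k+1}=\lambda^k+\beta R(\theta^{k+1},V^{k+1})$. Hence the two middle terms combine into $M_{k+1}^\trans\lambda^{k+1}$, yielding
\[
c+M_{k+1}^\trans\lambda^{k+1}+\eta_V^{-1}\Delta V_{k+1}=0 .
\]

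Third, $M_{k+1}$ is invertible, because $P_{\theta^{k+1}}$ is row stochastic and $\gamma<1$, with the inverse bounded by Lemma~\ref{lem:l6-euclidean-resolvent}. Therefore $M_{k+1}^\trans$ is invertible as well, and solving for $\lambda^{k+1}$ gives the claimed identity.

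The statement asserts this for every $k$, and no step is a real obstacle. The point that needs care is that the identity holds for all $k\ge0$ with no restriction: it uses only $V^k$, which is finite by initialization, and not any earlier identity. This is unlike the increment bounds, which subtract consecutive identities and so require $k\ge1$.
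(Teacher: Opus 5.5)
Your proposal is correct and follows the paper's proof essentially verbatim: you write down the value-subproblem first-order condition, substitute the dual update $\lambda^{k+1}=\lambda^k+\beta R_{k+1}$, and invert $M_{k+1}^\trans$ using Lemma~\ref{lem:l6-euclidean-resolvent}. You also make explicit two points the paper leaves implicit, and both are accurate: strong convexity of the value subproblem guarantees that the stationarity condition characterizes $V^{k+1}$, and the identity holds for every $k\ge0$ without any restriction.
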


\begin{proof}
The value-subproblem first-order condition is
\[
0=c+M_{k+1}^\trans\lambda^k
+\beta M_{k+1}^\trans R_{k+1}
+\eta_V^{-1}\Delta V_{k+1}.
\]
The dual update gives $\lambda^{k+1}=\lambda^k+\beta R_{k+1}$. Substitution gives
$M_{k+1}^\trans\lambda^{k+1}=-(c+\eta_V^{-1}\Delta V_{k+1})$, and Lemma~\ref{lem:l6-euclidean-resolvent} guarantees invertibility.
\end{proof}

\paragraph{Role.}
This identity is the exact nonlinear analogue of the tabular Bellman multiplier representation.

\begin{lemma}[Nonlinear multiplier increment]
\label{lem:l6-dual-increment}
There are constants
\[
A_A=3\kappa_M^4L_M^2\|c\|^2,
\qquad
B_A=\frac{3\kappa_M^2}{\eta_V^2},
\]
such that
\[
\|\Delta\lambda_{k+1}\|^2
\le A_A\|\Delta\theta_{k+1}\|^2
+B_A\|\Delta V_{k+1}\|^2
+B_A\|\Delta V_k\|^2.
\]
\end{lemma}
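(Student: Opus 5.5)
We follow the same three-term structure as the proof of Lemma~\ref{lem:dual-control}, now without the stochastic perturbation terms $r_k$. The plan is to start from the exact nonlinear value-dual identity of Lemma~\ref{lem:l6-value-dual}, valid at every iteration:
\[
\lambda^{k+1}=-M_{k+1}^{-\trans}\bigl(c+\eta_V^{-1}\Delta V_{k+1}\bigr),
\qquad
\lambda^{k}=-M_{k}^{-\trans}\bigl(c+\eta_V^{-1}\Delta V_{k}\bigr).
\]
The second identity is used for $k\ge1$, in keeping with the convention that subtracted value-dual estimates start at $k\ge1$. Subtracting the two gives the exact three-term decomposition
\[
\Delta\lambda_{k+1}
=-(M_{k+1}^{-\trans}-M_k^{-\trans})c
-\eta_V^{-1}M_{k+1}^{-\trans}\Delta V_{k+1}
+\eta_V^{-1}M_k^{-\trans}\Delta V_k .
\]

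Next we bound each term separately in the spectral norm. Since $\|A^{-\trans}-B^{-\trans}\|=\|A^{-1}-B^{-1}\|$, Lemma~\ref{lem:l6-nonlinear-inverse} gives a first term of at most $\kappa_M^2L_M\|c\|\,\|\Delta\theta_{k+1}\|$. For the remaining two terms, $\|M^{-\trans}\|=\|M^{-1}\|\le\kappa_M$, with $\kappa_M$ finite by Lemma~\ref{lem:l6-euclidean-resolvent}. They are therefore bounded by $\kappa_M\eta_V^{-1}\|\Delta V_{k+1}\|$ and $\kappa_M\eta_V^{-1}\|\Delta V_k\|$.

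Finally we apply $\|x_1+x_2+x_3\|^2\le3(\|x_1\|^2+\|x_2\|^2+\|x_3\|^2)$. Squaring the three bounds then yields exactly $A_A=3\kappa_M^4L_M^2\|c\|^2$ and $B_A=3\kappa_M^2/\eta_V^2$.

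There is no real obstacle, because the deterministic identity is exact and no moment or perturbation terms appear. The only point needing care is the transpose handling in the inverse-stability step, together with the $k\ge1$ requirement (or the convention $\Delta V_0=0$ at $k=0$).
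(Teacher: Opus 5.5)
Your proof is correct and follows the paper's own proof: you subtract the identity of Lemma~\ref{lem:l6-value-dual} at consecutive iterations, bound the resolvent-difference term via Lemma~\ref{lem:l6-nonlinear-inverse} and the two value terms by $\kappa_M\eta_V^{-1}$, and finish with the three-term squared-norm inequality, which gives the same constants $A_A$ and $B_A$. One correction to your closing aside: the convention $\Delta V_0=0$ does not by itself cover $k=0$, because $\lambda^0$ is an arbitrary initialization and satisfies $\lambda^0=-M_0^{-\trans}c$ only if it was initialized that way, so the estimate should be stated for $k\ge1$, as the paper's appendix convention does.
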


\begin{proof}
Subtract Lemma~\ref{lem:l6-value-dual} at consecutive iterations:
\[
\Delta\lambda_{k+1}
=-(M_{k+1}^{-\trans}-M_k^{-\trans})c
-\eta_V^{-1}M_{k+1}^{-\trans}\Delta V_{k+1}
+\eta_V^{-1}M_k^{-\trans}\Delta V_k.
\]
Lemma~\ref{lem:l6-nonlinear-inverse} bounds the first term by $\kappa_M^2L_M\|c\|\|\Delta\theta_{k+1}\|$. Bound the other two by $\kappa_M\eta_V^{-1}$ times their value increments and apply $\|x+y+z\|^2\le3(\|x\|^2+\|y\|^2+\|z\|^2)$.
\end{proof}

\paragraph{Role.}
This lemma is the deterministic nonlinear-policy multiplier-control result used in Theorems~\ref{thm:l6-a1} and~\ref{thm:l6-a2}.

\begin{lemma}[Nonlinear Lyapunov lower bound]
\label{lem:l6-lyap-lower}
Let
\[
\Phi_k^A=\cL_\beta(\theta^k,V^k,\lambda^k)+\tau\|\Delta V_k\|^2.
\]
Then
\[
\Phi_k^A
\ge b_A^{\inf}
+\frac\beta4\|R_k\|^2
+\left(\tau-\frac{\kappa_M^2}{\beta\eta_V^2}\right)\|\Delta V_k\|^2.
\]
\end{lemma}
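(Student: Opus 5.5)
The plan is to substitute the two exact representations of $V^k$ and $\lambda^k$ into $\cL_\beta$, as in Part (c) of the proof of Lemma~\ref{lem:lyapunov}, and then absorb the cross terms with Young's inequality. In the deterministic nonlinear setting the residual $r_k$ vanishes, which simplifies matters. For $k\ge1$, Lemma~\ref{lem:l6-value-dual} gives
\[
\lambda^k=-M_k^{-\trans}\left(c+\eta_V^{-1}\Delta V_k\right),
\]
and the definition of $R_k=R(\theta^k,V^k)$ gives $V^k=M_k^{-1}(r_{\theta^k}+R_k)$. For $k=0$ we have $\Delta V_0=0$; there I would either treat the initial point separately or note that the bound is only claimed for $k\ge1$, in line with the convention in Appendix~\ref{app:map}.

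The computation then runs as follows. First write
\[
c^\trans V^k=c^\trans M_k^{-1}r_{\theta^k}+c^\trans M_k^{-1}R_k .
\]
Next,
\[
\langle\lambda^k,R_k\rangle=-\langle c,M_k^{-1}R_k\rangle-\eta_V^{-1}\langle\Delta V_k,M_k^{-1}R_k\rangle .
\]
The $c$-terms cancel, leaving
\[
\cL_\beta(\theta^k,V^k,\lambda^k)=b_A(\theta^k)-\eta_V^{-1}\langle\Delta V_k,M_k^{-1}R_k\rangle+\tfrac\beta2\|R_k\|^2 .
\]
This uses $\phi(\theta^k)+c^\trans M_k^{-1}r_{\theta^k}=b_A(\theta^k)\ge b_A^{\inf}$.

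It remains to bound the cross term. Using $\|M_k^{-1}\|\le\kappa_M$ (Lemma~\ref{lem:l6-euclidean-resolvent}) and Young's inequality,
\[
\eta_V^{-1}\left|\langle\Delta V_k,M_k^{-1}R_k\rangle\right|\le\frac\beta4\|R_k\|^2+\frac{\kappa_M^2}{\beta\eta_V^2}\|\Delta V_k\|^2 .
\]
This follows from $xy\le\frac{\beta}{4}y^2+\frac{1}{\beta}x^2$ with $x=\eta_V^{-1}\kappa_M\|\Delta V_k\|$ and $y=\|R_k\|$. Adding $\tau\|\Delta V_k\|^2$ then yields exactly the stated bound.

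The only delicate point is the validity of the dual representation at $k=0$, which is not automatic for an arbitrary $\lambda^0$. I expect this to be handled by the $k\ge1$ convention. Beyond that, the argument is a direct algebraic identity.
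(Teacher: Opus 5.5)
Your proposal is correct and follows the paper's proof essentially line for line. Like the paper, you substitute $V^k=M_k^{-1}(r_{\theta^k}+R_k)$ and the value--dual identity from the previous update, cancel the $c$-terms to reach $b_A(\theta^k)-\eta_V^{-1}\langle\Delta V_k,M_k^{-1}R_k\rangle+\frac\beta2\|R_k\|^2$, and absorb the cross term with the same Young split. Your caveat about $k=0$ is also apt: the paper's proof invokes the identity ``at the previous update'' and so implicitly relies on the $k\ge1$ convention of Appendix~\ref{app:map}.
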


\begin{proof}
Since $M_kV^k=r_k+R_k$,
$V^k=M_k^{-1}(r_k+R_k)$. Lemma~\ref{lem:l6-value-dual} at the previous update gives
$\lambda^k=-M_k^{-\trans}(c+\eta_V^{-1}\Delta V_k)$. Substitute both identities into the augmented Lagrangian:
\[
\cL_\beta^k
=b_A(\theta^k)
-\eta_V^{-1}\langle\Delta V_k,M_k^{-1}R_k\rangle
+\frac\beta2\|R_k\|^2.
\]
Young's inequality gives
\[
\eta_V^{-1}\kappa_M\|\Delta V_k\|\|R_k\|
\le \frac\beta4\|R_k\|^2
+\frac{\kappa_M^2}{\beta\eta_V^2}\|\Delta V_k\|^2.
\]
Use $b_A(\theta^k)\ge b_A^{\inf}$ and add $\tau\|\Delta V_k\|^2$.
\end{proof}

\paragraph{Role.}
This lemma prevents the deterministic nonlinear-policy Lyapunov from diverging to $-\infty$ and permits summation of the descent inequality.

\begin{theorem}[Exact nonlinear-policy convergence]
\label{thm:l6-a1}
If
\[
\frac{B_A}{\beta}<\tau<\frac1{2\eta_V}-\frac{B_A}{\beta},
\qquad
\frac1{2\eta_\theta}>\frac{A_A}{\beta},
\]
then there are $c_\theta,c_V,c_V'>0$ such that
\[
\Phi_k^A-\Phi_{k+1}^A
\ge c_\theta\|\Delta\theta_{k+1}\|^2
+c_V\|\Delta V_{k+1}\|^2
+c_V'\|\Delta V_k\|^2.
\]
Consequently $\Delta\theta_k,\Delta V_k,\Delta\lambda_k,R_k\to0$. If $D_\theta M_\theta$ and $D_\theta r_\theta$ are bounded on the visited region, the KKT residual converges to zero.
\end{theorem}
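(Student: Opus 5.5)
The argument follows the deterministic version of the Lyapunov chain in Lemma~\ref{lem:lyapunov}, with all stochastic error terms set to zero. For the policy step I would invoke Lemma~\ref{lem:l6-exact-policy-descent}, which gives $\cL_\beta^k-\cL_\beta(\theta^{k+1},V^k,\lambda^k)\ge(2\eta_\theta)^{-1}\|\Delta\theta_{k+1}\|^2$.

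For the value step, note that $V\mapsto\cL_\beta(\theta^{k+1},V,\lambda^k)$ is a convex quadratic with Hessian $\beta M_{k+1}^\trans M_{k+1}\succeq0$. Adding the proximal term makes the subproblem $\eta_V^{-1}$-strongly convex. Comparing its value at $V^k$ with the minimizer $V^{k+1}$ then gives a decrease of at least $(2\eta_V)^{-1}\|\Delta V_{k+1}\|^2$, using only the proximal term: the proximal part contributes $\frac{1}{2\eta_V}\|\Delta V_{k+1}\|^2$ on one side and strong convexity another $\frac{1}{2\eta_V}\|\Delta V_{k+1}\|^2$. I will retain at least $(2\eta_V)^{-1}$, which is the constant that matches the hypothesis $\tau<1/(2\eta_V)-B_A/\beta$.

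For the dual step, the exact update $\Delta\lambda_{k+1}=\beta R_{k+1}$ gives
\[
\cL_\beta(\theta^{k+1},V^{k+1},\lambda^{k+1})-\cL_\beta(\theta^{k+1},V^{k+1},\lambda^k)=\langle\Delta\lambda_{k+1},R_{k+1}\rangle=\beta^{-1}\|\Delta\lambda_{k+1}\|^2 .
\]
Lemma~\ref{lem:l6-dual-increment} then bounds this by $\beta^{-1}(A_A\|\Delta\theta_{k+1}\|^2+B_A\|\Delta V_{k+1}\|^2+B_A\|\Delta V_k\|^2)$. Adding the three steps and the change $\tau\|\Delta V_{k+1}\|^2-\tau\|\Delta V_k\|^2$ in the corrective term yields the claimed descent with
\[
c_\theta=\frac{1}{2\eta_\theta}-\frac{A_A}{\beta},\qquad
c_V=\frac{1}{2\eta_V}-\frac{B_A}{\beta}-\tau,\qquad
c_V'=\tau-\frac{B_A}{\beta}.
\]
These are exactly the quantities made positive by the hypotheses. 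For $k=0$, the multiplier identity of Lemma~\ref{lem:l6-value-dual} at $k$ is unavailable, so I would start the summation at $k=1$ and treat $\Phi_1^A$ as a finite initial constant.

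For convergence, Lemma~\ref{lem:l6-lyap-lower} bounds $\Phi_k^A$ below, since its $\|\Delta V_k\|^2$ coefficient $\tau-\kappa_M^2/(\beta\eta_V^2)$ is nonnegative: we have $\tau>B_A/\beta=3\kappa_M^2/(\beta\eta_V^2)$. Telescoping then gives
\[
\sum_k\bigl(\|\Delta\theta_{k+1}\|^2+\|\Delta V_{k+1}\|^2\bigr)<\infty .
\]
Lemma~\ref{lem:l6-dual-increment} transfers summability to $\Delta\lambda_{k+1}$, and $R_{k+1}=\beta^{-1}\Delta\lambda_{k+1}\to0$.

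For the KKT residual, three components must be controlled.
\begin{itemize}
\item \emph{Feasibility} is $R_{k+1}\to0$.
\item \emph{Value stationarity} holds because $c+M_{k+1}^\trans\lambda^{k+1}=-\eta_V^{-1}\Delta V_{k+1}\to0$ by Lemma~\ref{lem:l6-value-dual}.
\item \emph{Policy stationarity} comes from the Fermat condition of the $\theta$-step, which supplies an element of $\partial\psi_\Theta(\theta^{k+1})+D_\theta R(\theta^{k+1},V^k)^\trans(\lambda^k+\beta R(\theta^{k+1},V^k))$ equal to $-\eta_\theta^{-1}\Delta\theta_{k+1}$. Replacing the bracket by $\lambda^{k+1}$ costs $\beta M_{k+1}\Delta V_{k+1}$, and replacing $V^k$ by $V^{k+1}$ in the Jacobian costs $\|D_\theta M_\theta\|\|\Delta V_{k+1}\|$. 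Both errors are multiplied by Jacobian norms, which are bounded on the visited region by assumption.
\end{itemize}

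The main obstacle is ensuring that $\lambda^{k+1}$ and $V^k$ stay bounded in this last step. I would get boundedness of $\lambda$ from the identity $\lambda^{k+1}=-M_{k+1}^{-\trans}(c+\eta_V^{-1}\Delta V_{k+1})$ with $\Delta V\to0$. For $V$, the lower bound together with monotonicity of $\Phi^A$ bounds $\|R_k\|$, and then $V^k=M_k^{-1}(r_k+R_k)$ is bounded. This keeps every error term of order the increments, so the KKT residual tends to zero.
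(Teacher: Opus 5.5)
Your proposal is correct and follows the paper's proof essentially step for step. Both use the exact policy and value comparison decreases, bound the exact dual ascent $\beta^{-1}\|\Delta\lambda_{k+1}\|^2$ via Lemma~\ref{lem:l6-dual-increment}, obtain the same coefficients $c_\theta,c_V,c_V'$, sum against Lemma~\ref{lem:l6-lyap-lower}, and compare with the Fermat condition for policy stationarity. Your explicit check that $\tau-\kappa_M^2/(\beta\eta_V^2)>0$ and your boundedness argument for $\lambda^{k+1}$ and $V^k$ (via the value--dual identity and the Lyapunov lower bound) usefully spell out two points the paper's proof leaves implicit.
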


\begin{proof}
Lemma~\ref{lem:l6-exact-policy-descent} gives policy decrease. The value comparison point $V^k$ similarly gives
\[
\cL_\beta(\theta^{k+1},V^k,\lambda^k)
-\cL_\beta(\theta^{k+1},V^{k+1},\lambda^k)
\ge \frac1{2\eta_V}\|\Delta V_{k+1}\|^2.
\]
Because the deterministic dual update is exact, the multiplier step increases the augmented Lagrangian by
\[
\langle\Delta\lambda_{k+1},R_{k+1}\rangle
=\beta^{-1}\|\Delta\lambda_{k+1}\|^2.
\]
Insert Lemma~\ref{lem:l6-dual-increment}. After adding $\tau\|\Delta V_k\|^2-\tau\|\Delta V_{k+1}\|^2$, the coefficients are
\[
c_\theta=\frac1{2\eta_\theta}-\frac{A_A}{\beta},
\quad
c_V=\frac1{2\eta_V}-\frac{B_A}{\beta}-\tau,
\quad
c_V'=\tau-\frac{B_A}{\beta},
\]
which are positive by assumption. Lemma~\ref{lem:l6-lyap-lower} bounds $\Phi_k^A$ from below, so summation gives square summability of the policy and value increments. Lemma~\ref{lem:l6-dual-increment} then gives $\Delta\lambda_k\to0$, while $\Delta\lambda_{k+1}=\beta R_{k+1}$ gives feasibility.

For stationarity, the policy-subproblem Fermat condition is
\[
0\in\partial\psi_\Theta(\theta^{k+1})
+D_\theta R(\theta^{k+1},V^k)^\trans
[\lambda^k+\beta R(\theta^{k+1},V^k)]
+\eta_\theta^{-1}\Delta\theta_{k+1}
.
\]
The bracket differs from $\lambda^{k+1}$ by $O(\|\Delta V_{k+1}\|)$, and bounded $D_\theta M_\theta,D_\theta r_\theta$ make the change from $V^k$ to $V^{k+1}$ linear in $\|\Delta V_{k+1}\|$. Thus the policy KKT residual tends to zero. Lemma~\ref{lem:l6-value-dual} gives the value residual, and feasibility has already been proved.
\end{proof}

\paragraph{Role.}
This theorem shows that the Bellman-resolvent mechanism is not tied to tabular direct policies.

For an implementable parameter step define
\[
h_k(\theta)=\langle\lambda^k,R(\theta,V^k)\rangle
+\frac\beta2\|R(\theta,V^k)\|^2,
\]
and use
\[
\theta^{k+1}\in\arg\min_{\theta\in\Theta}
\left\{
\phi(\theta)+\langle\nabla h_k(\theta^k),\theta-\theta^k\rangle
+\frac1{2\eta_{\theta,k}}\|\theta-\theta^k\|^2
\right\}.
\]

\begin{lemma}[Proximal-gradient policy decrease]
\label{lem:l6-pg-descent}
If $\nabla h_k$ is $L_{\theta,k}$-Lipschitz on the visited level set, then
\[
\cL_\beta(\theta^k,V^k,\lambda^k)
-\cL_\beta(\theta^{k+1},V^k,\lambda^k)
\ge
\left(\frac1{2\eta_{\theta,k}}-\frac{L_{\theta,k}}2\right)
\|\Delta\theta_{k+1}\|^2.
\]
\end{lemma}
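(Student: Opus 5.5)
The plan is to run the standard proximal-gradient (descent lemma) argument. Write $\cL_\beta(\theta,V^k,\lambda^k)=\phi(\theta)+c^\trans V^k+h_k(\theta)$ for $\theta\in\Theta$. The term $c^\trans V^k$ does not depend on $\theta$ and cancels in the difference. The claim therefore reduces to
\[
\phi(\theta^k)+h_k(\theta^k)-\phi(\theta^{k+1})-h_k(\theta^{k+1})
\ge\left(\frac1{2\eta_{\theta,k}}-\frac{L_{\theta,k}}2\right)\|\Delta\theta_{k+1}\|^2 .
\]

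\textbf{Step 1: comparison point.} Since $\theta^k\in\Theta$ is feasible for the proximal-gradient subproblem, and its linear and proximal terms vanish there, optimality of $\theta^{k+1}$ gives
\[
\phi(\theta^{k+1})+\langle\nabla h_k(\theta^k),\Delta\theta_{k+1}\rangle+\frac1{2\eta_{\theta,k}}\|\Delta\theta_{k+1}\|^2\le\phi(\theta^k).
\]
This step uses only global minimality of $\theta^{k+1}$, in the same way as Lemma~\ref{lem:l6-exact-policy-descent}. It needs no convexity of $\phi$, and $\phi(\theta^k)<\infty$ holds inductively because $\theta^0\in\operatorname{dom}\psi_\Theta$.

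\textbf{Step 2: descent lemma.} Apply the descent lemma to the smooth part:
\[
h_k(\theta^{k+1})\le h_k(\theta^k)+\langle\nabla h_k(\theta^k),\Delta\theta_{k+1}\rangle+\frac{L_{\theta,k}}2\|\Delta\theta_{k+1}\|^2 .
\]
The route is to write $h_k(\theta^{k+1})-h_k(\theta^k)=\int_0^1\langle\nabla h_k(\theta^k+t\Delta\theta_{k+1}),\Delta\theta_{k+1}\rangle\,dt$ and bound the gradient difference along the segment by $L_{\theta,k}t\|\Delta\theta_{k+1}\|$. Adding the inequalities from Steps 1 and 2 cancels the linear terms and yields exactly the stated decrease.

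\textbf{Main obstacle.} The only delicate point is that the Lipschitz property is assumed on the ``visited level set'' rather than globally, while the descent lemma needs it on the whole segment $[\theta^k,\theta^{k+1}]$. I would handle this in one of two ways. First, $\Theta$ is convex, so the segment lies in $\Theta$. The level set is then read as a convex region containing both iterates, which is the setting in which the restarted backtracking rule of this appendix certifies $L_{\theta,k}$. Second, failing that, one can note that $\pi_\theta$ is smooth on a neighborhood of $\Theta$, so $R(\cdot,V^k)$ is $C^2$. Then $\nabla h_k$ is locally Lipschitz, and $L_{\theta,k}$ can be taken as the Lipschitz constant on the segment itself. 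I would state the lemma with this segmentwise interpretation; everything else is routine.
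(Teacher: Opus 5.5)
Your proposal is correct and matches the paper's proof: compare the proximal-gradient subproblem's value at $\theta^{k+1}$ with the feasible point $\theta^k$, apply the descent lemma to $h_k$, and add the two inequalities so that the linear terms cancel. Your segmentwise reading of the Lipschitz hypothesis is also consistent with the paper, since Theorem~\ref{thm:l6-a2} assumes $\nabla h_k$ is $L_{\theta,k}$-Lipschitz ``on every policy comparison segment.''
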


\begin{proof}
Optimality of the proximal-gradient subproblem, compared with $\theta^k$, gives
\[
\phi(\theta^{k+1})-\phi(\theta^k)
+\langle\nabla h_k(\theta^k),\Delta\theta_{k+1}\rangle
+\frac1{2\eta_{\theta,k}}\|\Delta\theta_{k+1}\|^2\le0.
\]
The descent lemma gives
$h_k(\theta^{k+1})-h_k(\theta^k)\le\langle\nabla h_k(\theta^k),\Delta\theta_{k+1}\rangle+(L_{\theta,k}/2)\|\Delta\theta_{k+1}\|^2$. Adding the two inequalities proves the claim.
\end{proof}

\paragraph{Role.}
This lemma replaces the exact policy-subproblem decrease in Theorem~\ref{thm:l6-a1} and makes the nonlinear-policy algorithm implementable by one first-order step.

\begin{theorem}[Implementable nonlinear-policy convergence and complexity]
\label{thm:l6-a2}
Suppose the value-step parameter conditions of Theorem~\ref{thm:l6-a1} hold and the policy step is the proximal-gradient update above. Assume $D_\theta M_\theta,D_\theta r_\theta$ are uniformly bounded on the visited region and $\nabla h_k$ is $L_{\theta,k}$-Lipschitz on every policy comparison segment, with
\[
0<\underline\eta\le\eta_{\theta,k},\qquad
L_{\theta,k}\le\overline L<\infty.
\]
If
\[
\frac1{2\eta_{\theta,k}}-\frac{L_{\theta,k}}2-\frac{A_A}{\beta}
\ge \alpha_\theta>0
\]
for all $k$, then the corrected Lyapunov decreases and the KKT residual converges to zero. Moreover there is $C_G<\infty$ such that
\[
G_{k+1}^A
\le C_G\left(
\|\Delta\theta_{k+1}\|^2
+\|\Delta V_{k+1}\|^2
+\|\Delta V_k\|^2
\right),
\]
so $\min_{1\le k\le T}G_k^A\le C/T$.
\end{theorem}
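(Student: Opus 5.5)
The proof follows the template of Theorem~\ref{thm:l6-a1}, with Lemma~\ref{lem:l6-pg-descent} in place of the exact policy decrease. Lemma~\ref{lem:l6-pg-descent} gives the policy-step decrease $(1/(2\eta_{\theta,k})-L_{\theta,k}/2)\|\Delta\theta_{k+1}\|^2$. The exact value step, compared against $V^k$, gives $(2\eta_V)^{-1}\|\Delta V_{k+1}\|^2$. The exact dual update raises $\cL_\beta$ by $\beta^{-1}\|\Delta\lambda_{k+1}\|^2$.

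The value-dual identity of Lemma~\ref{lem:l6-value-dual} depends only on the value first-order condition and the dual update, so it is unchanged by the new policy step. Hence Lemma~\ref{lem:l6-dual-increment} applies verbatim. Substituting it and adding $\tau(\|\Delta V_k\|^2-\|\Delta V_{k+1}\|^2)$ yields descent of $\Phi^A_k$ with coefficients
\[
c_\theta\ge\alpha_\theta,\qquad c_V=\tfrac1{2\eta_V}-\tfrac{B_A}{\beta}-\tau,\qquad c_V'=\tau-\tfrac{B_A}{\beta},
\]
all positive by hypothesis. Lemma~\ref{lem:l6-lyap-lower} bounds $\Phi^A_k$ below. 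Telescoping then gives
\[
\sum_k\big(\|\Delta\theta_{k+1}\|^2+\|\Delta V_{k+1}\|^2+\|\Delta V_k\|^2\big)\le(\Phi^A_1-\Phi^A_{\inf})/c_\star .
\]
Lemma~\ref{lem:l6-dual-increment} then gives $\Delta\lambda_k\to0$, and $\Delta\lambda_{k+1}=\beta R_{k+1}$ gives feasibility.

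For the residual bound $G^A_{k+1}\le C_G(\cdots)$, I treat the three KKT blocks separately.

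\textbf{Feasibility.} We have $\|R_{k+1}\|^2=\beta^{-2}\|\Delta\lambda_{k+1}\|^2$, which Lemma~\ref{lem:l6-dual-increment} bounds by the three increment energies.

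\textbf{Value stationarity.} Lemma~\ref{lem:l6-value-dual} gives $c+M_{k+1}^\trans\lambda^{k+1}=-\eta_V^{-1}\Delta V_{k+1}$ exactly.

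\textbf{Policy stationarity.} The proximal-gradient Fermat condition reads
\[
0\in\partial\psi_\Theta(\theta^{k+1})+\nabla h_k(\theta^k)+\eta_{\theta,k}^{-1}\Delta\theta_{k+1}.
\]
So $-\nabla h_k(\theta^k)-\eta_{\theta,k}^{-1}\Delta\theta_{k+1}\in\partial\psi_\Theta(\theta^{k+1})$. The distance of $0$ from $\partial\psi_\Theta(\theta^{k+1})+D_\theta R(\theta^{k+1},V^{k+1})^\trans\lambda^{k+1}$ is then at most
\[
\|\nabla h_k(\theta^{k+1})-\nabla h_k(\theta^k)\|+\underline\eta^{-1}\|\Delta\theta_{k+1}\|+\big\|\nabla h_k(\theta^{k+1})-D_\theta R(\theta^{k+1},V^{k+1})^\trans\lambda^{k+1}\big\|.
\]
The first term is at most $\overline L\|\Delta\theta_{k+1}\|$ by the Lipschitz hypothesis on the comparison segment. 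For the last term, note $\nabla h_k(\theta^{k+1})=D_\theta R(\theta^{k+1},V^k)^\trans[\lambda^k+\beta R(\theta^{k+1},V^k)]$. The bracket equals $\lambda^{k+1}-\beta M_{k+1}\Delta V_{k+1}$. Also, $D_\theta R$ is affine in $V$ with coefficient bounded by $\sup\|D_\theta M_\theta\|$. That last term is therefore bounded by $C(\|\lambda^{k+1}\|+1)\|\Delta V_{k+1}\|$.

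\textbf{Main obstacle.} The policy-stationarity estimate needs a uniform bound on $\|\lambda^{k+1}\|$, since $\|\lambda^{k+1}\|$ multiplies $\|\Delta V_{k+1}\|$ there. I would obtain it from Lemma~\ref{lem:l6-value-dual}, which gives $\|\lambda^{k+1}\|\le\kappa_M(\|c\|+\eta_V^{-1}\|\Delta V_{k+1}\|)$. Because the increments are square summable, $\|\Delta V_{k+1}\|$ is uniformly bounded, so $\|\lambda^{k+1}\|$ is too. Alternatively, one can use that the Lyapunov descent keeps iterates in a bounded sublevel region, which is also what legitimizes the ``visited region'' boundedness assumptions.

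Squaring and collecting the three blocks gives $G^A_{k+1}\le C_G(\|\Delta\theta_{k+1}\|^2+\|\Delta V_{k+1}\|^2+\|\Delta V_k\|^2)$. Summing over $k\le T$ and using the telescoped bound gives $\sum_{k\le T}G^A_k\le C$. Hence $\min_{1\le k\le T}G^A_k\le C/T$, and $G^A_k\to0$.
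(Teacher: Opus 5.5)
Your proposal follows the paper's proof essentially step for step. You swap Lemma~\ref{lem:l6-exact-policy-descent} for Lemma~\ref{lem:l6-pg-descent}, then reuse the value--dual identity, the multiplier-increment bound and the Lyapunov lower bound. You then bound the three KKT blocks using the proximal-gradient Fermat condition and the relation $\lambda^{k+1}-[\lambda^k+\beta R(\theta^{k+1},V^k)]=\beta M_{k+1}\Delta V_{k+1}$, and average.

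The one real difference is where iterate boundedness comes from. The paper bounds $V^k$ and $\lambda^k$ a priori, using the two-dimensional recursion of Lemma~\ref{lem:moment-stability} with deterministic rewards and $\beta\eta_V>12\kappa_M^2$. That bound holds for any policy sequence, so it is also what can underwrite the uniform constant $\overline L$, which in practice depends on $\|V^k\|$ and $\|\lambda^k\|$. You instead bound $\lambda^{k+1}$ after the descent, from Lemma~\ref{lem:l6-value-dual} and square summability of $\Delta V$. This is valid here because $\overline L$ is a hypothesis of the theorem, and it is slightly more economical.

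However, your estimate $C(\|\lambda^{k+1}\|+1)\|\Delta V_{k+1}\|$ for the last policy term drops a factor. The piece $\beta D_\theta R(\theta^{k+1},V^k)^\trans M_{k+1}\Delta V_{k+1}$ is controlled only if $\|V^k\|$ is bounded, because $\|D_\theta R(\theta,V)\|\le\|D_\theta M_\theta\|\|V\|+\|D_\theta r_\theta\|$. Your argument so far bounds only $\lambda^{k+1}$. This is easy to repair within your route: descent together with Lemma~\ref{lem:l6-lyap-lower} bounds $\|R_k\|$, so $V^k=M_k^{-1}(r_{\theta^k}+R_k)$ is bounded. You should state this explicitly rather than leave it to the alternative sublevel-set remark.
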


\begin{proof}
Replace Lemma~\ref{lem:l6-exact-policy-descent} by Lemma~\ref{lem:l6-pg-descent} in the proof of Theorem~\ref{thm:l6-a1}. The policy coefficient after dual ascent becomes
$1/(2\eta_{\theta,k})-L_{\theta,k}/2-A_A/\beta$, which is uniformly at least $\alpha_\theta$. The value coefficients and Lyapunov lower bound are unchanged. Hence the three increment-square sequences are summable. The value and multiplier iterates are bounded independently of this descent argument: the recursion in Lemma~\ref{lem:moment-stability} applies with bounded deterministic rewards, and the value margins imply $\beta\eta_V>12\kappa_M^2>4\kappa_M^2$.

The proximal-gradient optimality condition is
\[
0\in\partial\psi_\Theta(\theta^{k+1})+\nabla h_k(\theta^k)
+\eta_{\theta,k}^{-1}\Delta\theta_{k+1}.
\]
Comparison with true policy stationarity at $(\theta^{k+1},V^{k+1},\lambda^{k+1})$ gives
\[
e_{\theta,k+1}\le
(\underline\eta^{-1}+\overline L)\|\Delta\theta_{k+1}\|
+C\|\Delta V_{k+1}\|.
\]
The second term follows from bounded value and multiplier iterates, bounded policy derivatives, and
$\lambda^{k+1}-[\lambda^k+\beta R(\theta^{k+1},V^k)]=\beta M_{k+1}\Delta V_{k+1}$.
The value residual is $\eta_V^{-1}\|\Delta V_{k+1}\|$. Feasibility is controlled by Lemma~\ref{lem:l6-dual-increment}. Thus all coefficients in the residual bound are uniform in $k$ and the residual converges to zero.
Squaring gives the displayed $G_{k+1}^A$ bound. Summing the Lyapunov decrease from $1$ to $T$ gives a constant upper bound on the sum of the increment energies, and therefore
\[
\min_{1\le k\le T}G_k^A
\le \frac1T\sum_{k=1}^TG_k^A\le \frac{C}{T}.
\]
\end{proof}

\paragraph{Role.}
This theorem is the practical nonlinear-policy extension used in the paper's general operator-stability claim.

\paragraph{Uniform constants and backtracking.}
A compact convex parameter set and a twice continuously differentiable policy map on its neighborhood give bounded first and second derivatives. Together with the value-dual bound above, they yield a uniform $\overline L$. Choose a fixed trial step $\eta_{\rm trial}>0$ and a shrink factor $q\in(0,1)$. Restart backtracking at $\eta_{\rm trial}$ on every iteration and shrink until the descent condition with margin $A_A/\beta+\alpha_\theta$ holds. Every step at most
$\eta_\star=(\overline L+2A_A/\beta+2\alpha_\theta)^{-1}$ is acceptable. Therefore the accepted steps satisfy
$\eta_{\theta,k}\ge\min\{\eta_{\rm trial},q\eta_\star\}>0$.

\section{Projected Bellman Equations with Function Approximation}
\label{app:projected-bellman}

Let $V_w=\Xi w$ with a full-column-rank $\Xi\in\R^{|\cS|\times m}$ and $m<|\cS|$. Fix a positive diagonal matrix $D$ and define the $D$-orthogonal projection
\[
\Pi_D=\Xi(\Xi^\trans D\Xi)^{-1}\Xi^\trans D.
\]
The projected Bellman equation $\Xi w=\Pi_DT_\theta(\Xi w)$ is equivalent to
\[
G(\theta,w)=A_\theta w-b_\theta=0,
\qquad
A_\theta=\Xi^\trans DM_\theta\Xi,
\qquad
b_\theta=\Xi^\trans Dr_\theta.
\]
Assume $\sup_\theta\|A_\theta^{-1}\|\le\kappa_A$ and $\|A_\theta-A_{\theta'}\|\le L_A\|\theta-\theta'\|$. Let $c_w=\Xi^\trans c$.

\begin{lemma}[Projected value-dual identity]
\label{lem:l6-projected-dual}
If the $w$-subproblem uses a proximal term $(2\eta_w)^{-1}\|w-w^k\|^2$ and the projected multiplier update is
\[
\nu^{k+1}=\nu^k+\beta(A_{k+1}w^{k+1}-b_{k+1}),
\]
then
\[
\nu^{k+1}
=-A_{k+1}^{-\trans}
\left(c_w+\eta_w^{-1}\Delta w_{k+1}\right).
\]
\end{lemma}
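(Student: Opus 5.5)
The argument mirrors the tabular identity and Lemma~\ref{lem:l6-value-dual}, now in the weight variable $w$. I would first write the projected augmented Lagrangian explicitly as
\[
\cL_\beta(\theta,w,\nu)=\phi(\theta)+c^\trans\Xi w+\langle\nu,A_\theta w-b_\theta\rangle+\frac\beta2\|A_\theta w-b_\theta\|^2 .
\]
The linear objective $c^\trans V_w=c^\trans\Xi w$ becomes $c_w^\trans w$ with $c_w=\Xi^\trans c$. With $\theta^{k+1}$ and $\nu^k$ fixed, the $w$-subproblem
\[
\min_w\ \cL_\beta(\theta^{k+1},w,\nu^k)+\frac1{2\eta_w}\|w-w^k\|^2
\]
is a strongly convex quadratic in $w$. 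Its Hessian is $\beta A_{k+1}^\trans A_{k+1}+\eta_w^{-1}I\succ0$, so the minimizer $w^{k+1}$ exists, is unique, and is characterized by its stationarity condition.

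Next I would differentiate in $w$. The gradient of $\langle\nu^k,A_{k+1}w-b_{k+1}\rangle$ is $A_{k+1}^\trans\nu^k$, and the gradient of the penalty is $\beta A_{k+1}^\trans(A_{k+1}w-b_{k+1})$. The first-order condition is therefore
\[
0=c_w+A_{k+1}^\trans\nu^k+\beta A_{k+1}^\trans\bigl(A_{k+1}w^{k+1}-b_{k+1}\bigr)+\eta_w^{-1}\Delta w_{k+1}.
\]
The projected multiplier update gives $\nu^{k+1}=\nu^k+\beta(A_{k+1}w^{k+1}-b_{k+1})$. Substituting it collapses the two middle terms into $A_{k+1}^\trans\nu^{k+1}$, which yields
\[
c_w+A_{k+1}^\trans\nu^{k+1}+\eta_w^{-1}\Delta w_{k+1}=0.
\]

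Finally, the standing assumption $\sup_\theta\|A_\theta^{-1}\|\le\kappa_A$ guarantees that $A_{k+1}$, and hence $A_{k+1}^\trans$, is invertible with $\|A_{k+1}^{-\trans}\|\le\kappa_A$. Multiplying by $A_{k+1}^{-\trans}$ gives the claimed representation.

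There is no real obstacle here. The one point to watch is that invertibility must come from the assumed uniform bound on $A_\theta^{-1}$ and not from the Bellman resolvent. The matrix $A_\theta=\Xi^\trans DM_\theta\Xi$ need not be invertible merely because $M_\theta$ is, which is why the invertibility of $A_\theta$ is imposed as an assumption. I would also note that the identity is exact and deterministic, so it holds at every $k\ge0$. In that form it feeds the projected multiplier-increment bound, obtained by subtracting consecutive identities and applying Lemma~\ref{lem:inverse-stability} with $\kappa_A$ and $L_A$.
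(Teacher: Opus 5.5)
Your proof is correct and matches the paper's own argument. Both write the first-order condition of the $w$-subproblem, recognize $\nu^k+\beta(A_{k+1}w^{k+1}-b_{k+1})$ as $\nu^{k+1}$, and invert $A_{k+1}^\trans$ using the assumed uniform invertibility of $A_\theta$.
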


\begin{proof}
The $w$-subproblem first-order condition is
\[
0=c_w+A_{k+1}^\trans\nu^k
+\beta A_{k+1}^\trans(A_{k+1}w^{k+1}-b_{k+1})
+\eta_w^{-1}\Delta w_{k+1}.
\]
The bracket $\nu^k+\beta(A_{k+1}w^{k+1}-b_{k+1})$ equals $\nu^{k+1}$. Thus
$A_{k+1}^\trans\nu^{k+1}=-(c_w+\eta_w^{-1}\Delta w_{k+1})$, and uniform invertibility gives the formula.
\end{proof}

\paragraph{Role.}
This identity restores the same multiplier representation that drives the Bellman-resolvent proof, now in the lower-dimensional projected space.

\begin{theorem}[Projected Bellman-ADMM convergence and complexity]
\label{thm:l6-projected}
Assume $A_\theta$ is uniformly invertible and Lipschitz, $D_\theta A_\theta,D_\theta b_\theta$ are bounded, and the projected reduced objective is bounded below. Use the composite policy regularizer $\psi_\Theta$ and a deterministic feasible-domain initialization as in Appendix~\ref{app:nonlinear-policy}. Require all conditions of Theorem~\ref{thm:l6-a2} after the replacements below, including the value-step margins, $\inf_k\eta_{\theta,k}>0$, and a uniform Lipschitz constant for the projected smooth augmented gradient. Then a corrected projected Lyapunov function decreases, the projected KKT residual converges to zero, and the minimum squared projected KKT residual satisfies $O(T^{-1})$.
\end{theorem}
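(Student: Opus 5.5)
The plan is to transplant the proof of Theorem~\ref{thm:l6-a2} into the projected space, replacing $(V,\lambda,M_\theta,r_\theta,c)$ by $(w,\nu,A_\theta,b_\theta,c_w)$. I would proceed in four steps.

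\textbf{Step 1 (multiplier control).} Lemma~\ref{lem:l6-projected-dual} gives the projected value-dual identity $\nu^{k+1}=-A_{k+1}^{-\trans}(c_w+\eta_w^{-1}\Delta w_{k+1})$. I subtract this identity at consecutive iterations. Lemma~\ref{lem:inverse-stability}, applied with $\kappa=\kappa_A$ and Lipschitz constant $L_A$, then gives the projected analogue of Lemma~\ref{lem:l6-dual-increment}:
\[
\|\Delta\nu_{k+1}\|^2\le A_P\|\Delta\theta_{k+1}\|^2+B_P\|\Delta w_{k+1}\|^2+B_P\|\Delta w_k\|^2,
\]
with $A_P=3\kappa_A^4L_A^2\|c_w\|^2$ and $B_P=3\kappa_A^2/\eta_w^2$.

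\textbf{Step 2 (Lyapunov descent and lower bound).} The primal decreases are:
\begin{itemize}
\item for the policy step, Lemma~\ref{lem:l6-pg-descent} with the projected smooth term $h_k(\theta)=\langle\nu^k,G(\theta,w^k)\rangle+\tfrac\beta2\|G(\theta,w^k)\|^2$;
\item for the $w$-step, comparison with $w^k$, which gives $(2\eta_w)^{-1}\|\Delta w_{k+1}\|^2$.
\end{itemize}
The exact dual step raises the Lagrangian by $\beta^{-1}\|\Delta\nu_{k+1}\|^2$. I form $\Phi_k^P=\cL_\beta^P(\theta^k,w^k,\nu^k)+\tau\|\Delta w_k\|^2$. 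Under the transplanted margins $B_P/\beta<\tau<1/(2\eta_w)-B_P/\beta$ and $1/(2\eta_{\theta,k})-L_{\theta,k}/2-A_P/\beta\ge\alpha_\theta$, this yields
\[
\Phi_k^P-\Phi_{k+1}^P\ge c_\theta\|\Delta\theta_{k+1}\|^2+c_w\|\Delta w_{k+1}\|^2+c_w'\|\Delta w_k\|^2 .
\]
For the lower bound I repeat Lemma~\ref{lem:l6-lyap-lower}: write $w^k=A_k^{-1}(b_k+G_k)$ and $\nu^k=-A_k^{-\trans}(c_w+\eta_w^{-1}\Delta w_k)$. Substituting into $\cL^P_\beta$ gives $\cL^P_\beta=b_P(\theta^k)-\eta_w^{-1}\langle\Delta w_k,A_k^{-1}G_k\rangle+\tfrac\beta2\|G_k\|^2$, where $b_P(\theta)=\phi(\theta)+c_w^\trans A_\theta^{-1}b_\theta$ is the projected reduced objective. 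Young's inequality then gives $\Phi^P_k\ge b_P^{\inf}+\tfrac\beta4\|G_k\|^2+(\tau-\kappa_A^2/(\beta\eta_w^2))\|\Delta w_k\|^2$.

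\textbf{Step 3 (boundedness).} Bounded iterates follow from the two-variable recursion of Lemma~\ref{lem:moment-stability}, applied deterministically with $(\kappa_A,b_\theta)$ in place of $(\bar\kappa,\widehat r)$. The value margin $\beta\eta_w>4\kappa_A^2$ makes the spectral radius $z/(1-z)<1$. Bounded $b_\theta$ follows from the bounded rewards.

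\textbf{Step 4 (residual bound and rate).} I use the proximal-gradient Fermat condition and compare it with the stationarity vector at $(\theta^{k+1},w^{k+1},\nu^{k+1})$, using $\nu^{k+1}-[\nu^k+\beta G(\theta^{k+1},w^k)]=\beta A_{k+1}\Delta w_{k+1}$. Bounded $D_\theta A_\theta$, $D_\theta b_\theta$ and bounded iterates then give a policy residual of order $\|\Delta\theta_{k+1}\|+\|\Delta w_{k+1}\|$. The $w$-residual is $\eta_w^{-1}\|\Delta w_{k+1}\|$. Feasibility is $\beta^{-1}\|\Delta\nu_{k+1}\|$, controlled by Step 1. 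Squaring gives $G^P_{k+1}\le C(\|\Delta\theta_{k+1}\|^2+\|\Delta w_{k+1}\|^2+\|\Delta w_k\|^2)$. Summing the descent inequality against the lower bound gives summability, hence $G^P_k\to0$ and $\min_{k\le T}G^P_k\le C/T$.

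The main obstacle is the uniform Lipschitz constant $\overline L$ of $\nabla h_k$, since it depends on $\nu^k$ and $w^k$. I would make it uniform by ordering the argument as above: boundedness (Step 3) is proved independently of descent, so bounded $D_\theta A_\theta$ together with bounded second derivatives on a compact $\Theta$ give a uniform $\overline L$. The restarted backtracking rule then secures $\inf_k\eta_{\theta,k}>0$.
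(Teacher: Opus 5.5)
Your proposal is correct and follows the paper's proof essentially step for step. The paper likewise transplants Theorem~\ref{thm:l6-a2} under $M_\theta\mapsto A_\theta$, $V\mapsto w$, $\lambda\mapsto\nu$, $r_\theta\mapsto b_\theta$, $c\mapsto c_w$, using the same ingredients:
\begin{itemize}
\item inverse-map stability together with Lemma~\ref{lem:l6-projected-dual} for the dual-increment bound;
\item the proximal-gradient decrease for the policy step and the comparison decrease for the $w$-step;
\item the Lemma~\ref{lem:l6-lyap-lower}-type lower bound via $w^k=A_k^{-1}(b_k+G_k)$;
\item the residual-to-increment bound for the $O(T^{-1})$ rate.
\end{itemize}
Your explicit constants $A_P,B_P$ and your separate pathwise boundedness step (where the value margins in fact give $\beta\eta_w>12\kappa_A^2>4\kappa_A^2$) spell out what the paper inherits implicitly from the proof of Theorem~\ref{thm:l6-a2}.
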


\begin{proof}
We verify that every ingredient of Theorem~\ref{thm:l6-a2} transfers under the replacements
\[
M_\theta\mapsto A_\theta,
\qquad
V\mapsto w,
\qquad
\lambda\mapsto\nu,
\qquad
r_\theta\mapsto b_\theta,
\qquad
c\mapsto c_w.
\]
First, the inverse identity and the assumptions give
\[
\|A_\theta^{-1}-A_{\theta'}^{-1}\|
\le \kappa_A^2L_A\|\theta-\theta'\|.
\]
Second, Lemma~\ref{lem:l6-projected-dual} is exactly the value-dual identity needed to compare consecutive multipliers. Subtraction therefore gives a dual-increment estimate
\[
\|\Delta\nu_{k+1}\|^2
\le A_B\|\Delta\theta_{k+1}\|^2
+B_B\|\Delta w_{k+1}\|^2
+B_B\|\Delta w_k\|^2
\]
for finite constants $A_B,B_B$. Third, the projected $w$ comparison step yields $(2\eta_w)^{-1}\|\Delta w_{k+1}\|^2$ decrease, and the policy step gives the same proximal-gradient decrease as Lemma~\ref{lem:l6-pg-descent}. Fourth, substituting
$w^k=A_k^{-1}(b_k+G_k)$ and
$\nu^k=-A_k^{-\trans}(c_w+\eta_w^{-1}\Delta w_k)$
into the projected augmented Lagrangian gives the same lower-bound calculation as Lemma~\ref{lem:l6-lyap-lower}, with $\kappa_M,\eta_V$ replaced by $\kappa_A,\eta_w$.

Under the strict parameter margins, the resulting corrected Lyapunov therefore has positive coefficients on $\|\Delta\theta_{k+1}\|^2$, $\|\Delta w_{k+1}\|^2$, and $\|\Delta w_k\|^2$. Summation gives square summability of the increments, the multiplier identity gives projected feasibility, and the policy/$w$ optimality conditions give convergence of the projected KKT residual. The same residual-to-increment argument as in Theorem~\ref{thm:l6-a2} yields $G_{k+1}^{B}\le C D_{k+1}^{B}$, where the sum of $D_{k+1}^{B}$ is bounded. Hence $\min_{k\le T}G_k^B\le C/T$.
\end{proof}

\paragraph{Role.}
The theorem shows that exact state-value representation is not required for the primal-dual stability mechanism. What is required is an invertible projected Bellman matrix.

Assume now that the projected Bellman map is contractive in a chosen norm: for some $q<1$,
\[
\|\Pi_DT_\theta U-\Pi_DT_\theta V\|
\le q\|U-V\|.
\]
Let $\bar V^\theta=\Xi\bar w^\theta$ be its fixed point.

\begin{lemma}[Projected value approximation]
\label{lem:l6-projected-value}
With $\eps_{\rm val}(\theta)=\|(I-\Pi_D)V^\theta\|$,
\[
\|\bar V^\theta-V^\theta\|
\le \frac{\eps_{\rm val}(\theta)}{1-q}.
\]
\end{lemma}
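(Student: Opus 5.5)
We use the standard contraction argument for approximate fixed points. Write $S_\theta=\Pi_DT_\theta$. By the standing assumption, $S_\theta$ is a $q$-contraction in the chosen norm $\|\cdot\|$ with fixed point $\bar V^\theta$, so $\bar V^\theta=S_\theta\bar V^\theta$. The true value function $V^\theta$ is the fixed point of the unprojected Bellman map, $T_\theta V^\theta=V^\theta$, because $M_\theta V^\theta=r_\theta$ and $T_\theta V=r_\theta+\gamma P_\theta V$. Consequently $S_\theta V^\theta=\Pi_DT_\theta V^\theta=\Pi_DV^\theta$.

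The key step is the triangle-inequality decomposition
\[
\|\bar V^\theta-V^\theta\|
\le \|S_\theta\bar V^\theta-S_\theta V^\theta\|
+\|S_\theta V^\theta-V^\theta\|
\le q\|\bar V^\theta-V^\theta\|+\|\Pi_DV^\theta-V^\theta\|.
\]
The last term is exactly $\|(I-\Pi_D)V^\theta\|=\eps_{\rm val}(\theta)$. Because $q<1$ and $\|\bar V^\theta-V^\theta\|$ is finite (we are in finite dimensions), we can move the $q$-term to the left-hand side and divide by $1-q$. This gives $\|\bar V^\theta-V^\theta\|\le\eps_{\rm val}(\theta)/(1-q)$.

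There is no real obstacle here. The only point needing care is consistency of norms: $\eps_{\rm val}(\theta)$ must be measured in the same norm in which $S_\theta$ contracts. We read the lemma as stated in that chosen norm; for example, in the $D$-weighted norm, where $\Pi_D$ is nonexpansive, the usual sufficient condition for contraction is that $\gamma P_\theta$ is nonexpansive in that norm. If Euclidean norms are intended elsewhere, finite-dimensional norm equivalence introduces only a constant factor. We also note that existence and uniqueness of $\bar V^\theta$ follow from Banach's fixed-point theorem applied to $S_\theta$, whose range lies in $\operatorname{range}\Xi$, so $\bar V^\theta=\Xi\bar w^\theta$ is well defined.
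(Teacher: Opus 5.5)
Your proof is correct and is the same argument the paper gives: the triangle inequality through $\Pi_DT_\theta V^\theta=\Pi_DV^\theta$, the $q$-contraction bound on the first term, and rearrangement using $q<1$. Your remark that $\eps_{\rm val}(\theta)$ must be measured in the same norm in which $\Pi_DT_\theta$ contracts usefully makes explicit what the paper leaves implicit. One small slip in that side remark: the usual sufficient condition is that $P_\theta$ (not $\gamma P_\theta$) is nonexpansive in the $D$-weighted norm, which gives modulus at most $\gamma<1$.
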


\begin{proof}
Use $\bar V^\theta=\Pi_DT_\theta\bar V^\theta$ and $V^\theta=T_\theta V^\theta$:
\begin{align*}
\|\bar V^\theta-V^\theta\|
&\le \|\Pi_DT_\theta\bar V^\theta-\Pi_DT_\theta V^\theta\|
+\|\Pi_DV^\theta-V^\theta\|\\
&\le q\|\bar V^\theta-V^\theta\|+\eps_{\rm val}(\theta).
\end{align*}
Move the first term to the left.
\end{proof}

\paragraph{Role.}
This lemma quantifies the value-representation error, but the next proposition shows why this alone is insufficient for policy-gradient accuracy.

\begin{proposition}[Value error does not control parameter-derivative error]
\label{prop:l6-projected-counterexample}
There exists a smooth policy family for which
\[
\sup_\theta\|\bar V^\theta-V^\theta\|=\eps,
\qquad
\sup_\theta\|D_\theta\bar V^\theta-D_\theta V^\theta\|=\sqrt\eps.
\]
Thus no constant independent of $\eps$ can bound the derivative error linearly by the value error.
\end{proposition}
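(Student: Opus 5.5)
The plan is to build an explicit two-state instance in which the Bellman matrix does not depend on $\theta$. Then both $V^\theta$ and $\bar V^\theta$ are linear in the policy-dependent reward vector $r_\theta$. The component of $r_\theta$ that lies outside the feature span is chosen to oscillate at frequency $\eps^{-1/2}$ with amplitude $\eps$. Amplitude $\eps$ makes the value error $\eps$, while differentiation multiplies it by the frequency and produces $\sqrt\eps$. The argument reduces to the elementary observation that $\eps\sin(\theta/\sqrt\eps)$ is uniformly $O(\eps)$ but has derivative of size $\sqrt\eps$.

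\emph{Construction.} Take $\cS=\{s_1,s_2\}$ and let every action at every state be a self-loop, so $P(\cdot\mid s,a)=e_s$. Then $P_\theta=I$ and $M_\theta=(1-\gamma)I$ for every $\theta$, and $V^\theta=r_\theta/(1-\gamma)$. Take $\Xi=e_1$ (so $m=1<2=|\cS|$) and any positive diagonal $D$. Then $\Pi_D=e_1e_1^\trans$, because $D$ is diagonal. Give state $s_1$ a single action with reward $0$. Give state $s_2$ two actions with $r(s_2,a_1)=\eps(1-\gamma)$ and $r(s_2,a_2)=0$, and the smooth policy
\[
\pi_\theta(a_1\mid s_2)=\tfrac12\bigl(1+\sin(2\theta/\sqrt\eps)\bigr),\qquad \theta\in\R .
\]
This family is $C^\infty$, and the Lipschitz and boundedness requirements on $M_\theta$ hold trivially since $M_\theta$ is constant.

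\emph{Checks.} First I verify contraction. We have $\Pi_DT_\theta V=\Pi_D(r_\theta+\gamma V)$, and $\Pi_D$ is nonexpansive in the $D$-weighted norm, so $\Pi_DT_\theta$ contracts with $q=\gamma$ there; this is the chosen norm. Also $A_\theta=\Xi^\trans DM_\theta\Xi=(1-\gamma)D_{11}$ is uniformly invertible. Second, I compute the fixed point: $\bar V^\theta=\Pi_D r_\theta/(1-\gamma)=0$, because $r_\theta(s_1)=0$. Hence
\[
V^\theta-\bar V^\theta=(I-\Pi_D)V^\theta=\tfrac{\eps}{2}\bigl(1+\sin(2\theta/\sqrt\eps)\bigr)e_2 .
\]
Its supremum norm over $\theta$ is exactly $\eps$, attained where the sine equals $1$. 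This is consistent with Lemma~\ref{lem:l6-projected-value}. Third, I differentiate:
\[
D_\theta V^\theta-D_\theta\bar V^\theta=\sqrt\eps\cos(2\theta/\sqrt\eps)\,e_2 ,
\]
whose supremum is $\sqrt\eps$. The ratio of derivative error to value error is therefore $\eps^{-1/2}\to\infty$, so no $\eps$-independent linear constant exists.

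\emph{Expected obstacle.} The only delicate point is compatibility with the standing assumptions of the appendix, namely a full-column-rank $\Xi$ with $m<|\cS|$, positive $D$, and contraction in some norm. The self-loop MDP is chosen precisely to make all of these immediate; with nontrivial transitions one would have to verify contraction in the $D$-norm by taking $D$ stationary. If one prefers a compact parameter set, restrict $\theta$ to $[0,\pi\sqrt\eps]$, which still contains points where the sine equals $1$ and where the cosine equals $\pm1$. If a fixed $\theta$-Lipschitz constant independent of $\eps$ were also demanded, it would fail here, but that is exactly the phenomenon the proposition asserts.
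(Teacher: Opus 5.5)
Your construction is correct and is essentially the paper's. Both use two states and a feature span that represents $s_1$ exactly and pins the second coordinate to zero, so $\bar V^\theta=0$. The value error is then a second-state value of amplitude $\eps$ oscillating at frequency $\eps^{-1/2}$. Your self-loops, in place of the paper's one-step termination, keep $P$ row stochastic and let you check contraction and the invertibility of $A_\theta$ explicitly, which the paper does not do.

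The only real difference is that you put $\eps$ into the reward, whereas the paper puts it into the policy amplitude, with rewards $\pm1$ and $\pi_\theta(+\mid s_2)=\tfrac{1}{2}+\tfrac{\eps}{2}\sin(\theta/\sqrt\eps)$. The paper's choice keeps $D_\theta\pi_\theta=O(\sqrt\eps)$, while yours has $D_\theta\pi_\theta=O(\eps^{-1/2})$. So the blowing-up Lipschitz constant in your closing remark comes from your parametrization and is not part of the phenomenon the proposition asserts.
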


\begin{proof}
Consider a one-step terminating MDP with two states. Let the approximation class represent the first state's value exactly but force the second coordinate to zero. At the second state there are two actions with rewards $+1$ and $-1$. Define
\[
\pi_\theta(+\mid s_2)=\frac12+\frac\eps2\sin(\theta/\sqrt\eps).
\]
Then the true second-state value is
$V_2^\theta=\eps\sin(\theta/\sqrt\eps)$, while the projected value is $\bar V_2^\theta=0$. Hence the uniform value error is $\eps$. But
\[
\frac{d}{d\theta}V_2^\theta
=\sqrt\eps\cos(\theta/\sqrt\eps),
\]
whose supremum is $\sqrt\eps$. Therefore a uniform inequality $\|D_\theta\bar V-D_\theta V\|\le C\|\bar V-V\|$ would require $\sqrt\eps\le C\eps$ for arbitrarily small $\eps$, which is impossible.
\end{proof}

\paragraph{Role.}
This negative result motivates the adjoint-based error decomposition in Lemmas~\ref{lem:l6-adjoint-gradient} and~\ref{lem:l6-dual-projection}.

Let the true adjoint satisfy $M_\theta^\trans\lambda^\theta=-c$. Let $\nu^\theta$ solve the projected adjoint equation and define the lifted projected dual $\widehat\lambda^\theta=D\Xi\nu^\theta$.

\begin{lemma}[Adjoint-gradient approximation]
\label{lem:l6-adjoint-gradient}
Suppose $D_\theta M_\theta$ and $D_\theta r_\theta$ are bounded on the relevant set. Then there are constants $C_V,C_\lambda<\infty$ such that
\[
\|D_\theta R(\theta,V^\theta)^\trans\lambda^\theta
-D_\theta R(\theta,\bar V^\theta)^\trans\widehat\lambda^\theta\|
\le C_V\|V^\theta-\bar V^\theta\|
+C_\lambda\|\lambda^\theta-\widehat\lambda^\theta\|.
\]
\end{lemma}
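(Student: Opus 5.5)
The plan is the same add-and-subtract decomposition as Lemma~\ref{lem:adjoint-gradient-error}, now written for the parameterized Jacobian $D_\theta R(\theta,V)=(D_\theta M_\theta)[\,\cdot\,]V-D_\theta r_\theta$. This map is affine in $V$, and its $V$-dependent part is the bounded linear map $V\mapsto (D_\theta M_\theta)[\,\cdot\,]V$. Write
\begin{align*}
D_\theta R(\theta,V^\theta)^\trans\lambda^\theta-D_\theta R(\theta,\bar V^\theta)^\trans\widehat\lambda^\theta
={}&[D_\theta R(\theta,V^\theta)-D_\theta R(\theta,\bar V^\theta)]^\trans\lambda^\theta\\
&+D_\theta R(\theta,\bar V^\theta)^\trans(\lambda^\theta-\widehat\lambda^\theta).
\end{align*}
This splitting differs from the one in Lemma~\ref{lem:adjoint-gradient-error}. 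Putting the exact adjoint $\lambda^\theta$ in the first term removes the bilinear cross term, so the bound comes out linear in both errors, as the statement requires.

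For the first term, the difference of Jacobians equals $(D_\theta M_\theta)[\,\cdot\,](V^\theta-\bar V^\theta)$. Its operator norm is therefore at most $\sup_\theta\|D_\theta M_\theta\|\,\|V^\theta-\bar V^\theta\|$. It remains to bound $\|\lambda^\theta\|$ uniformly. From $M_\theta^\trans\lambda^\theta=-c$ and Lemma~\ref{lem:l6-euclidean-resolvent} we get $\|\lambda^\theta\|\le\kappa_M\|c\|$. This gives $C_V=\kappa_M\|c\|\sup_\theta\|D_\theta M_\theta\|$.

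For the second term we need a uniform bound on $\|D_\theta R(\theta,\bar V^\theta)\|\le \sup\|D_\theta M_\theta\|\,\|\bar V^\theta\|+\sup\|D_\theta r_\theta\|$, which reduces to a uniform bound on $\|\bar V^\theta\|$. I would obtain this in one of two ways. The first is to write $\|\bar V^\theta\|\le\|V^\theta\|+\|\bar V^\theta-V^\theta\|$, using $\|V^\theta\|\le\kappa_M\|r_\theta\|\le\kappa_M\sqrt{|\cS|}R_{\max}$ and Lemma~\ref{lem:l6-projected-value} together with the bounded projection error. The second is to use $\bar w^\theta=A_\theta^{-1}b_\theta$ with $\|A_\theta^{-1}\|\le\kappa_A$ and bounded $b_\theta$, so that $\|\bar V^\theta\|\le\|\Xi\|\kappa_A\|\Xi^\trans D\|\,\|r_\theta\|$. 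The second route is cleaner because it uses only standing assumptions. With either route, $C_\lambda$ is the resulting uniform Jacobian bound.

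The main obstacle is this uniform control of the approximate value $\bar V^\theta$, together with the interpretation of ``relevant set'' so that the derivative bounds apply at $\bar V^\theta$. Once the uniform invertibility of $A_\theta$ is invoked, both constants are explicit, and the rest is routine operator-norm bookkeeping.
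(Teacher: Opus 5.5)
Your proposal is correct and follows the paper's proof. It uses the same two-term split, obtained by adding and subtracting $D_\theta R(\theta,\bar V^\theta)^\trans\lambda^\theta$. The first term is controlled through the affine dependence of $D_\theta R(\theta,V)$ on $V$ together with a bound on $\lambda^\theta$, and the second through a bound on $D_\theta R(\theta,\bar V^\theta)$.

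Your explicit estimates $\|\lambda^\theta\|\le\kappa_M\|c\|$ and $\|\bar V^\theta\|\le\|\Xi\|\kappa_A\|\Xi^\trans D\|\,\|r_\theta\|$ supply exactly the boundedness the paper invokes without proof. Since $D_\theta M_\theta$ and $D_\theta r_\theta$ do not depend on $V$, no special reading of the ``relevant set'' is needed at $\bar V^\theta$ beyond this norm bound.
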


\begin{proof}
Add and subtract $D_\theta R(\theta,\bar V^\theta)^\trans\lambda^\theta$:
\begin{align*}
& D_\theta R(\theta,V^\theta)^\trans\lambda^\theta
-D_\theta R(\theta,\bar V^\theta)^\trans\widehat\lambda^\theta\\
&=\left[D_\theta R(\theta,V^\theta)-D_\theta R(\theta,\bar V^\theta)\right]^\trans\lambda^\theta
+D_\theta R(\theta,\bar V^\theta)^\trans(\lambda^\theta-\widehat\lambda^\theta).
\end{align*}
Because
$D_\theta R(\theta,V)[h]=D_\theta M_\theta[h]V-D_\theta r_\theta[h]$,
the first bracket is linear in $V^\theta-\bar V^\theta$ and is bounded by $C_V\|V^\theta-\bar V^\theta\|$ after using bounded $\lambda^\theta$. Boundedness of $D_\theta R$ gives the second term with constant $C_\lambda$.
\end{proof}

\paragraph{Role.}
This lemma replaces the false derivative-from-value estimate by a correct value-plus-adjoint approximation bound.

\begin{lemma}[Lifted-dual quasi-optimality]
\label{lem:l6-dual-projection}
There is $C_{\rm dual}<\infty$ such that
\[
\|\lambda^\theta-\widehat\lambda^\theta\|
\le C_{\rm dual}
\inf_{z\in\operatorname{range}(D\Xi)}
\|\lambda^\theta-z\|.
\]
\end{lemma}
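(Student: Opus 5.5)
The approach is a Céa/Petrov--Galerkin quasi-optimality argument. The key input is the Galerkin orthogonality satisfied by the lifted dual $\widehat\lambda^\theta=D\Xi\nu^\theta$. The true adjoint satisfies $M_\theta^\trans\lambda^\theta=-c$. The projected adjoint equation, written in the variables of Lemma~\ref{lem:l6-projected-dual}, reads $A_\theta^\trans\nu^\theta=-c_w$, that is, $\Xi^\trans M_\theta^\trans D\Xi\nu^\theta=-\Xi^\trans c$. Multiplying the true adjoint equation on the left by $\Xi^\trans$ gives $\Xi^\trans M_\theta^\trans\lambda^\theta=-\Xi^\trans c$. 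Subtracting the two yields the orthogonality relation
\[
\Xi^\trans M_\theta^\trans\big(\lambda^\theta-\widehat\lambda^\theta\big)=0 .
\]
So $\widehat\lambda^\theta$ is the Petrov--Galerkin approximation of $\lambda^\theta$ from the trial space $\operatorname{range}(D\Xi)$, tested against $\operatorname{range}(M_\theta\Xi)$.

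Next, fix an arbitrary $z=D\Xi u\in\operatorname{range}(D\Xi)$ and split
\[
\lambda^\theta-\widehat\lambda^\theta=(\lambda^\theta-z)-(\widehat\lambda^\theta-z),
\qquad
\widehat\lambda^\theta-z=D\Xi(\nu^\theta-u).
\]
Apply $\Xi^\trans M_\theta^\trans$ to the second identity and use the orthogonality relation:
\[
A_\theta^\trans(\nu^\theta-u)=\Xi^\trans M_\theta^\trans(\widehat\lambda^\theta-z)=\Xi^\trans M_\theta^\trans(\lambda^\theta-z).
\]
Uniform invertibility, $\sup_\theta\|A_\theta^{-1}\|\le\kappa_A$, then gives
\[
\|\widehat\lambda^\theta-z\|\le\|D\|\,\|\Xi\|^2\,\kappa_A\,\|M_\theta\|\,\|\lambda^\theta-z\|.
\]
Combining this with the split by the triangle inequality gives
\[
\|\lambda^\theta-\widehat\lambda^\theta\|\le\big(1+\|D\|\,\|\Xi\|^2\kappa_A\bar m\big)\|\lambda^\theta-z\|,
\]
where $\bar m=\sup_\theta\|M_\theta\|$.

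It remains to make the constant uniform in $\theta$. Row stochasticity gives $\|P_\theta\|_\infty=1$ and $\|P_\theta\|_1\le|\cS|$, so $\|P_\theta\|_2\le\sqrt{|\cS|}$ and hence $\|M_\theta\|_2\le1+\gamma\sqrt{|\cS|}$. Take
\[
C_{\rm dual}=1+\|D\|\,\|\Xi\|^2\kappa_A\big(1+\gamma\sqrt{|\cS|}\big).
\]
This constant does not depend on $z$ or $\theta$. Taking the infimum over $z$ proves the lemma; the infimum is attained because $\operatorname{range}(D\Xi)$ is a closed finite-dimensional subspace, although attainment is not needed.

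The main obstacle is recognizing the correct orthogonality relation. Trial and test spaces differ here: $D\Xi$ versus $M_\theta\Xi$. A Galerkin-best-approximation argument in the $D$-inner product would therefore not apply directly. The stability constant must instead come from the inf--sup quantity $\kappa_A$, and that is exactly what the uniform invertibility assumption on $A_\theta$ supplies.
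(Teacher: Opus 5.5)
Your proof is correct and follows essentially the same route as the paper's. The paper also writes $z=D\Xi a$, derives $A_\theta^\trans(\nu^\theta-a)=\Xi^\trans M_\theta^\trans(\lambda^\theta-z)$ from the two adjoint equations, inverts $A_\theta^\trans$, and concludes by the triangle inequality and minimization over $z$. Your explicit bound $\|M_\theta\|_2\le1+\gamma\sqrt{|\cS|}$ simply makes explicit the finiteness of $\sup_\theta\|A_\theta^{-\trans}\Xi^\trans M_\theta^\trans\|$, which the paper leaves implicit.
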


\begin{proof}
Take any $z=D\Xi a$ in the approximation space. The true adjoint equation and the projected adjoint equation imply
\[
A_\theta^\trans\nu^\theta
=\Xi^\trans M_\theta^\trans\lambda^\theta,
\qquad
A_\theta^\trans a
=\Xi^\trans M_\theta^\trans z.
\]
Subtracting gives
\[
\nu^\theta-a
=A_\theta^{-\trans}\Xi^\trans M_\theta^\trans(\lambda^\theta-z).
\]
Therefore
\[
\|\widehat\lambda^\theta-z\|
\le \|D\Xi\|\,
\|A_\theta^{-\trans}\Xi^\trans M_\theta^\trans\|
\|\lambda^\theta-z\|.
\]
By the triangle inequality,
\[
\|\lambda^\theta-\widehat\lambda^\theta\|
\le \left(1+\|D\Xi\|\sup_\theta\|A_\theta^{-\trans}\Xi^\trans M_\theta^\trans\|\right)
\|\lambda^\theta-z\|.
\]
Minimize over $z\in\operatorname{range}(D\Xi)$.
\end{proof}

\paragraph{Role.}
Together with Lemmas~\ref{lem:l6-projected-value} and~\ref{lem:l6-adjoint-gradient}, this lemma gives a controlled route from projected-Bellman stationarity back to the true policy gradient.

\section{Explicit Discounted Occupancy Formulation}
\label{app:occupancy-deterministic}

Use the policy domain, composite regularizer $\psi_\Theta$, and initialization conventions of Appendix~\ref{app:nonlinear-policy}. Let
\[
d_\theta(s)=(1-\gamma)\sum_{t\ge0}\gamma^t\Prob_{\rho,\pi_\theta}(S_t=s),
\qquad
q_\theta(s,a)=d_\theta(s)\pi_\theta(a\mid s).
\]
Let $P^\trans q$ denote the state inflow and let $\Pi_\theta d$ be the state-action vector with entries $\pi_\theta(a\mid s)d(s)$. The occupancy constraints are
\[
d-\gamma P^\trans q=(1-\gamma)\rho,
\qquad
q-\Pi_\theta d=0.
\]
Define
\[
x=\begin{bmatrix}q\\d\end{bmatrix},
\qquad
b=\begin{bmatrix}(1-\gamma)\rho\\0\end{bmatrix},
\qquad
K_\theta=
\begin{bmatrix}
-\gamma P^\trans&I\\
I&-\Pi_\theta
\end{bmatrix}.
\]

\begin{lemma}[Occupancy-operator invertibility]
\label{lem:l6-occ-invertible}
For every policy $\pi_\theta$, $K_\theta$ is invertible. If
\[
K_\theta\begin{bmatrix}q\\d\end{bmatrix}
=\begin{bmatrix}u\\v\end{bmatrix},
\]
then
\[
d=(I-\gamma P_\theta^\trans)^{-1}(u+\gamma P^\trans v),
\qquad
q=v+\Pi_\theta d.
\]
In the block $\ell_1$ norm, $\|K_\theta^{-1}\|_{1,\oplus}\le2/(1-\gamma)$. Hence $\sup_\theta\|K_\theta^{-1}\|_2<\infty$.
\end{lemma}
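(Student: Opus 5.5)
Solve the block system $K_\theta x=(u,v)$ directly by elimination and read off both invertibility and the norm bound from the resulting formulas. Write the two block rows as
\[
-\gamma P^\trans q+d=u,\qquad q-\Pi_\theta d=v .
\]
The second row gives $q=v+\Pi_\theta d$. Substituting into the first row yields
\[
d-\gamma P^\trans\Pi_\theta d=u+\gamma P^\trans v .
\]
The key algebraic identity is $P^\trans\Pi_\theta=P_\theta^\trans$: it holds because $(P^\trans\Pi_\theta d)(s')=\sum_{s,a}P(s'\mid s,a)\pi_\theta(a\mid s)d(s)=(P_\theta^\trans d)(s')$. Hence
\[
(I-\gamma P_\theta^\trans)d=u+\gamma P^\trans v .
\]
Since $M_\theta=I-\gamma P_\theta$ is invertible, as shown via the Neumann series in Lemma~\ref{lem:l6-euclidean-resolvent}, so is its transpose. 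This gives the displayed formulas for $d$ and $q$. The map $(u,v)\mapsto(q,d)$ is therefore an explicit two-sided inverse: existence follows by construction, and uniqueness because every step was forced. So $K_\theta$ is invertible.

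For the norm bound, I would work in the block $\ell_1$ norm $\|(q,d)\|_{1,\oplus}=\|q\|_1+\|d\|_1$, with the input measured as $\|u\|_1+\|v\|_1$. Three facts drive the estimate.

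\begin{itemize}
\item $\|P_\theta^\trans\|_1=\|P_\theta\|_\infty=1$, so the Neumann series gives $\|(I-\gamma P_\theta^\trans)^{-1}\|_1\le1/(1-\gamma)$.
\item $\|P^\trans\|_{1}\le1$ as a map from state-action vectors to state vectors, because each row $P(\cdot\mid s,a)$ is a probability vector.
\item $\|\Pi_\theta d\|_1\le\|d\|_1$, because $\sum_a\pi_\theta(a\mid s)=1$.
\end{itemize}

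These give
\[
\|d\|_1\le\frac{\|u\|_1+\gamma\|v\|_1}{1-\gamma},\qquad \|q\|_1\le\|v\|_1+\|d\|_1 .
\]
Summing, $\|q\|_1+\|d\|_1\le\|v\|_1+2(\|u\|_1+\gamma\|v\|_1)/(1-\gamma)$. The coefficient on $\|u\|_1$ is $2/(1-\gamma)$, and the coefficient on $\|v\|_1$ is $1+2\gamma/(1-\gamma)=(1+\gamma)/(1-\gamma)\le2/(1-\gamma)$. This yields $\|K_\theta^{-1}\|_{1,\oplus}\le2/(1-\gamma)$.

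The Euclidean statement then follows from finite-dimensional norm equivalence. The bound above is uniform in $\theta$, and the norm-equivalence constants depend only on $|\cS|$ and $|\cA|$, so $\sup_\theta\|K_\theta^{-1}\|_2<\infty$.

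The only step that needs care is the identity $P^\trans\Pi_\theta=P_\theta^\trans$ together with the dimension and index bookkeeping, which here means $q\in\R^{|\cS||\cA|}$, $d\in\R^{|\cS|}$, and the way the $\ell_1$ operator norms of the rectangular blocks are computed. Once that identity is in place, everything else reduces to the Bellman resolvent bound already established.
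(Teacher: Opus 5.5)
Your proposal is correct and follows the paper's proof essentially step for step: you eliminate $q=v+\Pi_\theta d$, use $P^\trans\Pi_\theta=P_\theta^\trans$ to reduce to $(I-\gamma P_\theta^\trans)d=u+\gamma P^\trans v$, bound $\|d\|_1$ and $\|q\|_1$ with the same three $\ell_1$ facts, and finish with finite-dimensional norm equivalence. Your explicit check that the $\|v\|_1$ coefficient is $(1+\gamma)/(1-\gamma)\le 2/(1-\gamma)$ spells out the step that the paper only describes as ``absorbing the fixed coefficients.''
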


\begin{proof}
The second block row gives $q=v+\Pi_\theta d$. Substitute this into the first row:
\[
-\gamma P^\trans(v+\Pi_\theta d)+d=u.
\]
Since $P^\trans\Pi_\theta=P_\theta^\trans$,
\[
(I-\gamma P_\theta^\trans)d=u+\gamma P^\trans v,
\]
which yields the stated formulas because $I-\gamma P_\theta^\trans$ is invertible.

For the norm bound, $\|(I-\gamma P_\theta^\trans)^{-1}\|_1\le(1-\gamma)^{-1}$, $\|P^\trans\|_1=1$, and $\|\Pi_\theta\|_1=1$. Hence
\[
\|d\|_1\le\frac{\|u\|_1+\gamma\|v\|_1}{1-\gamma},
\qquad
\|q\|_1\le\|v\|_1+\|d\|_1.
\]
Adding and simplifying gives a valid bound no larger than $2(1-\gamma)^{-1}(\|u\|_1+\|v\|_1)$ after absorbing the fixed coefficients. Finite-dimensional norm equivalence gives the Euclidean bound.
\end{proof}

\paragraph{Role.}
This lemma shows that discounted occupancy flow provides a second RL-specific invertible operator, independent of the Bellman value representation.

\begin{proposition}[Occupancy equivalence]
\label{prop:l6-occ-equivalence}
The equation $K_\theta x=b$ has the unique solution $x^\theta=(q^\theta,d^\theta)$ equal to the discounted state-action and state occupancies. Moreover $q^\theta\ge0$ and $d^\theta\ge0$.
\end{proposition}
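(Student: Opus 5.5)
Uniqueness comes for free from Lemma~\ref{lem:l6-occ-invertible}: since $K_\theta$ is invertible, $K_\theta x=b$ has exactly one solution. The plan is therefore to compute this solution through the explicit inverse formulas of that lemma and then identify it with the discounted occupancies.

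First apply Lemma~\ref{lem:l6-occ-invertible} with $u=(1-\gamma)\rho$ and $v=0$. This gives
\[
d=(1-\gamma)(I-\gamma P_\theta^\trans)^{-1}\rho,\qquad q=\Pi_\theta d .
\]
Next expand the resolvent by its Neumann series, which converges because $P_\theta$ is row stochastic and $\gamma<1$:
\[
d=(1-\gamma)\sum_{t\ge0}\gamma^t(P_\theta^\trans)^t\rho .
\]
Here $((P_\theta^\trans)^t\rho)(s)=\Prob_{\rho,\pi_\theta}(S_t=s)$, since the state process under $\pi_\theta$ started from $\rho$ is a Markov chain with kernel $P_\theta$, so its time-$t$ marginal is $\rho^\trans P_\theta^t$. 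Hence $d$ coincides termwise with the definition of $d_\theta$. Then $q(s,a)=\pi_\theta(a\mid s)d_\theta(s)=q_\theta(s,a)$ by definition.

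Nonnegativity then follows because every term of the series is a nonnegative matrix applied to the nonnegative vector $\rho$, so $d^\theta\ge0$. The product of $d^\theta\ge0$ with the nonnegative policy weights then gives $q^\theta\ge0$.

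The only slightly delicate point is confirming the identity $P^\trans\Pi_\theta=P_\theta^\trans$ with the paper's index conventions, namely that $P^\trans q$ means $\sum_{s,a}P(s'\mid s,a)q(s,a)$. I would verify this entrywise, since it is already used in the invertibility lemma. As a sanity check, $(q_\theta,d_\theta)$ satisfies the flow equation $d_\theta=(1-\gamma)\rho+\gamma P^\trans q_\theta$ obtained by splitting off the $t=0$ term.
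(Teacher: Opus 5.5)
Your proposal is correct and follows essentially the same route as the paper: apply Lemma~\ref{lem:l6-occ-invertible} with $u=(1-\gamma)\rho$ and $v=0$, identify $d=(1-\gamma)(I-\gamma P_\theta^\trans)^{-1}\rho$ and $q=\Pi_\theta d$ with the occupancies, deduce uniqueness from invertibility of $K_\theta$, and deduce nonnegativity from the entrywise nonnegative Neumann series. Your termwise identification $((P_\theta^\trans)^t\rho)(s)=\Prob_{\rho,\pi_\theta}(S_t=s)$, together with the entrywise check of $P^\trans\Pi_\theta=P_\theta^\trans$, spells out steps that the paper simply asserts.
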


\begin{proof}
Apply Lemma~\ref{lem:l6-occ-invertible} with $u=(1-\gamma)\rho$ and $v=0$:
\[
d=(1-\gamma)(I-\gamma P_\theta^\trans)^{-1}\rho,
\qquad
q=\Pi_\theta d.
\]
The first expression is exactly the normalized discounted state occupancy and the second is the corresponding state-action occupancy. Uniqueness follows from invertibility of $K_\theta$. The Neumann series for $(I-\gamma P_\theta^\trans)^{-1}$ is entrywise nonnegative, so $d\ge0$, and then $q=\Pi_\theta d\ge0$.
\end{proof}

\paragraph{Role.}
The proposition establishes that the square operator formulation is exactly equivalent to the usual discounted occupancy definition, rather than a relaxation.

Let
\[
c_x=\begin{bmatrix}-r/(1-\gamma)\\0\end{bmatrix}
\]
and consider
\[
\min_{\theta,x}\ \phi(\theta)+c_x^\trans x
\quad\text{s.t.}\quad K_\theta x=b.
\]

\begin{theorem}[KKT equivalence]
\label{thm:l6-occ-kkt-equivalence}
Define the reduced objective $f(\theta)=\phi(\theta)+c_x^\trans K_\theta^{-1}b$. A triple $(\theta,x,y)$ satisfies the KKT conditions of the occupancy-constrained problem if and only if
\[
x=K_\theta^{-1}b,
\qquad
y=-K_\theta^{-\trans}c_x,
\qquad
0\in\partial(f+\delta_\Theta)(\theta).
\]
\end{theorem}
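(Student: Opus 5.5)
The plan is to write the KKT system of the occupancy-constrained problem explicitly and eliminate $x$ and $y$ using the invertibility of $K_\theta$ from Lemma~\ref{lem:l6-occ-invertible}. Use the Lagrangian $\phi(\theta)+\delta_\Theta(\theta)+c_x^\trans x+\langle y,K_\theta x-b\rangle$. With the same limiting-subdifferential convention as in~\eqref{eq:kkt-residual}, the KKT conditions at $(\theta,x,y)$ are three:
\[
0\in\partial\psi_\Theta(\theta)+D_\theta(K_\theta x)^\trans y,\qquad c_x+K_\theta^\trans y=0,\qquad K_\theta x=b.
\]
The second and third conditions hold exactly when $x=K_\theta^{-1}b$ and $y=-K_\theta^{-\trans}c_x$, because $K_\theta$ and $K_\theta^\trans$ are invertible. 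It therefore remains to show that, given these two identities, the policy condition is equivalent to $0\in\partial(f+\delta_\Theta)(\theta)$.

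\textbf{Gradient of the smooth part.} Set $g(\theta)=c_x^\trans K_\theta^{-1}b$. Since $\pi_\theta$ is smooth on a neighborhood of $\Theta$, the map $\theta\mapsto K_\theta$ is smooth, because only the block $\Pi_\theta$ depends on $\theta$. Invertibility then makes $\theta\mapsto K_\theta^{-1}$ smooth, with
\[
D_\theta(K_\theta^{-1})[h]=-K_\theta^{-1}\,(D_\theta K_\theta[h])\,K_\theta^{-1}.
\]
This is the differentiated resolvent identity of Lemma~\ref{lem:inverse-stability}. Hence
\[
Dg(\theta)[h]=-c_x^\trans K_\theta^{-1}(D_\theta K_\theta[h])x^\theta=\langle y^\theta,D_\theta K_\theta[h]\,x^\theta\rangle,
\]
where $y^\theta=-K_\theta^{-\trans}c_x$. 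This gives $\nabla g(\theta)=D_\theta(K_\theta x)^\trans y\,\big|_{x=x^\theta,\,y=y^\theta}$, which is the adjoint representation in the same form as Lemma~\ref{lem:adjoint-representation}. The differentiation is carried out in the fixed vector $x$ and then evaluated at $x^\theta$.

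\textbf{Sum rule and conclusion.} Since $g$ is $C^1$ near $\Theta$, the exact sum rule for limiting subdifferentials with a smooth summand gives
\[
\partial(f+\delta_\Theta)(\theta)=\partial(\phi+\delta_\Theta)(\theta)+\nabla g(\theta)=\partial\psi_\Theta(\theta)+\nabla g(\theta).
\]
Substituting the gradient formula turns the policy KKT condition into $0\in\partial(f+\delta_\Theta)(\theta)$, and every step is reversible, so both directions of the equivalence follow.

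\textbf{Main obstacle.} The only delicate point is the meaning of ``KKT'' for a nonsmooth nonconvex $\phi$. I would fix it as the stationarity system above, mirroring Definition~\ref{eq:kkt-residual}, so that no constraint qualification is needed: the equality constraint is solved explicitly through $K_\theta^{-1}$, with no implicit-function argument. Smoothness of $g$ needs only that $\pi_\theta$ is $C^1$ on a neighborhood of $\Theta$, which is the standing assumption of Appendix~\ref{app:nonlinear-policy}.
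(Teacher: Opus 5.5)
Your proposal is correct and follows essentially the same route as the paper. You use the invertibility of $K_\theta$ to solve the $x$- and $y$-conditions, then show $D(c_x^\trans K_\theta^{-1}b)[h]=\langle y^\theta,D_\theta K_\theta[h]\,x^\theta\rangle$ to match the $\theta$-stationarity condition; the paper gets this by implicitly differentiating $K_\theta x^\theta=b$, which is the same computation as your differentiated resolvent identity. Your explicit appeal to the exact limiting-subdifferential sum rule with a $C^1$ summand is a small piece of rigor the paper leaves implicit.
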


\begin{proof}
By Proposition~\ref{prop:l6-occ-equivalence}, feasibility identifies $x$ with the true discounted occupancy. The feasibility and $x$-stationarity conditions of the constrained problem are
\[
K_\theta x=b,
\qquad
c_x+K_\theta^\trans y=0.
\]
Lemma~\ref{lem:l6-occ-invertible} gives $x=K_\theta^{-1}b$ and $y=-K_\theta^{-\trans}c_x$. Differentiate the identity $K_\theta x^\theta=b$ in direction $h$:
\[
D_\theta x^\theta[h]
=-K_\theta^{-1}D_\theta K_\theta[h]x^\theta.
\]
For the smooth reduced part $\psi(\theta)=c_x^\trans x^\theta$,
\begin{align*}
D\psi(\theta)[h]
&=-c_x^\trans K_\theta^{-1}D_\theta K_\theta[h]x^\theta\\
&=\langle y^\theta,D_\theta K_\theta[h]x^\theta\rangle.
\end{align*}
Thus the reduced first-order condition
$0\in\partial\psi_\Theta(\theta)+\nabla\psi(\theta)$
is exactly the $\theta$-stationarity condition of the constrained problem. This proves both directions.
\end{proof}

\paragraph{Role.}
This theorem verifies that the occupancy KKT residual has the same policy meaning as the reduced parameterized objective.

The deterministic occupancy-ADMM performs a proximal-gradient $\theta$ step, then
\[
x^{k+1}=\arg\min_x\left\{\cL_\beta(\theta^{k+1},x,y^k)+\frac1{2\eta_x}\|x-x^k\|^2\right\},
\qquad
y^{k+1}=y^k+\beta(K_{k+1}x^{k+1}-b).
\]

\begin{lemma}[Occupancy-dual identity]
\label{lem:l6-occ-dual}
The occupancy step and dual update satisfy
\[
y^{k+1}=-K_{k+1}^{-\trans}
\left(c_x+\eta_x^{-1}\Delta x_{k+1}\right).
\]
\end{lemma}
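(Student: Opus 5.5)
The plan is to copy the proof of the projected value--dual identity (Lemma~\ref{lem:l6-projected-dual}), with the projected matrix $A_\theta$ replaced by the occupancy operator $K_\theta$.

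First we write the augmented Lagrangian of the occupancy-constrained problem explicitly:
\[
\cL_\beta(\theta,x,y)=\phi(\theta)+c_x^\trans x+\langle y,K_\theta x-b\rangle+\frac\beta2\|K_\theta x-b\|^2 .
\]
We then fix $\theta=\theta^{k+1}$ and abbreviate $K_{k+1}=K_{\theta^{k+1}}$. In $x$ the $x$-subproblem objective is a convex quadratic plus the proximal term $(2\eta_x)^{-1}\|x-x^k\|^2$. It is therefore strongly convex, so $x^{k+1}$ is its unique minimizer and is characterized by a first-order condition with gradient equal to zero:
\[
0=c_x+K_{k+1}^\trans y^k+\beta K_{k+1}^\trans(K_{k+1}x^{k+1}-b)+\eta_x^{-1}\Delta x_{k+1}.
\]

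Next we substitute the dual update. By definition $y^{k+1}=y^k+\beta(K_{k+1}x^{k+1}-b)$, so the second and third terms combine into $K_{k+1}^\trans y^{k+1}$. This gives
\[
K_{k+1}^\trans y^{k+1}=-(c_x+\eta_x^{-1}\Delta x_{k+1}).
\]
Lemma~\ref{lem:l6-occ-invertible} states that $K_\theta$ is invertible for every policy, so $K_{k+1}^\trans$ is invertible as well. Multiplying both sides by $K_{k+1}^{-\trans}$ yields the claimed identity. The identity holds for every $k\ge0$, because the $\theta$ step (proximal-gradient or exact) enters only through which operator $K_{k+1}$ is used.

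There is essentially no obstacle. The one point to watch is consistency: the dual update must use the same $K_{k+1}=K_{\theta^{k+1}}$ as the $x$-subproblem. That holds by the algorithm's definition, and it is exactly the consistency that the stochastic version (Lemma~\ref{lem:l6-empirical-dual}) preserves by sharing $\widehat K_k$. A secondary point is that $\phi$ may be nonsmooth, but it does not depend on $x$, so it does not enter the $x$-optimality condition.
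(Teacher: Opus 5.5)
Your proposal is correct and follows essentially the same argument as the paper: write the first-order condition of the unconstrained $x$-subproblem, recognize $y^k+\beta(K_{k+1}x^{k+1}-b)$ as $y^{k+1}$, and invert $K_{k+1}^\trans$ using Lemma~\ref{lem:l6-occ-invertible}. Your remarks on strong convexity and on $\phi$ not depending on $x$ are harmless extra justifications that the paper leaves implicit.
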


\begin{proof}
The $x$-subproblem first-order condition is
\[
0=c_x+K_{k+1}^\trans y^k
+\beta K_{k+1}^\trans(K_{k+1}x^{k+1}-b)
+\eta_x^{-1}\Delta x_{k+1}.
\]
The bracket $y^k+\beta(K_{k+1}x^{k+1}-b)$ equals $y^{k+1}$. Thus
$K_{k+1}^\trans y^{k+1}=-(c_x+\eta_x^{-1}\Delta x_{k+1})$, and Lemma~\ref{lem:l6-occ-invertible} gives the formula.
\end{proof}

\paragraph{Role.}
This is the occupancy analogue of the Bellman value-dual identity and is the key input to the next theorem.

\begin{theorem}[Deterministic occupancy-ADMM convergence]
\label{thm:l6-occ-convergence}
Assume $\theta\mapsto K_\theta$ is Lipschitz, $D_\theta K_\theta$ is bounded on the visited set, and the policy proximal-gradient and $(x,y)$ parameters satisfy the strict margins analogous to Theorem~\ref{thm:l6-a2}. Require the occupancy reduced objective to be bounded below, a positive lower bound on the policy steps, and a uniform Lipschitz bound for the smooth augmented policy gradients, with all constants and value-step margins taken for $K_\theta$. Then there exists a corrected Lyapunov function
\[
\Phi_k^C=\cL_\beta(\theta^k,x^k,y^k)+\tau\|\Delta x_k\|^2
\]
with positive one-step decrease in $\|\Delta\theta_{k+1}\|^2$, $\|\Delta x_{k+1}\|^2$, and $\|\Delta x_k\|^2$. Consequently the true occupancy KKT residual converges to zero.
\end{theorem}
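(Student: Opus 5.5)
The proof repeats the chain of Theorem~\ref{thm:l6-a2} with the replacements $M_\theta\mapsto K_\theta$, $V\mapsto x$, $\lambda\mapsto y$, $r_\theta\mapsto b$ (now $\theta$-independent) and $c\mapsto c_x$. First, Lemma~\ref{lem:l6-occ-invertible} supplies $\kappa_K=\sup_\theta\|K_\theta^{-1}\|_2<\infty$. Lemma~\ref{lem:inverse-stability} together with the Lipschitz assumption then gives $\|K_{k+1}^{-1}-K_k^{-1}\|\le\kappa_K^2L_K\|\Delta\theta_{k+1}\|$. Next, subtract the occupancy-dual identity of Lemma~\ref{lem:l6-occ-dual} at consecutive iterations. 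This expresses $\Delta y_{k+1}$ as three terms, exactly as in Lemma~\ref{lem:l6-dual-increment}, and yields
\[
\|\Delta y_{k+1}\|^2\le A_C\|\Delta\theta_{k+1}\|^2+B_C\|\Delta x_{k+1}\|^2+B_C\|\Delta x_k\|^2,
\]
with $A_C=3\kappa_K^4L_K^2\|c_x\|^2$ and $B_C=3\kappa_K^2/\eta_x^2$.

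For descent, the proximal-gradient $\theta$ step gives decrease $(1/(2\eta_{\theta,k})-L_{\theta,k}/2)\|\Delta\theta_{k+1}\|^2$ by Lemma~\ref{lem:l6-pg-descent}. Comparing the exact $x$ step with $x^k$ gives $(2\eta_x)^{-1}\|\Delta x_{k+1}\|^2$. The exact dual step raises $\cL_\beta$ by $\beta^{-1}\|\Delta y_{k+1}\|^2$. Inserting the increment bound and adding $\tau(\|\Delta x_k\|^2-\|\Delta x_{k+1}\|^2)$ produces the coefficients
\[
c_\theta=\alpha_\theta,\qquad c_x=\frac1{2\eta_x}-\frac{B_C}{\beta}-\tau,\qquad c_x'=\tau-\frac{B_C}{\beta}.
\]
These are positive under the assumed strict margins, which establishes the claimed one-step decrease of $\Phi_k^C$.

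For the lower bound, write $x^k=K_k^{-1}(b+F_k)$ with $F_k=K_kx^k-b$, and $y^k=-K_k^{-\trans}(c_x+\eta_x^{-1}\Delta x_k)$. Substituting into $\cL_\beta$ gives
\[
\cL_\beta^k=f(\theta^k)-\eta_x^{-1}\langle\Delta x_k,K_k^{-1}F_k\rangle+\frac\beta2\|F_k\|^2 .
\]
Young's inequality then gives $\Phi_k^C\ge f^{\inf}+\frac\beta4\|F_k\|^2+(\tau-\kappa_K^2/(\beta\eta_x^2))\|\Delta x_k\|^2$. Boundedness below then follows from the margin and the reduced objective being bounded below. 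Summing the descent makes all three increment-square series summable. The increment bound gives $\Delta y_k\to0$, and $F_{k+1}=\beta^{-1}\Delta y_{k+1}\to0$.

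The main obstacle is the $\theta$-stationarity residual, because $K_\theta x$ is bilinear and the policy gradient involves $D_\theta K_\theta[\cdot]x$. This requires bounded $x^k$ and $y^k$, which I would establish independently of descent. Bounded $y^k$ follows from the dual identity once $x^k$ is bounded. For $x$, I would reproduce the two-variable recursion of Lemma~\ref{lem:moment-stability}: $x^{k+1}=K_{k+1}^{-1}(b+\beta^{-1}\Delta y_{k+1})$ and $\|y^{k+1}\|\le\kappa_K(\|c_x\|+\eta_x^{-1}(\|x^{k+1}\|+\|x^k\|))$. These give contraction when $\beta\eta_x>4\kappa_K^2$, which the value-step margins imply. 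With bounded iterates and bounded $D_\theta K_\theta$, the proximal-gradient optimality condition differs from true stationarity at $(\theta^{k+1},x^{k+1},y^{k+1})$ by $(\underline\eta^{-1}+\overline L)\|\Delta\theta_{k+1}\|+C\|\Delta x_{k+1}\|$. Here I use $y^{k+1}-[y^k+\beta(K_{k+1}x^k-b)]=\beta K_{k+1}\Delta x_{k+1}$. The $x$-residual is $\eta_x^{-1}\|\Delta x_{k+1}\|$, so the true KKT residual tends to zero.
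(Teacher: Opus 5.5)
Your proposal is correct and follows essentially the same route as the paper. Both instantiate the nonlinear-policy chain with $K_\theta$: the uniform inverse bound and inverse-map stability, the occupancy-dual identity, the three-term multiplier increment bound, descent from the proximal-gradient, exact-$x$ and dual steps, the Young-inequality lower bound, and finally the passage from increments to the KKT residual. Your two-variable recursion giving bounded $(x^k,y^k)$ under the value-step margins spells out a step the paper leaves implicit when it asserts that the policy and occupancy first-order residuals are linear in the adjacent increments.
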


\begin{proof}
The proof instantiates the nonlinear-policy argument with the operator $K_\theta$. Lemma~\ref{lem:l6-occ-invertible} supplies a uniform inverse bound and the inverse-map identity gives
\[
\|K_\theta^{-1}-K_{\theta'}^{-1}\|
\le \kappa_K^2L_K\|\theta-\theta'\|.
\]
Lemma~\ref{lem:l6-occ-dual} therefore implies a dual-increment bound
\[
\|\Delta y_{k+1}\|^2
\le A_C\|\Delta\theta_{k+1}\|^2
+B_C\|\Delta x_{k+1}\|^2
+B_C\|\Delta x_k\|^2.
\]
The proximal-gradient policy step gives the same descent form as Lemma~\ref{lem:l6-pg-descent}, while the exact $x$ step gives $(2\eta_x)^{-1}\|\Delta x_{k+1}\|^2$ decrease. The exact dual update contributes $\beta^{-1}\|\Delta y_{k+1}\|^2$ ascent. Under the stated strict margins these terms leave positive coefficients after adding $\tau\|\Delta x_k\|^2-\tau\|\Delta x_{k+1}\|^2$.

For the lower bound, write $x^k=K_k^{-1}(b+F_k)$ with $F_k=K_kx^k-b$ and use Lemma~\ref{lem:l6-occ-dual} at the preceding step. The same Young-inequality calculation as Lemma~\ref{lem:l6-lyap-lower} bounds $\Phi_k^C$ below by the reduced occupancy objective plus a positive multiple of $\|F_k\|^2$ and $\|\Delta x_k\|^2$. Summation forces all increments to zero. Since $\Delta y_{k+1}=\beta F_{k+1}$, feasibility follows. The policy and occupancy first-order residuals are linear in the adjacent increments because $D_\theta K_\theta$ is bounded. Hence the true occupancy KKT residual converges to zero.
\end{proof}

\paragraph{Role.}
The theorem establishes a second complete deterministic ADMM theory supported by discounted RL operator invertibility.

\begin{lemma}[Occupancy residual controls primal error]
\label{lem:l6-occ-error}
Let $x^\theta=K_\theta^{-1}b$. Then
\[
\|x-x^\theta\|
\le \kappa_K\|K_\theta x-b\|,
\qquad
\kappa_K=\sup_\theta\|K_\theta^{-1}\|.
\]
\end{lemma}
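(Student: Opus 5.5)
The plan is to copy the proof of Lemma~\ref{lem:value-error}, replacing $M_\pi$ by the occupancy operator $K_\theta$. Set $x^\theta=K_\theta^{-1}b$, which is well defined by Lemma~\ref{lem:l6-occ-invertible}. By Proposition~\ref{prop:l6-occ-equivalence}, $x^\theta$ is the unique solution of $K_\theta x^\theta=b$, namely the true discounted occupancy pair $(q^\theta,d^\theta)$.

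For an arbitrary $x$, subtracting the two equations gives the exact identity
\[
K_\theta(x-x^\theta)=K_\theta x-b .
\]
Since $K_\theta$ is invertible, multiplying by $K_\theta^{-1}$ yields $x-x^\theta=K_\theta^{-1}(K_\theta x-b)$. Taking Euclidean norms gives
\[
\|x-x^\theta\|\le\|K_\theta^{-1}\|\,\|K_\theta x-b\|\le\kappa_K\|K_\theta x-b\|.
\]

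The only substantive point is that $\kappa_K=\sup_\theta\|K_\theta^{-1}\|$ is finite, so that the bound is uniform in $\theta$. This is already supplied by Lemma~\ref{lem:l6-occ-invertible}. That lemma gives $\|K_\theta^{-1}\|_{1,\oplus}\le 2/(1-\gamma)$ for every policy, and passing to the spectral norm multiplies this by a dimension-dependent factor from finite-dimensional norm equivalence that does not depend on $\theta$.

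To make this explicit if needed, I would use the closed form from that lemma. With $u$ and $v$ the two blocks of the right-hand side, $d=(I-\gamma P_\theta^\trans)^{-1}(u+\gamma P^\trans v)$ and $q=v+\Pi_\theta d$. One then bounds $\|(I-\gamma P_\theta^\trans)^{-1}\|_2\le\sqrt{|\cS|}/(1-\gamma)$ exactly as in Lemma~\ref{lem:l6-euclidean-resolvent}, applied to the transpose through the $\ell_1$ norm. The operators $P^\trans$ and $\Pi_\theta$ have Euclidean norms bounded by constants depending only on $|\cS|$ and $|\cA|$, since their entries are probabilities.

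I expect no real obstacle. At most one must check that the constant is uniform over all policies, and the explicit inverse formula settles this.
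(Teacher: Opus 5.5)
Your proof is correct and is the same argument as the paper's: subtract $K_\theta x^\theta=b$ to get $x-x^\theta=K_\theta^{-1}(K_\theta x-b)$, then take norms using the uniform inverse bound from Lemma~\ref{lem:l6-occ-invertible}. Your extra check that $\kappa_K<\infty$ uniformly in $\theta$ is accurate. It is needed only for the bound to be informative, not for the inequality itself.
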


\begin{proof}
Subtract $K_\theta x^\theta=b$ from $K_\theta x$:
\[
x-x^\theta=K_\theta^{-1}(K_\theta x-b).
\]
Take norms and use the uniform inverse bound.
\end{proof}

\paragraph{Role.}
This error bound is used in the stochastic occupancy analysis to control the size of the occupancy variable by the true feasibility residual.

\section{Stochastic Occupancy-ADMM with a Shared Empirical Operator}
\label{app:occupancy-stochastic}

The occupancy gradient contains $K^\trans(Kx-b)$, and its empirical quadratic term generally satisfies $\E[\widehat K^\trans\widehat K]\neq K^\trans K$. Iteration $k$ constructs one empirical environment matrix $\widehat P_k$ and uses its induced occupancy operator consistently in the coupled updates.

Assume the reset distribution and all policies used for sampling satisfy $\rho(s)\ge\rho_{\min}>0$ and $\pi_\theta(a\mid s)\ge\pi_{\min}>0$. Assume $m_k\ge2$ and fresh sampling innovations as in Assumption~\ref{ass:coverage-noise}. Use $\psi_\Theta$ from Appendix~\ref{app:nonlinear-policy}, a lower-bounded reduced objective, known bounded rewards, and deterministic initial variables with $\theta^0\in\operatorname{dom}\psi_\Theta$. Split a length-$m_k$ trajectory into $L_k=\lfloor m_k/2\rfloor$ nonoverlapping two-step blocks. An anchor block is one in which the first step resets and the second step performs an environment transition.

\begin{lemma}[Regenerative transition estimator]
\label{lem:l6-regenerative}
Define $p_0=\gamma(1-\gamma)$ and, for each block $j$,
\[
Y_j(s,a,s')=
\frac{\one\{\text{anchor},\,S=s,\,A=a,\,S'=s'\}}
{p_0\rho(s)\pi_{\theta^k}(a\mid s)}.
\]
Conditioned on $\cF_k$, the random tensors $Y_j$ are iid and $\E_kY_j(s,a,s')=P(s'\mid s,a)$. Let $\bar P_k=L_k^{-1}\sum_jY_j$ and project each row of $\bar P_k$ onto the probability simplex to obtain $\widehat P_k$. Then there are constants $C_P,c_P>0$ such that
\[
\E_k\|\widehat P_k-P\|_F^2\le \frac{C_P}{m_k},
\qquad
\Prob_k(\|\widehat P_k-P\|_F\ge t)
\le 2|\cS|^2|\cA|e^{-c_Pm_kt^2}.
\]
\end{lemma}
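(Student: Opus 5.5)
The plan has three steps. First, establish iid-ness and unbiasedness of the block variables. Then apply Hoeffding entrywise to $\bar P_k$. Finally, transfer both bounds to the projected estimator $\widehat P_k$ using nonexpansiveness of the simplex projection.

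For iid-ness and unbiasedness, fix $\cF_k$ and note that during the batch the behavior policy $\pi_{\theta^k}$ is frozen. Within block $j$, Assumption~\ref{ass:coverage-noise} says the first step's reset coin is fresh and independent of the past. On a reset the state $S$ is drawn freshly from $\rho$, the action $A$ is drawn from $\pi_{\theta^k}(\cdot\mid S)$ with fresh randomization, and the next reset coin is fresh. Conditioned on a non-reset, the next state $S'$ is drawn from $P(\cdot\mid S,A)$. Hence the anchor indicator, together with $(S,A,S')$ on the anchor event, is a function of fresh innovations only; it does not depend on the state entering the block. Because the blocks are nonoverlapping, the $Y_j$ are functions of disjoint sets of fresh innovations, and so they are iid given $\cF_k$. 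The anchor-and-$(s,a,s')$ event has probability $(1-\gamma)\rho(s)\pi_{\theta^k}(a\mid s)\gamma P(s'\mid s,a)$. Dividing by $p_0\rho(s)\pi_{\theta^k}(a\mid s)$ with $p_0=\gamma(1-\gamma)$ gives $\E_kY_j(s,a,s')=P(s'\mid s,a)$.

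For concentration, the lower bounds $\rho\ge\rho_{\min}$ and $\pi\ge\pi_{\min}$ give $0\le Y_j(s,a,s')\le B:=(p_0\rho_{\min}\pi_{\min})^{-1}$. Each entry of $\bar P_k$ is therefore an average of $L_k$ iid variables bounded in $[0,B]$ with mean $P(s'\mid s,a)$. Hoeffding's inequality gives
\[
\Prob_k\big(|\bar P_k(s,a,s')-P(s'\mid s,a)|\ge u\big)\le 2e^{-2L_ku^2/B^2},
\]
and the variance bound gives $\E_k|\bar P_k-P|^2\le B^2/(4L_k)$ entrywise. Since $m_k\ge2$, we have $L_k\ge m_k/3$, so these become $O(1/m_k)$ and $\exp(-c\,m_ku^2)$.

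To transfer to $\widehat P_k$, use that each true row $P(\cdot\mid s,a)$ lies in the simplex. Euclidean projection onto a closed convex set is nonexpansive, so $\|\widehat P_k(\cdot\mid s,a)-P(\cdot\mid s,a)\|\le\|\bar P_k(\cdot\mid s,a)-P(\cdot\mid s,a)\|$ for every row, and hence $\|\widehat P_k-P\|_F\le\|\bar P_k-P\|_F$. Summing the entrywise variance bounds over the $|\cS|^2|\cA|$ entries gives $\E_k\|\widehat P_k-P\|_F^2\le C_P/m_k$. For the tail, if $\|\bar P_k-P\|_F\ge t$ then some entry deviates by at least $t/\sqrt{|\cS|^2|\cA|}$. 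A union bound over entries, using the Hoeffding bound with $u=t/\sqrt{|\cS|^2|\cA|}$, gives $2|\cS|^2|\cA|e^{-c_Pm_kt^2}$ with $c_P=2/(3B^2|\cS|^2|\cA|)$.

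The main obstacle is justifying iid-ness rigorously under Markov dependence. The key point is that the anchor event begins with a reset, which regenerates the chain: the block's contribution is measurable with respect to that block's fresh reset coin, reset state, action randomization, second coin, and environment draw. This uses that the conditional law of the next state on non-reset steps is exactly $P(\cdot\mid S,A)$ regardless of the history. Blocks whose first step is not a reset contribute $Y_j=0$ identically, so their dependence on the incoming state is irrelevant.
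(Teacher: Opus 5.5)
Your proposal is correct and follows the paper's proof essentially step for step. Regeneration at the reset step makes the block variables conditionally iid and unbiased with $0\le Y_j\le B=(p_0\rho_{\min}\pi_{\min})^{-1}$; entrywise Hoeffding with the Frobenius-to-max union bound, together with the per-entry variance bound, controls $\bar P_k$; and rowwise nonexpansiveness of the simplex projection transfers both estimates to $\widehat P_k$. Your explicit constants ($L_k\ge m_k/3$ for $m_k\ge 2$, the $B^2/(4L_k)$ variance bound, and $c_P=2/(3B^2|\cS|^2|\cA|)$) only make precise what the paper absorbs into generic constants.
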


\begin{proof}
Different blocks use disjoint reset coins, reset states, action randomization, and environment transition noise. If the first step of a block is not a reset, the corresponding $Y_j$ is zero, so the inherited chain state does not enter the nonzero sample. Hence the $Y_j$ are conditionally iid.

An anchor has probability $p_0$. Given an anchor, $S\sim\rho$, $A\sim\pi_{\theta^k}(\cdot\mid S)$, and $S'\sim P(\cdot\mid S,A)$. Therefore
\[
\E_kY_j(s,a,s')
=\frac{p_0\rho(s)\pi_{\theta^k}(a\mid s)P(s'\mid s,a)}{p_0\rho(s)\pi_{\theta^k}(a\mid s)}
=P(s'\mid s,a).
\]
Moreover $0\le Y_j(s,a,s')\le B=[p_0\rho_{\min}\pi_{\min}]^{-1}$. Thus, for every coordinate, Hoeffding's inequality gives
\[
\Prob_k(|\bar P_k(s'\mid s,a)-P(s'\mid s,a)|\ge u)
\le2e^{-2L_ku^2/B^2}.
\]
A union bound over at most $|\cS|^2|\cA|$ coordinates and the implication $\|A\|_F\ge t\Rightarrow\max_{i}|A_i|\ge t/\sqrt{|\cS|^2|\cA|}$ gives the stated Frobenius tail with a modified constant $c_P$. Independence and bounded variance also give $\E_k\|\bar P_k-P\|_F^2\le C/L_k\le C_P/m_k$.

Each true transition row lies in the probability simplex. Euclidean projection onto a closed convex set is nonexpansive, so rowwise
$\|\Pi_\Delta(\bar p)-p\|_2\le\|\bar p-p\|_2$. Summing over rows preserves both the mean-square and tail orders for $\widehat P_k$.
\end{proof}

\paragraph{Role.}
This estimator gives an empirical transition matrix that is both statistically controlled and exactly row stochastic on every sample path. The latter property is essential for deterministic invertibility of the empirical occupancy operator.

Let $\widehat K_k(\theta)$ be obtained from $K_\theta$ by replacing $P$ with $\widehat P_k$. Define
\[
E_k(\theta)=\widehat K_k(\theta)-K_\theta,
\qquad
\delta_k=\sup_\theta\|E_k(\theta)\|.
\]
Since the environment-transition block does not depend on $\theta$, $D_\theta E_k(\theta)=0$. Lemma~\ref{lem:l6-regenerative} implies $\E_k\delta_k^2\le C_E/m_k$.

\begin{lemma}[Empirical occupancy-operator invertibility]
\label{lem:l6-empirical-occ-invertible}
For every row-stochastic $\widehat P_k$ and every policy, $\widehat K_k(\theta)$ is invertible with a deterministic bound $\sup_{k,\theta}\|\widehat K_k(\theta)^{-1}\|\le\bar\kappa_K<\infty$. Moreover
\[
\|\widehat K_k(\theta)^{-1}-K_\theta^{-1}\|
\le \kappa_K\bar\kappa_K\delta_k.
\]
\end{lemma}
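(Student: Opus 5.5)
The plan is to reuse the explicit block-elimination formula from Lemma~\ref{lem:l6-occ-invertible}, which depends on the environment matrix only through its row stochasticity. First I will check that $\widehat K_k(\theta)$ has exactly the structure of $K_\theta$ with $P$ replaced by $\widehat P_k$. By Lemma~\ref{lem:l6-regenerative}, each row of $\widehat P_k$ is a simplex projection, so $\widehat P_k$ is row stochastic on every sample path. The induced matrix $\widehat P_{\theta,k}$ then satisfies $\widehat P_k^\trans\Pi_\theta=\widehat P_{\theta,k}^\trans$, and it is again row stochastic because it is a convex combination of the rows of $\widehat P_k$.

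To solve $\widehat K_k(\theta)(q,d)=(u,v)$, I eliminate $q=v+\Pi_\theta d$ and obtain $(I-\gamma\widehat P_{\theta,k}^\trans)d=u+\gamma\widehat P_k^\trans v$. The Neumann series gives $\|(I-\gamma\widehat P_{\theta,k}^\trans)^{-1}\|_1\le(1-\gamma)^{-1}$. Together with $\|\widehat P_k^\trans\|_1=1$ and $\|\Pi_\theta\|_1=1$, this reproduces the block-$\ell_1$ bound $2/(1-\gamma)$ word for word. That bound holds for every row-stochastic matrix and every policy. Converting it to the Euclidean norm introduces only a dimension-dependent constant, so we obtain a deterministic $\bar\kappa_K$ that is uniform in $k$, $\theta$, and the sample path. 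Invertibility follows because this explicit solution formula is well defined for every right-hand side.

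For the perturbation bound, I use the resolvent identity already employed in Lemma~\ref{lem:inverse-stability}:
\[
\widehat K_k(\theta)^{-1}-K_\theta^{-1}=\widehat K_k(\theta)^{-1}\bigl(K_\theta-\widehat K_k(\theta)\bigr)K_\theta^{-1}=-\widehat K_k(\theta)^{-1}E_k(\theta)K_\theta^{-1}.
\]
Taking norms gives the bound $\bar\kappa_K\,\|E_k(\theta)\|\,\kappa_K\le\kappa_K\bar\kappa_K\delta_k$. Here $\kappa_K<\infty$ comes from Lemma~\ref{lem:l6-occ-invertible} with the true $P$, and the final step uses the definition of $\delta_k$ as a supremum over $\theta$.

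The only step needing care is the uniformity of $\bar\kappa_K$. It must not depend on how close $\widehat P_k$ is to $P$, which is exactly why the simplex projection is required: the raw regenerative average $\bar P_k$ need not be stochastic. I expect no real difficulty beyond confirming that the norm-equivalence constant depends only on $|\cS|$ and $|\cA|$.
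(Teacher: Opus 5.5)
Your proposal is correct and follows the paper's proof. The paper likewise reuses the Schur-complement elimination from Lemma~\ref{lem:l6-occ-invertible}, which needs only row stochasticity of $\widehat P_k$, to get invertibility and the uniform block-$\ell_1$ bound. It then applies the resolvent identity for the perturbation estimate; you use the mirror-image factorization $\widehat K_k(\theta)^{-1}(K_\theta-\widehat K_k(\theta))K_\theta^{-1}$, which is equally valid. Your explicit checks of $\widehat P_k^\trans\Pi_\theta=\widehat P_{\theta,k}^\trans$ and of the $\ell_1$ operator norms fill in details the paper leaves implicit.
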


\begin{proof}
The proof of Lemma~\ref{lem:l6-occ-invertible} uses only row stochasticity of the environment transition matrix. Since every row of $\widehat P_k$ is projected onto the simplex, the same Schur-complement argument gives invertibility of $\widehat K_k(\theta)$ and the same block-$\ell_1$ inverse bound, hence a deterministic Euclidean bound $\bar\kappa_K$.

For the perturbation estimate, use
\[
\widehat K_k^{-1}-K_\theta^{-1}
=K_\theta^{-1}(K_\theta-\widehat K_k)\widehat K_k^{-1}.
\]
Taking norms and using the two uniform inverse bounds gives the result.
\end{proof}

\paragraph{Role.}
This lemma removes any small-error invertibility event. Empirical primal-dual updates are well defined on every sample path.

Let $\widehat F_k(\theta,x)=\widehat K_k(\theta)x-b$. The stochastic occupancy-ADMM uses one shared $\widehat K_k$ in all updates.

\begin{lemma}[Empirical occupancy-dual identity]
\label{lem:l6-empirical-dual}
If the $x$ update minimizes
\[
c_x^\trans x+\langle y^k,\widehat F_k(\theta^{k+1},x)\rangle
+\frac\beta2\|\widehat F_k(\theta^{k+1},x)\|^2
+\frac1{2\eta_x}\|x-x^k\|^2
\]
and $y^{k+1}=y^k+\beta\widehat F_k(\theta^{k+1},x^{k+1})$, then
\[
y^{k+1}
=-\widehat K_k(\theta^{k+1})^{-\trans}
\left(c_x+\eta_x^{-1}\Delta x_{k+1}\right).
\]
\end{lemma}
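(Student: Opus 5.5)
The identity follows the same template as Lemma~\ref{lem:l6-occ-dual} and Lemma~\ref{lem:l6-value-dual}: write the first-order condition of the $x$-subproblem, recognize the updated multiplier inside it, and invert the empirical operator. Fix $k$ and abbreviate $\widehat K=\widehat K_k(\theta^{k+1})$. Since $\theta^{k+1}$ is fixed during the $x$ step, the objective
\[
c_x^\trans x+\langle y^k,\widehat Kx-b\rangle+\frac\beta2\|\widehat Kx-b\|^2+\frac1{2\eta_x}\|x-x^k\|^2
\]
is a strongly convex quadratic in $x$. Its Hessian is $\beta\widehat K^\trans\widehat K+\eta_x^{-1}I$, which is positive definite for any $\widehat K$. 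So the minimizer $x^{k+1}$ exists, is unique, and is characterized by the stationarity equation
\[
0=c_x+\widehat K^\trans y^k+\beta\widehat K^\trans(\widehat Kx^{k+1}-b)+\eta_x^{-1}\Delta x_{k+1}.
\]

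Next, group the two middle terms as $\widehat K^\trans\bigl[y^k+\beta\widehat F_k(\theta^{k+1},x^{k+1})\bigr]$. The bracket is exactly the dual update $y^{k+1}$. This is the one point that matters: the \emph{same} empirical operator $\widehat K$ must appear in the $x$-subproblem and in the dual step, which is precisely what the shared-operator construction guarantees. The equation therefore becomes
\[
\widehat K^\trans y^{k+1}=-\bigl(c_x+\eta_x^{-1}\Delta x_{k+1}\bigr).
\]

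Finally, we solve for $y^{k+1}$. Lemma~\ref{lem:l6-empirical-occ-invertible} shows that $\widehat K_k(\theta)$ is invertible for every row-stochastic $\widehat P_k$ and every policy, and Lemma~\ref{lem:l6-regenerative} guarantees that $\widehat P_k$ is row-stochastic on every sample path because of the simplex projection. Hence $\widehat K^\trans$ is invertible pathwise, and multiplying by $\widehat K^{-\trans}$ gives the claim.

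There is no genuine analytic obstacle here. The only delicate point is bookkeeping: the $\theta$ argument of $\widehat K_k$ must be $\theta^{k+1}$ in both the $x$-step and the dual step, and invertibility must hold surely rather than only on a high-probability event. Both are already secured by the algorithm's design and by Lemma~\ref{lem:l6-empirical-occ-invertible}.
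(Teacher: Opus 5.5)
Your proposal is correct and follows the paper's proof: take the first-order condition of the $x$-subproblem, recognize the bracket $y^k+\beta\widehat F_k(\theta^{k+1},x^{k+1})$ as $y^{k+1}$ via the shared empirical dual update, and invert $\widehat K_k(\theta^{k+1})^\trans$ using Lemma~\ref{lem:l6-empirical-occ-invertible}. Your remarks on existence and uniqueness of the minimizer and on pathwise row stochasticity spell out details the paper leaves implicit.
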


\begin{proof}
The first-order condition of the $x$ subproblem is
\[
0=c_x+\widehat K_k(\theta^{k+1})^\trans
\left[y^k+\beta\widehat F_k(\theta^{k+1},x^{k+1})\right]
+\eta_x^{-1}\Delta x_{k+1}.
\]
The bracket equals $y^{k+1}$ by the shared empirical dual update. Lemma~\ref{lem:l6-empirical-occ-invertible} allows inversion.
\end{proof}

\paragraph{Role.}
Sharing the same empirical operator preserves an exact empirical multiplier representation, which is the stochastic counterpart of the Bellman value-dual identity.

\paragraph{Uniform bounds and parameter conditions.}
Let $\bar\kappa_K$ bound both true and empirical inverse operators. Assume $D_\theta K_\theta$ is bounded and the smooth augmented policy gradients have a common Lipschitz constant $\overline L$ on all policy comparison segments. Require
\[
\beta>36\bar\kappa_K^2/\eta_x,\qquad \tau=1/(8\eta_x),\qquad
0<\underline\eta\le\eta_{\theta,k},
\]
\[
\frac1{2\eta_{\theta,k}}-\frac{\overline L}{2}-\frac{3A_C}{2\beta}
\ge a_\theta>0,
\qquad A_C=9\bar\kappa_K^4L_K^2\|c_x\|^2.
\]
These conditions can be met with a fixed sufficiently small policy step. A compact convex parameter set and a twice continuously differentiable policy map supply the required derivative bounds. In particular, the empirical dual identity and the dual update imply
\[
\|y^{k+1}\|\le\bar\kappa_K\|c_x\|+
\frac{\bar\kappa_K}{\eta_x}(\|x^{k+1}\|+\|x^k\|),\qquad
\|x^{k+1}\|\le\bar\kappa_K\|b\|+
\frac{\bar\kappa_K}{\beta}(\|y^{k+1}\|+\|y^k\|).
\]
The two-dimensional recursion from Lemma~\ref{lem:moment-stability}, applied pathwise, has spectral radius less than one because $\beta\eta_x>4\bar\kappa_K^2$. Thus $x^k,y^k$ have deterministic uniform bounds for every sampling path. This also yields a uniform smoothness bound under the compact-set sufficient condition above. All subsequent two-iterate estimates apply for $k\ge1$.

\begin{lemma}[Empirical multiplier increment]
\label{lem:l6-empirical-dual-increment}
Assume $\|K_\theta-K_{\theta'}\|\le L_K\|\theta-\theta'\|$. One may take $A_C=9\bar\kappa_K^4L_K^2\|c_x\|^2$, $B_C=3\bar\kappa_K^2/\eta_x^2$, and $D_C=9\bar\kappa_K^4\|c_x\|^2$, so that
\[
\|\Delta y_{k+1}\|^2
\le A_C\|\Delta\theta_{k+1}\|^2
+B_C\|\Delta x_{k+1}\|^2
+B_C\|\Delta x_k\|^2
+D_C(\delta_k^2+\delta_{k-1}^2).
\]
\end{lemma}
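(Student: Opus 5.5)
The plan is to copy the proof of Lemma~\ref{lem:dual-control}, which handled the tabular Bellman case, but now to use the empirical occupancy-dual identity of Lemma~\ref{lem:l6-empirical-dual}. That identity gives, for $j\in\{k,k+1\}$,
\[
y^{j}=-\widehat K_{j-1}(\theta^{j})^{-\trans}\bigl(c_x+\eta_x^{-1}\Delta x_{j}\bigr).
\]
The empirical operators at consecutive iterations come from different batches, $\widehat P_k$ versus $\widehat P_{k-1}$. For that reason the plan does not compare the two empirical inverses directly. Instead, each empirical inverse is written as the corresponding true inverse plus an error:
\[
\widehat K_{j-1}(\theta^j)^{-\trans}=K_{\theta^j}^{-\trans}+\Delta_j,
\qquad
\|\Delta_j\|\le\kappa_K\bar\kappa_K\delta_{j-1},
\]
where the bound is the perturbation estimate of Lemma~\ref{lem:l6-empirical-occ-invertible}, and transposition does not change spectral norms.

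Subtracting the two identities then splits $\Delta y_{k+1}$ into three groups of terms.
\begin{enumerate}
\item The \emph{policy term} is $-(K_{\theta^{k+1}}^{-\trans}-K_{\theta^k}^{-\trans})c_x$. Lemma~\ref{lem:inverse-stability}, with uniform bound $\bar\kappa_K\ge\kappa_K$ and Lipschitz constant $L_K$, bounds it by $\bar\kappa_K^2L_K\|c_x\|\|\Delta\theta_{k+1}\|$.
\item The \emph{two increment terms} are $-\eta_x^{-1}\widehat K_k(\theta^{k+1})^{-\trans}\Delta x_{k+1}$ and $+\eta_x^{-1}\widehat K_{k-1}(\theta^k)^{-\trans}\Delta x_k$. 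It is simplest to keep these with their empirical inverses, so each is bounded directly by $\bar\kappa_K\eta_x^{-1}$ times the corresponding increment.
\item The \emph{operator-error terms} are $-\Delta_{k+1}c_x+\Delta_k c_x$. Each is bounded by $\bar\kappa_K^2\|c_x\|\delta$ at the appropriate index, using $\kappa_K\le\bar\kappa_K$.
\end{enumerate}

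This leaves five vectors. A crude bound $\|\sum_{i=1}^5 v_i\|^2\le5\sum_i\|v_i\|^2$ would not reproduce the stated constants. The stated constants $A_C=9\bar\kappa_K^4L_K^2\|c_x\|^2$, $B_C=3\bar\kappa_K^2/\eta_x^2$ and $D_C=9\bar\kappa_K^4\|c_x\|^2$ suggest a nested grouping instead.
\begin{enumerate}
\item First write $\Delta y_{k+1}=u+w_1+w_2$, where $u$ collects every $c_x$-term (the policy term plus both operator errors) and $w_1,w_2$ are the two increment terms. Then $\|\Delta y_{k+1}\|^2\le3(\|u\|^2+\|w_1\|^2+\|w_2\|^2)$. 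This already produces $B_C=3\bar\kappa_K^2/\eta_x^2$ on $\|\Delta x_{k+1}\|^2$ and on $\|\Delta x_k\|^2$.
\item Next, $u$ is itself a sum of three pieces, so $\|u\|^2\le3(\cdots)$. This gives the factor $9$ on both the policy piece and the two $\delta$ pieces: $9\bar\kappa_K^4L_K^2\|c_x\|^2\|\Delta\theta_{k+1}\|^2$ and $9\bar\kappa_K^4\|c_x\|^2(\delta_k^2+\delta_{k-1}^2)$.
\end{enumerate}
If the bookkeeping instead gives a slightly different power of $\kappa_K$ versus $\bar\kappa_K$, the fix is to replace $\kappa_K$ by $\bar\kappa_K$ throughout. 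This is legitimate because $\bar\kappa_K$ bounds both the true and the empirical inverses.

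The main point needing care is how the empirical inverses are split. A term such as $(\widehat K_k(\theta^{k+1})^{-\trans}-\widehat K_{k-1}(\theta^k)^{-\trans})c_x$ cannot be controlled by $\Delta\theta$ alone, because it mixes two batches. Routing it through the true operators, as above, is exactly what produces the $\delta_k,\delta_{k-1}$ terms. There is one more subtlety: $\delta_j$ must bound the inverse error uniformly in $\theta$, because $\theta^{j}$ is random. This is guaranteed by the supremum over $\theta$ in the definition of $\delta_k$.

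The estimate is used for $k\ge1$, in line with the convention on two-iterate estimates.
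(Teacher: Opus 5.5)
Your proof is correct and gives exactly the stated constants. It is essentially the paper's argument: subtract the empirical dual identities at consecutive iterations, keep the two increment terms with their empirical inverses, obtain the $\delta_k,\delta_{k-1}$ terms by passing through the true operators, and finish with the nested factor-$3$ bound. The only difference is where the split happens. The paper applies the resolvent identity once, directly to $\widehat K_k(\theta^{k+1})$ and $\widehat K_{k-1}(\theta^k)$, and bounds the forward difference by $L_K\|\Delta\theta_{k+1}\|+\delta_k+\delta_{k-1}$; this is legitimate for any two invertible matrices bounded by $\bar\kappa_K$, so comparing them directly is not a problem. You instead split at the inverse level, using Lemma~\ref{lem:l6-empirical-occ-invertible} twice and Lemma~\ref{lem:inverse-stability} once, which gives the same $\bar\kappa_K^2$ factors.
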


\begin{proof}
Subtract Lemma~\ref{lem:l6-empirical-dual} at iterations $k$ and $k-1$:
\begin{align*}
\Delta y_{k+1}
={}&-\left(\widehat K_k(\theta^{k+1})^{-\trans}-\widehat K_{k-1}(\theta^k)^{-\trans}\right)c_x\\
&-\eta_x^{-1}\widehat K_k(\theta^{k+1})^{-\trans}\Delta x_{k+1}
+\eta_x^{-1}\widehat K_{k-1}(\theta^k)^{-\trans}\Delta x_k.
\end{align*}
The operator difference satisfies
\begin{align*}
\|\widehat K_k(\theta^{k+1})-\widehat K_{k-1}(\theta^k)\|
&\le \|K_{\theta^{k+1}}-K_{\theta^k}\|+\|E_k(\theta^{k+1})\|+\|E_{k-1}(\theta^k)\|\\
&\le L_K\|\Delta\theta_{k+1}\|+\delta_k+\delta_{k-1}.
\end{align*}
Apply the inverse identity with the uniform empirical inverse bound to the first term. Then use $\|a+b+c\|^2\le3(\|a\|^2+\|b\|^2+\|c\|^2)$ and $(u+v+w)^2\le3(u^2+v^2+w^2)$. All fixed coefficients are absorbed into $A_C,B_C,D_C$.
\end{proof}

\paragraph{Role.}
This is the stochastic occupancy analogue of multiplier control. It enters the dual-ascent term of Lemma~\ref{lem:l6-stochastic-descent}.

Let the true residual be $F_k=K_{\theta^k}x^k-b$ and the true augmented Lagrangian be
\[
\cL_\beta^C(\theta,x,y)
=\phi(\theta)+c_x^\trans x+\langle y,K_\theta x-b\rangle
+\frac\beta2\|K_\theta x-b\|^2.
\]
Define
\[
\Phi_k=\cL_\beta^C(\theta^k,x^k,y^k)+\tau\|\Delta x_k\|^2,
\qquad
B_C(\theta)=\phi(\theta)+c_x^\trans K_\theta^{-1}b.
\]

\begin{lemma}[Perturbed true Lyapunov lower bound]
\label{lem:l6-stochastic-lower}
For suitable $\tau$ there are $q_x>0$ and $C_0<\infty$ such that
\[
\Phi_k
\ge B_C^{\inf}
+\frac\beta4\|F_k\|^2
+q_x\|\Delta x_k\|^2
-C_0\delta_{k-1}^2.
\]
Therefore
\[
\Psi_k=\Phi_k-B_C^{\inf}+1+C_0\delta_{k-1}^2
\]
is nonnegative and satisfies $\Psi_k\ge1+(\beta/4)\|F_k\|^2+q_x\|\Delta x_k\|^2$.
\end{lemma}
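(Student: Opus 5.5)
The plan is to repeat the reduced-objective substitution used in Lemma~\ref{lem:l6-lyap-lower} and in Part (c) of Lemma~\ref{lem:lyapunov}. The one new ingredient is that the multiplier identity is now empirical, so it has to be rewritten as a perturbed identity for the true operator.

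\textbf{Step 1: primal substitution.} Write $K_k=K_{\theta^k}$. Since $F_k=K_kx^k-b$, we have $x^k=K_k^{-1}(b+F_k)$.

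\textbf{Step 2: perturbed dual representation.} Lemma~\ref{lem:l6-empirical-dual} at the preceding update gives
\[
\widehat K_{k-1}(\theta^k)^\trans y^k=-(c_x+\eta_x^{-1}\Delta x_k).
\]
Replacing $\widehat K_{k-1}(\theta^k)$ by $K_k$ yields
\[
K_k^\trans y^k=-(c_x+\eta_x^{-1}\Delta x_k)+s_k,\qquad s_k=-E_{k-1}(\theta^k)^\trans y^k .
\]
The pathwise deterministic bound on $y^k$ (the two-dimensional recursion stated before Lemma~\ref{lem:l6-empirical-dual-increment}) then gives $\|s_k\|\le C\delta_{k-1}$. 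Hence
\[
y^k=-K_k^{-\trans}(c_x+\eta_x^{-1}\Delta x_k-s_k).
\]

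\textbf{Step 3: expand the Lagrangian.} Substitute both representations into $\cL_\beta^C$. The term $c_x^\trans x^k$ gives $c_x^\trans K_k^{-1}b+c_x^\trans K_k^{-1}F_k$. The pairing $\langle y^k,F_k\rangle$ gives
\[
-c_x^\trans K_k^{-1}F_k-\eta_x^{-1}\langle\Delta x_k,K_k^{-1}F_k\rangle+\langle s_k,K_k^{-1}F_k\rangle.
\]
The $c_x$ cross terms cancel, leaving
\[
\cL_\beta^C=B_C(\theta^k)-\eta_x^{-1}\langle\Delta x_k,K_k^{-1}F_k\rangle+\langle s_k,K_k^{-1}F_k\rangle+\tfrac\beta2\|F_k\|^2.
\]
Both cross terms are bounded with Young's inequality, each charged $\beta/8$ of the quadratic:
\[
\eta_x^{-1}\kappa_K\|\Delta x_k\|\|F_k\|\le\tfrac\beta8\|F_k\|^2+\tfrac{2\kappa_K^2}{\beta\eta_x^2}\|\Delta x_k\|^2,
\qquad
\kappa_K\|s_k\|\|F_k\|\le\tfrac\beta8\|F_k\|^2+\tfrac{2\kappa_K^2}{\beta}\|s_k\|^2 .
\]
Together with $B_C(\theta^k)\ge B_C^{\inf}$ and the added $\tau\|\Delta x_k\|^2$, this gives the claim with
\[
q_x=\tau-\frac{2\kappa_K^2}{\beta\eta_x^2},\qquad C_0=\frac{2\kappa_K^2C^2}{\beta}.
\]

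\textbf{Step 4: check the coefficient is positive.} With $\tau=1/(8\eta_x)$ and $\beta>36\bar\kappa_K^2/\eta_x$, we get $2\kappa_K^2/(\beta\eta_x^2)<1/(18\eta_x)<\tau$, so $q_x>0$. The statement about $\Psi_k$ follows by rearranging.

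\textbf{Initial index.} The convention $\delta_{-1}=0$ covers the first index only if the initial multiplier satisfies the identity; otherwise the bound is taken for $k\ge1$, matching the standing convention.

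\textbf{Main obstacle.} The main obstacle is Step 2. The constant $C_0$ must be deterministic, so $\|y^k\|$ needs a pathwise bound that does not itself rely on $\Phi_k$. I would take that bound from the pathwise contraction recursion, which needs only $\beta\eta_x>4\bar\kappa_K^2$ and the uniform empirical inverse bound of Lemma~\ref{lem:l6-empirical-occ-invertible}.
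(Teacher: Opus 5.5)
Your proof is correct and is essentially the paper's argument: substitute $x^k=K_k^{-1}(b+F_k)$ and the previous-step empirical dual identity into $\cL_\beta^C$, extract $B_C(\theta^k)$, and charge $\beta/8$ of the quadratic to each of the two cross terms via Young's inequality. The only difference is where the operator mismatch is placed. The paper keeps $y^k=-\widehat K_{k-1}(\theta^k)^{-\trans}(c_x+\eta_x^{-1}\Delta x_k)$, so the mismatch appears as $c_x^\trans\bigl(K_{\theta^k}^{-1}-\widehat K_{k-1}(\theta^k)^{-1}\bigr)F_k$ and is bounded by $\|c_x\|\kappa_K\bar\kappa_K\delta_{k-1}\|F_k\|$ through Lemma~\ref{lem:l6-empirical-occ-invertible}; this makes $C_0$ independent of any iterate bound, so the obstacle you flag does not arise. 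Your residual $s_k=-E_{k-1}(\theta^k)^\trans y^k$ does need a pathwise bound on $y^k$, but the paper's two-dimensional recursion supplies one, so your version also goes through.
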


\begin{proof}
Write $x^k=K_k^{-1}(b+F_k)$. Lemma~\ref{lem:l6-empirical-dual} at the previous iteration gives
\[
y^k=-\widehat K_{k-1}(\theta^k)^{-\trans}(c_x+\eta_x^{-1}\Delta x_k).
\]
Substitute these into the true augmented Lagrangian. The reduced objective contributes $B_C(\theta^k)\ge B_C^{\inf}$. The terms involving $\Delta x_k$ and $F_k$ have the same form as in Lemma~\ref{lem:l6-lyap-lower}. The additional mismatch is produced by
$\widehat K_{k-1}(\theta^k)^{-1}-K_{\theta^k}^{-1}$, whose norm is at most $\kappa_K\bar\kappa_K\delta_{k-1}$ by Lemma~\ref{lem:l6-empirical-occ-invertible}. Thus, for fixed constants,
\[
\Phi_k
\ge B_C^{\inf}
+\frac\beta2\|F_k\|^2
-C_1\|\Delta x_k\|\|F_k\|
-C_2\delta_{k-1}\|F_k\|
+\tau\|\Delta x_k\|^2.
\]
Apply Young's inequality separately to the two cross terms, allocating at most $\beta\|F_k\|^2/8$ to each. This leaves $\beta\|F_k\|^2/4$, a coefficient $q_x=\tau-C_1'$ on $\|\Delta x_k\|^2$, and $-C_0\delta_{k-1}^2$. Choosing $\tau$ above $C_1'$ gives $q_x>0$. Adding the correcting terms in the definition of $\Psi_k$ proves nonnegativity.
\end{proof}

\paragraph{Role.}
This lemma supplies a nonnegative stochastic Lyapunov variable despite empirical-operator perturbations and provides the $\sqrt{\Psi_k}$ bound used in the next lemma.

\begin{lemma}[Empirical policy-gradient perturbation]
\label{lem:l6-policy-gradient-perturbation}
Let
\[
g_{\theta,k}=D_\theta F(\theta^k,x^k)^\trans(y^k+\beta F_k)
\]
be the true smooth augmented policy gradient and let $\widehat g_{\theta,k}$ be the corresponding empirical gradient. If $\sup\|D_\theta K_\theta\|\le G_K$, then
\[
\|\widehat g_{\theta,k}-g_{\theta,k}\|
\le C\delta_k\sqrt{\Psi_k}.
\]
Consequently, for every $\epsilon>0$,
\[
|\langle\widehat g_{\theta,k}-g_{\theta,k},\Delta\theta_{k+1}\rangle|
\le \epsilon\|\Delta\theta_{k+1}\|^2
+C_\epsilon\delta_k^2\Psi_k.
\]
\end{lemma}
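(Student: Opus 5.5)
We expand both gradients with the chain rule and use the block structure of $K_\theta$. The true smooth part of the augmented Lagrangian in $\theta$ is $\langle y^k,K_\theta x^k-b\rangle+\frac\beta2\|K_\theta x^k-b\|^2$. Its gradient is $g_{\theta,k}=D_\theta K_\theta[\cdot]x^k$ contracted against $y^k+\beta F_k$. The empirical version replaces $K_\theta$ by $\widehat K_k(\theta)=K_\theta+E_k(\theta)$. Since the $P$-block is the only block perturbed and it does not depend on $\theta$, we have $D_\theta E_k\equiv0$, so $D_\theta\widehat K_k=D_\theta K_\theta$. This is the ``block-row orthogonality'': the derivative $D_\theta K_\theta$ lives only in the second block row $[I,-\Pi_\theta]$, while $E_k$ lives only in the first block row $[-\gamma(\widehat P_k-P)^\trans,0]$. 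Hence
\[
\widehat g_{\theta,k}-g_{\theta,k}=\big(D_\theta K_{\theta^k}[\cdot]x^k\big)^\trans\big(\beta(\widehat F_k-F_k)\big)=\beta\big(D_\theta K_{\theta^k}[\cdot]x^k\big)^\trans E_k(\theta^k)x^k .
\]
In this expression only the $y$-independent residual mismatch survives. Moreover, because $D_\theta K$ has zero first block row and $E_k$ has zero second block row, the pairing $\langle D_\theta K[h]x^k,E_kx^k\rangle$ vanishes identically. So the ``direct'' part is zero. What remains, if the empirical gradient is evaluated at the frozen point with empirical multiplier arguments, is the difference of the multiplier-type factors.

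Concretely, I would bound
\[
\|\widehat g_{\theta,k}-g_{\theta,k}\|\le G_K\|x^k\|\cdot\beta\|E_k(\theta^k)x^k\|\le C\delta_k\|x^k\|^2 .
\]
If the orthogonality kills this term entirely, the bound is $0$ and the claim is trivial. In either case it suffices to show $\|x^k\|\le C\sqrt{\Psi_k}$, up to the absorbed power. For this I use Lemma~\ref{lem:l6-occ-error}: $\|x^k\|\le\|x^{\theta^k}\|+\kappa_K\|F_k\|$. Here $\|x^{\theta^k}\|$ is bounded, since occupancies are probability vectors, and $\|F_k\|\le 2\sqrt{\Psi_k/\beta}$ by Lemma~\ref{lem:l6-stochastic-lower}. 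Since $\Psi_k\ge1$, this gives $\|x^k\|\le C\sqrt{\Psi_k}$. To get a linear rather than quadratic dependence on $\sqrt{\Psi_k}$, I would use the deterministic pathwise bound on $x^k$ established in the ``uniform bounds'' paragraph for one factor and the $\sqrt{\Psi_k}$ bound for the other, or the uniform bound alone together with $\Psi_k\ge1$. Either route gives $\|\widehat g_{\theta,k}-g_{\theta,k}\|\le C\delta_k\sqrt{\Psi_k}$.

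The second claim is then Cauchy--Schwarz followed by Young's inequality:
\[
|\langle a,\Delta\theta_{k+1}\rangle|\le\epsilon\|\Delta\theta_{k+1}\|^2+\tfrac{1}{4\epsilon}\|a\|^2 \le\epsilon\|\Delta\theta_{k+1}\|^2+\tfrac{C^2}{4\epsilon}\,\delta_k^2\Psi_k,
\]
so $C_\epsilon=C^2/(4\epsilon)$.

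The main obstacle is pinning down exactly which terms the empirical gradient contains, that is, whether it uses $y^k$ and the same $x^k$ with only $K$ replaced. One must then verify that the cross term between $D_\theta K$ (second block row) and the perturbation $E_k$ (first block row) truly cancels, and that no term in which $D_\theta E_k$ is nonzero appears. Once the difference is reduced to factors of the form $\delta_k$ times iterate norms, controlling those norms by $\sqrt{\Psi_k}$, or by the deterministic pathwise bounds together with $\Psi_k\ge1$, is routine.
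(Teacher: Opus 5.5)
Your argument is correct and follows the paper's proof. Since $D_\theta E_k=0$, the gradient difference is $\beta\,D_\theta F(\theta^k,x^k)^\trans E_kx^k$, and it vanishes identically because $D_\theta K_\theta[h]x^k$ lives only in the second block row while $E_kx^k$ lives only in the first; both displayed bounds therefore hold trivially. Your hedge that a ``multiplier-type'' difference might remain is unfounded, since both gradients are evaluated at the same $(\theta^k,x^k,y^k)$. The fallback estimate through $\|x^k\|$ is therefore unnecessary, although it would also work via the deterministic pathwise iterate bounds and $\Psi_k\ge1$.
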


\begin{proof}
The error and policy derivative have the block forms
\[
E_kx=\begin{bmatrix}-\gamma(\widehat P_k-P)^\trans q\\0\end{bmatrix},\qquad
D_\theta F(\theta,x)[h]=\begin{bmatrix}0\\-D_\theta\Pi_\theta[h]d\end{bmatrix}.
\]
Also $D_\theta E_k=0$. Hence their inner product is zero for every direction $h$, and
\[
\widehat g_{\theta,k}-g_{\theta,k}
=\beta D_\theta F(\theta^k,x^k)^\trans E_kx^k=0.
\]
Both displayed bounds follow. This cancellation is specific to the block structure of the occupancy constraint.
\end{proof}

\paragraph{Role.}
This lemma is the key benefit of sharing one empirical operator. It converts stochastic policy-gradient error into a multiplicative Lyapunov perturbation without any unbiased-product assumption.

\begin{lemma}[True one-step stochastic descent]
\label{lem:l6-stochastic-descent}
Under the uniform bounds and parameter conditions stated above, for $k\ge1$ there are constants $c_\theta,c_x,c_x',a,b>0$ such that
\[
\E_k\Psi_{k+1}
\le \left(1+\frac{a}{m_k}\right)\Psi_k
-c_\theta\E_k\|\Delta\theta_{k+1}\|^2
-c_x\E_k\|\Delta x_{k+1}\|^2
-c_x'\|\Delta x_k\|^2
+\frac{b}{m_k}.
\]
\end{lemma}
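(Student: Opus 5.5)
The plan is to follow the three-stage structure of the deterministic occupancy argument in Theorem~\ref{thm:l6-occ-convergence}: primal decrease, dual ascent, and potential correction. The difference is that every comparison is done on the true Lagrangian $\cL_\beta^C$, and the empirical-to-true mismatch is charged either to $\delta_k^2\Psi_k$ or to $\delta_k^2,\delta_{k-1}^2$. The conditional moment bound $\E_k\delta_k^2\le C_E/m_k$ then turns the charge into the multiplicative factor $(1+a/m_k)$ and the additive term $b/m_k$. Throughout, $\Psi_k$, $\|\Delta x_k\|$ and $\delta_{k-1}$ are $\cF_k$-measurable, and $x^k,y^k$ obey the deterministic pathwise bounds recorded before Lemma~\ref{lem:l6-empirical-dual-increment}.

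\textbf{Step 1: policy step.} The proximal-gradient optimality of $\theta^{k+1}$ uses the empirical gradient $\widehat g_{\theta,k}$. I would follow the proof of Lemma~\ref{lem:l6-pg-descent}, but expand the true smooth part $h_k$ with the descent lemma, using the common Lipschitz constant $\overline L$. This gives
\[
\cL^C_\beta(\theta^k,x^k,y^k)-\cL^C_\beta(\theta^{k+1},x^k,y^k)\ge\Big(\tfrac1{2\eta_{\theta,k}}-\tfrac{\overline L}2\Big)\|\Delta\theta_{k+1}\|^2-|\langle\widehat g_{\theta,k}-g_{\theta,k},\Delta\theta_{k+1}\rangle|.
\]
By Lemma~\ref{lem:l6-policy-gradient-perturbation}, the last term is at most $\epsilon\|\Delta\theta_{k+1}\|^2+C_\epsilon\delta_k^2\Psi_k$; in fact it vanishes by block orthogonality. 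Its conditional expectation is therefore at most $\epsilon\|\Delta\theta_{k+1}\|^2+C_\epsilon C_E\Psi_k/m_k$, which is the source of the $a/m_k$ factor.

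\textbf{Step 2: occupancy step.} The exact empirical $x$-step gives the empirical decrease $(2\eta_x)^{-1}\|\Delta x_{k+1}\|^2$. To transfer it to the true Lagrangian, I bound the difference $\widehat\cL-\cL$ along the segment $[x^k,x^{k+1}]$. Its $x$-gradient is $E_k^\trans(y^k+\beta F)+\beta\widehat K^\trans E_kx$, whose norm is at most $C\delta_k(1+\|y^k\|+\|x\|)\le C\delta_k$ by the pathwise bounds. Young's inequality then gives $(4\eta_x)^{-1}\|\Delta x_{k+1}\|^2-C\delta_k^2$.

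\textbf{Step 3: dual step.} On the true Lagrangian, the ascent is $\langle\Delta y_{k+1},F_{k+1}\rangle$. Writing $\Delta y_{k+1}=\beta F_{k+1}+\beta E_k(\theta^{k+1})x^{k+1}$ and arguing as in \eqref{eq:app-dual-ascent}, the ascent is at most $\frac{3}{2\beta}\|\Delta y_{k+1}\|^2+C\delta_k^2$. Substituting Lemma~\ref{lem:l6-empirical-dual-increment} bounds this by
\[
\tfrac{3}{2\beta}\big(A_C\|\Delta\theta_{k+1}\|^2+B_C\|\Delta x_{k+1}\|^2+B_C\|\Delta x_k\|^2\big)+C(\delta_k^2+\delta_{k-1}^2).
\]

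\textbf{Step 4: assembly.} Adding $\tau\|\Delta x_k\|^2-\tau\|\Delta x_{k+1}\|^2$ yields the coefficients
\[
c_\theta=a_\theta-\epsilon,\qquad c_x=\tfrac1{4\eta_x}-\tfrac{3B_C}{2\beta}-\tau,\qquad c_x'=\tau-\tfrac{3B_C}{2\beta}.
\]
With $\tau=1/(8\eta_x)$ and $\beta>36\bar\kappa_K^2/\eta_x$, we have $3B_C/(2\beta)<1/(8\eta_x)$, and the parameter conditions make all three coefficients positive. The correction $C_0\delta_k^2-C_0\delta_{k-1}^2$ that appears in $\Psi_{k+1}-\Psi_k$ has conditional mean at most $C_0C_E/m_k$. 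The $\delta_{k-1}^2$ term is $\cF_k$-measurable, and I would absorb it using $\delta_{k-1}^2\le C_0^{-1}(\Psi_k-1)$ from the definition of $\Psi_k$. That route is awkward, however, so I prefer to note that $\delta_{k-1}^2$ cancels against $-C_0\delta_{k-1}^2$ when $\Psi_{k+1}$ is compared with $\Psi_k$, enlarging $C_0$ if necessary so that it dominates the dual-ascent constant.

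\textbf{Main obstacle.} I expect the hardest step to be the $\delta_{k-1}^2$ bookkeeping just described. Taking $\E_k$ of $\delta_k^2$ terms gives $b/m_k$ and $\delta_k^2\Psi_k$ gives $a\Psi_k/m_k$. But the leftover $\cF_k$-measurable $\delta_{k-1}^2$ cannot be averaged away, so it must be absorbed by the definition of $\Psi_k$. This requires choosing $C_0$ consistently with Lemma~\ref{lem:l6-stochastic-lower}, which is possible because that lemma holds for any larger $C_0$.
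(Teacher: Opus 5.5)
Your proposal is correct and follows the paper's proof essentially step for step. Both arguments use the same four moves: true policy descent with the (vanishing) gradient perturbation, a Young-inequality transfer of the exact $x$-step decrease, dual ascent bounded via Lemma~\ref{lem:l6-empirical-dual-increment}, and absorption of the $\delta_{k-1}^2$ term by enlarging $C_0\ge C_2$ in Lemma~\ref{lem:l6-stochastic-lower}.

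The only difference is cosmetic. The paper uses the exact cancellation of the policy-gradient perturbation to get the additive pathwise recursion $\Psi_{k+1}\le\Psi_k-cD_{k+1}+C\delta_k^2$, and then obtains the $(1+a/m_k)$ factor trivially from $\Psi_k\ge0$. You instead keep a $C_\epsilon\delta_k^2\Psi_k$ charge, which is what would genuinely produce that factor if the cancellation failed.
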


\begin{proof}
The policy gradient is unchanged by the transition perturbation, by Lemma~\ref{lem:l6-policy-gradient-perturbation}. The true policy step therefore decreases the augmented Lagrangian by at least
$(1/(2\eta_{\theta,k})-\overline L/2)\|\Delta\theta_{k+1}\|^2$.

On the value comparison segment, deterministic boundedness of $x,y$ and uniform boundedness of $K,\widehat K$ imply
\[
\|\nabla_x\widehat{\mathcal L}_\beta-\nabla_x\mathcal L_\beta\|
=\|E_k^\trans(y+\beta F)+\beta K^\trans E_kx+\beta E_k^\trans E_kx\|
\le C\delta_k.
\]
Transferring the empirical value-step decrease by Young's inequality gives
$(4\eta_x)^{-1}\|\Delta x_{k+1}\|^2-C\delta_k^2$.
The true dual ascent satisfies
\[
\langle\Delta y_{k+1},F_{k+1}\rangle
\le\frac{3}{2\beta}\|\Delta y_{k+1}\|^2+C\delta_k^2,
\]
since $x^{k+1}$ is uniformly bounded. Substituting Lemma~\ref{lem:l6-empirical-dual-increment} and adding the potential correction yields
\[
\Phi_k-\Phi_{k+1}\ge c_\theta\|\Delta\theta_{k+1}\|^2
+c_x\|\Delta x_{k+1}\|^2+c_x'\|\Delta x_k\|^2
-C_1\delta_k^2-C_2\delta_{k-1}^2,
\]
where
\[
c_\theta\ge a_\theta,\quad
c_x=\frac1{4\eta_x}-\frac{3B_C}{2\beta}-\tau>0,\quad
c_x'=\tau-\frac{3B_C}{2\beta}>0.
\]
In Lemma~\ref{lem:l6-stochastic-lower}, increase $C_0$ to satisfy $C_0\ge C_2$. With $c=\min\{c_\theta,c_x,c_x'\}$, the corrected nonnegative potential then obeys the pathwise bound
\begin{equation}
\Psi_{k+1}\le\Psi_k-cD_{k+1}+C\delta_k^2.
\label{eq:occupancy-additive-descent}
\end{equation}
Conditioning gives $\E_k\Psi_{k+1}\le\Psi_k-c\E_kD_{k+1}+C/m_k$. Since $\Psi_k\ge0$, this also implies the stated inequality with any fixed $a>0$ after enlarging $b$.
\end{proof}

\paragraph{Role.}
This is the stochastic occupancy Lyapunov recursion. Theorem~\ref{thm:l6-stochastic-finite} and Theorem~\ref{thm:l6-stochastic-as} are direct consequences.

Define the true squared KKT residual
\[
G_k^{\rm true}=e_{\theta,k}^2+e_{x,k}^2+\|F_k\|^2,
\]
and let
$D_{k+1}=\|\Delta\theta_{k+1}\|^2+\|\Delta x_{k+1}\|^2+\|\Delta x_k\|^2$.

\begin{lemma}[True KKT residual bound]
\label{lem:l6-stochastic-kkt}
For $k\ge1$, there is $C_G<\infty$ such that
\[
\E_kG_{k+1}^{\rm true}
\le C_G\E_kD_{k+1}
+\frac{C_G}{m_k}\Psi_k
+\frac{C_G}{m_k}+C_G\delta_{k-1}^2.
\]
\end{lemma}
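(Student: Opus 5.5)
We bound the three components of $G_{k+1}^{\rm true}$ at $(\theta^{k+1},x^{k+1},y^{k+1})$ separately. In each case we first write the residual through the exact empirical optimality conditions. We then replace the empirical operator $\widehat K_k$ by the true $K_{\theta^{k+1}}$, paying $O(\delta_k)$ times deterministically bounded iterates. The deterministic uniform bounds on $x^k,y^k$ stated before Lemma~\ref{lem:l6-empirical-dual-increment} make every such replacement cost at most $C\delta_k$ pathwise. Conditioning with $\E_k\delta_k^2\le C_E/m_k$ then turns each $\delta_k^2$ into $C/m_k$. The term $C_G\delta_{k-1}^2$ is kept unconditioned, since $\delta_{k-1}$ is $\cF_k$-measurable.

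\textbf{Value residual and feasibility.} Lemma~\ref{lem:l6-empirical-dual} gives
\[
c_x+K_{\theta^{k+1}}^\trans y^{k+1}=-\eta_x^{-1}\Delta x_{k+1}-E_k(\theta^{k+1})^\trans y^{k+1},
\]
so $e_{x,k+1}^2\le 2\eta_x^{-2}\|\Delta x_{k+1}\|^2+C\delta_k^2$.

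For feasibility, $F_{k+1}=\widehat F_k(\theta^{k+1},x^{k+1})-E_k(\theta^{k+1})x^{k+1}=\beta^{-1}\Delta y_{k+1}-E_kx^{k+1}$. Lemma~\ref{lem:l6-empirical-dual-increment} then yields
\[
\|F_{k+1}\|^2\le C\|\Delta\theta_{k+1}\|^2+C\|\Delta x_{k+1}\|^2+C\|\Delta x_k\|^2+C(\delta_k^2+\delta_{k-1}^2).
\]

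\textbf{Policy residual.} This is the main step. Start from the proximal-gradient Fermat condition
\[
0\in\partial\psi_\Theta(\theta^{k+1})+\widehat g_{\theta,k}+\eta_{\theta,k}^{-1}\Delta\theta_{k+1}.
\]
By Lemma~\ref{lem:l6-policy-gradient-perturbation}, $\widehat g_{\theta,k}=g_{\theta,k}$ exactly, so no stochastic error enters here.

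The true stationarity vector at $k+1$ is
\[
D_\theta F(\theta^{k+1},x^{k+1})^\trans y^{k+1}.
\]
The difference from $g_{\theta,k}=D_\theta F(\theta^k,x^k)^\trans(y^k+\beta F_k)$ splits into three pieces:
\begin{itemize}
\item a change of $\theta$ in $D_\theta F$ and of $x$ in its affine argument, bounded using bounded $D_\theta K_\theta$, the uniform Lipschitz constant $\overline L$, and bounded iterates by $C(\|\Delta\theta_{k+1}\|+\|\Delta x_{k+1}\|+\|\Delta x_k\|)$;
\item the multiplier mismatch $y^{k+1}-y^k-\beta F_k$.
\end{itemize}
For the mismatch, write $\Delta y_{k+1}=\beta\widehat F_k(\theta^{k+1},x^{k+1})$ and $F_k=\widehat F_{k-1}(\theta^k,x^k)-E_{k-1}x^k$. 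Then $y^{k+1}-y^k-\beta F_k=\beta[\widehat F_k(\theta^{k+1},x^{k+1})-F_k]$, which is controlled by $\|F_{k+1}-F_k\|+C\delta_k$. The Lipschitz dependence of $F$ bounds $\|F_{k+1}-F_k\|$ by $C(\|\Delta\theta_{k+1}\|+\|\Delta x_{k+1}\|)$.

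I expect the main obstacle to be obtaining the stated $\Psi_k/m_k$ form rather than a clean $\delta_k^2$ term. Where a factor like $\|F_k\|$ or $\|y^k\|$ multiplies $\delta_k$ and I avoid the deterministic bound, I would use $\|F_k\|^2\le(4/\beta)\Psi_k$ from Lemma~\ref{lem:l6-stochastic-lower}. This gives $\delta_k^2\Psi_k$, whose conditional expectation is $C\Psi_k/m_k$ because $\Psi_k$ is $\cF_k$-measurable. Any remaining constant term goes into $C_G/m_k$.

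Squaring with $\|\sum_i a_i\|^2\le n\sum_i\|a_i\|^2$, summing the three components, and applying $\E_k$ gives the claim with $D_{k+1}$ collecting the increment energies.
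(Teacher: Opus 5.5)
Your proposal is correct and follows the paper's proof. It uses the same three-way split, the same identities $c_x+K_{k+1}^\trans y^{k+1}=-\eta_x^{-1}\Delta x_{k+1}-E_k^\trans y^{k+1}$ and $F_{k+1}=\beta^{-1}\Delta y_{k+1}-E_kx^{k+1}$ with Lemma~\ref{lem:l6-empirical-dual-increment}, the equality $\widehat g_{\theta,k}=g_{\theta,k}$ with uniform smoothness for the policy residual, and deterministic iterate bounds giving the pathwise estimate $G_{k+1}^{\rm true}\le C(D_{k+1}+\delta_k^2+\delta_{k-1}^2)$ before taking $\E_k$. The obstacle you anticipate does not arise: $x^k,y^k$ (hence $F_k$) are deterministically bounded, so every $\delta_k$ cross term is just $C\delta_k^2$ and the $\Psi_k/m_k$ term is slack, exactly as the paper remarks.
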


\begin{proof}
Use the composite policy Fermat condition, the equality of true and empirical policy gradients, the uniform policy smoothness bound, and $\eta_{\theta,k}^{-1}\le\underline\eta^{-1}$. They give
\[
e_{\theta,k+1}\le C(\|\Delta\theta_{k+1}\|+\|\Delta x_{k+1}\|+\|\Delta y_{k+1}\|).
\]
The empirical dual identity and update give
\[
c_x+K_{k+1}^\trans y^{k+1}
=-\eta_x^{-1}\Delta x_{k+1}-E_k^\trans y^{k+1},\qquad
F_{k+1}=\beta^{-1}\Delta y_{k+1}-E_kx^{k+1}.
\]
Uniform boundedness of $x^{k+1},y^{k+1}$ and Lemma~\ref{lem:l6-empirical-dual-increment} imply the pathwise estimate
\[
G_{k+1}^{\rm true}\le C\{D_{k+1}+\delta_k^2+\delta_{k-1}^2\}.
\]
Taking $\cF_k$-conditional expectations leaves the previous error $\delta_{k-1}^2$ unchanged and bounds the current error by $C/m_k$. This is stronger than the displayed estimate because $\Psi_k\ge0$.
\end{proof}

\paragraph{Role.}
This lemma translates Lyapunov descent into the true optimization residual and is the final input to the stochastic complexity theorem.

Let $A_T=\sum_{k=0}^{T-1}m_k^{-1}$.

\begin{theorem}[Finite-time stochastic occupancy theory]
\label{thm:l6-stochastic-finite}
There are constants $C_1,C_2,C_3,C_{\rm init}$ independent of the batch schedule such that
\[
\sup_{1\le k\le T}\E\Psi_k
\le e^{C_1A_T}(C_{\rm init}+C_2A_T).
\]
If $K$ is uniform on $\{1,\ldots,T\}$ and independent of the algorithmic randomness, then
\[
\E G_K^{\rm true}
\le C_3e^{C_1A_T}(1+A_T)
\left(\frac1T+\frac{A_T}{T}\right).
\]
\end{theorem}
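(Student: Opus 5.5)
The plan is a discrete Gronwall argument on the multiplicative recursion of Lemma~\ref{lem:l6-stochastic-descent}, followed by a telescoping sum fed into Lemma~\ref{lem:l6-stochastic-kkt}.

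\textbf{Step 1: bound $\E\Psi_k$.} Take total expectations in Lemma~\ref{lem:l6-stochastic-descent} and discard the nonpositive increment terms. With $u_k=\E\Psi_k$ this gives
\[
u_{k+1}\le(1+a/m_k)u_k+b/m_k,\qquad k\ge1.
\]
Iterating and using $1+x\le e^x$ yields
\[
u_k\le e^{a\sum_{j<k}m_j^{-1}}\Big(u_1+b\sum_{j<k}m_j^{-1}\Big)\le e^{aA_T}(u_1+bA_T)
\]
for $1\le k\le T$. For this bound to be uniform in the batch schedule we need $u_1=\E\Psi_1\le C_{\rm init}$. I would get this from the first update. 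Because the initialization is deterministic and $\theta^0\in\operatorname{dom}\psi_\Theta$, the policy comparison bounds $\phi(\theta^1)$. The pathwise deterministic bounds on $x^k,y^k$ (from the two-dimensional recursion) control every other term of $\Phi_1$. Finally $\delta_0^2$ is bounded because $\widehat P_0$ is row stochastic. This fixes $C_1=a$ and $C_2=b$.

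\textbf{Step 2: sum the increment energies.} Rearrange Lemma~\ref{lem:l6-stochastic-descent} to
\[
c\,\E D_{k+1}\le \E\Psi_k-\E\Psi_{k+1}+\frac{a}{m_k}\E\Psi_k+\frac{b}{m_k},
\]
with $c=\min\{c_\theta,c_x,c_x'\}$. Sum over $k=1,\dots,T-1$. The telescoping part is at most $\E\Psi_1$, since $\Psi\ge0$. The multiplicative part is at most $aA_T\sup_k\E\Psi_k$, which Step 1 controls. The result is
\[
\sum_k\E D_{k+1}\le C e^{C_1A_T}(1+A_T)^2 .
\]
The pathwise additive bound \eqref{eq:occupancy-additive-descent} gives something sharper, $\sum_k\E D_{k+1}\le C(\E\Psi_1+A_T)$, and I would use it where convenient.

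\textbf{Step 3: transfer to the KKT residual.} Apply Lemma~\ref{lem:l6-stochastic-kkt} and take expectations:
\[
\E G_{k+1}^{\rm true}\le C_G\big(\E D_{k+1}+m_k^{-1}\E\Psi_k+m_k^{-1}+\E\delta_{k-1}^2\big),
\]
where $\E\delta_{k-1}^2\le C_E/m_{k-1}$. Summing over $k$, the terms $\E D_{k+1}$ are covered by Step 2. The terms $m_k^{-1}\E\Psi_k$ are at most $A_T e^{C_1A_T}(C_{\rm init}+C_2A_T)$. The remaining terms contribute $O(A_T)$. The $k=0$ term $\E G_1^{\rm true}$ is a constant, controlled by the first-step Fermat condition and the deterministic iterate bounds. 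Dividing by $T$ and using that $K$ is uniform and independent gives
\[
\E G_K^{\rm true}\le \frac{C}{T}\,e^{C_1A_T}(1+A_T)^2 ,
\]
which has the form $C_3e^{C_1A_T}(1+A_T)(1/T+A_T/T)$.

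The main obstacle I expect is the bookkeeping needed to make every constant independent of the batch schedule. This concerns the first-step quantities $\E\Psi_1$ and $\E G_1^{\rm true}$ and the indexing convention $\delta_{-1}=0$. The key facts are that all pathwise bounds (on $x,y,\widehat K^{-1}$, and $\delta_k$) are deterministic and that $m_k\ge2$. Once these are in place, Steps 1–3 are routine.
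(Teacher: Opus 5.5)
Your proof is correct and yields exactly the stated bound. Its skeleton matches the paper's: first bound $\E\Psi_k$, then sum the increment energies, then transfer to the KKT residual and average. The first-step constants $\E\Psi_1$ and $\E G_1^{\rm true}$ are handled the same way in both, via the comparison with $\theta^0$ and the deterministic iterate bounds.

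Where you differ is the recursion you drive the argument with.

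\textbf{Your route.} You apply discrete Gronwall to the multiplicative inequality of Lemma~\ref{lem:l6-stochastic-descent}. You then feed the displayed conditional bound of Lemma~\ref{lem:l6-stochastic-kkt}, with its $m_k^{-1}\Psi_k$ term, into the average. This is exactly what produces the factor $e^{C_1A_T}$ and the product $(1+A_T)(1/T+A_T/T)=(1+A_T)^2/T$ appearing in the theorem.

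\textbf{The paper's route.} The paper telescopes the pathwise additive inequality \eqref{eq:occupancy-additive-descent}, $\Psi_{k+1}\le\Psi_k-cD_{k+1}+C\delta_k^2$, in expectation. This gives the sharper intermediate bounds $\sup_{1\le k\le T}\E\Psi_k\le C_{\rm init}+CA_T$ and $\sum_k\E D_{k+1}\le C(1+A_T)$, with no exponential. It then uses the pathwise residual estimate $G_{k+1}^{\rm true}\le C(D_{k+1}+\delta_k^2+\delta_{k-1}^2)$ from inside the proof of Lemma~\ref{lem:l6-stochastic-kkt}. That yields $\E G_K^{\rm true}\le C(1+A_T)/T$ directly. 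The theorem's form is reached only by inflating constants, since the extra factors are at least one.

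\textbf{What each buys.} Your approach uses the two lemmas purely as black boxes and explains where the shape of the statement comes from. The paper's approach shows the exponential and the second $(1+A_T)$ factor are artifacts. The multiplicative recursion is itself a weakening of the additive one, obtained by adding the nonnegative term $(a/m_k)\Psi_k$. So under the same hypotheses, the finite-time rate actually holds without the $e^{C_1A_T}$ factor, even when $A_T$ is not bounded.
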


\begin{proof}
The first-step comparison with $\theta^0$ and the deterministic iterate bounds give uniform upper bounds on $\E\Psi_1$ and $\E G_1^{\rm true}$. Absorb these into $C_{\rm init}$. Summing the additive bound \eqref{eq:occupancy-additive-descent} in expectation and using $\Psi_k\ge0$ yields
\[
\sup_{1\le k\le T}\E\Psi_k\le C_{\rm init}+CA_T,\qquad
\sum_{k=1}^{T-1}\E D_{k+1}\le C(1+A_T).
\]
The pathwise residual estimate in Lemma~\ref{lem:l6-stochastic-kkt} has two consecutive operator errors. Their expected sum is at most $2CA_T$. Including the first residual gives
\[
\frac1T\sum_{j=1}^T\E G_j^{\rm true}\le C(1+A_T)/T.
\]
This proves the stated bounds after increasing constants, since the exponential and additional $(1+A_T)$ factors are at least one. Uniform randomization identifies the left side with $\E G_K^{\rm true}$.
\end{proof}

\paragraph{Role.}
This theorem gives the finite-time guarantee for the stochastic occupancy operator and verifies that the operator-stability mechanism survives a second, explicit RL formulation.

\begin{corollary}[Equal-batch complexity]
\label{cor:l6-equal-batch}
If $m_k=m=N/T$, then $A_T=T^2/N$. If $N\ge c_0T^2$ for a fixed $c_0>0$, then
\[
\E G_K^{\rm true}=O\!\left(\frac1T+\frac{T}{N}\right).
\]
For an unsquared KKT target $\epsilon$, it is sufficient to take $T=O(\epsilon^{-2})$ and $N=O(\epsilon^{-4})$.
\end{corollary}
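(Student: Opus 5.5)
The equal-batch bound follows by specializing Theorem~\ref{thm:l6-stochastic-finite} and checking that its prefactors are bounded. Take $m_k=m=N/T$ for every $k$. Then
\[
A_T=\sum_{k<T}m^{-1}=T\cdot\frac{T}{N}=\frac{T^2}{N}.
\]
The assumption $N\ge c_0T^2$ gives $A_T\le 1/c_0$, a constant independent of $T$ and $N$. Consequently both $e^{C_1A_T}\le e^{C_1/c_0}$ and $1+A_T\le 1+1/c_0$ are uniformly bounded. The exponential factor from the theorem is therefore harmless in this regime; without the restriction $N\gtrsim T^2$ it could blow up.

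Substituting into the theorem's bound gives
\[
\E G_K^{\rm true}\le C_3e^{C_1/c_0}(1+1/c_0)\left(\frac1T+\frac{T}{N}\right),
\]
since $A_T/T=T/N$. This is the claimed $O(T^{-1}+T/N)$ rate.

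For the complexity statement, let $\epsilon$ be the target for the unsquared residual $\sqrt{G}$. Jensen's inequality gives $\E\sqrt{G_K^{\rm true}}\le\sqrt{\E G_K^{\rm true}}$, so it suffices to make $\E G_K^{\rm true}\le\epsilon^2$. Choose $T\asymp\epsilon^{-2}$ so that the $1/T$ term is at most $\epsilon^2/2$. Then choose $N\asymp T^2\asymp\epsilon^{-4}$ with a large enough constant that $T/N\le\epsilon^2/2$. This choice also satisfies $N\ge c_0T^2$, which is exactly the condition that keeps the prefactor bounded. So $T=O(\epsilon^{-2})$ and $N=O(\epsilon^{-4})$ suffice.

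There is no real obstacle here. The one point to verify is that the two sample-size constraints, $N\gtrsim T^2$ for the prefactor and $N\gtrsim T/\epsilon^2$ for the rate, agree at $N\asymp\epsilon^{-4}$. They do, because $T^2\asymp\epsilon^{-4}\asymp T/\epsilon^2$.
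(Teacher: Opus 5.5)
Your proposal is correct and follows the paper's proof essentially step for step. Like the paper, you compute $A_T=T^2/N$, use $N\ge c_0T^2$ to bound $e^{C_1A_T}(1+A_T)$ by a constant, identify $A_T/T=T/N$, and then apply Jensen with $\E G_K^{\rm true}\le\epsilon^2$ to obtain $T=O(\epsilon^{-2})$ and $N=O(\epsilon^{-4})$. Your explicit check that choosing $N\asymp T^2$ with a constant at least $c_0$ keeps the prefactor bounded is a detail the paper leaves implicit.
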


\begin{proof}
Equal batches give $A_T=T/m=T^2/N$. If $N\ge c_0T^2$, then $A_T\le1/c_0$, so the exponential and $(1+A_T)$ factors in Theorem~\ref{thm:l6-stochastic-finite} are bounded by constants. Moreover $A_T/T=T/N$, proving the rate. Jensen gives $\E\sqrt{G_K^{\rm true}}\le\sqrt{\E G_K^{\rm true}}$. Requiring the squared residual to be $O(\epsilon^2)$ yields $T=O(\epsilon^{-2})$ and $N=O(\epsilon^{-4})$.
\end{proof}

\paragraph{Role.}
The corollary puts the stochastic occupancy theorem on the same squared-versus-unsquared residual scale as the main Bellman complexity result.

\begin{theorem}[Almost-sure stochastic occupancy convergence]
\label{thm:l6-stochastic-as}
If $\sum_km_k^{-1}<\infty$, then $\Psi_k$ converges almost surely,
\[
\sum_kD_{k+1}<\infty
\quad\text{almost surely},
\]
$\delta_k\to0$, and $G_k^{\rm true}\to0$ almost surely.
\end{theorem}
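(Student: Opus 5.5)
The plan is to use the pathwise additive recursion \eqref{eq:occupancy-additive-descent} from the proof of Lemma~\ref{lem:l6-stochastic-descent}, namely $\Psi_{k+1}\le\Psi_k-cD_{k+1}+C\delta_k^2$ for $k\ge1$ with $\Psi_k\ge0$, together with Lemma~\ref{lem:l6-regenerative} and its consequence $\E_k\delta_k^2\le C_E/m_k$.

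\textbf{Step 1: the operator errors are summable.} Take total expectations to get $\sum_k\E\delta_k^2\le C_E\sum_km_k^{-1}<\infty$. By Tonelli, $\sum_k\delta_k^2<\infty$ almost surely, and in particular $\delta_k\to0$ almost surely.

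\textbf{Step 2: convergence of $\Psi_k$ and summability of $D_{k+1}$.} Conditioning the pathwise recursion gives
\[
\E_k\Psi_{k+1}\le\Psi_k-c\,\E_kD_{k+1}+\frac{CC_E}{m_k}.
\]
This is exactly the Robbins--Siegmund setting with nonnegative $\Psi_k$, summable deterministic perturbations $CC_E/m_k$, and nonnegative decrease terms. The conclusion is that $\Psi_k$ converges almost surely to a finite limit and $\sum_k D_{k+1}<\infty$ almost surely. Alternatively, one can avoid Robbins--Siegmund by working pathwise on the event of Step~1: summing the additive recursion gives $\Psi_{T}+c\sum_{k<T}D_{k+1}\le\Psi_1+C\sum_k\delta_k^2$. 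This bounds $\sum_kD_{k+1}$, and $\Psi_k$ is then a nonnegative sequence whose positive increments are summable, so it converges. The finiteness of $\Psi_1$ follows from the first-step comparison with $\theta^0$ and the deterministic iterate bounds.

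\textbf{Step 3: the KKT residual vanishes.} Use the pathwise estimate from the proof of Lemma~\ref{lem:l6-stochastic-kkt}, $G_{k+1}^{\rm true}\le C\{D_{k+1}+\delta_k^2+\delta_{k-1}^2\}$, which combines the policy Fermat condition, the exact gradient equality of Lemma~\ref{lem:l6-policy-gradient-perturbation}, the empirical dual identity, and Lemma~\ref{lem:l6-empirical-dual-increment}. Since $D_{k+1}\to0$ as the terms of a convergent series and $\delta_k\to0$, we get $G_k^{\rm true}\to0$ almost surely.

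The main obstacle is making sure the residual estimate used in Step~3 is genuinely pathwise rather than only in conditional expectation. It relies on the deterministic uniform bounds for $x^k,y^k$, obtained from the pathwise two-dimensional recursion under $\beta\eta_x>4\bar\kappa_K^2$, and on the bound $e_{\theta,k+1}\lesssim\|\Delta\theta_{k+1}\|+\|\Delta x_{k+1}\|+\|\Delta y_{k+1}\|$, with $\|\Delta y_{k+1}\|^2$ controlled by $D_{k+1}$ and $\delta_k^2+\delta_{k-1}^2$. I would verify these carefully before concluding.
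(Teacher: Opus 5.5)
Your proposal is correct and follows the paper's proof. Both use Tonelli applied to $\sum_k\E\delta_k^2$ to get pathwise summability of the operator errors. Both then telescope \eqref{eq:occupancy-additive-descent} pathwise; the paper phrases this as monotonicity of $\Psi_k+C\sum_{j\ge k}\delta_j^2$, which is the same as your ``summable positive increments'' argument. Both finish with the pathwise bound $G_{k+1}^{\rm true}\le C\{D_{k+1}+\delta_k^2+\delta_{k-1}^2\}$ from the proof of Lemma~\ref{lem:l6-stochastic-kkt}.

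Your Robbins--Siegmund route also works, but by itself it yields only $\sum_k\E_kD_{k+1}<\infty$. You would need an extra conditional Borel--Cantelli step to reach $\sum_kD_{k+1}<\infty$, which your pathwise version avoids.
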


\begin{proof}
Lemma~\ref{lem:l6-regenerative} and Tonelli's theorem give
$\sum_k\delta_k^2<\infty$ almost surely. Fix such a path. From \eqref{eq:occupancy-additive-descent},
\[
\Psi_{k+1}+C\sum_{j\ge k+1}\delta_j^2
\le\Psi_k+C\sum_{j\ge k}\delta_j^2-cD_{k+1}.
\]
The corrected sequence is nonnegative and decreasing. It converges and $\sum_kD_{k+1}<\infty$. Since the error tails vanish, $\Psi_k$ converges as well. The pathwise residual estimate in Lemma~\ref{lem:l6-stochastic-kkt} then gives $G_k^{\rm true}\to0$.
\end{proof}

\paragraph{Role.}
This theorem is the asymptotic stochastic counterpart of Theorem~\ref{thm:l6-occ-convergence}.

\begin{corollary}[Conditional nonlinear-policy performance]
\label{cor:l6-performance}
Assume in addition a parameter-space domination inequality
\[
J^\star-J(\pi_\theta)\le C_{\rm par}R_\theta(\theta),
\qquad
R_\theta(\theta)^2\le C_GG^{\rm true}.
\]
Then
\[
\E[J^\star-J(\pi_{\theta_K})]
\le C\sqrt{\E G_K^{\rm true}}.
\]
If instead $J^\star-J(\pi_\theta)\le C R_\theta(\theta)^2$, then the performance gap is $O(\E G_K^{\rm true})$.
\end{corollary}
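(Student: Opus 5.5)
This corollary only combines Theorem~\ref{thm:l6-stochastic-finite} with a pointwise conversion from the true KKT residual to the return gap. The plan mirrors the truncation argument in the proof of Corollary~\ref{cor:kkt-global-finite}: first obtain a pointwise bound $J^\star-J(\pi_\theta)\le C\sqrt{G^{\rm true}}$ valid for every $G^{\rm true}\ge0$, then take expectations at the random output $\theta_K$ and apply Jensen.

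\textbf{Step 1 (pointwise square-root conversion).} Chaining the two assumed inequalities gives
\[
J^\star-J(\pi_\theta)\le C_{\rm par}R_\theta(\theta)\le C_{\rm par}\sqrt{C_G}\,\sqrt{G^{\rm true}}
\]
for every parameter. This holds without any case split, since both hypotheses are global. If one prefers hypotheses stated only on a region $G^{\rm true}\le1$, the fallback is the truncation from Corollary~\ref{cor:kkt-global-finite}. Returns are bounded by $R_{\max}/(1-\gamma)$, so the gap is at most $D_{\max}=2R_{\max}/(1-\gamma)$. On $\{G^{\rm true}>1\}$ we have $D_{\max}\le D_{\max}\sqrt{G^{\rm true}}$, and taking $C=\max\{C_{\rm par}\sqrt{C_G},D_{\max}\}$ covers both cases.

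\textbf{Step 2 (expectation).} Evaluate the bound at $\theta=\theta_K$, where $K$ is the independent uniform output index of Theorem~\ref{thm:l6-stochastic-finite}. Taking expectations and applying Jensen to the concave map $t\mapsto\sqrt t$ gives
\[
\E[J^\star-J(\pi_{\theta_K})]\le C\,\E\sqrt{G_K^{\rm true}}\le C\sqrt{\E G_K^{\rm true}}.
\]
Measurability of $\theta_K$ and integrability are immediate from bounded returns. Nonnegativity of the gap follows from the optimality of $J^\star$.

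\textbf{Step 3 (quadratic variant).} Under $J^\star-J(\pi_\theta)\le CR_\theta(\theta)^2$, the same chain gives the pointwise bound $J^\star-J(\pi_\theta)\le CC_G\,G^{\rm true}$, and taking expectations is linear, so no Jensen step is needed. This yields $O(\E G_K^{\rm true})$. Any region-restricted version is again handled by truncation, using $D_{\max}\le D_{\max}G^{\rm true}$ on $\{G^{\rm true}>1\}$, exactly as in the proof of Theorem~\ref{thm:l5-summary}.

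The only delicate point is that $R_\theta$ and $G^{\rm true}$ must be evaluated at the same iterate. The second hypothesis should therefore be read as applying to the output triple $(\theta^K,x^K,y^K)$, whose residual Theorem~\ref{thm:l6-stochastic-finite} controls. I expect no real obstacle beyond this bookkeeping. Substituting the finite-time bound of Theorem~\ref{thm:l6-stochastic-finite} or Corollary~\ref{cor:l6-equal-batch} then gives explicit rates.
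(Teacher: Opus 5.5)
Your proposal is correct and follows the paper's proof. You chain the two domination inequalities into the pointwise bound $J^\star-J(\pi_{\theta_K})\le C_{\rm par}\sqrt{C_G}\,\sqrt{G_K^{\rm true}}$ and then apply expectation plus Jensen, and you get the quadratic variant from linearity of expectation; the truncation fallback is unnecessary under the global hypotheses, as you note, but harmless.
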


\begin{proof}
The first two inequalities imply
$J^\star-J(\pi_{\theta_K})\le C\sqrt{G_K^{\rm true}}$. Taking expectations and applying Jensen proves the first claim. Under quadratic parameter-space domination,
$J^\star-J(\pi_{\theta_K})\le CC_GG_K^{\rm true}$, and expectation preserves the squared-residual rate.
\end{proof}

\paragraph{Role.}
This corollary states the domination conditions for performance guarantees with nonlinear policies.

\section{Additional Experimental Results}
\label{app:experiments}

This section provides additional numerical evidence for the structural
predictions studied in Sections~\ref{sec:experiments}.
All experiments use finite discounted MDPs with direct tabular policies and the
same reset-controlled Markov sampling mechanism as in the theoretical analysis.
Unless stated otherwise, results are averaged over multiple independently
generated MDP instances and trajectory seeds, and error bars denote 95\%
confidence intervals.

The experiments are designed to isolate four components of the theory:
(i) the statistical contribution of the Markov batch size,
(ii) the correlated residual--Jacobian product created by shared sampling,
(iii) the conversion from KKT residual to policy performance, and
(iv) the numerical sensitivity to the ADMM penalty parameter.
The corresponding main structural tests for shared-operator stability and
finite-time scaling are reported in Section~\ref{sec:experiments}.

\subsection{Batch-Size Sensitivity}
\label{app:batch_sensitivity}

The finite-time result predicts
\[
    \mathbb{E}[\widetilde G_{K+1}]
    \le
    \frac{A}{T}
    +
    \frac{B}{T}\sum_{k<T}\frac{1}{m_k}.
\]
For a constant batch size $m_k=m$, this reduces to
\[
    O(T^{-1}+m^{-1}).
\]
We therefore fix the outer iteration budget at $T=240$ and vary
$m\in\{60,120,240,480,960\}$.

Figure~\ref{fig:batch_sensitivity} shows that the mean companion squared KKT
residual decreases monotonically with the batch size. A log--log fit over the
tested range gives slope approximately $-0.87$, close to the $m^{-1}$
statistical contribution predicted by the finite-time bound. The deviation
from an exact $-1$ slope is expected because the bound contains the additional
$A/T$ term, which produces a finite-iteration floor as $m$ increases.

\begin{figure}[t]
    \centering
    \includegraphics[width=0.58\linewidth]
    {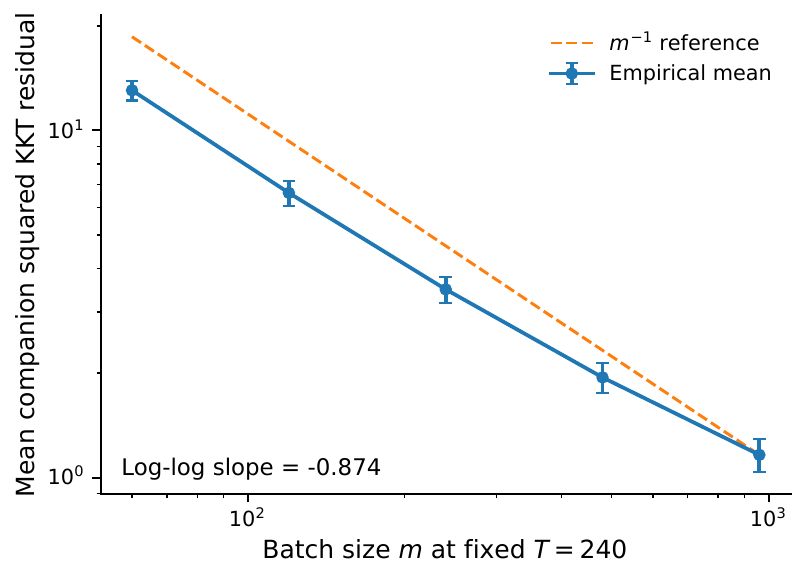}
    \caption{
    \textbf{Batch-size sensitivity.}
    Mean companion squared KKT residual at fixed $T=240$ as the Markov batch
    size increases. The dashed line shows an $m^{-1}$ reference slope.
    }
    \label{fig:batch_sensitivity}
\end{figure}

\subsection{Residual--Jacobian Product under Shared Sampling}
\label{app:product_bias}

A central difficulty in the stochastic analysis is that the Bellman residual
and its Jacobian are constructed from the same empirical operator. Writing
\[
    e=\widehat R-R,
    \qquad
    G=\widehat J-J,
\]
the stochastic policy-gradient perturbation contains the correlated term
\[
    \beta G^\top e.
\]
Separate unbiasedness of $G$ and $e$ does not imply that this product has zero
mean. Remark~\ref{rem:product-bias} instead controls the product pathwise
through the shared empirical operator error:
\[
    \|G^\top e\|
    =
    O(\epsilon_k^2).
\]
Since the empirical operator satisfies
$\mathbb{E}[\epsilon_k^2]=O(m^{-1})$, the resulting perturbation is predicted
to occur at the same $m^{-1}$ scale.

To isolate this effect, we fix a primal--dual point and repeatedly construct
empirical Bellman operators with different batch sizes. We compare two
estimators. The \emph{same-batch} estimator uses one empirical operator for
both $G$ and $e$, matching the algorithm. The \emph{independent-batch}
estimator constructs them from separate samples, thereby suppressing the
correlation at the cost of double sampling.

As shown in Figure~\ref{fig:product_bias}, the same-batch product decays with
empirical log--log slope approximately $-1.13$, consistent with the
$O(m^{-1})$ prediction. Independent batches produce a substantially smaller
mean product term, as expected from decorrelation. The latter, however, no
longer represents the single shared empirical operator used by the algorithm.
Together with the exact value--dual identity experiment in the main text,
this result illustrates the role of structure-preserving stochasticization:
the algorithm retains the coupled empirical operator and controls the induced
product perturbation rather than removing it through double sampling.

\begin{figure}[t]
    \centering
    \includegraphics[width=0.58\linewidth]
    {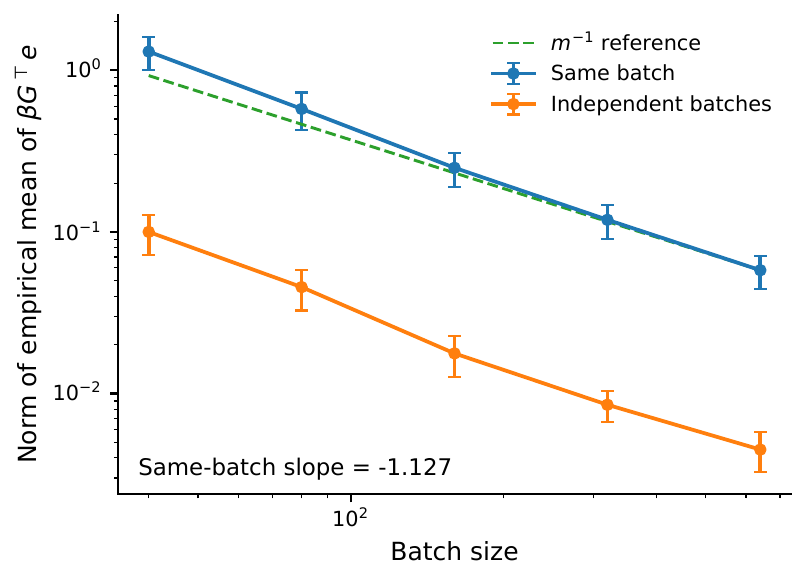}
    \caption{
    \textbf{Residual--Jacobian product scaling.}
    Magnitude of the empirical product term $\beta G^\top e$ as the batch
    size increases. The same-batch construction follows approximately the
    predicted $m^{-1}$ scale. Independent batches reduce the correlation but
    require separate samples.
    }
    \label{fig:product_bias}
\end{figure}

\subsection{From KKT Residual to Policy Performance}
\label{app:kkt_performance}

Section~\ref{sec:performance} establishes that, under discounted occupancy
coverage, the optimization residual has a direct RL interpretation. In
particular, the policy stationarity residual is controlled by the KKT residual,
and the policy-performance gap consequently satisfies
\[
    J^\star-J(\pi)
    =
    O(\sqrt{G}).
\]
We examine this relationship along the iterates generated by the Bellman-ADMM
dynamics.

For every sampled iterate, we compute the true KKT residual $G$ using the
underlying MDP model and evaluate the corresponding policy return exactly.
Figure~\ref{fig:kkt_performance} plots
$J^\star-J(\pi)$ against $\sqrt{G}$. The points exhibit the predicted
first-order relation throughout the observed regime. The empirical 95th
percentile of
\[
    \frac{J^\star-J(\pi)}{\sqrt{G}}
\]
is $2.72$, while the ratio remains bounded across the collected iterates.
This experiment complements the optimization-based residual results by showing
that decreasing KKT error is accompanied by decreasing policy suboptimality.

\begin{figure}[t]
    \centering
    \includegraphics[width=0.58\linewidth]
    {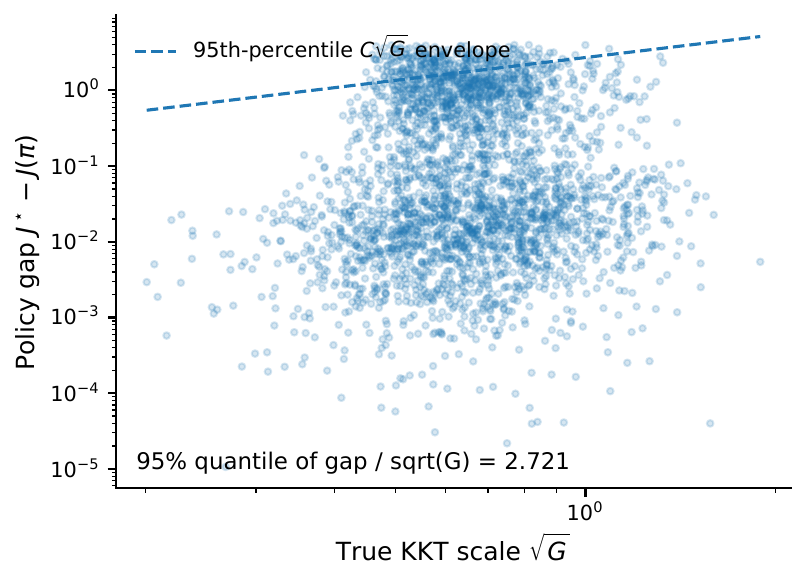}
    \caption{
    \textbf{KKT residual and policy performance.}
    Policy suboptimality versus the true KKT scale $\sqrt{G}$.
    The dashed line denotes the empirical 95th-percentile
    $C\sqrt{G}$ envelope.
    }
    \label{fig:kkt_performance}
\end{figure}

\subsection{Sensitivity to the Penalty Parameter}
\label{app:beta_sensitivity}

The convergence analysis imposes a sufficient lower bound on the ADMM penalty
parameter $\beta$ to guarantee positivity of the coefficients in the Lyapunov
descent argument. Such analytical margins are designed to hold uniformly over
the admissible problem class and need not coincide with empirically optimal
parameter choices.

We therefore vary
\[
    \beta\in\{2.5,5,10,20,40\}
\]
while keeping the remaining algorithmic parameters fixed. As shown in
Figure~\ref{fig:beta_sensitivity}, the method remains numerically stable over
the entire tested range. The trailing KKT residual changes smoothly with
$\beta$, with smaller penalties yielding lower residuals on these instances.
This behavior is consistent with the role of the theoretical condition as a
uniform sufficient stability margin rather than a tuning prescription for
individual MDPs.

\begin{figure}[t]
    \centering
    \includegraphics[width=0.58\linewidth]
    {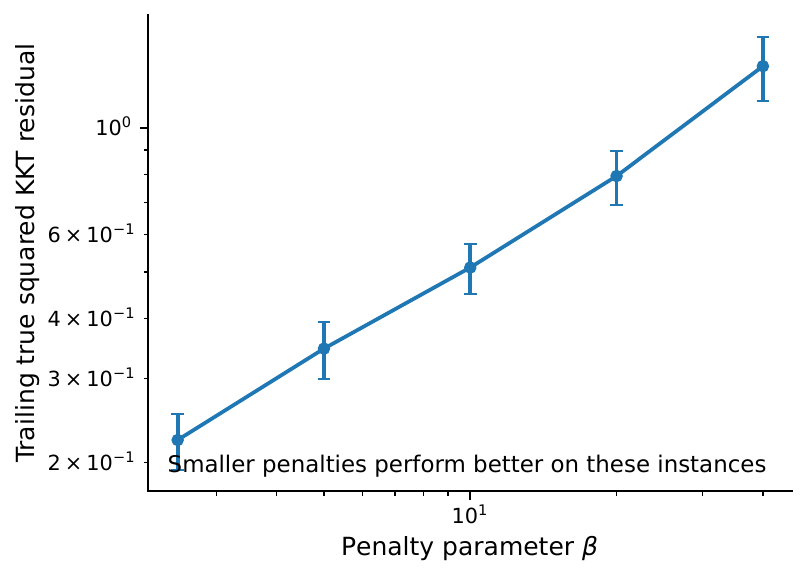}
    \caption{
    \textbf{Penalty sensitivity.}
    Trailing true squared KKT residual as a function of the ADMM penalty
    parameter $\beta$.
    }
    \label{fig:beta_sensitivity}
\end{figure}

\subsection{Operator Consistency: Experimental Design}
\label{app:operator_consistency}

The following ablations separate the role of empirical-operator consistency
from the precision of each empirical model and the total sampling cost.
We compare three constructions: \emph{all shared}, \emph{value--dual shared},
and \emph{all separate}. Table~\ref{tab:oc_design} specifies the model used
by each update. The value--dual-shared construction isolates the consistency
needed by the value--dual identity, while retaining an independently estimated
model for the policy update.

\begin{table}[ht]
\centering
\caption{Empirical-model assignments. Distinct indices denote separately sampled models. Under an equal per-model batch size $m$, the last column gives the total number of reset-controlled sampling steps per iteration.}
\label{tab:oc_design}
\small
\begin{tabular}{lcccc}
\toprule
Construction & Policy & Value & Dual & Total steps \\
\midrule
All shared & 0 & 0 & 0 & $m$ \\
Value--dual shared & 0 & 1 & 1 & $2m$ \\
All separate & 0 & 1 & 2 & $3m$ \\
\bottomrule
\end{tabular}
\end{table}

\paragraph{Instances and parameters.}
Each MDP has five states and two actions. Every transition row is sampled
independently from $\operatorname{Dirichlet}(0.7\mathbf{1}_5)$.
Rewards are independent $\mathcal{N}(0,0.45^2)$ draws plus statewise action
trends. For states $s=0,\ldots,4$, these trends are $0.30-0.125s$ for
action 0 and $-0.35+0.20s$ for action 1. The generated transition kernel
and mean rewards remain fixed during each run.
The reset, training, and evaluation distributions are uniform,
$\rho=\mu=\nu=\mathbf{1}_5/5$, with $c=-\mu$ and $\phi\equiv0$.
Unless specified otherwise, $\gamma=0.85$, $\beta=10$, and
$\eta_\pi=\eta_V=0.15$. We initialize $\pi^0$ uniformly and set
$V^0=\lambda^0=0$. Each run uses 180 outer iterations.
The MDP seeds are $3,7,11,19,23,31,43,59$, and each instance uses trajectory
seeds $0,1,2,3,4$.

\paragraph{Markov sampling and budget accounting.}
At iteration $k$, the behavior policy is frozen at
$\bar\pi^k=0.7\pi^k+0.3\mathbf{1}_2/2$.
Each step resets to $\rho$ with probability $1-\gamma$ and otherwise samples
an environment transition. Environment rewards are observed with independent
$\mathcal{N}(0,0.3^2)$ noise. Only non-reset transitions contribute to the
empirical transition and reward rows. Unvisited rows use the uniform
transition distribution and reward zero.
Each distinct model has a separate random stream and a persistent chain state,
initialized at state 0. Corresponding streams use matched random seeds across
constructions. With adaptive behavior, subsequent observations can differ as
the policies evolve. A fixed-behavior control below removes this dependence.

We vary $m\in\{80,160,320,640\}$ under two budget conventions.
For \emph{equal total budget}, $m$ reset-controlled steps are divided among
one, two, or three models, with remainders assigned to the earliest model
indices. For \emph{equal per-model batch}, every distinct model receives
$m$ steps, giving total costs $m$, $2m$, and $3m$.
Reset steps are included in these costs. Counts of environment transitions
and resets are also recorded separately.
All three constructions in this study use the same stream protocol.
The original two-way experiment uses serial blocks of one trajectory and
assigns $\lfloor m/3\rfloor$ steps to each separate model. The new controls
use exactly $m$ total steps in the equal-budget comparison.

\paragraph{Exact subproblem solutions.}
The two-action policy subproblem separates over states. Write
$x_s^k=\pi^k(1\mid s)$,
$q_{sa}=\widehat r^p_k(s,a)+\gamma\sum_{s'}\widehat P^p_k(s'\mid s,a)V^k(s')$,
$d_s=q_{s1}-q_{s0}$, and $b_s=V^k(s)-q_{s0}$, where the superscript $p$
denotes the policy model. Its exact solution is
\begin{equation}
 x_s^{k+1}=\operatorname{proj}_{[0,1]}
 \frac{d_s(\lambda_s^k+\beta b_s)+2x_s^k/\eta_\pi}
 {\beta d_s^2+2/\eta_\pi}.
 \label{eq:oc_policy_solve}
\end{equation}
Let $\widehat M_k^v$ and $\widehat r_k^v$ denote the value model evaluated at
$\pi^{k+1}$. The value update solves
\begin{equation}
 \left[\beta(\widehat M_k^v)^\top\widehat M_k^v+\eta_V^{-1}I\right]V^{k+1}
 =-c-(\widehat M_k^v)^\top\lambda^k
   +\beta(\widehat M_k^v)^\top\widehat r_k^v+\eta_V^{-1}V^k.
 \label{eq:oc_value_solve}
\end{equation}
The coefficient matrix is positive definite. These exact solves ensure that
subproblem approximation does not enter the comparison.

\paragraph{Metrics and statistical aggregation.}
We evaluate the true squared KKT residual
\begin{equation}
 G(\pi,V,\lambda)=
 \operatorname{dist}^2\!\left(0,N_\Pi(\pi)+J_\pi(\pi,V)^\top\lambda\right)
 +\|c+M_\pi^\top\lambda\|^2+\|R(\pi,V)\|^2
 \label{eq:oc_G}
\end{equation}
using the underlying MDP. Policy return is evaluated by an exact Bellman
solve, and $J^\star$ is obtained by enumerating the $2^5$ deterministic
policies. We also record $\|\lambda^{k+1}-\lambda^k\|^2$, the return gap
$J^\star-J(\pi^{k+1})$, and the identity defect defined below.
Unless stated otherwise, each reported outcome averages iterations 151--180.
We first average the five trajectory seeds within each MDP, then average
the eight MDP means. Figure intervals are 95\% percentile bootstrap intervals
from 10,000 resamples of these eight means. Paired comparisons use the eight
within-MDP differences and a 95\% Student-$t$ interval with seven degrees of
freedom. These are individual, unadjusted intervals.

\subsection{Identifying the Value--Dual Consistency Mechanism}
\label{app:oc_identity}

Let $\widehat R_k^v$ and $\widehat R_k^d$ denote the empirical residuals
used by the value and dual updates. In this subsection, both are evaluated
at $(\pi^{k+1},V^{k+1})$, and $\widehat M_k^v$ is evaluated at $\pi^{k+1}$.
The exact value solve and the multiplier update give
\begin{align}
 0&=c+(\widehat M_k^v)^\top\lambda^k
       +\beta(\widehat M_k^v)^\top\widehat R_k^v
       +\eta_V^{-1}\Delta V_{k+1},\\
 \lambda^{k+1}&=\lambda^k+\beta\widehat R_k^d.
\end{align}
Substitution yields the pathwise relation
\begin{equation}
 \underbrace{c+(\widehat M_k^v)^\top\lambda^{k+1}
        +\eta_V^{-1}\Delta V_{k+1}}_{h_{k+1}}
 =\beta(\widehat M_k^v)^\top
       (\widehat R_k^d-\widehat R_k^v).
 \label{eq:oc_defect}
\end{equation}
Thus, sharing the value and dual models sets the right-hand side to zero,
regardless of the model used in the policy update. This identity explains
why both all-shared and value--dual-shared updates retain an explicit
multiplier representation. All sharing also reuses the same empirical model
for the policy update, reducing the number of sampled models.

We measure the identity defect $D_{k+1}=\|h_{k+1}\|_2$ and verify
Equation~\eqref{eq:oc_defect} at every update.
At equal per-model batch size 320, the mean trailing defects are
$1.05\times10^{-14}$ for both sharing constructions and $1.793$ for all
separate updates. Figure~\ref{fig:oc_identity} shows the same distinction
across batch sizes. Across the full experiment suite, including the
large-penalty check below, the maximum norm of the difference between the
two sides of Equation~\eqref{eq:oc_defect} is $2.02\times10^{-10}$.
The value--dual-shared construction serves as a mechanism ablation.
The convergence results in the main text apply to the specified all-shared
algorithm.

\begin{figure}[t]
\centering
\includegraphics[width=\linewidth]{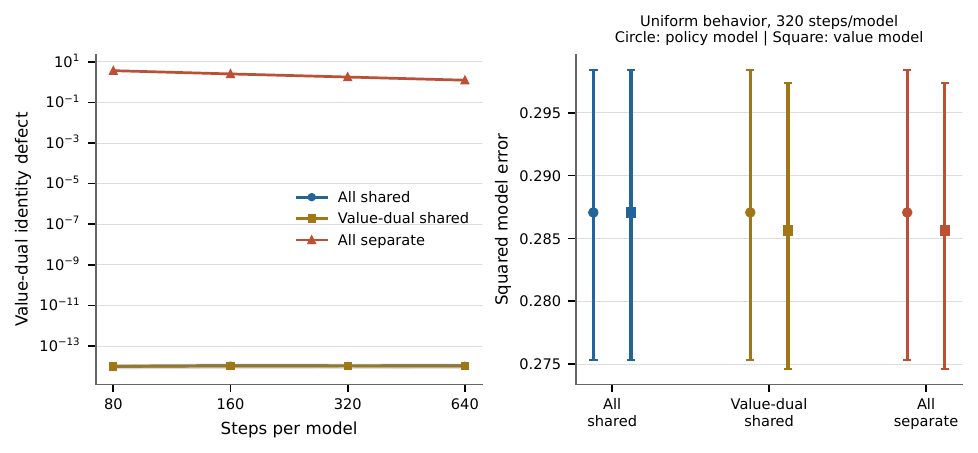}
\caption{\textbf{Identity preservation and estimator precision.}
Left: trailing identity defect under equal per-model batches and adaptive
behavior. All-shared and value--dual-shared curves overlap at numerical
precision. Right: empirical-model squared error under fixed uniform behavior
and 320 steps per model. Circles denote policy models and squares denote
value models. Model error is measured as the sum of squared Frobenius
transition error and squared Euclidean reward error. Error intervals
resample MDP means.}
\label{fig:oc_identity}
\end{figure}

\subsection{Separating Operator Consistency from Sampling Cost}
\label{app:oc_budgets}

Figure~\ref{fig:oc_budget} compares true squared KKT residuals, squared
multiplier increments, and return gaps under both budget conventions.
Table~\ref{tab:oc_m320} reports the batch-320 slice.
At equal per-model batch size, the mean residual is $0.503$ for all shared
and $4.159$ for all separate, a ratio of $8.27$.
All sharing uses one third of the total sampling steps in this comparison.
Value--dual sharing attains residual $0.639$, placing its optimization
behavior closer to all sharing while preserving the same multiplier
identity. Under equal total budget, the corresponding residuals are
$0.503$, $1.381$, and $13.192$. These complementary comparisons demonstrate
the contribution of consistency at matched model batch sizes and the
sampling benefit of reusing one model.

\begin{table}[t]
\centering
\caption{\textbf{Batch-320 comparison.} Entries average the last 30 of 180
iterations and then the eight MDP means. $G$ is the true squared KKT
residual, $\Delta_\lambda$ is the squared multiplier increment, and
$\Delta_J=J^\star-J(\pi)$ is the return gap. Total cost includes reset steps.}
\label{tab:oc_m320}
\small
\setlength{\tabcolsep}{5pt}
\begin{tabular}{llrrrr}
\toprule
Budget & Construction & Steps & $G$ & $\Delta_\lambda$ & $\Delta_J$ \\
\midrule
Equal total & All shared & 320 & 0.5028 & 1.0452 & 0.01425 \\
& Value--dual shared & 320 & 1.3813 & 2.2538 & 0.05144 \\
& All separate & 320 & 13.1920 & 12.8006 & 0.17656 \\
\midrule
Equal per model & All shared & 320 & 0.5028 & 1.0452 & 0.01425 \\
& Value--dual shared & 640 & 0.6390 & 1.1283 & 0.02419 \\
& All separate & 960 & 4.1585 & 4.0575 & 0.04168 \\
\bottomrule
\end{tabular}
\end{table}

\begin{figure}[t]
\centering
\includegraphics[width=\linewidth]{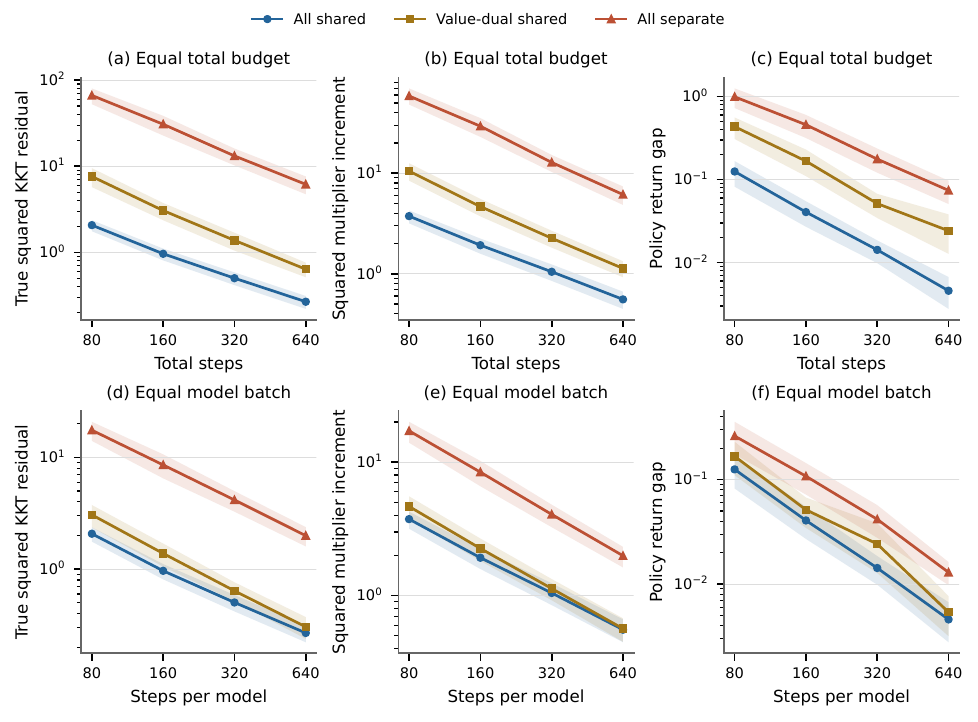}
\caption{\textbf{Consistency and sampling-budget controls.}
Top: equal total steps per iteration. Bottom: equal steps per empirical
model, with total costs $m$, $2m$, and $3m$ for all shared, value--dual
shared, and all separate. Columns show the true squared KKT residual,
squared multiplier increment, and return gap. Curves average the last
30 of 180 iterations. Shading denotes 95\% bootstrap intervals over
MDP means.}
\label{fig:oc_budget}
\end{figure}

The paired differences in Table~\ref{tab:oc_paired} quantify variation
across instances. At equal per-model batch size 320, all separate exceeds
all shared in squared KKT residual by $3.656$, with interval
$[2.617,4.694]$. Its squared multiplier increment exceeds all shared by
$3.012$, with interval $[2.213,3.812]$.
The all-shared versus value--dual-shared distinction is supported for the
residual and multiplier increment. Their return-gap difference has an
interval containing zero. We therefore use this comparison to locate the
operator-consistency mechanism and quantify its sampling cost.

\begin{table}[t]
\centering
\caption{\textbf{Paired contrasts at equal per-model batch size 320.}
Each contrast subtracts the second construction from the first.
Intervals are based on eight paired MDP means.}
\label{tab:oc_paired}
\small
\begin{tabular}{llrr}
\toprule
Metric & Contrast & Mean difference & 95\% interval \\
\midrule
$G$ & Value--dual shared $-$ all shared & 0.1361 & $[0.0463,0.2260]$ \\
& All separate $-$ all shared & 3.6557 & $[2.6170,4.6944]$ \\
& All separate $-$ value--dual shared & 3.5196 & $[2.5340,4.5052]$ \\
\midrule
$\Delta_\lambda$ & Value--dual shared $-$ all shared & 0.0831 & $[0.0205,0.1457]$ \\
& All separate $-$ all shared & 3.0123 & $[2.2128,3.8118]$ \\
& All separate $-$ value--dual shared & 2.9293 & $[2.1526,3.7060]$ \\
\midrule
$\Delta_J$ & Value--dual shared $-$ all shared & 0.00994 & $[-0.00384,0.02372]$ \\
& All separate $-$ all shared & 0.02742 & $[0.01143,0.04342]$ \\
& All separate $-$ value--dual shared & 0.01748 & $[0.01011,0.02486]$ \\
\bottomrule
\end{tabular}
\end{table}

\subsection{Fixed-Behavior Control}
\label{app:oc_uniform}

To separate operator consistency from changes in data collection, we repeat
the comparisons with $\bar\pi^k(a\mid s)=1/2$ at every iteration.
At equal per-model batch size 320, the policy model is identical across all
three constructions under the matched streams. The value model is also
identical between value--dual shared and all separate. All shared reuses
its policy model for the value and dual updates. Each distinct model follows
the same estimation procedure and sampling distribution.
This control removes the feedback from the learned policy to the
sampling distribution.

\begin{table}[t]
\centering
\caption{\textbf{Uniform-behavior control at 320 steps per model.}
Total costs are 320, 640, and 960 steps, respectively. The notation and
aggregation match Table~\ref{tab:oc_m320}.}
\label{tab:oc_uniform}
\small
\begin{tabular}{lrrr}
\toprule
Construction & $G$ & $\Delta_\lambda$ & $\Delta_J$ \\
\midrule
All shared & 0.8505 & 1.7776 & 0.02818 \\
Value--dual shared & 1.1477 & 1.8908 & 0.04018 \\
All separate & 7.0014 & 6.7681 & 0.07335 \\
\bottomrule
\end{tabular}
\end{table}

Table~\ref{tab:oc_uniform} retains the same ordering of the three residuals.
The paired residual difference between all separate and all shared is
$6.151$, with 95\% interval $[4.013,8.288]$.
Between all separate and value--dual shared it is $5.854$, with interval
$[3.882,7.825]$. Thus, the residual separation persists when the behavior
policy and per-model sampling distributions are fixed.

\subsection{Discounted-Resolvent Sensitivity}
\label{app:oc_discount}

The inverse-map argument predicts that the sensitivity of the dual
representation depends on the discounted resolvent.
To examine this dependence with fixed estimation error, we collect one
320-step batch using uniform behavior and sampling discount $0.85$ for
each MDP and trajectory seed. We then hold both the empirical transition
matrix and the uniform evaluation policy fixed while varying
$\gamma\in\{0.50,0.70,0.85,0.95,0.98\}$.
Write $M_\gamma=I-\gamma P_\pi$ and
$\widehat M_\gamma=I-\gamma\widehat P_\pi$.
The resolvent identity gives
\begin{equation}
 \|\widehat M_\gamma^{-1}-M_\gamma^{-1}\|_2
 \le \gamma\|\widehat M_\gamma^{-1}\|_2
             \|\widehat P_\pi-P_\pi\|_2\|M_\gamma^{-1}\|_2.
 \label{eq:oc_inverse}
\end{equation}
We evaluate the stationary dual representations
$\lambda_\gamma=M_\gamma^{-\top}\mu$ and
$\widehat\lambda_\gamma=\widehat M_\gamma^{-\top}\mu$.
Figure~\ref{fig:oc_frozen} reports the true inverse norm, the absolute dual
error, and the relative error
$\|\widehat\lambda_\gamma-\lambda_\gamma\|_2/\|\lambda_\gamma\|_2$.
As $\gamma$ increases from $0.50$ to $0.98$, the mean inverse norm rises
from $2.03$ to $52.15$, the absolute dual error from $0.0462$ to $2.2808$,
and the relative error from $0.0512$ to $0.0978$.
The relative metric accounts for the change in discounted occupancy mass,
$\mathbf{1}^\top\lambda_\gamma=(1-\gamma)^{-1}$.
These fixed-perturbation measurements support the sensitivity mechanism
used in multiplier control without estimating a worst-case asymptotic rate.

\begin{figure}[t]
\centering
\includegraphics[width=\linewidth]{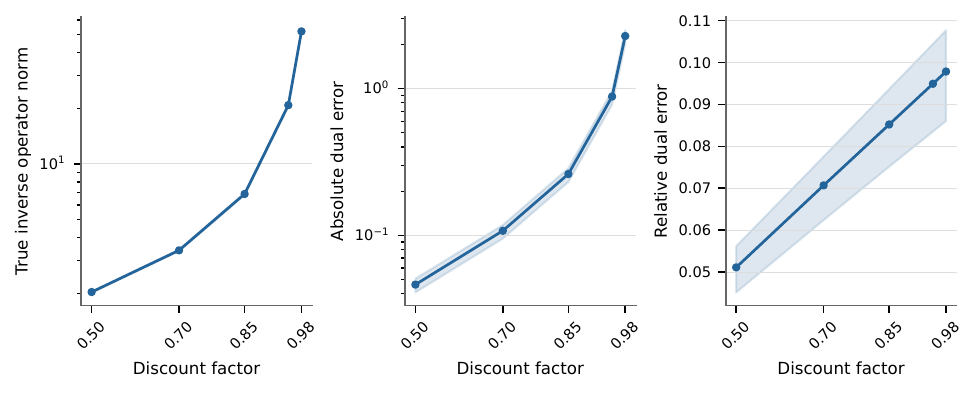}
\caption{\textbf{Discount sensitivity with fixed policy and transition error.}
One empirical transition model per run is reused across all discounts.
Panels show the true resolvent norm and the absolute and relative errors
in the stationary dual representation. Shading denotes 95\% bootstrap
intervals over MDP means.}
\label{fig:oc_frozen}
\end{figure}

We also run the three algorithms at the same discount grid using uniform
behavior and 320 steps per model, holding the remaining parameters fixed.
Figure~\ref{fig:oc_dynamic_discount} reports these dynamic comparisons.
Here, the reset probability is $1-\gamma$, so the experiment includes both
operator sensitivity and changes in the sampling process.
All-shared updates have lower mean KKT residuals than all-separate updates
throughout this grid. The fixed-perturbation experiment above isolates the
inverse-operator component of this dependence.

\begin{figure}[t]
\centering
\includegraphics[width=\linewidth]{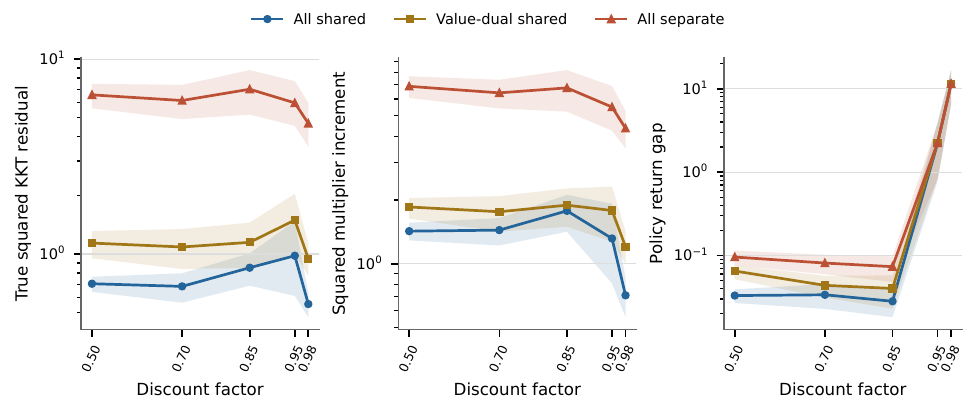}
\caption{\textbf{Dynamic discount sweep.} Uniform behavior and 320 steps
per model, with all other algorithmic parameters fixed. The reset
probability changes with the discount. Outcomes average the last 30 of
180 iterations, and shaded intervals resample MDP means.}
\label{fig:oc_dynamic_discount}
\end{figure}

\subsection{Parameter-Margin Check and Numerical Verification}
\label{app:oc_verification}

The preceding finite-budget comparisons use $\beta=10$.
A separate parameter-margin check uses the uniform bounds
$\bar\kappa=\sqrt{5}/(1-\gamma)$ and $L_M=\gamma\sqrt{2}$.
At $\gamma=0.85$ and $\eta_\pi=\eta_V=0.15$, the sufficient lower bound
in the main text is
\begin{equation}
 \beta>\max\left\{\frac{60\bar\kappa^2}{\eta_V},
 30\eta_\pi\bar\kappa^4L_M^2\|c\|^2\right\}=88888.889.
\end{equation}
We set $\beta=97777.778$, which is 1.1 times this threshold, and run
all three constructions with uniform behavior and 320 steps per model.
The three coefficients in the all-shared Lyapunov descent bound are
$0.57197$, $0.07576$, and $0.07576$, respectively.
Table~\ref{tab:oc_margin} reports the corresponding finite-time outcomes.
This check concerns the penalty margin of the all-shared theory.
The main comparisons assess finite-budget behavior, and the fixed-batch
runs do not invoke the square-summability condition for almost-sure
asymptotic convergence.

\begin{table}[t]
\centering
\caption{\textbf{Large-penalty check.} Uniform behavior, 320 steps per model,
and $\beta=97777.778$. All quantities use the same trailing aggregation
as the other controls. $D$ denotes the identity defect.}
\label{tab:oc_margin}
\small
\begin{tabular}{lrrrr}
\toprule
Construction & $G$ & $\Delta_\lambda$ & $\Delta_J$ & $D$ \\
\midrule
All shared & $6.735$ & $76.779$ & $2.034$ & $2.22\times10^{-11}$ \\
Value--dual shared & $266.470$ & $1813.805$ & $2.177$ & $3.13\times10^{-11}$ \\
All separate & $5.12\times10^8$ & $8.33\times10^8$ & $2.210$ & $1.82\times10^4$ \\
\bottomrule
\end{tabular}
\end{table}

The suite contains 960 budget-comparison runs, 240 uniform-behavior runs,
600 discount-sweep runs, and 120 parameter-margin runs, totaling 1920
algorithm runs. The fixed-perturbation study adds 200 evaluation points.
Some settings coincide across studies and are not treated as additional
independent replicates in the statistical analysis.
Numerical verification reproduces the 80 original batch-320 runs to a
maximum absolute discrepancy of $3.55\times10^{-15}$ across the recorded
summary metrics. Independent scalar optimization verifies the policy
subproblem objective to $1.07\times10^{-14}$, and the checked value solves
have first-order residual at most $6.44\times10^{-15}$.
Configurations, per-run results, selected per-iteration traces, and the
analysis scripts are retained with the experiment package.

\end{document}